\def\pb{\pagebreak}   %\usepackage{refcheck} %\usepackage{lineno,lmodern} \linenumbers
\newcommand{\tu}{{\tilde{\u}}}
\newcommand{\phantomeq}{\phantom{{}={}}}
\newcommand{\bbw}{\bar{\mathbf{w}}}
\newcommand{\barw}{\bar{w}}
\newcommand{\bfa}{\mathbf{a}}
\newcommand{\bfV}{\mathbf{V}}
\numberwithin{equation}{section}
\newtheorem{lemma}{Lemma}
\newtheorem{theorem}{Theorem}
\newtheorem{property}{Property}
\newtheorem{proposition}{Proposition}
\newtheorem{definition}{Definition}
\newtheorem{corollary}[theorem]{Corollary}
\newtheorem{remark}{Remark}
\newtheorem{assumption}{Assumption}
\newcommand{\cO}{\mathcal{O}}
\newcommand{\EE}{\mathbb{E}}
\newcommand{\RR}{\mathbb{R}}
\newcommand{\NN}{\mathbb{N}}
\newcommand{\bX}{\bm{X}}
\newcommand{\PP}{\mathbb{P}}
\newcommand{\cI}{\mathcal{I}}
\newcommand{\tC}{\tilde{C}}
\newcommand{\minimize}{\mathop{\mathrm{minimize}}}
\renewcommand{\th}{\dot{h}}
\newcommand{\tphi}{\dot{\phi}}
\renewcommand{\bX}{\mathbf{X}}
\newcommand{\hbtheta}{\hat{\bm{\theta}}}
\newcommand{\btheta}{\bm{\theta}}
\newcommand{\balpha}{\bm{\alpha}}
\newcommand{\tbalpha}{\tilde{\bm{\alpha}}}
\newcommand{\iiddistr}{{\stackrel{\text{i.i.d.}}{\sim}}}
\newcommand{\norm}[1]{\left\|{#1} \right\|}
\newcommand{\Norm}[1]{\|{#1} \|}
\newcommand{\bbalpha}{\bar{\bm{\alpha}}}
\newcommand{\bgamma}{\bm{\gamma}}
\newcommand{\hbalpha}{\hat{\bm{\alpha}}}
\newcommand{\bbeta}{\bm{\beta}}
\newcommand{\bbbeta}{\bar{\bm{\beta}}}
\newcommand{\hbbeta}{\hat{\bm{\beta}}}
\newcommand{\tbbeta}{\tilde{\bm{\beta}}}
\newcommand{\hv}{\hat{v}}
\newcommand{\bv}{\mathbf{v}}
\newcommand{\blambda}{\bar{\lambda}}
\newcommand{\bu}{\mathbf{u}}
\newcommand{\bbfu}{\bar{\mathbf{u}}}
\newcommand{\hbv}{\hat{\mathbf{v}}}
\newcommand{\bbv}{\bar{\mathbf{v}}}
\def\ep{\varepsilon}
\newcommand{\p}{\mathbf{p}}
\newcommand{\bx}{\mathbf{x}}
\newcommand{\w}{w}
\newcommand{\bxi}{\bm{\xi}}
\newcommand{\E}{\mathbb{E}}
\newcommand{\hbw}{\hat{\mathbf{w}}}
\newcommand{\tbTheta}{\tilde{\mathbf{\Theta}}}
\newcommand{\bbTheta}{\bar{\mathbf{\Theta}}}
\newcommand{\bbtheta}{\bar{\bm{\theta}}}
\newcommand{\hw}{\hat{w}}
\newcommand{\bw}{\mathbf{w}}
\newcommand{\bTheta}{\mathbf{\Theta}}
\newcommand{\tbtheta}{\tilde{\bm{\theta}}}
\newcommand{\ttw}{\tilde{w}}
\newcommand{\ttv}{\tilde{v}}
\newcommand{\hGrad}{\hat{\mathcal{G}}}
\newcommand{\Grad}{\mathcal{G}}
\newcommand{\hDel}{\hat{\mathcal{D}}}
\newcommand{\lineDel}{\overline{\mathcal{D}}}
\newcommand{\lineGrad}{\overline{\mathcal{G}}}
\newcommand{\ulineDel}{\underline{\mathcal{D}}}
\newcommand{\ulineGrad}{\underline{\mathcal{G}}}
\newcommand{\Del}{\mathcal{D}}
\newcommand{\NhGrad}{\widehat{\mathcal{NG}}}
\newcommand{\NhDel}{\widehat{\mathcal{ND}}}
\newcommand{\NGrad}{\mathcal{NG}}
\newcommand{\NDel}{\mathcal{ND}}
\newcommand{\lineTheta}{\bar{\mathbf{\Theta}}}
\newcommand{\linew}{\bar{w}}
\newcommand{\linebw}{\bar{\bw}}
\newcommand{\linetheta}{\bar{\bm{\theta}}}
\newcommand{\bK}{\mathbf{K}}
\newcommand{\tbK}{\tilde{\mathbf{K}}}
\newcommand{\hbK}{\hat{\mathbf{K}}}
\newcommand{\bbK}{\bar{\mathbf{K}}}
\newcommand{\bA}{\mathbf{A}}
\newcommand{\Dis}{\mathrm{D}}
\newcommand{\btPsi}{\mathbf{\tilde\Psi}}
\newcommand{\bfPsi}{\mathbf{\Psi}}
\newcommand{\btPhi}{\mathbf{\tilde\Phi}}
\newcommand{\bfPhi}{\mathbf{\Phi}}
\newcommand{\tv}{\tilde{v}}
\newcommand{\tbv}{\tilde{\bv}}
\newcommand{\bc}{\mathbf{c}}
\newcommand{\ddp}{\dot{p}}
\newcommand{\bC}{\mathbf{C}}
\newcommand{\bR}{\mathbf{R}}
\newcommand{\supp}{\mathrm{supp}}
\newcommand{\Up}{\Upsilon}
\newcommand{\e}{\mathbf{e}}
\renewcommand{\u}{{u}}
\newcommand{\<}{\left\langle}
\renewcommand{\>}{\right\rangle}
\newcommand{\cP}{\mathcal{P}}
\newcommand{\tO}{\tilde{\cO}}
\newcommand{\tw}{{\tilde{w}}}
\newcommand{\tp}{{\tilde{p}}}
\newcommand{\tPi}{\tilde{\Pi}}
\newcommand{\bPi}{\tilde{\Pi}}
\newcommand{\linev}{\bar{v}}
\newcommand{\linebv}{\bar{\bv}}
\newcommand{\linealpha}{\bar{\bm{\alpha}}}
\newcommand{\linebeta}{\bar{\bm{\beta}}}
\newcommand{\hbH}{\hat{\mathbf{H}}}
\newcommand{\var}{\mathsf{var}}
\newcommand{\bL}{\mathcal{L}}
\newcommand{\tPhi}{\tilde{\Phi}}
\newcommand{\tPsi}{\tilde{\Psi}}
\title{Modeling from Features: a Mean-field Framework\\ for  Over-parameterized Deep Neural Networks}
\author[$*$]{Cong Fang}
\affil[$*$]{
 Department of Electrical Engineering,   Princeton University}
\author[$*$]{
Jason D. Lee}
\author[$*$]{
Pengkun Yang}
\author[$\dag$]{
Tong Zhang}
\affil[$\dag$]{Department of Computer Science and Mathematics\\ HKUST}
\date{June 30, 2020}
\begin{document}

\maketitle
\begin{abstract}
%\iffalse
  This paper  proposes a new mean-field framework for over-parameterized deep neural networks (DNNs), which can be used to analyze neural network training.   In this framework,  a  DNN is represented by probability measures and functions over its features (that is, the function values of the hidden units over the training data) in the continuous limit,  instead of the neural network parameters as most  existing studies have done.
  This new representation   overcomes the   degenerate situation  where all the hidden units essentially have only one meaningful hidden unit in each middle layer, and further
  leads to a simpler representation of DNNs, for which the training objective can be reformulated as  a convex optimization  problem via suitable re-parameterization.   Moreover, we construct a non-linear dynamics  called  \emph{neural feature flow}, which captures the evolution of an over-parameterized DNN trained by Gradient Descent.   We illustrate the framework via the standard DNN and the Residual Network (Res-Net) architectures. Furthermore,  we show, for Res-Net,  when the neural feature flow process converges, it reaches a \emph{global} minimal solution  under suitable conditions. Our analysis leads to the first global convergence proof for over-parameterized neural network training with more than $3$ layers in the mean-field regime.
\end{abstract}

\tableofcontents

\section{Introduction}\label{sec:intro}
In recent years, deep neural networks (DNNs) have achieved great success empirically. However, the theoretical understanding of the practical success is still limited. One main conceptual difficulty is the non-convexity of DNN models.  More recently, there has been remarkable progress in understanding the over-parameterized neural networks (NNs), which are NNs with massive hidden units.  The over-parameterization is capable of circumventing the hurdles in analyzing non-convex functions under specific settings: 
\begin{enumerate}[(i)]
  \item  Under a specific scaling and initialization, it is sufficient to study the NN weights in a small region around the initial values given sufficiently many hidden units - the aptly named ``lazy training'' regime \cite{jacot2018neural,li2018learning,du2018gradient,arora2019fine,du2019gradient,allen2018learning,allen2019can,zou2018stochastic,chizat2019lazy}. 
    The NN in this regime is nearly a  linear model fitted with  random features that induces a  kernel in the tangent space.
    % , and provably achieves minimum training error. 
    % However, this regime does not explain why NNs can effectively learn representative features, and the expressive power of random kernels is limited \cite{yehudai2019power}. 
    
    \item Another line of research applies the mean-field analysis for NNs \cite{MeiE7665,chizat2018global,sirignano2019mean,rotskoff2018neural,mei2019mean, dou2019training,  wei2018margin,sirignano2019mean2,fang2019over, araujo2019mean,nguyen2020rigorous,chen2020mean}.  For over-parameterized NNs, it is instructive to first study the behavior of the infinitely wide NNs, known as the mean-field limit,  and then consider the approximation using finite neurons. This idea comes from   statistical mechanics \cite{engel2001statistical} suggesting that
    modeling a  volume of interacting neurons can be  largely simplified as modeling an averaging probability distribution.
    
    %, and thereby learning a  two-level  NN can be regarded as optimizing a (convex) functional over the probability distribution of its weights.  
    % The evolution of NN weights trained by the (noisy) Gradient Descent algorithm corresponds to a Wasserstein gradient flow called ``distributional dynamics'',  solution to a non-linear partial differential equation of McKean-Vlasov type \citep{sznitman1991topics}.  
    % In the mean-field limit, the Wasserstein gradient flow converges to the optimal solution for two-level NNs \citep{MeiE7665,chizat2018global,fang2019over}. Compared with  lazy training, the mean-field view can characterize the entire training process of NNs.
\end{enumerate}

The ``lazy training'' regime has been extensively investigated for both shallow and deep NNs. 
In that regime, the NN provably achieves minimum training error despite the non-convexity of NN models; % and the NNs generalizes well to new samples  
however, other useful characteristics of NNs such as feature learning remain obscure, and it is known that the expressive power of random kernels can be limited \cite{yehudai2019power}.

%In the existing literature, 
Turning to the mean-field view, most of the existing  studies focus  on two-level NNs. In the mean-field limit, a two-level NN is represented by a probability distribution over its weights, and (noisy) Gradient Descent corresponds to 
 a Wasserstein gradient flow called ``distributional dynamics'', which is the solution to a non-linear partial differential equation of McKean-Vlasov type \cite{sznitman1991topics}.  Moreover,   the overall learning problem  can be reformulated as a convex optimization  over the  
  probability space and it was shown in  \cite{MeiE7665,chizat2018global,fang2019over}   that such  gradient flow converges to the optimal solution under suitable conditions. Compared with lazy training, the mean-field view can characterize the entire training process of NNs.
 %The evolution of NN weights trained by the (noisy) Gradient Descent algorithm has been successfully characterized by a Wasserstein gradient flow called ``distributional dynamics'',  solution to a non-linear partial differential equation of McKean-Vlasov type \citep{sznitman1991topics}.  
%In the mean-field limit, the Wasserstein gradient flow converges to the optimal solution for two-level NNs \citep{MeiE7665,chizat2018global,fang2019over}. 
% Compared with  lazy training, the mean-field view can characterize the entire training process of NNs.

%However,  the mean-field analysis on DNNs is a challenging task.
However, it is a challenging task to conduct  the mean-field analysis on DNNs.
First of all, it is not easy to formulate the mean-field limit of DNNs. As we will discuss in Section~\ref{sec:related}, extending existing formulations to DNNs, hidden units in a middle layer essentially behave as a single unit along the training.  
This degenerate situation arguably cannot fully characterize the training process of actual DNNs.  
Furthermore, understandings for the global convergence of DNNs are still limited in the mean-field regime. Beyond two layers, the only result to the best of our knowledge came from \cite{nguyen2020rigorous}  recently, in which they proved the global convergence for three-level DNNs under  restrictive conditions.  It is not clear how to extend their analysis to deeper NNs.

% \red{The above is same as nips.}

In this paper,  we propose a new mean-field framework for over-parameterized DNNs to analyze NN training.  
In contrast to existing studies focusing on the NN weights, this framework represents a DNN in the continuous, i.e. mean-field, limit by probability measures and functions over its features, that is, the outputs of the hidden units over the training data.   
We also describe a non-linear dynamic called \emph{neural feature flow} that captures the evolution of a DNN trained by Gradient Descent.   We illustrate the framework via the standard DNN and Residual Network (Res-Net)  \cite{he2016deep} architectures. We  show that,  for Res-Nets,  when the neural feature flow process converges, it reaches a \emph{global} minimal solution  under suitable conditions.

% Especially, the paper is organized as follows. 
% Section \ref{sec:related}  discusses existing works on the mean-field theory of NNs, and the challenges in modeling DNNs. 
Specifically, we first investigate the fully-connected DNNs.
% and introduce  the  formulation in Section \ref{sec:DNN formulation}.    
Under our framework in Section \ref{sec:DNN formulation}, the hidden units and the weights of a DNN in  the continuous limit are characterized by  probability measures and   functions over the features, respectively. 
This new representation overcomes the degenerate situation in previous studies  \cite{araujo2019mean, nguyen2020rigorous}.
Then we propose the neural feature flow that captures the evolution of a DNN trained by Gradient Descent in Section \ref{sec:flow DNN} and analyze the theoretical properties in Section \ref{sec:theo DNN}.
% Section \ref{sec:flow DNN}  proposes the neural feature flow to capture the evolution of a DNN trained by Gradient Descent and Section \ref{sec:theo DNN} theoretically analyzes its properties.  
Neural feature flow involves the evolution of the features and does not require the boundedness of the weights.  We provide a general initialization condition for a discrete DNN and show that   Gradient Descent from such initialization with a suitable time scale  can be well-approximated by  its limit, i.e., neural feature flow, when the number of hidden units is sufficiently large.    
We demonstrate concrete examples that provably  achieve the initialization condition.  In fact,  under the standard initialization method of discrete DNNs \cite{glorot2010understanding,he2015delving}, the NN weights scale to infinity with the growth of the number of hidden units.   
There are empirical studies, e.g., \cite{zhang2019fixup}, which show that properly rescaling the standard initialization stabilizes training. We  introduce a simple $\ell_2$-regression at initialization (see Algorithm \ref{algo:init}) and show that  this regularized initialization  ensures the general initiation condition. On the other hand, our new formulation for a continuous DNN can be  re-parameterized as a convex  problem under proper conditions, which provides  the opportunities to find a globally optimal solution if we   impose suitable  regularizers on the features, e.g., relative entropy regularizers.    However, it  remains open how to analyze the evolution of DNNs with those regularizers.

Here, we try a different way.   We consider training a Res-Net model in Sections \ref{sec:resnet} and \ref{sec:theo resnet}.  We novelly  characterize the neural feature flow via trajectories  of the  skip-connected paths (explained in Subsection \ref{sec:general resnet}) for Res-Nets.  More or less surprisingly, we show that when the neural feature flow process converges, it reaches a \emph{globally} optimal solution under suitable conditions. To the best of our knowledge, our analysis leads to the \emph{first} proof for the global convergence of training over-parameterized DNNs with more than $3$ layers in the mean-field regime.   
% Finally, Section \ref{sec:conclu} concludes our paper with future directions.  
\subsection{Contributions}
The main contributions of this work are the following
\begin{enumerate}[(1)]
    \item We propose a new mean-field framework of DNNs which  characterizes  DNNs via probability measures and  functions  over  the  features and introduce  neural feature flow to capture the evolution of DNNs trained by the Gradient Descent algorithm.
    \item For the Res-Net model, we  show that  neural feature flow can find a global minimal solution of the learning task under certain conditions. 
\end{enumerate}

\subsection{Notations}
Let  $[m_1:m_2] := \{m_1, m_1+1, \dots,m_2 \}$ for $m_1,m_2\in\NN$ with $m_1\leq m_2$ and $[m_2] := [1:m_2]$ for $m_2\geq1$.  
Let $\mathcal{P}^{n}$ be the set of probability distributions over $\RR^{n}$.  
% The set of continuous paths on $\RR^n$ is denoted by $C()$.
For  a matrix $\bA\in \RR^{n \times m} $, let $\|\bA\|_2$, $\|\bA\|_F$, and $\|\bA \|_{\infty}$ denote its operator, Frobenius, max norms, respectively. If $\bA$ is symmetric, let $\lambda_{\min}(\bA)$ be its smallest eigenvalue. 
Vectors are treated as columns.  
For a vector $\mathbf{a} \in \RR^n$, let $\|\mathbf{a}\|_2$ and $\|\mathbf{a} \|_{\infty}$ denote its $\ell_2$ and $\ell_\infty$ norms, respectively.
The $i$-th coordinate is denoted by $\mathbf{a}(i)$.
% $\RR_{+}^n$ denotes the set of $n$-dimensional vectors that all the coordinates are non-negative.   
For $\mathbf{a},\mathbf{b}\in \RR^{n}$, denote the entrywise product by $\mathbf{a}\cdot \mathbf{b} $ that $\left[\mathbf{a}\cdot\mathbf{b}\right](i):= \mathbf{a}(i)\cdot \mathbf{b}(i)$ for $i\in[n]$. 
% \red{ 
For $c>0$ and $p\in[1, \infty]$, let $\mathcal{B}_{p}(\mathbf{a},c)$ denote the $\ell_{p}$-ball centered at $\mathbf{a}$ of radius $c$.
% }  
For an unary function  $f:\RR\to \RR$, define $\dot{f}:\RR^{n}\to \RR^{n}$ as the entrywise operation that $\dot{f}(\mathbf{a})(i) =f(\mathbf{a}(i))$ for $i\in[n]$ and $\mathbf{a}\in \RR^{n}$.  Denote $n$-dimensional identity matrix by $\mathbf{I}^n$.   Denote  $m$-by-$n$ zero matrix and   $n$-dimensional zero vector by 
$\mathbf{0}^{n\times m}$ and $\mathbf{0}^{n}$, respectively.   We say a univariate distribution $p$ is $\sigma$-sub-gaussian if $\EE_{x\sim p} \exp(x^2/\sigma^2)\leq e\footnote{\text{Here the value $e$ can be replaced by any number greater than one. See \cite[Remark 5.6]{vershynin2010introduction}.}}$; we say a $d$-dimensional distribution $p$ is $\sigma$-sub-gaussian if the law of $u^\top \mathbf{x}$ is $\sigma$-sub-gaussian for $\mathbf{x}\sim p$ and any $u\in \mathbb{S}^{d-1}$.  
For two positive sequences  $\{p_n\}$ and  $\{q_n\}$,
$p_n = \cO(q_n)$ if $p_n \leq Cq_n$ for some positive constant $C$, and $p_n = \Omega (q_n)$ if  $ q_n = \cO(p_n)$. 
%Moreover, $p_n = \tO(q_n)$ if $p_n = \cO (q_n\log^k(q_n))$ for some $k>0$, and $p_n = \tilde{\Omega} (q_n)$ if $q_n = \tO(p_n)$.

% \subsection{Extensions of the short version.}
% ...

\section{Discussions on  Deep Mean-field Theory}\label{sec:related}
\subsection{Challenges on Modeling DNNs in the Mean-field Limit}
We discuss related mean-field studies and point out the challenges in modeling  DNNs.  For two-level NNs, most of the existing works \cite{MeiE7665,chizat2018global,sirignano2019mean,rotskoff2018neural} formulate the continuous limit as
$$   f(\bx;p) =  \int w_2~h\left(\mathbf{w}_1^\top \bx\right)d p\left(w_2, \mathbf{w}_1\right),  $$
where $p$ is the probability distribution over the pair of weights $(w_2,\bw_1)$.
The weights of the second layer $w_2$ can be viewed as functions of $\mathbf{w}_1$, which is a $d$-dimensional vector. 
However, this approach indexes higher-layer weights, say $w_3$, by functions over features of the hidden layer, with a diverging dimensionality in the mean-field limit.
For $3$-level NNs, $w_3$ as the last hidden layer is indexed by the connection to the output units in \cite{nguyen2020rigorous}, which is not generalizable when middle layers present. 
An alternative approach is to model DNNs with nested measures (also known as multi-level measures; see \cite{dawson1982wandering,dawson2018multilevel} and references therein), which however suffers the closure problem to establish a well-defined limit (see discussions in \cite[Section 4.3]{sirignano2019mean2}). 
% However, the distributions of higher-level weights involve distributions over infinite-dimensional spaces. 

The continuous limit of DNNs is investigated by \cite{araujo2019mean,nguyen2020rigorous} under the initialization that all weights are i.i.d.~realizations of a \emph{fixed} distribution independent of the number of hidden units. 
However, under that setting, all neurons in a middle layer essentially behave as a single neuron.
Consider the output $\hbbeta$ of a middle-layer neuron connecting to $m$ hidden neurons in the previous layer: 
\begin{equation}
\label{eq:forward}
\hbbeta=\frac{1}{m}\sum_{i=1}^m h(\hbbeta_i')~w_i,
\end{equation}
where $\hbbeta_i'$ is the output of $i$-th hidden neuron in the previous layer with bounded variance, $w_i$ is the connecting weight.
If $w_i$ is initialized independently from $\mathcal{N}(0,1)$, it is clear that $\var[\hbbeta]\to 0$ as $m\to\infty$, and thus the hidden neurons in middle layers are indistinguishable at the initialization. 
Moreover, the phenomenon sustains along the entire training process, as shown in Proposition~\ref{prop:degenerate}. 
This phenomenon serves as the basis of \cite{araujo2019mean,nguyen2020rigorous} to characterize the mean-field limit using finite-dimensional probability distributions. 
This degenerate situation arguably does not fully characterize the actual DNN training. 
In fact, similar calculations to \eqref{eq:forward} are carried out by \cite{glorot2010understanding,he2015delving} and motivate the popular initialization strategy with $\mathcal{N}(0,\cO(m))$ such that the variance of  $\hbbeta$ is non-vanishing.

\begin{proposition}
\label{prop:degenerate}
Consider fully-connected $L$-layer DNNs with $m$ units in each hidden layer trained by Gradient Descent. 
Let $\hbbeta_{\ell,i}^k$ denote the output of $i$-th hidden neuron at $\ell$-th layer and $k$-th iteration, and define $\Delta_{\ell,m}:= \max_{i\ne i',k\in [K]}\|\hbbeta_{\ell,i}^k-\hbbeta_{\ell,i'}^k\|_\infty$.
Then, for every $\ell\in[2:L-1]$, 
\[
\lim_{m\to\infty}\Delta_{\ell,m}= 0.
\]
% \nbr{review and proof}
\end{proposition}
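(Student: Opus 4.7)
The plan is to argue by induction on the iteration index $k$, showing at each step that every pair of neurons in a middle layer produces nearly identical outputs as $m\to\infty$. The core mechanism is a law-of-large-numbers effect driven by the $1/m$ scaling inside each pre-activation together with the i.i.d.~initialization from a distribution that is independent of $m$.

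First I would handle the base case $k=0$. For $\ell\in[2:L-1]$ and any training input $\bx$, the output of the $i$-th neuron has the form
\[
\hbbeta_{\ell,i}^0=\frac{1}{m}\sum_{j=1}^m w_{\ell,ij}^0\,h(\hbbeta_{\ell-1,j}^0),
\]
and the $\{w_{\ell,ij}^0\}$ are i.i.d.~from a fixed distribution and independent of the previous-layer outputs. The pairwise difference
\[
\hbbeta_{\ell,i}^0-\hbbeta_{\ell,i'}^0=\frac{1}{m}\sum_{j=1}^m\bigl(w_{\ell,ij}^0-w_{\ell,i'j}^0\bigr)h(\hbbeta_{\ell-1,j}^0)
\]
has conditional variance $O(1/m)$ by independence of the rows. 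Under mild moment conditions on the initialization and on $h$, a sub-Gaussian concentration inequality combined with a union bound over the $\binom{m}{2}$ pairs $(i,i')$ and the finite training set yields $\Delta_{\ell,m}^0\to 0$ in probability. A parallel induction on $\ell$, with base case $\hbbeta_{1,j}^0=\bw_{1,j}^{0\top}\bx$, also propagates uniform second-moment control on the previous-layer outputs through the network.

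For the inductive step from $k$ to $k+1$, I would exploit the Gradient Descent update written in cumulative form
\[
w_{\ell,ij}^{k+1}=w_{\ell,ij}^0-\frac{\eta}{m}\sum_{s=0}^{k}h(\hbbeta_{\ell-1,j}^s)\,g_{\ell,i}^s,
\]
where $g_{\ell,i}^s$ denotes the back-propagated error signal at layer $\ell$ and iteration $s$. Two observations drive the induction. First, the signal $g_{\ell,i}^s$ is itself (up to a $\dot h$ factor) an average $\frac{1}{m}\sum_{j'} w_{\ell+1,j'i}^s g_{\ell+1,j'}^s$, so a nested induction on layers at fixed $k$ transfers the collapse from the forward pass to the backward pass for $\ell\in[2:L-1]$. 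Second, splitting $w_{\ell,ij}^{k+1}$ into the i.i.d.~part $w_{\ell,ij}^0$ and the correction sum preserves the ``average of $m$ nearly i.i.d.~terms'' structure needed to rerun the base-case LLN argument at iteration $k+1$, once the induction hypothesis $\Delta_{\ell-1,m}^k\to 0$ and the corresponding bounds on the signals are in force.

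The main obstacle is handling the dependencies Gradient Descent introduces across rows and across time: after a single update, the rows of any weight matrix are no longer independent, so the LLN used at initialization does not apply directly to $w_{\ell,ij}^{k+1}$. I would address this with a nested induction on $(\ell,k)$ that simultaneously tracks uniform $L^2$ (or sub-Gaussian) bounds on (i) the corrections $w_{\ell,ij}^{k+1}-w_{\ell,ij}^0$, (ii) the pre-activations $\hbbeta_{\ell,i}^k$, and (iii) the back-propagated signals $g_{\ell,i}^k$, coupled with the pairwise-collapse estimates. Since the horizon $K$ is independent of $m$, the per-step errors accumulate in a controlled way, and one should obtain a quantitative rate of the form $\Delta_{\ell,m}^k=\cO_K(m^{-1/2})$ up to logarithmic factors, which suffices for the stated limit.
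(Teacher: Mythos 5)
Your proposal takes a genuinely different route from the paper: the paper simply cites \cite[Corollary 25]{nguyen2020rigorous} for the fact that, in the mean-field limit, the middle-layer weights remain mutually independent and identically distributed along the whole GD trajectory, and then applies the law of large numbers; you instead attempt a self-contained concentration argument by induction on $(\ell,k)$. Your base case $k=0$ is correct and the overall plan is sensible, but there is a genuine gap in the inductive step.

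Concretely, after substituting $w_{\ell,ij}^{k+1}=w_{\ell,ij}^0+(\text{correction})$, the leading term in the pairwise difference is
\begin{equation*}
\frac{1}{m}\sum_{j=1}^m \left(w_{\ell,ij}^0-w_{\ell,i'j}^0\right)\, h\!\left(\hbbeta_{\ell-1,j}^{k+1}\right).
\end{equation*}
You treat this as ``an average of $m$ nearly i.i.d.\ terms,'' but for $k\ge 1$ the multiplicand $h(\hbbeta_{\ell-1,j}^{k+1})$ is \emph{not} independent of $w_{\ell,ij}^0$: the lower-layer weights at iteration $k+1$ have absorbed gradient corrections whose back-propagated signals pass through $w_{\ell,\cdot,\cdot}^0$. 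Conditioning on $\{\hbbeta_{\ell-1,j}^{k+1}\}_j$ therefore does not make the summands conditionally independent, and neither Hoeffding nor a sub-Gaussian tail bound applies directly. The uniform $L^2$ bounds you propose to track are necessary but not by themselves sufficient: they control magnitudes, not the covariance terms $\EE[(w^0_{\ell,ij}-w^0_{\ell,i'j})(w^0_{\ell,ij'}-w^0_{\ell,i'j'})h(\hbbeta_{\ell-1,j}^{k+1})h(\hbbeta_{\ell-1,j'}^{k+1})]$ for $j\ne j'$, which no longer vanish by independence.

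The standard repair, and the one the paper itself relies on (both via the cited reference and in its own Theorems~\ref{theorm:app} and~\ref{theorm:appres}), is a propagation-of-chaos coupling: construct an auxiliary \emph{ideal} process whose weights and neuron outputs are genuinely i.i.d.\ across $j$ at every time $t$, show by induction that the actual process stays within $\tO(m^{-1/2})$ of the ideal one over the fixed horizon $K$, and then apply the LLN to the ideal process. Your sketch never introduces this auxiliary process, so there is no mechanism to restore the independence your concentration step needs. If you introduce the ideal process (exactly the construction of Subsections~\ref{subsection:app} and the analogous Res-Net step), your induction structure can be completed; without it, the argument has a hole precisely at the step you flagged as ``the main obstacle.''
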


\subsection{Comparisons of Dynamics}
\iffalse
There is another line of researches that aim to propose a convex formulation  for DNN, which also yields   a non-trivial task since   DNN is no longer a linear model with respect to the distributions of the weights.  More recently, \citet{fang2019over, fang2019convex}  propose a new view named neural feature repopution for understanding NNs.   Interestingly, \citet{fang2019convex} show that under suitable re-parameterization, the continuous DNN can be reformulated as a convex optimization problem. However,   the continuous DNN in \cite{fang2019convex}   are characterized by probability measures over infinite dimensional spaces. Some of  their results are not stated with complete mathematical rigor. 

Our work provides an inspiring solution on the two issues. On the one hand,  our continuous DNN is modeled based on the features.  So there is no stringent  restriction on the features. On the other hand, following the technique proposed in \cite{fang2019convex}, we prove,  in a rigorous manner, that the learning problem in our formulation  can also be re-parameterized as a convex optimization  under suitable conditions, which  affords understandings on the  landscape of the over-parameterized DNNs. 
\fi

It is known that the evolution of a two-level NN  trained by the Gradient Descent can be described as  a  Mckean-Vlason process \cite{MeiE7665,chizat2018global}. However, to the best of our knowledge, it  still remains as a question whether  the evolution of DNNs can be captured by   PDEs of Mckean-Vlason type.  Recently, \cite{araujo2019mean} gave an affirmative answer for  DNNs under  a specific  condition where    the weights  in  the first and the last layers are not updated by the Gradient Descent. Nevertheless,  in the middle layers,  their  model only has one meaningful neuron.   More recently, \cite{nguyen2020rigorous}  proposed another attempt by directly tracking the trajectories of the  weights. In their description,   the evolution of Gradient Descent is characterized  by  systems of ODEs, which are relatively easier to analyze and avoid the  presence of  the conditional probabilities.

Our description for the evolution of Gradient Descent is similar to \cite{nguyen2020rigorous} in a more general situation where  our dynamic $(\mathrm{i})$ involves  the evolution of the features and  $(\mathrm{ii})$ does not require the boundedness of  the weights. Moreover,  we novelly introduce the conception of skip-connected paths to deal with the  Res-Net architecture.

\section{Continuous DNN Formulation}\label{sec:DNN formulation}
We consider the empirical minimization problem over $N$ training samples $\{\bx^i, y^i \}_{i=1}^N$, where $\bx^i\in \RR^d$ and $y^i\in\mathcal{Y}$.  For regression problems, $\mathcal{Y}$ is typically $\RR$; for classification problems,  $\mathcal{Y}$ is often $[K]$ for an integer $K$.  We first present the formulation of a standard $L$-layer  DNN ($L \geq 2$).

\subsection{Discrete DNN}\label{subsection:disdnn}
For discrete DNNs, let $m_\ell$ denote the number of units at layer $\ell$ for $\ell\in [0:L+1]$. 
Let $m_0=d$ and node $i$ outputs the value of $i$-th coordinate of the training data for $i\in[d]$.
Let $m_{L+1}=1$ that is the unit of the final network output.
For $\ell\in[L+1]$, the output, i.e. features, of node $i$ in layer $\ell$ is denoted by $\hbtheta_{\ell,i}\in\RR^N$; the weight that connects the node $i$ at layer $\ell-1$ to node $j$ at layer $\ell$ is denoted by $\hw_{\ell,i,j}\in \RR$.

\begin{enumerate}[(1)]
    \item  At the input layer, for $i\in[d]$, let
    \begin{eqnarray}\label{dataf}
      \hbtheta_{0,i} :=\left[\bx^1(i), \bx^2(i), \dots, \bx^N(i)  \right]^\top.
    \end{eqnarray}
    \item  We  recursively define the upper layers ($\ell \in [L]$) as below.
\begin{eqnarray}\label{dfor 1}
    \hbtheta_{\ell, j} :=
    \begin{cases}
 \frac{1}{m_{0}}\sum_{i=1}^{m_{0}} \hw_{1, i, j}~ \hbtheta_{0, i} ,&  \quad j \in [m_{1}],~ ~\ell = {1},\\
\frac{1}{m_{\ell-1}}\sum_{i=1}^{m_{\ell-1}}  \hw_{\ell, i, j}~ \th\left(\hbtheta_{\ell-1, i}\right),&  \quad j \in [m_{\ell}],~ \ell \in [2:L], \\
\end{cases}
\end{eqnarray}
where $h$ is the activation function.
     \item At the  output layer, 
     \begin{eqnarray}\label{dfor 2}
    \hbtheta_{L+1, 1} :=
 \frac{1}{m_{L}}\sum_{i=1}^{m_{L}} \hw_{L+1, i}~ \th\left(\hbtheta_{L, i}\right). 
\end{eqnarray}
 Finally,  there is  a loss function     to measure the quality of the predicted result and a  regulazier to  control the complexity of the model or to  avoid ill-conditions.
\end{enumerate}
For convenience,   we  collect the weights at the $\ell$-th layer ($\ell \in [L+1]$) into a single  vector: 
$$ \hbw_{\ell} := \left\{ \hat{w}_{\ell, i, j}:  ~i\in [m_{\ell-1}], ~j\in [m_{\ell}]  \right\} \in \RR^{m_{\ell-1} m_{\ell}} $$
and  all the weights into a single  vector:
\begin{eqnarray}\label{pad w}
\hbw := \left\{ \hbw_{\ell}:~ \ell \in[L+1]\right\},
\end{eqnarray}
where $\hbw$ is an element of $\RR^{w_p}$ with $w_p = \sum_{\ell=1}^ {L+1}  m_{\ell-1} m_{\ell}$.
Similarly,   we aggregate features at $\ell$-th layer ($\ell \in [L]$) into a single  vector:
$$ \hbtheta_{\ell} := \left\{ \hbtheta_{\ell,i}:  ~i\in [m_{\ell}] \right\}\in \RR^{N m_{\ell}}  $$
and all the features into a single  vector:
\begin{eqnarray}\label{pad feature}
 \hbtheta = \left\{ \hbtheta_{\ell}: \ell\in[ L] \right\}\in \RR^{N \sum_{\ell=1}^{L}m_{\ell}} .
\end{eqnarray}
 The overall  learning problem for a  DNN can be formulated as a constrained optimization problem as
\begin{eqnarray}\label{dis DNN}
    \min_{\hbw, \hbtheta }\hat{\bL} (\hbw,\hbtheta) =  \frac{1}{N}\sum_{n=1}^N \phi\left(   \hbtheta_{L+1,1}(n), ~y^n\right) +  \hat{R}(\hbw, \hbtheta),
\end{eqnarray}
where $\hbw$, $\hbtheta $, and $\hbtheta_{L+1}$  are subjected to \eqref{dfor 1} and \eqref{dfor 2}.  Note in \eqref{dis DNN}, $ \hat{R}$ is the regularizer and   $ \phi(\cdot, \cdot):\RR\times \mathcal{Y}\to \RR$ is the loss function that  is  assumed  to be convex in the first argument.

\subsection{Continuous DNN}
Next we introduce our continuous DNN formulation using similar forward propagation of the the discrete DNN in Section \ref{subsection:disdnn}. 
Given a set of $N$ training samples, it is clear that the feature of each neuron is a  $N$-dimensional vector.
This observation motivates our new formulation that uses the distribution of the features to characterize the overall state of each layer. 
% What motivates  our new formulation is the special observation that given  $N$ training samples, the function value for the training samples in each node, i.e., feature, is a  $N$-dimensional vector.    Therefore we use the distribution of the features to characterize the state of a layer.   
This formulation circumvents the issue of infinite dimensionality by reducing the redundancy of existing mean-field modelings that index neurons by the connection to the previous layers. 
We present the details below.

\begin{enumerate}[(1)]
    \item  At the input layer, let $\bX = \left[\bx^1, \bx^2, \dots, \bx^N\right]^\top\in\RR^{N\times d}$.
    \item At the first layer,   each hidden node (before the activation function) is computed by a linear mapping of the input data, so each node can be indexed  by the weights connecting it to the input. 
    We introduce  a probability  measure  $p_1\left(\bw_1\right) \in \mathcal{P}\left(\RR^d\right)$ for the weights to describe the states of  first layer and let\footnote{
The state of the first layer can be equivalently characterized by either the output or the weight that are related by a linear mapping. 
}
    \begin{eqnarray}\label{cfor 1d}
        \btheta_1\left(\bw_1\right) :=  \frac{1}{d} \left(\bX \bw_1\right).
    \end{eqnarray}
    \item  At the second layer, recall that the output of each node, i.e., the feature, for the training samples is a $N$-dimensional vector.
    We use the features $\btheta_{2}\in \RR^N$ to index those nodes. 
    % recalling that  the function value of each node, i.e. feature, for the training samples is a $N$-dimensional vector,   we define $\btheta_{2}\in \RR^N$ as the node  whose  feature is  $\btheta_{2}$.   
    We    introduce  a probability  measure  $p_2(\btheta_2) \in \mathcal{P}(\RR^N)$  to describe the overall states of  the second layer and function $w_2:\supp(p_1)\times \supp(p_2) \to  \RR$
    to denote the weights on the connections from layer $1$ to $2$. We have for all $\btheta_2\in \supp(p_2)$
    \begin{equation}
    \label{forward l2} 
    \int w_2\left(\bw_1,\btheta_2\right)~ \th\left(\btheta_1(\bw_1)\right)   d p_{1}\left(\bw_1\right) = \btheta_2.
    \end{equation}
    \item  Similarly,  for $\ell\in [3:L]$, let $\btheta_{\ell}\in \RR^N$ be the index of nodes according to the features. 
    We    introduce  a probability  measure  $p_\ell(\btheta_\ell) \in \mathcal{P}(\RR^N)$  to describe the states the $\ell$-th layer and  function $w_{\ell}:\supp(p_{\ell-1})\times \supp(p_{\ell}) \to  \RR$
    to denote the weights on the connections from layer $\ell-1$ to $\ell$. 
    We have for all $\btheta_\ell\in \supp(p_\ell)$
    \begin{equation}
    \label{forward l} 
        \int w_{\ell}\left(\btheta_{\ell-1},\btheta_{\ell}\right)~\th\left(\btheta_{\ell-1}\right) dp_{\ell-1}\left(\btheta_{\ell-1}\right)  = \btheta_{\ell}. 
    \end{equation}
    \item  Finally,   let $w_{L+1}: \supp(p)\to \RR$ be the weights in the layer $L+1$ and $\btheta_{L+1}$ be the final output, and we have
    \begin{eqnarray}
        \int w_{L+1}\left(\btheta_{L}\right)~\th\left(\btheta_{L}\right) d p_{L}\left(\btheta_{L}\right)  = \btheta_{L+1}. \notag
    \end{eqnarray}
\end{enumerate}
The overall learning problem for the continuous DNNs is formulated as
\begin{eqnarray}\label{problem1}
\minimize_{\left\{\w_{\ell}\right\}_{\ell=2}^{L+1},~ \left\{p_{\ell}\right\}_{\ell=1}^L }    &&\!\!\!\!\!\!\!\bL\left(\left\{\w_{\ell}\right\}_{\ell=2}^{L+1}, \left\{p_{\ell}\right\}_{\ell=1}^L  \right) =  \frac{1}{N}\sum_{n=1}^N \phi\left( \btheta_{L+1}(n), y^n \right) +  R\left( \left\{\w_{\ell}\right\}_{\ell=2}^{L+1}, \left\{p_{\ell}\right\}_{\ell=1}^L \right)  \\
\text{s.t.}  && \!\!\!\!\!\!\!  \int w_2\left(\bw_1,\btheta_2\right) \th\left(\btheta_1\left(\bw_1\right)\right)   dp_{1}\left(\bw_1\right)  = \btheta_2,\quad \text{for all}~ \btheta_2\in \supp(p_2),\notag\\
&&  \!\!\!\!\!\!\! \int w_{\ell}\left(\btheta_{\ell-1},\btheta_{\ell}\right)\th\left(\btheta_{\ell-1}\right) dp_{\ell-1}\left(\btheta_{\ell-1}\right) = \btheta_{\ell}, \quad \text{for all}~ \btheta_\ell\in \supp(p_\ell), ~\ell\in[3:L],\notag\\
&&  \!\!\!\!\!\!\!   \int w_{L+1}\left(\btheta_{L}\right)\th\left(\btheta_{L}\right) dp_{L}\left(\btheta_{L}\right)  = \btheta_{L+1}, \notag
\end{eqnarray}
where  the regularizer $R$ is the continuous formulation of $\hat{R}$.   

%\begin{remark}
% For convenience, in the case without of confusion,  we will denote $w_{L+1}(\btheta_{L},\btheta_{L+1} ) = w_{L+1}(\btheta_{L})$. 
%\end{remark}
%\begin{remark}
% The reason why  we  use the distribution of weights to characterize the $1$-st layer will be explained in Section \ref{}.
%\end{remark}

\section{Scaled Gradient Descent and Neural Feature Flow for DNN}\label{sec:flow DNN}
\subsection{Scaled Gradient Descent for Training DNN}
\begin{algorithm}[tb]
	\caption{Scaled Gradient Descent for Training a  DNN}
	\label{algo:GD}
		\begin{algorithmic}[1]
	\STATE	Input the data  $\{\bx^i,y^i\}_{i=1}^N$,  step size $\eta$, and  initial weights $\hbw^0$.
 \FOR{$k =0, 1,\dots, K-1$} 
	\STATE  Perform forward-propagation  \eqref{dfor 1} and \eqref{dfor 2} to compute $\hbtheta^{k}_{L+1,1}$.
	 \STATE  Perform backward-propagation to compute the gradient   $\hGrad^k_{\ell,i,j}=\frac{\partial \hat{\bL}}{\partial \hw_{\ell,i,j}^k}$. 
	  \STATE  Perform scaled Gradient Descent:
	  \begin{eqnarray}
	     \hw_{\ell,i,j}^{k+1} &=&   \hw^{k}_{\ell,i,j} - \big[\eta m_{\ell-1}m_{\ell}\big]  ~ \hGrad^k_{\ell,i,j}, \quad \ell\in[L+1], ~i\in [m_{\ell-1}],~j\in [m_{\ell}]. \notag
	  \end{eqnarray} \vspace{-0.25in}
	  \ENDFOR
	  \STATE Output the weights $\hbw^K$.
	\end{algorithmic}
\end{algorithm}

In this section, we  focus on the  scaled Gradient Descent algorithm and deduce  its continuous limit.   
% For the sake of simplicity in our analysis, we  temporarily do not consider the regularizer  ($R\equiv0$).  
For the sake of simplicity, we analyze the algorithm without regularizer.
We consider the scaled Gradient Descent algorithm with appropriate step sizes (time scales) for the parameters to match the scale in the continuous limit. Similar scaling is also adopted in existing mean-field theory of DNNs \cite{araujo2019mean,nguyen2020rigorous}.
Given an initial  weights $\hbw^0$, the meta algorithm of the scaled Gradient Descent is shown in Algorithm \ref{algo:GD}, where the gradients $\hGrad$ can be obtained by the standard backward-propagation algorithm.  Especially, by  introducing intermediate variables:
\begin{eqnarray}
 \hDel^k_{L+1,1} &:=&  N~\frac{\partial \hat{L}^k}{\partial \hbtheta_{L+1}  } = \left[\phi'_1\left( \hbtheta_{L+1}^k(1), y^1\right), \phi'_1\left( \hbtheta_{L+1}^k(2), y^2\right),\dots, \phi'_1\left( \hbtheta_{L+1}^k(N), y^N\right) \right]^\top,\notag\\
  \hDel^k_{\ell,i} &:=&   N~\frac{\partial \hat{L}^k}{\partial \hbtheta_{\ell,i}  }= \frac{1}{m_{\ell}} \sum_{j=1}^{m_{\ell+1}}\hw_{\ell+1,i,j}^k~ \left[\hDel^k_{\ell+1} \cdot \th'\left(\hbtheta^k_{\ell, i}\right)\right], \quad \ell \in[L],~i\in[m_{\ell}]. \notag
\end{eqnarray}
Then, we have 
\begin{eqnarray}
  \hGrad^k_{\ell+1,i,j} &=& \frac{1}{N m_{\ell}}   \left[\hDel_{\ell+1,j}^{k}\right]^\top \th\left(\hbtheta_{\ell, i}^k\right), \quad \ell \in[ L], ~i \in [m_{\ell}], ~ j\in [m_{\ell+1}],\notag\\
 \hGrad^k_{1,i,j} &=& \frac{1}{N d }   \left[\hDel_{1,j}^{k}\right]^\top \hbtheta_{0,i}^k, \quad  ~i \in [d], ~ j\in [m_{1}]. \notag
\end{eqnarray}

\iffalse
\begin{remark}[Scaling]
We choose  different time-scales (step sizes) for the parameters. In detail, if we have $m_{\ell}=m$, then  $\hbw_1$ and $\hbw_{L+1}$ evolve with scale $\cO(\eta m)$ when we treat $d$ as a constant,   whereas  $\left\{\hbw_{\ell}\right\}_{\ell=2}^L$ evolve with $\cO(\eta m^2)$.  This setting matches the scale in the  gradient flow for   continuous DNN. To the best of our knowledge, existing works, e.g. \citet{araujo2019mean,nguyen2020rigorous},  also adopt such time-scales.
\end{remark}
\begin{remark}[Initialization]
Our framework  requires a different initialization for DNN.    Shown in Section \ref{sec:related}, the standard initialization strategy  scales the weights with magnitude around  $\sqrt{m_{\ell}}$ which goes to infinity with the growth of $m_{\ell}$. Hence we  preform an $\ell_2$-regression procedure to reduce the redundancy of the weights.
The details are presented  in Subsection \ref{subsection:app}.
\end{remark}
 \fi

\subsection{Neural Feature Flow for Training Continuous DNN}
We derive the evolution of the Gradient Descent algorithm on a  continuous  DNN  $\left(\left\{\w_{\ell}\right\}_{\ell=2}^{L+1}, \left\{p_{\ell}\right\}_{\ell=1}^L\right)$. When the step size $\eta$ goes to $0$,  both the weights and the features  are expected to move continuously through time.  We first introduce the notations for the trajectories of $\bw_1$, $\{w_{\ell}\}_{\ell=2}^L$, and $\{\btheta_{\ell}\}_{\ell=2}^{L}$:
\begin{itemize}
    \item $\Psi_{\ell}^{\btheta}:\mathrm{supp}(p_{\ell})\to C\left([0,T], \RR^N\right)$ is the trajectory of $\btheta_{\ell}$ for $\ell\in[2:L]$;
    \item $\Psi_{1}^{\bw}:\mathrm{supp}(p_1)\to C([0,T], \RR^d)$  and $ \Psi_{L+1}^{\bw}:\mathrm{supp}(p_L) \to C([0,T], \RR) $ are the trajectories of $\bw_1$ and $w_{L+1}$, respectively;
    \item $ \Psi_{\ell}^{\bw} :\mathrm{supp}(p_{\ell-1}) \times \mathrm{supp}(p_{\ell}) \to C([0,T], \RR)$ is the trajectory of $w_{\ell}$ for $\ell\in[2:L]$; 
    \item  Let  $\Psi$ be the collection of  these trajectories.
\end{itemize}
The continuous gradient for the weight can be obtained from the backward-propagation algorithm. Especially, we define
\begin{subequations}
\begin{align}
    \btheta_{L+1}\left(\Psi,t\right) &:=\int \Psi^{\bw}_{L+1}\left(\btheta_L\right)(t)~ \th\left(\Psi^{\btheta}_{L}\left(\btheta_L\right)(t)\right)   d p_{L}\left(\btheta_L\right),\label{def linedeltl+1}\\
      \lineDel_{L+1}(\Psi,t) &:=  \left[\phi'_1\left(\btheta_{L+1}^t(1), y^1\right),\phi'_1\left(\btheta_{L+1}^t(2), y^2\right),\dots, \phi'_1\left(\btheta_{L+1}^t(N), y^N\right)\right]^\top,\notag\\
     \lineDel_{L}\left(\btheta_{L}; \Psi,t\right) &:=  \left[\Psi^{\bw}_{L+1}\left(\btheta_{L})(t\right)~ \lineDel_{L+1}(\Psi,t)\right]   \cdot \th'\left(\Psi^{\btheta}_{L}\left(\btheta_L\right)(t)\right),\label{def linedel}\\
       \lineDel_{\ell}\left(\btheta_{\ell}; \Psi,t\right)  &:= \!\left[\int\!  \! \Psi^{\bw}_{\ell+1}\!\left(\btheta_{\ell},\btheta_{\ell+1}\right)\!(t)~\lineDel_{\ell+1}\left(\btheta_{\ell+1}; \Psi,t\right)  dp_{\ell+1}\!\left(\btheta_{\ell+1}\right)\!\right] \! \cdot  \! \th'\!\left(\!\Psi^{\btheta}_{\ell}\left(\btheta_\ell\right)(t)\!\right)\!,\label{def linedel1}\\
       \lineDel_{1}\left(\bw_1; \Psi,t\right)  &:= \! \left[\int  \Psi^{\bw}_{2}\left(\bw_1,\btheta_{2})(t)  ~ \lineDel_{2}(\btheta_{2}; \Psi,t\right)  dp_{2}\left(\btheta_{2}\right) \right] \cdot \th'\Big(\btheta_1\big(\Psi^\bw_1(\bw_1)(t)\big)\Big),\label{def linedel2}
\end{align}
\end{subequations}
where in \eqref{def linedel}, $\btheta_L\in\supp(p_L)$,  in \eqref{def linedel1}, $\ell\in[2:L-1]$ and $\btheta_\ell\in\supp(p_\ell)$, and in \eqref{def linedel2}, $\bw_1\in \supp(p_1)$ and $\btheta_1(\cdot)$ is defined by \eqref{cfor 1d}.  Then the  gradient of the weights can be written as below.
\begin{subequations}
\begin{align}
  \lineGrad^{\bw}_{L+1}\left(\btheta_{L};\Psi,t\right) &:= \frac{1}{N} \left[\lineDel_{L+1}(\Psi,t)\right]^\top \th\left(\Psi^{\btheta}_{L}\left(\btheta_L\right)(t)\right),\label{linegrad wL+1}\\  
  \lineGrad^{\bw}_{\ell}\left(\btheta_{\ell-1}, \btheta_{\ell} ;\Psi,t\right) &:=  \frac{1}{N}\left[\lineDel_{\ell}(\btheta_\ell;\Psi,t)\right]^\top \th\left(\Psi^{\btheta}_{\ell-1}\left(\btheta_{\ell-1}\right)(t)\right),\label{linegrad wL+11}\\ 
   \lineGrad^{\bw}_{2}\left(\bw_1, \btheta_{2} ;\Psi,t\right) &:=   \frac{1}{N}\left[\lineDel_{2}\left(\btheta_2;\Psi,t\right)\right]^\top \th\Big(\btheta_1\big(\Psi^\bw_1(\bw_1)(t)\big)\Big),\label{linegrad wL+12}\\
        \lineGrad^{\bw}_{1}\left(\bw_1;\Psi,t\right) &:= \frac{1}{N}\bX^\top\left[\lineDel_{1}\left(\bw_1;\Psi,t\right)\right], \label{linegrad wL+13}
\end{align}
\end{subequations}
where in \eqref{linegrad wL+1}, $\btheta_L\in\supp(p_L)$,  in \eqref{linegrad wL+11}, $\ell\in[3:L]$, $\btheta_{\ell-1}\in\supp(p_{\ell-1})$, and $\btheta_{\ell}\in\supp(p_{\ell})$, in \eqref{linegrad wL+12}, $\bw_1\in\supp(p_1)$ and $\btheta_{2}\in\supp(p_{2})$, and in \eqref{linegrad wL+13}, $\bw_1\in\supp(p_1)$.

Moreover, we expect that the features    satisfy the constraints:   
\begin{eqnarray}
     \int \Psi^{\bw}_2\left(\bw_1,\btheta_2\right)(t)~ \th\Big(\btheta_1\big(\Psi^\bw_1(\bw_1)(t)\big)\Big)  d p_{1}\left(\bw_1\right)  \!\!\!&=&\!\!\! \Psi^{\btheta}_2\left(\btheta_2\right)(t),\quad \btheta_2\in\supp(p_2), \notag\\
   \int \Psi^{\bw}_{\ell}\left(\btheta_{\ell-1},\btheta_{\ell}\right)\left(t\right)\th\left(\Psi^{\btheta}_{\ell-1}\left(\btheta_{\ell-1}\right)\left(t\right)\right) d p_{\ell-1}\left(\btheta_{\ell-1}\right)  \!\!\!&=&\!\!\! \Psi^{\btheta}_{\ell}\left(\btheta_{\ell}\right)(t), \quad \ell\in[3:L], ~\btheta_\ell\in\supp(p_\ell). \notag
\end{eqnarray}
So the drift term for the features can be obtained by the chain rule:
 \begin{subequations}
 \begin{align}
   \lineGrad^{\btheta}_{1}\left(\bw_1;\Psi,t\right) &:=  \frac{1}{d} \left[ \bX~ \lineGrad^{\bw}_{1}\left(\bw_1;\Psi, t\right) \right],\label{grad11}\\
 \lineGrad^{\btheta}_{2}\left( \btheta_{2} ;\Psi, t\right) &:= \int \Psi^{\bw}_{2}\left(\bw_1, \btheta_{2}\right)(t)\left[  \th'\Big(\btheta_1\big(\Psi^\bw_1(\bw_1)(t)\big)\Big)\cdot \lineGrad^{\btheta}_{1}\left(\bw_1; \Psi,t\right)\right]d p_{1}\left(\bw_1\right) \notag\\
  &\quad~ +\int \th\Big(\btheta_1\big(\Psi^\bw_1(\bw_1)(t)\big)\Big)\cdot \lineGrad_2^{\bw}\left(\bw_1, \btheta_{2};\Psi,t\right) d p_{1}\left(\bw_1\right),\label{grad1}\\  
    \lineGrad^{\btheta}_{\ell}\left( \btheta_{\ell} ;\Psi, t\right) &:= \int  \Psi^{\bw}_{\ell}\left(\btheta_{\ell-1}, \btheta_{\ell}\right)(t)\left[\th'\left(\Psi^{\btheta}_{\ell-1}\left(\btheta_{\ell-1}\right)(t)\right) \cdot \lineGrad^{\btheta}_{\ell-1}\left(\btheta_{\ell-1}; \Psi,t\right)\right]d p_{\ell-1}\left(\btheta_{\ell-1}\right)\notag\\
  &\quad~+ \int \th\left(\Psi^{\btheta}_{\ell-1}\left(\btheta_{\ell-1}\right)(t)\right) \cdot\lineGrad^{\bw}_{\ell}\left(\btheta_{\ell-1}, \btheta_{\ell};\Psi,t\right) d p_{\ell-1}\left(\btheta_{\ell-1}\right), \label{grad12}
 \end{align}
 \end{subequations}
 where in \eqref{grad11}, $\bw_1\in\supp(p_1)$, in \eqref{grad1}, $\btheta_2\in\supp(p_2)$, and in \eqref{grad12}, $\ell\in[3:L]$ and $\btheta_{\ell}\in\supp(p_\ell)$. % Note that  the symbol $ \lineGrad^{\btheta}_\ell(\btheta_{\ell};\Psi,t)$  is not the functional derivative of $L$ with respect to feature $\btheta_{\ell}$ but the drift term   in our dynamic equations.
 \iffalse In the following, for the convenience of our notations, we  introduce
\begin{eqnarray}
 &&\!\!\!\!\! \!\!\!\!\!\!\! \lineGrad(\bTheta; \Psi,t):\RR^D\times \mathbf{\Psi}\times [0, T] \to \RR^D \notag\\
 \!\!\!\!\! &:=&\!\!\!\!\! \Big(    \lineGrad^{\bw}_1( \bw_1;\Psi,t ),  ~ \lineGrad^{\bw}_2( \bw_1, \btheta_2;\Psi,t ),~ \lineGrad^{\btheta}_2(\btheta_2;\Psi,t ),  \ \notag\\
 &&\quad\quad \lineGrad^{\bw}_3( \btheta_2, \btheta_3;\Psi,t ), \lineGrad^{\btheta}_3(\btheta_3;\Psi,t ),~  \dots,  ~\lineGrad^{\btheta}_L(\btheta_L;\Psi,t ), ~\lineGrad^{\bw}_{L+1}( \btheta_L;\Psi,t )\Big). \notag
\end{eqnarray}
\fi
 Now we  define   the process of  a continuous DNN trained by Gradient Descent called neural feature flow, which characterizes the evolution  of  both  weights and  features.
\begin{mdframed}[style=exampledefault]
\begin{definition}[Neural Feature Flow for DNN]\label{NFLp0}
Given an initial continuous DNN represented by  $\left(\left\{\w_{\ell}\right\}_{\ell=2}^{L+1}, \left\{p_{\ell}\right\}_{\ell=1}^L\right)$ and $T<\infty$, we say a trajectory $\Psi_*$ is
a neural feature flow  if for all $t\in[0,T]$,
\begin{enumerate}[(1)]
    \item for all $\ell\in[2:L]$ and $\btheta_{\ell}\in\supp(p_\ell)$,   $$\Psi_{*,\ell}^{\btheta}\left(\btheta_{\ell}\right)(t) = \btheta_{\ell} -\int_{s=0}^t\lineGrad^{\btheta}_{\ell}\left( \btheta_\ell;\Psi_*,s\right)ds, $$ 
    \item for all  $\bw_1\in\supp(p_1)$,  $$\vspace{-0.1in}\Psi_{*,1}^{\bw}\left(\bw_1\right)(t) = \bw_1 -\int_{s=0}^t\lineGrad^{\bw}_{1}\left( \bw_1;\Psi_*,s\right)ds,$$
    \item for all  $\bw_1\in\supp(p_1)$ and  $\btheta_2\in\supp(p_2)$,  $$\Psi_{*,2}^{\bw}\left(\bw_1, \btheta_2\right)(t) = w_2\left(\bw_1, \btheta_2\right)-\int_{s=0}^t\lineGrad^{\bw}_{2}\left(\bw_1, \btheta_2;\Psi_*,s\right)ds,$$
    \item  for all $\ell\in[2:L-1]$, $\btheta_{\ell}\in\supp(p_{\ell})$,  and  $\btheta_{\ell+1}\in\supp(p_{\ell+1})$,
     $$\Psi_{*,\ell+1}^{\bw}\left(\btheta_{\ell}, \btheta_{\ell+1}\right)(t) = w_{\ell+1}\left(\btheta_\ell, \btheta_{\ell+1}\right)-\int_{s=0}^t\lineGrad^{\bw}_{\ell+1}\left( \btheta_{\ell}, \btheta_{\ell+1};\Psi_*,s\right)ds,$$ 
     \item for all $\btheta_{L}\in\supp(p_{L})$, $$\Psi_{*,L+1}^{\bw}\left(\btheta_{L}\right)(t) = w_{L+1}\left(\btheta_L\right) -\int_{s=0}^t\lineGrad^{\bw}_{L+1}\left( \btheta_L;\Psi_*,s\right)ds.$$
\end{enumerate}
\end{definition}
\end{mdframed}

%From Definition \ref{NFLp0}, we know the training process of a continuous DNN is described  by infinite systems of ODEs. we shall prove the existence and the uniqueness of $\Psi_*$ in the next section.  It  characterizes  how a continuous DNN evolves through time. 
% We call our evolution dynamic   as neural feature flow which characterizes the evolution  of  both  weights and  features.

\section{Analysis of Continuous DNN}\label{sec:theo DNN}

\subsection{Assumptions for DNN}
We first present our  assumptions.  
In the analysis we treat $N$ and $d$ as constants.
We emphasize that these assumptions are  mild and  can be satisfied in practice.

\begin{assumption}[Activation Function]\label{ass:1} We assume the activation function is bounded and has bounded and Lipschitz continuous gradient.  That is, there exists  constants $L_1\geq 0$, $L_2\geq 0$, and $L_3\geq0$, such that for all $x\in\RR$,
\begin{eqnarray}
       \left|h(x)\right| \leq  L_1,\quad\quad    \left|   h' (x) \right| \leq L_2,\notag
\end{eqnarray}
and for all $x\in\RR$ and $y\in\RR$, 
$$  \left|   h' (x) -  h'(y) \right|\leq L_3 |x-y |. $$
\end{assumption}

\begin{assumption}[Loss Function]\label{ass:2} We assume the loss function $\phi(\cdot;\cdot):\RR\times \mathcal{Y}\to \RR$ has    bounded and Lipschitz continuous gradient for the first argument.   That is, there exists  constants $L_4\geq0$ and $L_5\geq0$, such that for all $y\in\mathcal{Y}$, and $x_1 \in\RR$, 
\begin{eqnarray}
        \left|   \phi'_1 (x_1,y) \right| \leq L_4,\notag
\end{eqnarray}
and for all $x_2\in\RR$, 
$$  \left|   \phi'_1 (x_1,y)  - \phi'_1 (x_2,y) \right|\leq L_5 |x_1-x_2 |. $$
\end{assumption}
Assumptions \ref{ass:1} and \ref{ass:2} only  require certain smoothness and boundedness of the loss and activation functions.  In the following, we propose the conditions for the initial continuous DNN  $\left(\left\{\w_{\ell}\right\}_{\ell=2}^{L+1}, \left\{p_{\ell}\right\}_{\ell=1}^L\right)$.
In Subsection \ref{subsection:app}, we will consider concrete examples that realize these assumptions. 

\iffalse
\begin{assumption}[Initialization on $\left\{p_{\ell}\right\}_{\ell=1}^L$]\label{ass:4}
We assume for all $\ell\in[L]$, $p_{\ell}$ is a sub-gaussian distribution\footnote{ This paper focuses on high probability results.  To  obtain  constant probability results, Assumption \ref{ass:4}  can be relaxed to require  a bounded $(2+\alpha)$-th moment, where $\alpha>0$. We also note that the definition in Assumption \ref{ass:4}   
on multivariate  sub-gaussian distribution slightly differs from the standard one, which states that  a random variable $\bxi\in\RR^{d_1}$ follows a   sub-gaussian distribution with parameter $\bar{\sigma}$ if  for  all $\mathbf{c}\in \mathbb{S}^{d_1-1}$,  $\EE_{\bxi}\left[ \exp\left( \left(\mathbf{c}^\top \bxi\right)^2/\bar{\sigma}^2\right)\right]\leq e$. It is straightforward to verify that the  two definitions  are  equivalent  in the sense that  $ \sqrt{C_{\sigma}d_1}  \bar{\sigma}\leq \sigma \leq  \sqrt{C_{\sigma}d_1} \bar{\sigma}$, where  $C_{\sigma}$ is an universal constant. See Appendix \ref{}.  
}.  In detail,  we assume that there exist a constant $\sigma>0$, such that 
   $$    \E_{p_{1}}\left[\exp\left(\left\|\bw_1\right\|_{\infty}^2/\sigma^2\right)\right] \leq e, \quad \quad    \E_{p_{\ell}}\left[\exp\left(\left\|\btheta_\ell\right\|^2_{\infty}/\sigma^2\right)\right] \leq e,\quad \ell \in[2:L].\notag$$
\end{assumption}
%Assumption \ref{ass:4} is not hard to be satisfied. In fact,  in Appendix, we will show that the standard initialization strategy, e.g. \cite{}  also guarantees Assumption \ref{ass:3} under mild conditions. 
\fi

\begin{assumption}[Initialization on $\left\{p_{\ell}\right\}_{\ell=1}^L$]\label{ass:4}
We assume for all $\ell\in[L]$, $p_{\ell}$ is  $\sigma$-sub-gaussian\footnote{ This paper focuses on high probability results. To obtain constant probability results,   Assumption \ref{ass:4} can be relaxed to as that $p_{\ell}$ has a bounded $(2+\alpha)$-th moment for $\alpha>0$.
}. \iffalse Formally,  there exist a constant $\sigma>0$, such that  for all $\bc_1\in  \mathbb{S}^{d-1}$ and $\bc_2\in  \mathbb{S}^{N-1}$, 
$$ \EE_{\bw_1\sim p_1}\left[ \exp\left( \left(\bc_1^\top \bw_1\right)^2/\sigma^2\right)\right]\leq e, \quad\quad  \EE_{\btheta_{\ell}\sim p_{\ell}}\left[ \exp\left( \left(\bc_2^\top \btheta_{\ell}\right)^2/\sigma^2\right)\right]\leq e,\quad \ell\in[2:L].  $$\fi
\end{assumption}

\begin{assumption}[Initialization on $\{w_{\ell}\}_{\ell=2}^{L+1}$]\label{ass:3}
%Under Assumption \ref{ass:4},  
We assume that, for all $\ell\in[2:L]$, $w_{\ell}(\cdot, \cdot)$ have a sublinear growth on the second argument. In other words, there is  a constants $C_1\geq0$, such that 
\begin{eqnarray}
    \left|w_{2}\left(\bw_1,\btheta_2\right)\right| \!\!\!&\leq&\!\!\! C_1\left(  1+ \left\|\btheta_2\right\|_{\infty} \right) , \quad \text{for all}~~ \bw_1\in\supp(p_1) ,\btheta_2\in \supp(p_2), \label{ass41}\\
        \left|w_{\ell}\left(\btheta_{\ell-1},\btheta_{\ell}\right)\right|\!\!\!&\leq&\!\!\! C_1 \left(1+ \left\|\btheta_{\ell}\right\|_{\infty}\right), \quad \text{for all}~~ \btheta_{\ell-1}\in \supp(p_{\ell-1}) ,\btheta_\ell\in \supp(p_{\ell}),  \ell \in [3: L]\label{ass42}.
\end{eqnarray}
Moreover, we assume that $w_{\ell}(\cdot, \cdot)$  are locally Lipschitz continuous where their Lipschitz constants have a sub-linear growth on the second argument. In detail, there is a  constant $C_2\geq0$ such that 
for all $\bw_1 \in\supp(p_{1})$, $\bar{\bw}_1 \in\supp(p_{1}) \cap \mathcal{B}_{\infty}(\bw_1,1)$, $\btheta_\ell \in \supp(p_{\ell})$, and $\bar{\btheta}_\ell \in \supp(p_{\ell})\cap \mathcal{B}_{\infty}(\btheta_{\ell},1)$ with $\ell\in[2:L]$, we have
\begin{eqnarray}
\big|w_{2}(\bw_1,\btheta_2) -w_{2}(\bar{\bw}_1,\bar{\btheta}_2) \big| &\leq& C_2\left(  1+ \left\|\btheta_2\right\|_{\infty} \right)\big(\big\|\bw_1 -\bar{\bw}_1\big\|_{\infty}+\big\|\btheta_2 -\bar{\btheta}_2\big\|_{\infty}\big),\label{ass43}\\  
\big|w_{\ell}(\btheta_{\ell-1},\btheta_\ell) -w_{\ell}(\bar{\btheta}_{\ell-1},\bar{\btheta}_\ell) \big| &\leq&C_2\left(  1+ \left\|\btheta_{\ell}\right\|_{\infty} \right)\big(\big\|\btheta_{\ell-1} -\bar{\btheta}_{\ell-1}\big\|_{\infty}+\big\|\btheta_\ell -\bar{\btheta}_\ell\big\|_{\infty}\big).\label{ass44}
\end{eqnarray}
For the last layer, we assume that $w_{L+1}$ is uniformly bounded  and is  Lipschitz continuous on $\btheta_{L}$, namely, there exist constants $C_3\geq0$ and $C_4\geq0$ such that for all $\btheta_{L}\in\RR^N$ and $\bar{\btheta}_{L}\in\RR^{N}$, we have
\begin{eqnarray}
   \big|w_{L+1}(\btheta_L)\big|\leq C_3 \quad\text{and}\quad  \big|w_{L+1}(\btheta_{L}) -w_{L+1}(\bar{\btheta}_L) \big| &\leq& C_4\big\|\btheta_{L} -\bar{\btheta}_{L}\big\|_{\infty}.      \label{ass45}
\end{eqnarray}
\end{assumption}

\subsection{Properties of Neural Feature Flow for DNN}
We first analyze the neural feature flow.  The following theorem guarantees the existence and uniqueness.
\begin{theorem}[Existence and Uniqueness of  Neural Feature Flow on DNN]\label{theorm:flow p0}
Under Assumptions \ref{ass:1}~--~\ref{ass:3}, for any $T<\infty$,   there exists an 
 unique neural feature flow  $\Psi_*$. 
\end{theorem}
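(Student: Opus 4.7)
The plan is to recast Definition~\ref{NFLp0} as an integral fixed-point equation $\Psi = \mathcal{F}(\Psi)$ on the space of trajectories and solve it by Picard iteration on a short interval $[0,T_0]$, then extend to $[0,T]$ by continuation. The natural working space is continuous curves $\Psi:[0,T_0]\to E$, where $E$ is the product of the function spaces carrying $\Psi^{\bw}_\ell$ and $\Psi^{\btheta}_\ell$, equipped with a weighted $L^\infty$-norm whose layer-$\ell$ weight is $(1+\|\btheta_\ell\|_\infty)^{-r_\ell}$ for suitable exponents $r_\ell$. The closed ball $\mathcal{X}_{M,T_0}$ of radius $M$ around the (constant-in-$t$) initialization is a complete metric space, and the defining integral form of the flow encodes the initial condition automatically.

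First I would derive layerwise a priori bounds for $\Psi\in\mathcal{X}_{M,T_0}$. Boundedness of $h,h',\phi'_1$ (Assumptions~\ref{ass:1}--\ref{ass:2}) immediately gives $\|\lineDel_{L+1}(\Psi,t)\|_\infty\le L_4$. Working backward through \eqref{def linedel}--\eqref{def linedel2} and using the sublinear-growth conditions \eqref{ass41}--\eqref{ass42} together with \eqref{ass45}, one obtains $\|\lineDel_\ell(\btheta_\ell;\Psi,t)\|_\infty\le C_\ell(M)(1+\|\btheta_\ell\|_\infty)$, while sub-gaussianity of $\{p_\ell\}$ (Assumption~\ref{ass:4}) ensures the outer integrals defining $\lineGrad^{\bw}_\ell,\lineGrad^{\btheta}_\ell$ are finite and uniformly bounded on $\mathcal{X}_{M,T_0}$. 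Next, for $\Psi,\tilde\Psi\in\mathcal{X}_{M,T_0}$, the local Lipschitz bounds \eqref{ass43}--\eqref{ass45} together with the Lipschitz constants $L_3,L_5$ let me propagate, by a layerwise induction, an estimate of the form $\|\mathcal{F}(\Psi)-\mathcal{F}(\tilde\Psi)\|_{T_0}\le C(M)\,T_0\,\|\Psi-\tilde\Psi\|_{T_0}$. Choosing $T_0$ small enough yields a strict contraction, Banach's theorem produces a unique fixed point on $[0,T_0]$, and since the a priori bounds on $M$ grow only polynomially with $T_0$, a standard bootstrap continues the solution to the full interval $[0,T]$. Uniqueness on $[0,T]$ follows from the same Lipschitz estimate plus a Gr\"onwall inequality applied to any two solutions.

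The main obstacle is propagating polynomial-growth estimates through the deeply nested backward recursion for $\lineDel_\ell$: each layer multiplies by a factor bounded by $|w_\ell|\lesssim 1+\|\btheta_\ell\|_\infty$, so composing $L-1$ such factors would naively blow up as $(1+\|\btheta_\ell\|_\infty)^{L-1}$. The fix is to choose the weights $r_\ell$ (and companion exponents $s_\ell$ governing the Lipschitz differences $\lineDel_\ell(\btheta_\ell;\Psi,t)-\lineDel_\ell(\btheta_\ell;\tilde\Psi,t)$) large enough that sub-gaussian integrability absorbs the growth at each outer integration against $p_{\ell-1}$, using $\int(1+\|\btheta\|_\infty)^k\,dp_\ell<\infty$ for every $k$. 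A secondary subtlety is that $\Psi^{\btheta}_\ell(\cdot)(t)$ drifts over time, so the effective tail constant of the pushforward worsens with $t$; on $\mathcal{X}_{M,T_0}$ this drift is $O(T_0)$ uniformly in the index variable, so the effective sub-gaussian parameter degrades only by an $O(1)$ factor on a short interval. Once these weighted norms are chosen correctly, the rest of the argument is a fairly routine Picard iteration together with Gr\"onwall.
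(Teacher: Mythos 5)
Your proposal follows the same high-level strategy as the paper: set up a fixed-point equation for the trajectory in a weighted $L^\infty$-space, use sub-gaussian integrability of $\{p_\ell\}$ to make the layer-coupling integrals finite, and solve by Picard iteration. The implementations differ in two ways that are worth noting.

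First, the paper does not work on a short interval $[0,T_0]$ and then continue. Instead, it shows the integral-Gr\"onwall estimate $\Dis^{[0,t]}(F(\Psi_1),F(\Psi_2))\le C\int_0^t\Dis^{[0,s]}(\Psi_1,\Psi_2)\,ds$ directly (Lemma~\ref{lmm:contraction}); iterating $m$ times gives the factorial decay $\frac{(CT)^m}{m!}$, which makes $F^m$ a strict contraction on $[0,T]$ for $m$ large enough regardless of how large $T$ is. This sidesteps the continuation step in your plan and the associated bookkeeping of how the radius $M$ of $\mathcal{X}_{M,T_0}$ grows along the continuation.

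Second, and more substantively, your diagnosis of the "main obstacle" --- that the backward recursion for $\lineDel_\ell$ composes $L-1$ factors of size $(1+\|\btheta_\ell\|_\infty)$ and thus naively blows up as $(1+\|\btheta_\ell\|_\infty)^{L-1}$ --- misreads the structure of Assumption~\ref{ass:3}. The sublinear growth of $w_{\ell+1}(\btheta_\ell,\btheta_{\ell+1})$ is in the \emph{second} (inner) argument $\btheta_{\ell+1}$, and the same variable is the one integrated out against $p_{\ell+1}$. Concretely, $\lineDel_L$ is uniformly bounded because $w_{L+1}$ is uniformly bounded by \eqref{ass45}, and then inductively
\begin{equation*}
\int |w_{\ell+1}^t(\btheta_\ell,\btheta_{\ell+1})|\,\|\lineDel_{\ell+1}(\btheta_{\ell+1})\|_\infty\, dp_{\ell+1}(\btheta_{\ell+1})
\le \tilde\bC_{\ell+1}\int C_1(1+\|\btheta_{\ell+1}\|_\infty)\,dp_{\ell+1}(\btheta_{\ell+1}) + O(T),
\end{equation*}
which is a constant independent of $\btheta_\ell$. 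So the $\lineDel_\ell$ bounds reset to a uniform constant at every layer; no polynomial factors accumulate, and the single exponent $r_\ell\equiv 1$ that the paper uses in its normalized metric $\Dis^{[0,t]}$ already closes the argument. Your layer-dependent exponents $r_\ell$ and $s_\ell$ would also work (sub-gaussianity gives all polynomial moments), but introduce an extra subtlety: if the feature gradient $\lineGrad^{\btheta}_\ell$ were allowed to grow like $(1+\|\btheta_\ell\|_\infty)^{s_\ell}$ with $s_\ell>1$, the underlying ODE for the feature trajectory would be superlinear and you would need to verify that the ball is preserved on a sufficiently long interval --- exactly the sort of bookkeeping the paper's observation makes unnecessary. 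You should recognize that Assumption~\ref{ass:3} is engineered so that the linear (and uniform, for the backward pass) bound propagates through every layer, and a flat weight of $(1+\|\cdot\|_\infty)^{-1}$ suffices.
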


Moreover, we show that $\Psi_*$ is a homotopy that continuously transforms a continuous DNN from state $\Psi(\cdot)(0)$ to  $\Psi(\cdot)(t)$ where $t\in[0,T]$. 
The continuity in time is due to the finite gradients (see Lemma \ref{lmm:Psi-property}); the continuity in features in given by following theorem:
% Moreover, we show that $\Psi_*$ is a \red{homotopy} that continuously transforms a continuous DNN from state $\Psi(\cdot)(0)$ to  $\Psi(\cdot)(t)$ where $t\in[0,T]$. Especially,\red{
% we have the theorem  below.}
\begin{theorem}[Property of $\Psi_*$]\label{theorem:conpsi}
Under Assumptions \ref{ass:1} -- \ref{ass:3},  let $\Psi_*$ be the  neural feature flow, there  are constants $C\geq0$ and $C'\geq0$ such that  for all $t\in[0,T]$,  $\bw_1\in\supp(p_1)$, $\bbw_1\in \supp(p_{1}) \cap \mathcal{B}_{\infty}(\bw_1,1) $, $\btheta_{\ell}\in\supp(p_\ell) $, and $\bbtheta_{\ell}\in \supp(p_{\ell}) \cap \mathcal{B}_{\infty}(\btheta_{\ell},1) $ with $\ell\in[2:L]$, we have
\begin{align*}
     \left\|\Psi^{\btheta}_{*,\ell}\left(\btheta_{\ell}\right)(t)-\Psi^{\btheta}_{*,\ell}\left(\bbtheta_{\ell}\right)(t) \right\|_{\infty}&\leq  Ce^{C' t}( \|\btheta_\ell\|_{\infty}+1 )\|\btheta_\ell -\bbtheta_{\ell} \|_{\infty},~~ \ell\in[2:L],\\
         \left\|\Psi^\bw_{*,1}(\bw_1)(t)-\Psi^\bw_{*,1}(\bbw_1)(t) \right\|_{\infty}&\leq  Ce^{C' t}( \|\bw_1\|_{\infty}+1 )\|\bw_1 -\bbw_1 \|_{\infty},\\
  \left|\Psi^{\bw}_{*,}(\bw_1, \btheta_2)(t) - \Psi^{\bw}_{*,2}(\bbw_1, \btheta_2)(t)  \right|&\leq C e^{C't} (\|\bw_1\|_{\infty}+ \|\btheta_2\|_{\infty}+1) \|\bw_1-\bbw_1 \|_{\infty},\\
    \left|\Psi^{\bw}_{*,2}(\bw_1, \btheta_2)(t) - \Psi^{\bw}_{*,2}(\bw_1, \bbtheta_2)(t)  \right|&\leq  Ce^{C't} ( \|\bw_1\|_{\infty}+ \|\btheta_2\|_{\infty}+1) \|\btheta_2-\bbtheta_2 \|_{\infty},\\
  \left|\Psi^{\bw}_{*,\ell}(\btheta_{\ell-1}, \btheta_\ell)(t) - \Psi^{\bw}_{*,\ell}(\bbtheta_{\ell-1}, \btheta_\ell)(t)  \right|&\leq Ce^{C't}( \|\btheta_{\ell-1}\|_{\infty}+ \|\btheta_{\ell}\|_{\infty}+1)  \|\btheta_{\ell-1}-\bbtheta_{\ell-1} \|_{\infty},~~ \ell\in[3:L],\\
   \left|\Psi^{\bw}_{*,\ell}(\btheta_{\ell-1}, \btheta_\ell)(t) - \Psi^{\bw}_{*,\ell}(\btheta_{\ell-1}, \bbtheta_\ell)(t)  \right|&\leq Ce^{C't}( \|\btheta_{\ell-1}\|_{\infty}+ \|\btheta_{\ell}\|_{\infty}+1)  \|\btheta_{\ell}-\bbtheta_{\ell} \|_{\infty},~~ \ell\in[3:L],\\
    \left|\Psi^{\bw}_{*,L+1}( \btheta_{L})(t) - \Psi^{\bw}_{*,L+1}( \bbtheta_{L})(t)  \right|&\leq C e^{C't}(  \|\btheta_L\|_{\infty}+1 )  \|\btheta_{L}-\bbtheta_{L} \|_{\infty}.
\end{align*}
\end{theorem}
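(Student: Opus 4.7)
The six assertions all control how a small perturbation in the layer-$\ell$ argument of the flow propagates forward in time. Since $\Psi_*$ is characterized by the integral identities of Definition \ref{NFLp0}, my plan is a coupled Gronwall argument on all six Lipschitz quantities simultaneously, based at $t=0$ on Assumption \ref{ass:3} and the initialization $\Psi^\btheta_{*,\ell}(\btheta_\ell)(0)=\btheta_\ell$, $\Psi^\bw_{*,1}(\bw_1)(0)=\bw_1$, etc.

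As a prerequisite, I would import from the existence proof the uniform a priori bounds $\|\Psi^\btheta_{*,\ell}(\btheta_\ell)(t)\|_\infty \leq M(\|\btheta_\ell\|_\infty+1)$, $|\Psi^\bw_{*,\ell}(\btheta_{\ell-1},\btheta_\ell)(t)| \leq M(1+\|\btheta_\ell\|_\infty)$, and the analogues for $\Psi^\bw_{*,1}$ and $\Psi^\bw_{*,L+1}$, holding for all $t\in[0,T]$ with an $M$ depending only on $T$ and the constants in Assumptions \ref{ass:1}--\ref{ass:3}. These follow (essentially Lemma \ref{lmm:Psi-property}) from the uniform boundedness of $h,h'$ and $\phi'_1$, which forces $|\lineDel_\ell|$ and $|\lineGrad^\bw_\ell|$ to stay bounded along the flow. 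Then, for each of the six Lipschitz statements, I subtract the integral identities in Definition \ref{NFLp0} and expand the gradient differences through \eqref{grad11}--\eqref{grad12} and \eqref{linegrad wL+1}--\eqref{linegrad wL+13}. For example,
$$\Psi^\btheta_{*,\ell}(\btheta_\ell)(t)-\Psi^\btheta_{*,\ell}(\bbtheta_\ell)(t) = (\btheta_\ell-\bbtheta_\ell) - \int_0^t\bigl[\lineGrad^\btheta_\ell(\btheta_\ell;\Psi_*,s)-\lineGrad^\btheta_\ell(\bbtheta_\ell;\Psi_*,s)\bigr]ds;$$
an add-and-subtract inside $\lineGrad^\btheta_\ell$ splits the integrand into three kinds of contributions: (i) weight-differences $\Psi^\bw_{*,\ell+1}(\btheta_\ell,\cdot)(s)-\Psi^\bw_{*,\ell+1}(\bbtheta_\ell,\cdot)(s)$, (ii) differences of $\th$ or $\th'$ applied to $\Psi^\btheta_{*,\ell}(\btheta_\ell)(s)$ versus $\Psi^\btheta_{*,\ell}(\bbtheta_\ell)(s)$, bounded via $L_2,L_3$ by the corresponding feature-differences, and (iii) backpropagated deltas $\lineDel_{\ell+1}(\btheta_{\ell+1};\Psi_*,s)$ whose stability is likewise controlled by the same six Lipschitz quantities at higher layers. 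Each term is therefore re-expressed as a product of a linear-in-norm prefactor and one of the six quantities we are trying to bound.

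After factoring the corresponding prefactor $(\|\btheta_\ell\|_\infty+1)$ (or the appropriate variant with $\|\bw_1\|_\infty$ or $\|\btheta_{\ell-1}\|_\infty$) out of each quantity, the normalized Lipschitz moduli $X_1(t),\dots,X_6(t)$ satisfy a coupled linear integral inequality of the form $X_i(t)\leq X_i(0)+\tilde C\int_0^t\sum_j X_j(s)\,ds$, with the $X_i(0)$ controlled by Assumption \ref{ass:3}; Gronwall applied to $\sum_i X_i(t)$ then delivers the claimed $Ce^{C't}$ bound. The main obstacle is the bookkeeping: the chain-rule factors must be unwound so that the linear-in-norm prefactors are separated cleanly, since otherwise the exponent $C'$ accumulates extra powers of $M$ per layer and the statement of the theorem is not reached. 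Organizing the six statements as a single vector Gronwall inequality with all norm-prefactors absorbed into the right-hand constants is the cleanest way to handle this; the weight bounds (in either argument) and the $\Psi^\bw_{*,L+1}$ bound are special cases of the same generic expansion, with the $\ell$-th or $(\ell-1)$-th input playing the role of the perturbed variable.
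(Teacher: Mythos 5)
Your plan is a direct coupled Gr\"onwall argument on the six Lipschitz moduli, while the paper deliberately avoids that route. The paper's proof (see the remark immediately after the theorem statement and Lemma~\ref{const Lstar}) works at the level of trajectories via Picard iteration: it defines the set $\bfPsi_\beta$ of $\beta$-locally Lipschitz trajectories (Definition~\ref{def:local-lip}), shows there is a constant $\beta_*$, independent of $\beta$, for which $F(\bfPsi \cap \bfPsi_{\beta_*})\subseteq \bfPsi_{\beta_*}$, and concludes by closedness of $\bfPsi\cap\bfPsi_{\beta_*}$ and uniqueness of the fixed point. The layer-by-layer backward/forward expansion you describe, the splitting of gradient differences into weight-difference, $h$/$h'$-difference, and backpropagated-$\lineDel$ contributions, and the selection of the exponent $C'$ as the self-consistent constant all match the content of Lemma~\ref{const Lstar} closely. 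So the analytic ingredients are essentially the same; the packaging is different.

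There is, however, a genuine gap in your version that the paper's packaging exists to close: to run a vector Gr\"onwall inequality on
\[
X_i(t) := \sup_{\btheta_\ell,\bbtheta_\ell}\frac{\left\|\Psi^{\btheta}_{*,\ell}(\btheta_\ell)(t)-\Psi^{\btheta}_{*,\ell}(\bbtheta_\ell)(t)\right\|_\infty}{(1+\|\btheta_\ell\|_\infty)\|\btheta_\ell-\bbtheta_\ell\|_\infty}
\]
(and its weight analogues), you must already know that $X_i(t)<\infty$ for every $t\in[0,T]$. Assumption~\ref{ass:3} only gives $X_i(0)<\infty$. The integral inequality $X_i(t)\le X_i(0)+\tilde C\int_0^t\sum_j X_j(s)\,ds$ is vacuous if any $X_j$ becomes $+\infty$, and the a priori bounds of Lemma~\ref{lmm:Psi-property} control magnitudes, not Lipschitz moduli, so they do not supply this. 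This is exactly why the paper says a direct differential inequality is ``not straightforward'' and instead proves the result by showing the Picard iterates (which by construction start in $\bfPsi_{\beta_*}$ and stay there) converge to $\Psi_*$; the finiteness of the limit's Lipschitz moduli then follows because Lipschitzness with a fixed constant is preserved under pointwise convergence (Lemma~\ref{psi closed}). If you want to salvage the direct Gr\"onwall route, you would need a separate local-in-time argument that the moduli are finite on some $[0,\varepsilon]$ before bootstrapping --- and any such argument is, in effect, a contraction-mapping step, at which point you are back to Picard. The "bookkeeping" obstacle you flag (separating linear-in-norm prefactors cleanly) is real and handled in the paper by the normalization built into Definition~\ref{def:local-lip} and Definition~\ref{def:metric}, but it is secondary to the finiteness issue.
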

The proofs of Theorems \ref{theorm:flow p0} and \ref{theorem:conpsi}  follow from the standard technique of Picard iterations (see, e.g., \cite{hartman1964ordinary}) with a special consideration on the search space to deal with the unboundedness of parameters.  The latter  differs from the former  by introducing  a more restrictive space in which all the candidates  satisfy the desired property. %All the proofs of this paper are left in the Appendix.  

\subsection{Approximation Using Finite Neurons for  DNN}\label{subsection:app}
We show that  the process of  a discrete DNN  trained by scaled Gradient Descent   can be approximated by the neural feature flow under suitable conditions.  
In the discrete DNNs, although the connecting weights are independently initialized, the features $\hbtheta_{\ell,j}$ are not mutually independent since they all depend on a \emph{common} set of random outputs from the previous layer. 
Our key observation is that $\hbtheta_{\ell,j}$ are \emph{almost} independent when the width $m$ of the hidden layers are sufficiently large; namely, there exist $\linetheta_{\ell,j}$ that are mutually independent such that the differences $\|\hbtheta_{\ell,j}-\linetheta_{\ell,j}\|_{\infty}$ are vanishing with $m$. 
This allows us to construct an ideal process to approximate the actual trajectory of the discrete DNN. 
For a precise statement, we first introduce the following concept of $\ep_1$-independent initialization.
% In the following, we  propose a general initial condition for the  discrete DNN. %Later, we will give an example to show that the  condition can be realized in practice. 

%\begin{mdframed}[style=exampledefault]
\begin{definition}[$\ep_1$-independent initial DNN]\label{DNN condition}
We say an initial discrete DNN $(\hbw,\hbtheta)$ is $\ep_1$-independent if there exist  a continuous DNN denoted by $(\left\{w_\ell\right\}^{L+1}_{\ell=2}, \left\{p_{\ell}\right\}_{\ell=1}^L)$ satisfying Assumptions \ref{ass:4} and \ref{ass:3} and $(\linebw, \linetheta)$  such that
\begin{enumerate}[(1)]
    \item   $\linebw_{1,i}\sim p_1$ for $i\in[m_1]$, $\linetheta_{\ell,i}\sim p_{\ell}$ for $\ell\in[2:L]$ and $i\in[m_{\ell}]$, and they are all mutually independent; 
    \item For the weights $\linew_\ell$ for $\ell\ge 2$,
    \begin{itemize}
        \item     $\linew_{2,i,j} = w_{2}\left(\linebw_{1,i}, \linetheta_{2,j}\right)$ for $i\in[m_1]$ and $j\in[m_2]$;
        \item  $\linew_{\ell+1,i,j} = w_{\ell+1}\left(\linetheta_{\ell,i}, \linetheta_{\ell+1,j}\right)$ for $\ell\in[2:L-1]$,  $i\in[m_{\ell}]$, and $j\in[m_{\ell+1}]$;
        \item $\linew_{L+1,i,1} = w_{L+1}\left(\linetheta_{L,i}\right)$ for $i\in[m_{L}]$;
    \end{itemize}
    \item $\ep_1$-closeness:
    \begin{itemize}
        \item $\left\| \hbw_{1,i} -\linebw_{1,i}   \right\|_{\infty} \leq  ( 1+  \left\| \linebw_{1,i}\right\|_{\infty})~\ep_1$ for  $i\in[m_1]$;
        \item $\left|\hw_{2, i,j}- \linew_{2,i, j}\right| \leq \big(1+ \left\|\linebw_{1,i}\right\|_{\infty} + \left\|\linetheta_{2,j}\right\|_{\infty} \big)~\ep_1$ for  $i\in[m_{1}]$ and $j\in[m_{2}]$;
        \item $\left|\hw_{\ell+1, i,j}- \linew_{\ell+1,i, j}\right| \leq (1+ \left\|\linetheta_{\ell,i}\right\|_{\infty} + \left\|\linetheta_{\ell+1,j}\right\|_{\infty} )~\ep_1$ for $\ell\in[2:L-1]$, $i\in[m_{\ell}]$, and $j\in[m_{\ell+1}]$;
        \item $\left|\hw_{L+1, i,1}- \linew_{L+1,i, 1}\right| \leq  \big(1+ \left\|\linetheta_{L,i}\right\|_{\infty}  \big)~\ep_1$ for $i\in [m_{L+1}]$.
    \end{itemize}
\end{enumerate}

\end{definition}
%\end{mdframed}

We show that scaled Gradient Descent from an $\ep_1$-independent initialization can be well-approximated by the corresponding  neural feature flow when the number of hidden units is $\tilde{\Omega}(\ep_1^{-2})$, where  $\tilde{\Omega}$ hides  poly-logarithmic factors.   This resembles a ``propagation of chaos'' argument \cite{sznitman1991topics}.  We compare the scaled Gradient Descent with an ideal discrete process determined by $\Psi_*$, the trajectory of the continuous DNN $(\left\{w_\ell\right\}^{L+1}_{\ell=2}, \left\{p_{\ell}\right\}_{\ell=1}^L)$.   
Specifically, we compare the following two processes:
\begin{itemize}
    \item Actual process \label{real} $(\hbw^{[0:K]},\hbtheta^{[0:K]})$ by executing  Algorithm \ref{algo:GD} in $K=\frac{T}{\eta}$ steps from $(\hbw,\hbtheta)$; 
    \item Ideal process \label{ideal} $\left(\linebw^{[0,T]},\linetheta^{[0,T]}\right)$ that evolves as the neural feature flow:
    \begin{align}
       &~~~~~~~ \linetheta_{\ell,i}^t\!\!\!\!\!\!\!\!\! \!\!&&=&& \!\!\!\!\!\!\!\!\! \Psi_{*,\ell}^{\btheta}\left(\linetheta_{\ell,i}\right)(t), \quad \ell\in [2:L],~ i\in [m_{\ell}], ~t\in[0,T],\notag\\
           &~~~~~~~   \linebw_{1,i}^t\!\!\!\!\! \!\!\!\!\!\!&&=&&\!\!\!\!\!\!\!\!\! \Psi_{*,1}^{\bw}\left(\bbw_{i}\right)(t), \quad i\in [m_{1}],  ~t\in[0,T], \notag\\
        &~~~~~~~ \linew_{2,i,j}^t\!\!\!\!\! \!\!\!\!\!\!&&=&&\!\!\!\!\!\!\!\!\! \Psi_{*,2}^{\bw}\left(\linebw_{1,i}, \linetheta_{2,j}  \right)(t),  \quad  i\in[m_1],~ j\in [m_2], ~t\in[0,T],\notag\\
     &~~~~~~~ \linew_{\ell+1,i,j}^t\!\!\!\!\! \!\!\!\!\!\!&&=&&\!\!\!\!\!\!\!\!\! \Psi_{*,\ell+1}^{\bw}\left(\linetheta_{\ell,i}, \linetheta_{\ell+1,j}  \right)(t),  \quad \ell \in [2:L-1],~ i\in[m_{\ell}],~ j\in [m_{\ell+1}], ~t\in[0,T],\notag\\
   &~~~~~~~ \linew_{L+1,i,1}^t\!\!\!\!\! \!\!\!\!\!\!&&=&&\!\!\!\!\!\!\!\!\! \Psi_{*,L+1}^{\bw}\left(\linetheta_{\ell,i}  \right)(t),~~i\in [m_{L}], ~t\in[0,T]. \notag
    \end{align}
\end{itemize}
We also compare the losses of the discrete DNN $\hat{\bL}^{k}:= \frac{1}{n}\sum_{n=1}^N\phi(\hbtheta_{L+1,1}^{k}(n), y^n )$ and the loss of the neural feature flow $\bL^{t}:=\frac{1}{N}\sum_{n=1}^n\phi\left(\btheta_{L+1}(\Psi_*,t)(n), y^n \right)$.

\begin{theorem}\label{theorm:app}
Under Assumptions \ref{ass:1} and \ref{ass:2},
suppose $\ep_1\leq \cO(1)$, $m_{\ell}=m\ge \tilde{\Omega}(\ep_1^{-2})$ for $\ell\in[L]$, and treat the parameters in assumptions and $T$ as constants.
Consider the actual process from an $\ep_1$-independent initialization in Definition \ref{DNN condition} with step size $\eta \leq \tO(\ep_1) $.
Then, the following holds with probability $1-\delta$:
\begin{itemize}
\item The two processes are close to each other:
\begin{align*}
&\sup_{k\in[0:K]}\bigg\{\sup_{ i\in [m]}\left\| \hbw^k_{1,i} -\linebw^{k\eta}_{1,i} \right\|_{\infty}  ,~\sup_{\ell\in[2:L], ~i\in [m]}\left\| \hbtheta^k_{\ell,i} -\linetheta^{k\eta}_{\ell,i} \right\|_{\infty}    \bigg\}\leq \tO(\ep_1),\\
&\sup_{k\in[0:K]}\bigg\{\sup_{ \ell\in[2:L], ~i, j\in[m]}\left| \hw^k_{\ell,i,j} -\linew^{k\eta}_{\ell,i,j} \right|,  ~\sup_{ i\in [m]}\left| \hw^k_{L+1,i,1} -\linew^{k\eta}_{L+1,i,1} \right| \bigg\}\leq \tO(\ep_1),
\end{align*}
\item The training losses are also close to each other:
\[
\sup_{k\in[0:K]}\left|\hat{L}^k -  L^{k\eta}\ \right|  \leq \cO(\ep_1),
\]
\end{itemize}
where $\tO$ and $\tilde{\Omega}$ hide poly-logarithmic factors on $\ep_1$ and $\delta$.
\end{theorem}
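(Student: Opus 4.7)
The strategy is a discrete-continuous coupling of propagation-of-chaos type. I would compare the actual discrete trajectory $(\hbw^k,\hbtheta^k)$ with the ideal continuous trajectory $(\linebw^{k\eta},\linetheta^{k\eta})$ obtained by pushing the i.i.d.\ initial draws along the neural feature flow $\Psi_*$ produced by Theorem~\ref{theorm:flow p0}. The ideal process is convenient for two reasons: its features across hidden units are, by construction, i.i.d.\ samples from the initial measures $p_\ell$, so population averages can be recovered from empirical averages via concentration, and its Lipschitz-in-feature estimates (Theorem~\ref{theorem:conpsi}) control how the maps $\Psi_{*,\ell}^{\btheta}$ and $\Psi_{*,\ell}^{\bw}$ respond to perturbations of their arguments. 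A preliminary step is to establish that, with probability at least $1-\delta/3$, all initial features and weights satisfy $\|\linetheta_{\ell,i}^0\|_\infty\leq \tO(1)$ and $|\linew_{\ell,i,j}^0|\leq \tO(1)$; this follows from the sub-gaussian bound in Assumption~\ref{ass:4} combined with the sub-linear growth of $w_\ell$ in Assumption~\ref{ass:3}, with a union bound over $\cO(m^2 L)$ indices. Theorem~\ref{theorem:conpsi} then propagates these $\tO(1)$ envelopes to the entire interval $[0,T]$.

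\textbf{Concentration and coupled induction.} The crucial probabilistic ingredient is concentration of empirical averages. Each forward average
\[
\frac{1}{m}\sum_{i=1}^m \linew^{k\eta}_{\ell,i,j}\,\th(\linetheta^{k\eta}_{\ell-1,i})
\]
should be close to $\Psi_{*,\ell}^{\btheta}(\linetheta_{\ell,j}^0)(k\eta)$, and analogous statements hold for every average appearing in the definitions of $\lineGrad$ and $\lineDel$. Conditioning on $\linetheta_{\ell,j}^0$, the summands are i.i.d.\ and bounded by the envelopes of the previous paragraph, so Hoeffding's inequality gives deviation $\tO(m^{-1/2})$ with exceptional probability $\exp(-\Omega(m\ep_1^2))$; the choice $m\geq\tilde{\Omega}(\ep_1^{-2})$, together with a union bound over all $(\ell,i,j)$ and all $K=T/\eta$ time indices, yields a $\tO(\ep_1)$ statistical error on a high-probability event. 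I would then proceed by induction on $k$, carrying the joint invariant
\[
\max_{\ell,i}\|\hbtheta^k_{\ell,i}-\linetheta^{k\eta}_{\ell,i}\|_\infty+\max_{\ell,i,j}|\hw^k_{\ell,i,j}-\linew^{k\eta}_{\ell,i,j}|\leq \tO(\ep_1).
\]
The inductive step combines three sub-arguments: (i) a forward pass, where errors in $\hbtheta_{\ell-1,\cdot}$ and $\hw_{\ell,\cdot,\cdot}$ feed into $\hbtheta_{\ell,\cdot}$ through Lipschitz averages controlled by Theorem~\ref{theorem:conpsi} and Step~1's envelopes; (ii) a backward pass yielding closeness of $\hDel^k_\ell$ to $\lineDel_\ell(\cdot;\Psi_*,k\eta)$ by an analogous layer-by-layer recursion starting from the last layer; (iii) a one-step gradient update that accumulates an error $\eta\cdot\tO(\ep_1)$ plus an Euler-type discretization error $\cO(\eta)$ from comparing the piecewise-constant update with the continuous flow. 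Summing over $K=T/\eta$ iterations keeps the total error at $\tO(\ep_1)$ since $T$ is a constant. The loss bound then follows immediately from the final-layer feature estimate and the Lipschitzness of $\phi(\cdot,y)$ guaranteed by Assumption~\ref{ass:2}.

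\textbf{Main obstacle.} The principal technical difficulty is closing the joint forward/backward induction while controlling how constants depend on the depth $L$ and on the unboundedness of parameters. Because weights are only sub-gaussian rather than uniformly bounded, one cannot invoke a single global Lipschitz constant; instead, the $(1+\|\btheta\|_\infty)$-weighted Lipschitz estimates of Theorem~\ref{theorem:conpsi} must be coupled tightly with the envelopes from the preliminary step to yield a closed recursion that does not explode with $L$. Moreover, the deviation in the backward signal $\hDel_\ell$ feeds back into the update of $\hw_\ell$ at the same layer, so the forward and backward estimates cannot be decoupled and must be propagated together in a single simultaneous bootstrap. The step-size condition $\eta\leq\tO(\ep_1)$ is dictated by the need to balance the Euler discretization error against the $\tO(\ep_1)$ statistical error from concentration, and the width condition $m\geq\tilde{\Omega}(\ep_1^{-2})$ is tight precisely because the poly-logarithmic factors hidden in $\tO$ must absorb the union bound over the $K=T/\eta$ time indices and the $\cO(m^2 L)$ unit indices.
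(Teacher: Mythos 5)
Your proposal follows essentially the same route as the paper: compare the actual trajectory with the ideal trajectory obtained by pushing the i.i.d.\ initialization through $\Psi_*$, condition on a high-probability event giving concentration of empirical forward/backward averages plus $\tO(1)$ envelopes on initial features and weights, and then close an induction over $k\in[0:K]$ in which the per-step error is the sum of a $\tO(\ep_1)$ statistical/propagated error and a discretization error absorbed by $\eta\le\tO(\ep_1)$, with the resulting geometric growth tamed by a Gr\"onwall factor $e^{CT}=\cO(1)$. The main organizational difference is that the paper's induction carries a \emph{weight-only} invariant of the form $\|\linew_{\ell,i,j}^{k\eta}-\hw_{\ell,i,j}^k\|_\infty\le(1+\|\bbfu_{\ell,i,j}\|_\infty)e^{Ck\eta}\ep_1$, with the feature- and gradient-closeness derived afresh at each step from the weight closeness (Lemma \ref{lmm:induction-k}) and the uniform $\tO(\ep_1)$ conclusion recovered only at the end via the maximum-norm event; you instead condition on the maximal envelope first and carry a joint, uniform feature-plus-weight invariant. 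Both bookkeepings close, but the paper's is slightly tighter because it also carries empirical-moment bounds $\frac{1}{m}\sum_i\|\linetheta_{\ell,i}\|_\infty^j\le\cO(1)$ (Event \eqref{event moment}), which keep the averaged Lipschitz constants $\cO(1)$ rather than $\tO(\sqrt{\log(m/\delta)})$ when summing over the width; your uniform-envelope approach picks up extra logarithmic factors there but they are still absorbed by $\tO$. Two small misattributions worth correcting: the forward/backward error propagation in the paper uses the a priori boundedness and time-Lipschitz properties of $\Psi_*$ (Lemma \ref{lmm:Psi-property}), not the argument-perturbation estimates of Theorem~\ref{theorem:conpsi}, which play no role here; and the backward empirical averages involve sub-gaussian rather than bounded summands, so the paper applies the sub-gaussian concentration inequality (Lemma \ref{azuma}) rather than plain Hoeffding (your version works once you condition on the envelope event, at the cost of an extra log factor).
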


\begin{algorithm}[tb]
	\caption{Initializing  a  Discrete  DNN. }
	\label{algo:init}
		\begin{algorithmic}[1]
		\STATE Input the data $\{\hbtheta_{0,i}\}_{i=1}^d$ in \eqref{dataf},   variance $\sigma_1>0$, and a constant $C_3$.
		\STATE  Independently  draw $\hat{w}_{1,i,j}\sim p_0= \mathcal{N}\left(0, d\sigma^2_1\right)$ for $i\in[d]$ and $j\in[m]$.  
	\STATE  Set $\hbtheta_{1,j} = \frac{1}{d}\sum_{i=1}^d\hat{w}_{1,i,j}~\hbtheta_{0,i}$ where $j\in[m]$. {\hfill $\diamond$ Standard Initialization for  layer $1$ }
	 \FOR{$\ell =2,\dots, L$}
	\STATE    Independently draw $\tilde{w}_{\ell,i,j}\sim \mathcal{N}\left(0, m\sigma^2_1\right)$ for $i,j\in[m]$.
	\STATE   Set $\hbtheta_{\ell,j} = \frac{1}{m}\sum_{i=1}^m\tilde{w}_{\ell,i,j}~\th(\hbtheta_{\ell-1,i})$ where $j\in[m]$. {\hfill $\diamond$ Standard Initialization for  layer $\ell$ }
	 \ENDFOR
	  \STATE Set $\hw_{L+1,i,1} = C_3$ where $i\in[m]$. {\hfill $\diamond$   Simply initialize $\left\{\hw_{L+1,i,1}\right\}_{i=1}^m$ by a constant}
	  \FOR{$\ell =2,\dots, L$} 
	   	 \FOR{$j =1,\dots, m$} 
	   			\STATE  Solve convex optimization problem: {\hfill $\diamond$ Perform $\ell_2$-regression to reduce redundancy}
	  	\begin{equation}\label{ell2regression}
          \min_{\left\{\hw_{\ell,i,j}\right\}_{i=1}^{m}}~ \frac{1}{m}\sum_{i=1}^{m} \left(\hw_{\ell,i,j} \right)^2, ~~~~ \text{s.t.}~~  \hbtheta_{\ell,j} = \frac{1}{m}\sum_{i=1}^m\hw_{\ell,i,j}~\th(\hbtheta_{\ell-1,i}).
           \end{equation}
    \ENDFOR
	    \ENDFOR
	    \STATE Similarly to \eqref{pad w},  pad all the weights into a single vector denoted as $\hbw$. 
\STATE Similarly to \eqref{pad feature},  pad all the features into a single vector denoted as $\hbtheta$. 
\STATE Output the discrete DNN parameters $(\hbw, \hbtheta)$.
	\end{algorithmic}
\end{algorithm}

In the following,  we show that the  standard initialization \cite{glorot2010understanding,he2015delving} followed by a simple  $\ell_2$-regression procedure achieves the $\ep_1$-independence in Definition \ref{DNN condition}. 
The algorithm  is shown  in Algorithm \ref{algo:init}. 
Note that the  standard initialization strategy scales the weights as $\sqrt{m}$, which diverges in the mean-field limit.  Hence, we perform the simple $\ell_2$-regression to reduce the redundancy of the weights while preserving all initial features\footnote{
In Algorithm \ref{algo:init}, the weights in the last layer $\{w_{L+1,i,1}\}_{i=1}^m$  can also be initialized by the standard initialization followed by an $\ell_2$-regression.  
The $\ell_2$-regression \eqref{ell2regression} can be replaced by a soft version 
$$           \min_{\{\hw_{\ell,i,j}\}_{i=1}^{m}}~ \frac{\lambda_m}{m}\sum_{i=1}^{m} \left(\hw_{\ell,i,j} \right)^2 +  \left\|\hbtheta_{\ell,j} - \frac{1}{m}\sum_{i=1}^m\hw_{\ell,i,j}~\th(\hbtheta_{\ell-1,i})\right\|^2.$$
% To Theorem \ref{theorm:appres}, we can set  $\lim_{m\to\infty} a_m \geq \Omega(\ep^{-1}) $.  
}.  
% We have the theorem below.

\iffalse
In fact, given the  continuous DNN $\left(\left\{w_\ell\right\}^{L+1}_{\ell=2}, \left\{p_{\ell}\right\}_{\ell=1}^L\right)$,   if we allow sampling from distributions $\left\{p\right\}_{\ell=1}^L$ and accessing the functions $\left\{w_\ell\right\}_{\ell=2}^{L}$, one  can construct a feasible discrete DNN  from Algorithm \ref{algo:discrete}.  

On the other hand, in most cases,  however, we have little prior knowledge on  the underlying continuous DNN $\left(\left\{w_\ell\right\}^{L+1}_{\ell=2}, \left\{p_{\ell}\right\}_{\ell=1}^L\right)$ at beginning.  In these cases,  we propose  an special initialization method shown in Algorithm \ref{algo:init}. One can find that Algorithm \ref{algo:init} first initializes the DNN by a standard initialization strategy \cite{araujo2019mean,nguyen2020rigorous}, and then learning a simple $\ell_2$ regression to reduce the redundancy of the weights.  We have the theorem below.
\fi

\begin{theorem}\label{ini lemma}
% Under Assumption \ref{ass:1}, for a Gaussian process, define the $N\times N$  dimensional Gram matrices $\left\{\bK_{\ell}\right\}_{\ell=0}^{L}$ defined as
Define a sequence of Gram matrices $\left\{\bK_{\ell}\right\}_{\ell=0}^{L}\in \RR^{N\times N}$ as, for $i,j\in[N]$,
\begin{equation}
    \begin{split}
      \bK_0(i,j) &:= \frac{1}{d}\<\bx^i,\bx^j  \>,\\
\mathbf{\Sigma}_{\ell,i,j} &:= \left(
 \begin{matrix}
   \bK_{\ell}(i,i) & \bK_{\ell}(i,j)   \\
   \bK_{\ell}(j,i)  & \bK_{\ell}(j,j)  
  \end{matrix}
  \right)\in \RR^{2\times2},\quad \ell\in[L-1], \\
\bK_{\ell+1}(i,j) &:= \E_{(u,v)~\mathcal{N}\left(\mathbf{0}^2,\sigma_1^2\mathbf{\Sigma}_{\ell,i,j}  \right)} \left[    h(u)h(v)\right],~\quad \ell\in[L-1]. \label{kdef}  
    \end{split}
\end{equation}
Under Assumption \ref{ass:1},  suppose  $\blambda := \min_{\ell=1}^{L-1}\left\{\lambda_{\min}\left(\bK_{\ell}\right)\right\}>0$,
% \footnote{It holds for all analytic non-polynomial $h$ and non-parallel $\bx_i$; see \cite[Section F.2]{du2018gradient}.
% }, 
and treat the parameters in assumptions and $\blambda$ as constants.
% If $\ep_1\leq \cO(1)$ and $m\geq \tilde{\Omega}(\frac{1}{\ep_1^2})$, then, 
With probability  at least $1-\delta$, Algorithm \ref{algo:init} produces an $\ep_1$-independent initialization with $\ep_1\le \tO(\frac{1}{\sqrt{m}})$.
% that satisfies Assumptions \ref{ass:4} and \ref{ass:3}.
% Then there is a continuous DNN  $\left(\left\{w_\ell\right\}^{L+1}_{\ell=2}, \left\{p_{\ell}\right\}_{\ell=1}^L\right)$ that satisfies Assumptions \ref{ass:4} and \ref{ass:3} and   an ideal discrete DNN that satisfies initial condition in Definition \ref{DNN condition}. Moreover,    treat the problem-dependent parameters, i.e.,  $L_1$, $L_2$, $L_3$, $N$, $d$, and $\bar{\lambda}$ as  constants. For $\ep_1\leq \tO(1)$ and $\delta\leq 1$, with probability  at least $1-\delta$,    Algorithm \ref{algo:init} produces a discrete DNN that satisfies the initial condition in Definition \ref{DNN condition} when $m\geq \tilde{\Omega}(\ep^{-2})$.
\end{theorem}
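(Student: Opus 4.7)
The plan is to construct a continuous DNN and a coupling realizing Definition~\ref{DNN condition} with $\ep_1=\tO(1/\sqrt m)$. Set $p_1=\mathcal{N}(\mathbf{0}^d,d\sigma_1^2\mathbf{I}^d)$ (so that $\btheta_1(\bw_1)\sim\mathcal{N}(\mathbf{0}^N,\sigma_1^2\bK_0)$) and $p_\ell=\mathcal{N}(\mathbf{0}^N,\sigma_1^2\bK_{\ell-1})$ for $\ell\in[2:L]$, and define the population minimum-norm weights
\[
w_2(\bw_1,\btheta_2)=\th(\btheta_1(\bw_1))^\top\bK_1^{-1}\btheta_2,\quad w_\ell(\btheta_{\ell-1},\btheta_\ell)=\th(\btheta_{\ell-1})^\top\bK_{\ell-1}^{-1}\btheta_\ell,\quad w_{L+1}(\btheta_L)=C_3,
\]
for $\ell\in[3:L]$. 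By~\eqref{kdef}, $\bK_{\ell-1}=\E_{\btheta\sim p_{\ell-1}}[\th(\btheta)\th(\btheta)^\top]$, so the forward constraints in~\eqref{problem1} are automatically met; Assumption~\ref{ass:4} is immediate from Gaussianity, and Assumption~\ref{ass:3} follows from boundedness and Lipschitzness of $h,h'$ together with $\|\bK_{\ell-1}^{-1}\|_2\leq\blambda^{-1}$.

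For the coupling, let $\linebw_{1,i}$ be the $i$-th column of the first-layer Gaussian weight matrix, which is exactly i.i.d.~$p_1$. For $\ell\in[2:L]$, let $M_{\ell-1}=[\th(\hbtheta_{\ell-1,1}),\ldots,\th(\hbtheta_{\ell-1,m})]$, $\hat G_{\ell-1}=\frac{1}{m}M_{\ell-1}M_{\ell-1}^\top$, and $Q_{\ell-1}=\hat G_{\ell-1}^{-1/2}M_{\ell-1}/\sqrt m$ the polar factor (so $Q_{\ell-1}Q_{\ell-1}^\top=\mathbf{I}^N$ on the high-probability event that $\hat G_{\ell-1}$ is invertible). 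Define
\[
\eta_{\ell,j}:=\frac{1}{\sigma_1\sqrt m}Q_{\ell-1}\tilde w_{\ell,\cdot,j},\qquad \linetheta_{\ell,j}:=\sigma_1\bK_{\ell-1}^{1/2}\eta_{\ell,j}.
\]
Because $\tilde w_{\ell,\cdot,j}\sim\mathcal{N}(\mathbf{0}^m,m\sigma_1^2\mathbf{I}^m)$ is independent of earlier layers, conditional on the $\sigma$-field up to layer $\ell-1$ the $\eta_{\ell,j}$ are i.i.d.~$\mathcal{N}(\mathbf{0}^N,\mathbf{I}^N)$; thus $\linetheta_{\ell,j}\sim p_\ell$ with conditional distribution independent of the conditioning, which inductively yields mutual independence of $\{\linebw_{1,i}\}\cup\{\linetheta_{\ell,j}\}_{\ell,j}$. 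The key identity $\hbtheta_{\ell,j}=\sigma_1\hat G_{\ell-1}^{1/2}\eta_{\ell,j}$ links actual and ideal features through the shared $\eta_{\ell,j}$.

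Next, by induction on $\ell$ I establish, with probability $1-\delta$ after a union bound over $\mathrm{poly}(Lm)$ events: (a) $\|\hat G_{\ell-1}-\bK_{\ell-1}\|_2\leq\tO(1/\sqrt m)$; (b) $\max_j\|\hbtheta_{\ell,j}-\linetheta_{\ell,j}\|_\infty\leq\tO(1/\sqrt m)$ and $\max_j\|\linetheta_{\ell,j}\|_\infty\leq\tO(1)$. The base case $\ell=2$ uses matrix Bernstein on the i.i.d.~bounded summands $\th(\hbtheta_{1,i})\th(\hbtheta_{1,i})^\top$. For $\ell\ge3$, compare $\hat G_{\ell-1}$ to the i.i.d.~proxy $\tilde G_{\ell-1}=\frac{1}{m}\sum_i\th(\linetheta_{\ell-1,i})\th(\linetheta_{\ell-1,i})^\top$: matrix Bernstein (using the mutual independence from the coupling) bounds $\|\tilde G_{\ell-1}-\bK_{\ell-1}\|_2$, and the inductive closeness combined with $|h'|\leq L_2$ bounds $\|\hat G_{\ell-1}-\tilde G_{\ell-1}\|_2$. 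Given (a), matrix-square-root perturbation (valid since $\bK_{\ell-1}\succeq\blambda\mathbf{I}^N$) yields $\|\hat G_{\ell-1}^{1/2}-\bK_{\ell-1}^{1/2}\|_2\leq\tO(1/\sqrt m)$, and combining with standard Gaussian tails on $\|\eta_{\ell,j}\|$ gives the first part of (b); the second is a Gaussian tail bound on $\linetheta_{\ell,j}$.

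Finally, the weight-closeness conditions are trivial for layer $1$ ($\hbw_{1,i}=\linebw_{1,i}$) and layer $L+1$ (both equal $C_3$). For $\ell\in[2:L]$, the minimum-norm solution of~\eqref{ell2regression} gives $\hw_{\ell,i,j}=\th(\hbtheta_{\ell-1,i})^\top\hat G_{\ell-1}^{-1}\hbtheta_{\ell,j}$; together with $\linew_{\ell,i,j}=\th(\linetheta_{\ell-1,i})^\top\bK_{\ell-1}^{-1}\linetheta_{\ell,j}$ and $\hbtheta_{\ell,j}=\hat G_{\ell-1}^{1/2}\bK_{\ell-1}^{-1/2}\linetheta_{\ell,j}$, this yields
\[
\hw_{\ell,i,j}-\linew_{\ell,i,j}=[\th(\hbtheta_{\ell-1,i})-\th(\linetheta_{\ell-1,i})]^\top\hat G_{\ell-1}^{-1}\hbtheta_{\ell,j}+\th(\linetheta_{\ell-1,i})^\top(\hat G_{\ell-1}^{-1/2}-\bK_{\ell-1}^{-1/2})\bK_{\ell-1}^{-1/2}\linetheta_{\ell,j},
\]
and the inductive bounds control each factor to give $|\hw_{\ell,i,j}-\linew_{\ell,i,j}|\leq\tO(1/\sqrt m)(1+\|\linetheta_{\ell-1,i}\|_\infty+\|\linetheta_{\ell,j}\|_\infty)$. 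The main obstacle will be the inductive propagation of (a)--(b): controlling the concentration of $\hat G_{\ell-1}$ despite $\{\hbtheta_{\ell-1,i}\}$ being only approximately (not exactly) i.i.d., which is precisely what the polar-decomposition coupling arranges through the genuinely i.i.d.~proxies $\{\linetheta_{\ell-1,i}\}$.
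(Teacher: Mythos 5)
Your proposal is correct and follows essentially the same route as the paper's proof: same choice of $p_\ell$ and $w_\ell$, same $\ell_2$-regression formula $\hw_{\ell,i,j}=\th(\hbtheta_{\ell-1,i})^\top\hbK_{\ell-1}^{-1}\hbtheta_{\ell,j}$, and the same Gaussian-coupling construction of the ideal features. Your polar-factor definition $\linetheta_{\ell,j}=\sigma_1\bK_{\ell-1}^{1/2}\eta_{\ell,j}$ with $\eta_{\ell,j}=\frac{1}{\sigma_1\sqrt m}Q_{\ell-1}\tilde w_{\ell,\cdot,j}$ is exactly the paper's $\linetheta_{\ell,j}=\bK_{\ell-1}^{1/2}\hbK_{\ell-1}^{-1/2}\hbtheta_{\ell,j}$ (since $\hbtheta_{\ell,j}=\sigma_1\hbK_{\ell-1}^{1/2}\eta_{\ell,j}$), just phrased by naming the common latent Gaussian; the independence argument via the conditional distribution not depending on the conditioning is identical to the paper's Lemma~\ref{lmm:hK-K} setup. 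The only cosmetic departures are (i) your two-term decomposition of $\hw_{\ell,i,j}-\linew_{\ell,i,j}$ using $\hbK_{\ell-1}^{-1/2}-\bK_{\ell-1}^{-1/2}$ versus the paper's three-term decomposition using $\hbK_{\ell-1}^{-1}-\bK_{\ell-1}^{-1}$, and (ii) invoking matrix Bernstein where the paper uses entrywise Hoeffding plus a crude $N\times$ operator-norm bound; both are valid and deliver the same $\tO(1/\sqrt m)$ rate since $N$ is treated as a constant.
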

In addition to Assumption  \ref{ass:1}, Theorem \ref{ini lemma} further requires that  the least eigenvalues  of the Gram matrices  are stricly positive.   It is shown  in the lazy training studies, e.g. \cite[Lemma F.1]{du2018gradient}, that the assumption holds for all analytic non-polynomial $h$.

\subsection{Convexify Continuous DNN}\label{sec:convex}
Problem \eqref{problem1} is  non-convex. Inspired by \cite{fang2019convex}, we show that it can be re-parameterized as a convex optimization  under suitable conditions.  
We consider the regularization term $R$ of the form
$$R\left( \left\{w_{\ell}\right\}_{\ell=2}^{L+1}, \left\{p_{\ell}\right\}_{\ell=1}^L\right)  =   \sum_{\ell=2}^{L}  \lambda^{w}_{\ell} {R}_\ell^{w}( w_{\ell}, p_{\ell-1},p_{\ell})+   \lambda^{w}_{L+1} {R}_{L+1}^{w}( w_{L+1}, p_{L}) + \sum_{\ell=1}^{L} \lambda^{p}_{\ell} R_\ell^{p}\left(p_{\ell}\right), $$
where $\left\{{R}_\ell^{w}\right\}_{\ell=2}^{L+1}$ and    $
\left\{{R}_\ell^{p}\right\}_{\ell=1}^L$ are regularizes imposed on the $\left\{w_{\ell}\right\}_{\ell=2}^{L+1}$ and $\left\{p_{\ell}\right\}_{\ell=1}^L$, respectively,  and $\lambda^{w}_{\ell}, \lambda^{p}_{\ell}\ge 0$.    Moreover,   suppose  $\left\{R_\ell^{w}\right\}_{\ell=2}^{L+1}$  are in form as
\begin{equation}\label{rw}
    R_\ell^{w} :=
\begin{cases}
\int\left[\int  \left|w_{\ell}(\bw_1, \btheta_{2})\right| d p_{2}(\btheta_{2}) \right]^r d p_{1}(\bw_1),&\quad \ell=2, \\
\int\left[\int  \left|w_{\ell}(\btheta_{\ell-1}, \btheta_{\ell})\right| d p_{\ell}(\btheta_{\ell}) \right]^r d p_{\ell-1}(\btheta_{\ell-1}),&\quad \ell=[3:L], \\
\int\left[w_{L+1}(\btheta_{L})\right]^r d p_{L}(\btheta_{L}),&\quad \ell=L+1, \\
\end{cases}
\end{equation}
where $r\geq 1$.  For all $\ell\in[L]$, if  $p_{\ell}$ are equivalent to  Lebesgue measure,     denoting  $\ddp_{\ell}$ as the probability density function of $p_{\ell}$,  we can do a change of variables as
\begin{eqnarray}
\ttw_2(\bw_1,\btheta_2 ) &=& w_2\left(\bw_1,\btheta_2\right) \ddp_1\left(\bw_1\right)\ddp_2\left(\btheta_2\right),\notag\\
\ttw_{\ell+1}(\btheta_\ell,\btheta_{\ell+1} ) &=& w_{\ell+1}\left(\btheta_\ell,\btheta_{\ell+1} \right) \ddp_{\ell}\left(\btheta_\ell\right)\ddp_{\ell}\left(\btheta_{\ell+1}\right), \quad \ell \in[2:L-1],\notag\\
\ttw_{L+1}(\btheta_L ) &=& w_{L+1}\left(\btheta_L \right) \ddp_{L}\left(\btheta_L\right), \notag
\end{eqnarray}
and  rewrite Problem \eqref{problem1} as
\begin{eqnarray}\label{problem2}
\minimize_{\{\ttw_{\ell}\}_{\ell=2}^{L+1},  \{p_{\ell}\}_{\ell=1}^L }    && \frac{1}{N}\sum_{n=1}^N \phi\left( \btheta_{L+1}(n), y^n \right) + \sum_{\ell=2}^{L+1} \lambda^w_{\ell} {R}_\ell^{\ttw}\left( \ttw_{\ell}, p_{\ell-1}\right)+ \sum_{\ell=1}^{L}\lambda^p_{\ell}{R}_\ell^{p}\left(p_{\ell}\right)  \\
\text{s.t.}  &&   \int \ttw_2\left(\bw_1,\btheta_2\right) \th\left(\btheta_1(\bw_1)\right)   d \bw_1 = \ddp_2\left(\btheta_2\right) \btheta_2,\quad   \btheta_2\in \RR^N,\notag\\
&&   \int \ttw_{\ell}\left(\btheta_{\ell-1},\btheta_{\ell}\right)\th\left(\btheta_{\ell-1}\right)  d\btheta_{\ell-1} = \ddp_{\ell}\left(\btheta_{\ell}\right)\btheta_{\ell}, \quad  \btheta_\ell\in \RR^N, ~\ell\in[3:L],\notag\\
&&     \int \ttw_{L+1}\left(\btheta_{L}\right)\th\left(\btheta_{L}\right)d\btheta_{L} = \btheta_{L+1}, \notag
\end{eqnarray}
where 
$$ {R}_\ell^{\ttw}( \ttw_{\ell}, p_{\ell-1}) =  \int \frac{\left(\int \left|\ttw_{\ell}( \btheta_{\ell-1},\btheta_{\ell} )\right|  d\btheta_{\ell}\right)^r }{\left(\ddp_{\ell-1}(\btheta_{\ell-1})\right)^{r-1}}    d\btheta_{\ell}\quad \ell \in[2:L], $$
and 
$$ {R}_{L+1}^{\ttw}( \ttw_{L+1}, p_{L}) =  \int \frac{ \left|\ttw_{L+1}( \btheta_{L} )\right|^r }{\left(\ddp(\btheta_{L})\right)^{r-1}}    d\btheta_{L}. $$
The theorem below demonstrates the convexity of  Problem \eqref{problem2}.
\begin{theorem}\label{theo:con}
Assume $\phi(\cdot;\cdot)$ is convex on the first argument,
${R}_\ell^{w}$ is in form of \eqref{rw} for $\ell\in[2:L+1]$, 
and ${R}_\ell^{p} $ is convex on $p_{\ell}$ for $\ell\in [L]$. Then  Problem \eqref{problem2} is joint convex on $\left(\{\ttw_{\ell}\}_{\ell=2}^{L+1}, \{p_{\ell}\}_{\ell=1}^L\right)$ over the set $\left\{ (\{\ttw_{\ell}\}_{\ell=2}^{L+1}, \{p_{\ell}\}_{\ell=1}^L) :   p_{\ell}  \text{~is equivalent to  Lebesgue measure for~} \ell\in[L]  \right\}.$ 
\end{theorem}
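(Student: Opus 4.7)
The plan is to decompose the problem into (i) the feasible set defined by the three integral constraints, (ii) the data-fitting term, (iii) the weight regularizers $R_\ell^{\ttw}$, and (iv) the measure regularizers $R_\ell^{p}$, and then verify convexity for each piece separately. Since a finite sum of jointly convex functions restricted to a convex set is jointly convex, this suffices.

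First I would observe that the three constraints in \eqref{problem2} are \emph{affine} in the variables $(\{\ttw_\ell\}_{\ell=2}^{L+1},\{p_\ell\}_{\ell=1}^L)$. Indeed, the map $\ttw_\ell\mapsto \int \ttw_\ell(\btheta_{\ell-1},\btheta_\ell)\,\dot{h}(\btheta_{\ell-1})\,d\btheta_{\ell-1}$ is linear (the first-layer constraint is linear in $\ttw_2$ because $\btheta_1(\bw_1)=\tfrac1d\bX\bw_1$ depends only on the integration variable), and the right-hand sides $\ddp_\ell(\btheta_\ell)\btheta_\ell$ are linear in the density $\ddp_\ell$. Hence the feasible set, intersected with the domain $\{p_\ell\ \text{equivalent to Lebesgue}\}$, is convex. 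In particular, $\btheta_{L+1}(n)$ is a linear functional of $\ttw_{L+1}$, so $\phi(\btheta_{L+1}(n),y^n)$ is convex in the variables by convexity of $\phi$ in its first argument, and its average over $n$ is convex.

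The main step is the joint convexity of each $R_\ell^{\ttw}(\ttw_\ell,p_{\ell-1})$. The key analytical fact is that for $r\geq 1$ the perspective function
\[
\Phi_r(a,b)=\frac{|a|^r}{b^{r-1}}\qquad (a\in\RR,\ b>0)
\]
is jointly convex on $\RR\times\RR_{>0}$, since it is the perspective transform of the convex function $a\mapsto |a|^r$. For $\ell=L+1$ I would simply note that $R_{L+1}^{\ttw}=\int \Phi_r\!\bigl(\ttw_{L+1}(\btheta_L),\ddp_L(\btheta_L)\bigr)\,d\btheta_L$ is an integral of a jointly convex integrand applied pointwise to the linear maps $\ttw_{L+1}\mapsto \ttw_{L+1}(\btheta_L)$ and $p_L\mapsto\ddp_L(\btheta_L)$; integration over $\btheta_L$ preserves joint convexity. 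For $\ell\in[2:L]$, set $F_\ell(\btheta_{\ell-1};\ttw_\ell):=\int |\ttw_\ell(\btheta_{\ell-1},\btheta_\ell)|\,d\btheta_\ell$, which is a nonnegative convex functional of $\ttw_\ell$ (triangle inequality plus linearity of integration). Then
\[
R_\ell^{\ttw}(\ttw_\ell,p_{\ell-1})=\int \Phi_r\!\bigl(F_\ell(\btheta_{\ell-1};\ttw_\ell),\,\ddp_{\ell-1}(\btheta_{\ell-1})\bigr)\,d\btheta_{\ell-1}
\]
(modulo the outer-variable typo in \eqref{rw}, which is clear from context). Since $\Phi_r$ is jointly convex and nondecreasing in its first argument on $[0,\infty)$, and $F_\ell$ is convex in $\ttw_\ell$ while $\ddp_{\ell-1}$ is linear in $p_{\ell-1}$, the composition is jointly convex in $(\ttw_\ell,p_{\ell-1})$; integrating over $\btheta_{\ell-1}$ again preserves this. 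The measure regularizers $R_\ell^{p}$ are convex in $p_\ell$ by hypothesis, and hence convex in the full variable tuple.

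The hard part, which I would pay most attention to, is justifying the composition rule used above: when one slot of a jointly convex function is composed with a \emph{vector-valued} convex map (here $F_\ell$), joint convexity need not be automatic unless the outer function is monotone in that slot. I would therefore be careful to invoke the nondecreasingness of $\Phi_r(\cdot,b)$ on $[0,\infty)$ and the nonnegativity of $F_\ell$, together with the fact that composition of a jointly convex, coordinatewise monotone function with convex and linear coordinate maps is jointly convex; this is a standard perspective argument but deserves explicit verification rather than hand-waving. Once this is established, adding up all convex terms (objective plus weighted regularizers) over the affine feasible set yields the desired joint convexity of \eqref{problem2} on the stated domain.
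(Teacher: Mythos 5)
Your proof is correct and takes essentially the same route as the paper: both rest on the joint convexity of the perspective function $x^{r}/y^{r-1}$ on $[0,\infty)\times(0,\infty)$ (the paper's Lemma~\ref{property of convex}), composed with the convex nonnegative functional $\ttw_{\ell}\mapsto\int|\ttw_{\ell}|\,d\btheta_{\ell}$ via the monotonicity of $x\mapsto x^{r}$ on $[0,\infty)$, together with the observation that the constraints in \eqref{problem2} are affine in $(\{\ttw_{\ell}\},\{p_{\ell}\})$. The only superficial difference is that you spell out the composition-and-monotonicity step and the convexity of the loss and $R_{\ell}^{p}$ terms, which the paper leaves implicit.
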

It is worth noting that the regularizes $R^{w}_{\ell}$ ($\ell\in[2:L+1]$) in the discrete formulation are the simple $\ell_{1,r}$ norm regularizers if we write the weights as a matrix.  
This type of regularizers control  the efficacy of the
features  in terms of representation for the underlying learning task; see \cite{fang2019convex}  for more discussion. 

Theorem \ref{theo:con} sheds light on the landscape of the continuous DNN, which  shows the non-existence of bad local minima when all the distributions  are equivalent to  Lebesgue measure.  This condition can be achieved by incorporating proper regularization terms on $\{p_{\ell}\}_{\ell=1}^L$, e.g., $D_{\mathrm{KL}}\left(p_{\ell} \|  p^0_\ell\right) +  D_{\mathrm{KL}}\left(p^0_{\ell} \|  p_\ell\right)$, where $D_{\mathrm{KL}}(\cdot\|\cdot)$ denotes relative entropy and $p^0_{\ell}$ is standard Gaussian distribution.  Therefore, Theorem \ref{theo:con} motivates us to  have a study on  the  dynamics under those entropic regularizers. 

However, the study has the following challenges:  $\mathrm{(i)}$ Our current analysis of the neural feature flow relies on Picard-type iterations, which requires the Lispchitz continuity of the gradients and is not directly applicable when there are such non-trivial regularizers.  $\mathrm{(ii)}$ Our convexity argument  in Theorem \ref{theo:con} is the  usual notion of convexity. They should not be confused with ``displacement convexity'' in the studies of optimal transport (see, e.g., \cite[Chapter 7.3]{santambrogio2015optimal}) and are not sufficient to  guarantee the global convergence.  In Appendix \ref{subsection:non-ri},  we explain the intuition why Gradient Descent can find a  global minimal solution, and a full treatment is left to future studies.
% A rigorous treatment on both  the analysis for the  dynamic equations  and   global convergence   is  left to future studies. 
 
Instead, we  study a relatively simpler case in this paper. 
% Suppose  no regularizer  is imposed, i.e. $R\equiv0$. If $h(\cdot)$ can  achieve  universal approximation, the  global minimum  can be easily  determined.  
We  consider the Res-Net architecture \cite{he2016deep}. Due to the skip connections,  it is possible that  high-level features  highly correlate with low-level ones. We show that under such  architecture the features  change relatively slowly. Then it suffices to prove that   $p_1$ has a full support in any finite time to achieve the global convergence.

\section{Res-Net Formulation and Neural Feature Flow}\label{sec:resnet}
% We  first present the formulation of $L$-layer Res-Nets. 
\subsection{Discrete Res-Net and Scaled Gradient Descent}
For discrete Res-Nets, let $m_\ell$ denote the number of units at layer $\ell$ for $\ell\in [0:L+1]$. 
Suppose each hidden layer has $m$ hidden units that $m_{\ell}=m$ for $\ell\in[L]$. 
Let $m_0=d$ and $m_{L+1}=1$.
For $\ell\in[L+1]$, the output of node $i$ in layer $\ell$ is denoted by $\hbbeta_{\ell,i}\in\RR^N$; the weight that connects the node $i$ at layer $\ell-1$ to node $j$ at layer $\ell$ is denoted by $\hv_{\ell,i,j}\in \RR$.
\begin{enumerate}[(1)]
    \item At the input layer, for $i\in[d]$, let
    \begin{eqnarray}\label{dfor 3}
      \hbbeta_{0,i} :=\left[\bx^1(i), \bx^2(i), \dots, \bx^N(i)  \right]^\top.
    \end{eqnarray}
    \item At the first layer, for $j\in[m]$, let
    \begin{eqnarray}\label{dfor 00}
      \hbbeta_{1, j}  =  \frac{1}{m_{0}}\sum_{i=1}^{m_{0}} \hv_{1, i, j}~ \hbbeta_{0, i}. 
    \end{eqnarray}
    \item We recursively define the upper layers for $\ell \in [2:L]$. 
    Let $ \hbalpha_{ \ell, j}\in\RR^N$ be the residual term at node $j$ at layer $\ell$:
    \begin{equation}\label{dfor 4}
            \hbalpha_{ \ell, j}   =  \frac{1}{m}\sum_{i=1}^{m}  \hv_{\ell, i, j}~ \th_1\left(\hbbeta_{\ell-1, i}\right),   \quad  j \in [m],
    \end{equation}
    where $h_1:\RR\to\RR$ is the activation function.  Furthermore, we consider the following coupling between the residual and the previous feature:
    \begin{equation}
           \hbbeta_{ \ell, j} =   \th_2\left(\hbalpha_{ \ell, j}\right)+  \hbbeta_{ \ell-1, j}, \quad  j \in [m]. \label{dfor 5}
    \end{equation}
    where $h_2:\RR\to \RR$.
    \item At the  output layer, 
     \begin{eqnarray}\label{dfor 6}
    \hbbeta_{L+1, 1} =
 \frac{1}{m}\sum_{i=1}^{m} ~\hv_{L+1, i,1} \th_1\left(\hbbeta_{L, i}\right).
\end{eqnarray}
\end{enumerate}
\iffalse
 We  can  collect all the weights  a single parameter vector: 
 \begin{equation}\label{pad v}
   \hbv := \left\{v_{1,i_1,j}, v_{2, i, j}, \dots,  v_{L, i, j},  v_{L+1,i,1}:  ~i_1\in[d], i\in [m], ~j\in [m]\right\} \in \RR^{m^2(L-1)+(d+1)m}.   
 \end{equation}

Also,  we aggregate  all the residuals into a single vector:
\begin{eqnarray}\label{pad alpha}
  \hbalpha := \left\{ \hbalpha_{\ell,i}:  ~i\in [m],~\ell\in[L] \right\}\in \RR^{NmL},
\end{eqnarray}
and all the features   into a single   vector:
\begin{equation}\label{pad beta}
   \hbbeta := \left\{ \hbbeta_{1,i},\dots, \hbbeta_{L,i}:  ~i\in [m] \right\}\in \RR^{NmL}.  
\end{equation}
\fi 
 We collect  weights, residuals, and features from all layers into  single  vectors $\hbv\in \RR^{D_1}$, $\hbalpha\in \RR^{D_2}$, and $\hbbeta\in \RR^{D_2}$, respectively, where  $D_1:= {m^2(L-1)+(d+1)m}$ and $D_2:=NmL$.
The minimization problem for the Res-Nets is given by
\begin{eqnarray}\label{dis Res-Net}
    \min_{\hbv, \hbalpha, \hbbeta}\hat{\bL}_R (\hbv, \hbalpha, \hbbeta) =  \frac{1}{N}\sum_{n=1}^N \phi\left(   \hbbeta_{L+1,1}(n), ~y^n\right),
     + \hat{R}_R(\hbv, \hbalpha, \hbbeta ), \notag
\end{eqnarray}
where $(\hbv, \hbalpha,\hbbeta)$ satisfies \eqref{dfor 00} -- \eqref{dfor 6}, and $\phi:\RR\times\mathcal{Y}\to \RR$ denotes the loss function and $\hat{R}^R: \RR^{D_1}\times \RR^{D_2}\times \RR^{D_2}\to \RR$ denotes the regularizer.
One noteworthy feature in the architecture  is  \eqref{dfor 5}, where we  introduce a mapping $h_2$ on the residual  $\hbalpha_{\ell,j}$ before fusing it with   $\hbbeta_{\ell-1,j}$.
 We assume that $h_2$ is bounded by a constant $L_1$, and hence $\|\hbbeta_{\ell,j} - \hbbeta_{\ell-1,j}\|_{\infty} \leq L_1$. Therefore,   the high-level features can be regarded as  perturbations of the low-level ones.  Similar ideas have also appeared in \cite{du2018gradient,hardt2016identity}, but are realized in a different way.  For example, in the lazing training regime,  \cite{du2018gradient}  achieved it  by scaling $\hbalpha_{\ell,j}$  with a vanishing $\cO(\frac{1}{\sqrt{m}})$ factor.

%\subsection{Scaled Gradient Descent}

\begin{algorithm}[t]
    \caption{Scaled Gradient Descent for Training a  Res-Net. } 
    \label{algo:RGD}
    \begin{algorithmic}[1]
        \STATE Input the data $\{\bx^i,y^i\}_{i=1}^N$,  step size $\eta$, and  initial weights $\hbv^0$.	
        \FOR{$k =0, 1,\dots, K-1$} 
        \STATE  Perform forward-propagation \eqref{dfor 00} -- \eqref{dfor 6} to compute  $\hbbeta^{k}_{L+1,1}$.
        \STATE  Perform backward-propagation to compute the gradient  $\Grad^k_{\ell,i,j}=\frac{\partial \hat{\bL}_R}{\partial \hv_{\ell,i,j}^k}$.
        \STATE  Perform scaled Gradient Descent:
        \begin{eqnarray}
        \hv_{\ell,i,j}^{k+1} &=&   \hv^{k}_{\ell,i,j} - [\eta m_{\ell-1}m_{\ell}]~ \Grad^k_{\ell,i,j},\quad \ell\in[L+1], ~i\in[m_{\ell-1}], ~j\in[m_{\ell}]. \notag
        \end{eqnarray}
        \ENDFOR
        \STATE Output the weights $\hbv^K$.
    \end{algorithmic}
\end{algorithm}

The  scaled Gradient Descent algorithm without regularization for training a Res-Net is shown in Algorithm \ref{algo:RGD}.
% ,  where we do not consider the regularizer. 
Define intermediate variables in the back-propagation as
\begin{eqnarray}
 \Del^k_{L+1,1} &:=&  N~\frac{\partial \hat{\bL}^k_R}{\partial \hbbeta_{L+1}  } = \left[\phi'_1\left( \hbbeta_{L+1}^k(1), y^1\right),\phi'_1\left( \hbbeta_{L+1}^k(2), y^2\right),\dots, \phi'_1\left( \hbbeta_{L+1}^k(N), y^N\right)\right],\notag\\
   \Del^{\bbeta,k}_{L,i} &:=&   N~\frac{\partial \hat{\bL}^k_R}{\partial \hbbeta_{L,i}  }=\frac{1}{m}\left[\hv_{L+1,i,1}^k~ \Del^k_{L+1,1}\right] \cdot \th_1'\left(\hbbeta^k_{L, i}\right), \quad i\in[m],\notag\\
      \Del^{\balpha,k}_{L,i} &:=&   N~\frac{\partial \hat{\bL}^k_R}{\partial \hbalpha_{L,i}  }=\Del^{\bbeta,k}_{L,i}\cdot \th'_2\left(\hbalpha^k_{L, i}\right), \quad i\in[m],\notag\\
  \Del^{\bbeta,k}_{\ell,i} &:=&   N~\frac{\partial \hat{\bL}^k_R}{\partial \hbbeta_{\ell,i}  }= \frac{1}{m} \left[\sum_{j=1}^{m}\hv_{\ell+1,i,j}^k~ \Del^{\balpha,k}_{\ell+1,j}\right] \cdot \th_1'\left(\hbbeta^k_{\ell, i}\right) + \Del^{\bbeta,k}_{\ell+1,i}, \quad \ell \in[L-1],~i\in[m],\notag\\
    \Del^{\balpha,k}_{\ell,i} &:=&   N~\frac{\partial \hat{\bL}^k_R}{\partial \hbalpha_{\ell,i}  }=\Del^{ \bbeta,k}_{\ell,i}\cdot \th'_2\left(\hbalpha^k_{\ell, i}\right), \quad \ell \in[2: L-1],~i\in[m]\notag.
\end{eqnarray}
Then, we have 
\begin{eqnarray}
  \Grad^{k}_{L+1,i,1} &=& \frac{1}{Nm}   \left[\Del_{L+1}^{k}\right]^\top \th_1\left(\hbbeta_{\ell, i}^k\right), \quad  i \in [m],\notag\\
  \Grad^k_{\ell+1,i,j} &=& \frac{1}{Nm}   \left[\Del_{\ell+1,j}^{\balpha,k}\right]^\top \th_1\left(\hbbeta_{\ell, i}^k\right), \quad \ell \in[ L-1], ~i,j\in [m],\notag\\
 \Grad^k_{1,i,j} &=& \frac{1}{Nd}   \left[\Del_{1,j}^{\bbeta,k}\right]^\top \hbbeta_{0, i}^k, \quad  ~i \in [d], ~ j\in [m]. \notag
\end{eqnarray}

\subsection{Continuous Res-Net Formulation}\label{sec:general resnet}
\iffalse
We then propose a general formulation  for the continuous Res-Net. In the next section, the reader can find the convenience  of   it as  the description of  the  initial state of neural feature flow. Given a continuous Res-Net under the basic formulation, it is not hard to    transform it  to the general formulation. 
The main differentiation  is that
the general formulation $\mathrm(i)$   indexes (represents)  the hidden nodes in layer $\ell\in[2:L]$ by  the function values of residuals, i.e., $\balpha_{\ell}$, instead of $\bbeta_{\ell}$, to avoid inverse operation on $h_2$ and   $\mathrm(ii)$  novelly  introduce  a joint distributions over $(\bv_1, \balpha_1, \balpha_2, \dots,\balpha_L  )$ to characterize the overall state of the skip connections in the continuous Res-Net.  We first present the formulation and later provides more  explanations. 
\fi 

In the continuous Res-Net, we index the hidden nodes in layer $\ell\in[2:L]$ by  the function values of residuals $\balpha_{\ell}$.   To deal with Res-Nets, our main technique here is to characterize the overall state of the continuous Res-Nets by the joint distribution $p$ over ``skip-connected paths'' $\bTheta=(\bv_1, \balpha_1, \balpha_2, \dots,\balpha_L)\in \RR^D$ for $D= d+(N-1)L$.
For $\bTheta = (\bv_1, \balpha_2, \dots, \balpha_L)\in \RR^{D}$, one can intuitively regard $\bTheta$   as    an input-output path $\bv_1 \rightarrow \balpha_2  \rightarrow \dots  \rightarrow \balpha_L$.  
Then $p(\bTheta)$ can be interpreted as the density of such skip-connected paths in the  continuous Res-Nets. Thus the  joint distribution $p$ can be regarded as a description of  the overall topological structure about the skip connections.  
We represent the features $\bbeta_\ell$ in the hidden layer $\ell\in[2:L]$ as functions of $\bTheta$ that we introduce next:

\begin{enumerate}[(1)]
    \item At the input layer, let $\bX = \left[\bx^1, \bx^2, \dots, \bx^N\right]^\top\in\RR^{N\times d}$.
    \item At the first layer, let the features be
    $$\bbeta_{1}\left(\bTheta\right) =  \frac{1}{d} \left(\bX \bv_1\right).$$
    \item At layer $\ell \in [2:L]$, let $v_{\ell}: \supp(p)\times \supp(p) \to \RR$ denote the weights on the connections from layer $\ell-1$ to $\ell$. 
    For any given skip-connected path $\bTheta$, let $\balpha_\ell$ be its $\ell$-th element.
    We have
    % \red{For   $\bTheta=(\bv_1, \balpha_2, \dots, \balpha_L)\in \supp(p)$, we have}
    \begin{equation}
        \begin{split}\label{forres}
           &\balpha_{\ell} =  \int  v_{\ell}\left(\bTheta, \bbTheta\right) \th_1\left(\bbeta_{\ell-1}(\bbTheta)\right)  d p\left(\bbTheta\right),\\
    &\bbeta_{\ell}\left(\bTheta\right) =  \th_2\left(\balpha_{\ell}\right)+ \bbeta_{\ell-1}\left(\bTheta\right) .     
        \end{split}
    \end{equation}
    \item At the output layer, let $v_{L+1}: \supp(p)\to \RR$ be the weights in the layer $L+1$, and we have
    \begin{eqnarray}\label{hbbeta}
      \bbeta_{L+1}  =   \int  v_{L+1}\left(\bTheta\right)  \th_1 \left(\bbeta_{L}\left(\bTheta\right)\right) d p\left(\bTheta\right). \notag
    \end{eqnarray}
\end{enumerate}

The overall learning problem for the continuous Res-Nets is formulated as
\begin{align}\label{problem resnet}
  \minimize_{\left\{v_{\ell}\right\}_{\ell=2}^{L+1}, ~p }  
% \min \mathcal{L}(p,v_2,\dots,v_{L+1})=
   \quad&  \frac{1}{N}\sum_{n=1}^N \phi\left( \bbeta_{L+1}(n), y^n \right) +  R_R\left( \left\{v_{\ell}\right\}_{\ell=2}^{L+1}, p\right)\\
%   + R^R\left(\left\{v_{\ell}\right\}_{\ell=2}^{L+1},  ~p\right) \\
\text{s.t.}  \quad  \bbeta_{\ell}\left(\bTheta\right) &= \frac{1}{d}\bX \bv_1+  \sum_{i=2}^{\ell}   \th_2\left(\balpha_{i}\right), ~~ \bTheta=(\bv_1, \balpha_2, \dots, \balpha_L)\in \supp(p),~\ell\in[L],\notag\\
~~~\balpha_{\ell} =& \int  v_{\ell}\left(\bTheta, \bbTheta\right) \th_1\left(\bbeta_{\ell-1}(\bbTheta)\right)  d p\left(\bbTheta\right),  ~~ \bTheta=(\bv_1, \balpha_2, \dots, \balpha_L)\in \supp(p),~\ell\in[2:L],\notag\\
\bbeta_{L+1}  &=   \int  v_{L+1}\left(\bTheta\right)  \th_1 \left(\bbeta_{L}\left(\bTheta\right)\right) d p\left(\bTheta\right).\notag 
\end{align}

\begin{algorithm}[tb]
	\caption{Example $1$ for Initializing  a  Discrete  Res-Net. }
	\label{algo:init:sim}
		\begin{algorithmic}[1]
		\STATE Input the data $\left\{\hbbeta_{0,i}\right\}_{i=1}^d$,   variance $\sigma_1>0$, and a constant $C_5$.
	 \STATE  Independently  draw $\hat{v}_{1,i,j}\sim p_0= \mathcal{N}\left(0, d\sigma^2_1\right)$ for $i\in[d]$ and $j\in[m]$. 
		\STATE For $\ell\in[2:L]$, $i\in[m]$, and $j\in[m]$, set $\hv_{\ell,i,j} =0$.
		\STATE For $i\in[m]$, set $\hv_{L+1,i} =C_5$.
		\STATE  Perform forward-propagation \eqref{dfor 00} -- \eqref{dfor 6} to compute    to compute $\hbalpha$ and $\hbbeta$.
\STATE Output the  discrete Res-Net  $(\hbv, \hbalpha, \hbbeta)$.
	\end{algorithmic}
\end{algorithm}

\begin{algorithm}[tb]
\caption{Example $2$ for  Initializing  a  Discrete  Res-Net.} 
\label{algo:init:resnet}
\begin{algorithmic}[1]
    \STATE Input the data $\left\{\hbbeta_{0,i}\right\}_{i=1}^d$,   variance $\sigma_1>0$, and a constant $C_5$.
    \STATE  Independently  draw $\hat{v}_{1,i,j}\sim p_0= \mathcal{N}\left(0, d\sigma^2_1\right)$ for $i\in[d]$ and $j\in[m]$. 
    \STATE  Set $\hbbeta_{1,j} = \frac{1}{d}\sum_{i=1}^{d} \hat{v}_{1, i, j}~ \hbbeta_{0, i}$ where $j\in[m]$. {\hfill $\diamond$ Standard Initialization for  layer $1$ }
    \FOR{ $\ell =2,\dots, L$ }
        \STATE  Independently draw $\tilde{v}_{\ell,i,j}\sim \mathcal{N}\left(0, m\sigma^2_1\right)$ for $i,j\in[m]$.
        \STATE   Set $\hbalpha_{\ell,j} = \frac{1}{m}\sum_{i=1}^m\tilde{v}_{\ell,i,j}~\th_1(\hbbeta_{\ell-1,i})$ where $j\in[m]$. 
        \STATE  Set $\hbbeta_{\ell,j}=\hbbeta_{\ell-1,j}+ \th_2\left(\hbalpha_{\ell,j}\right)$ for $j\in[m]$.
        {\hfill $\diamond$ Standard Initialization for  layer $\ell$ }
    \ENDFOR
    \STATE Set $\hv_{L+1,i,1} = C_5$ where $i\in[m]$. {\hfill $\diamond$  Simply initialize $\left\{\hv_{L+1,i,1}\right\}_{i=1}^m$ by a constant}
    \FOR{$\ell =2,\dots, L$}  
        \FOR{$j =1,\dots, m$}  
           \STATE Solve convex optimization problem: {\hfill $\diamond$ Perform $\ell_2$-regression to reduce redundancy} \\
           \begin{equation}
           \min_{\left\{\hv_{\ell,i,j}\right\}_{i=1}^{m}}~ \frac{1}{m}\sum_{i=1}^{m} \left(\hv_{\ell,i,j} \right)^2, ~~~~ \text{s.t.}~~  \hbalpha_{\ell,j} = \frac{1}{m}\sum_{i=1}^m\hv_{\ell,i,j}~\th_1(\hbbeta_{\ell-1,i}). \notag
           \end{equation}
        
        \ENDFOR
    \ENDFOR
    \STATE Output the discrete Res-Net parameters $(\hbv, \hbalpha, \hbbeta)$.
\end{algorithmic}
\end{algorithm}

To have a better understanding of $p$, let us consider  two concrete examples. 
\begin{itemize}
    \item  Algorithm \ref{algo:init:sim} simply sets the weights in layer $\ell\in[2:L]$ as $0$. The continuous limit is
    $$ p\left(\bv_1, \balpha_1, \dots, \balpha_L\right) = p_1^{\bv}(\bv_1)\times \prod_{\ell=2}^L\delta\left(\balpha_{\ell} = \mathbf{0}^{N}   \right),$$
    where $p_1^{\bv} = \mathcal{N}\left(0, d\sigma^2_1\mathbf{I}^d\right)$.
    \item  Algorithm \ref{algo:init:resnet} generates a Res-Net by a standard initialization strategy with  an additional  $\ell_2$-regression procedure to reduce the redundancy of the weights. In its continuous limit, we have the following properties for the distributions of features and residuals:
    \begin{enumerate}[(1)]
        \item  At the first layer, $\bbeta_1\sim p_1^{\bbeta}= \mathcal{N}\left( \mathbf{0}^{N}, \sigma_1^2 \bK_0\right)$, where $\bK_0 :=\frac{1}{d}\bX \bX^\top $.  
        \item At the layer $\ell\in[L-1]$, let $\bK_{\ell}^{\bbeta} := \int \th_1\left(\bbeta_{\ell}\right) \th_1\left(\bbeta_{\ell}\right)^\top d p_{\ell}^{\bbeta}\left(\bbeta_{\ell}\right)$. Then the residuals at layer $\ell+1$ follows the distribution 
        \begin{equation}\label{palpha}
            p_{\ell+1}^{\balpha} =    \mathcal{N}\left( \mathbf{0}^{N}, \sigma_1^2 \bK_{\ell}^{\bbeta}\right). 
        \end{equation}
        Similar to Subsection~\ref{subsection:app}, $\balpha_{\ell+1}$  is independent of $\bbeta_{\ell}$ in the continuous limit. 
        Defining the mapping   $\tilde{f}_{\ell+1}\left( \bbeta_{\ell}, \balpha_{\ell+1}\right) := \bbeta_{\ell}+\th_2(\balpha_{\ell+1})$, the  features  at layer $\ell+1$ follows  the pushforward measure by $\tilde{f}_{\ell+1}$: 
        $$  p_{\ell+1}^{\bbeta} = \tilde{f}_{\ell+1}\#\left( p_{\ell}^{\bbeta}\times p_{\ell+1}^{\balpha} \right). $$
    \end{enumerate}
    Therefore, $p$ is a multivariate Gaussian distribution of the form 
    \begin{eqnarray}\label{ppp}
   p\left( \bv_1, \balpha_1, \balpha_2,\dots,\balpha_L \right) :=  p_1^{\bv}(\bv_1) \times p_2^{\balpha}(\balpha_2)\times p_3^{\balpha}(\balpha_3)\times\dots \times p_L^{\balpha}(\balpha_L).  
    \end{eqnarray}
\end{itemize}

\subsection{Neural Feature Flow for Res-Net}
We introduce the  evolution of a continuous Res-Net trained by the scaled Gradient Descent Algorithm.   In contrast with DNNs,  
the situation for Res-Nets is  more complex. For Res-Nets, the weights may receive a different gradients even they are on the connection  of the hidden units with the same output. 
It means that the states of $\bv_1$, $\{v\}_{\ell=2}^L$ $\{\bbeta\}_{\ell=1}^{L}$, and $\{\balpha\}_{\ell=2}^{L}$ will spit during training.  However,  one important observation is that the splitting occurs  only when the weights  are on different skip-connected paths.  Therefore, following our continuous formulation,   we  represent all the trajectories  as functions of the   skip-connected  paths.  Especially,  we  introduce the notations for the trajectories of $\bv_1$, $\{v_{\ell}\}_{\ell=2}^L$, $\{\bbeta_{\ell}\}_{\ell=1}^{L}$, and $\{\balpha_{\ell}\}_{\ell=2}^{L}$:
\begin{itemize}
    \item $\Phi_{\ell}^{\bbeta}:\mathrm{supp}(p)\to C([0,T], \RR^N)$ is the trajectory of $\bbeta_{\ell}$ for $\ell\in[L]$;
    \item $\Phi_{\ell}^{\balpha}:\mathrm{supp}(p)\to C([0,T], \RR^N)$ is the trajectory of $\balpha_{\ell}$ for $\ell\in[2:L]$;
    \item $\Phi_{1}^{\bv}:\mathrm{supp}(p)\to C([0,T], \RR^d)$  and $ \Phi_{L+1}^{\bv}:\mathrm{supp}(p) \to C([0,T], \RR) $ are the trajectories of $\bv_1$ and $v_{L+1}$, respectively;
    \item $ \Phi_{\ell}^{\bv} :\mathrm{supp}(p) \times \mathrm{supp}(p) \to C([0,T], \RR)$ is the trajectory of $v_{\ell}$ for $\ell\in[2:L]$. 
\end{itemize}

Then  the continuous gradient for the weight can be obtained from the backward-propagation algorithm. Specifically, for all  $\bTheta=(\bv_1,\balpha_2,\dots, \balpha_L ) \in\supp(p)$, $t\in[0,T]$, and $\ell\in[2:L]$, let
\begin{align}
  \bbeta_{L+1}\left(\Phi,t\right) &:=\int \Phi^{\bv}_{L+1}\left(\bTheta\right)(t)~ \th_1\left(\Phi^{\bbeta}_{L}\left(\bTheta\right)(t)\right)   d p\left(\bTheta\right),\label{betal+1}   \\
  \ulineDel_{L+1}(\Phi,t) &:=  \left\{\phi'_1\left(\bbeta_{L+1}\left(\Phi,t\right) (n), y^n\right): n\in[N]\right\},\notag\\
 \ulineDel^{\bbeta}_{L}(\bTheta; \Phi,t) &:= \left[\Phi^{\bv}_{L+1}\left(\bTheta\right)(t)  ~ \ulineDel_{L+1}(\Phi,t)\right] \cdot  \th'_1\left(\Phi^{\bbeta}_{L}(\bTheta)(t)\right),\label{def linedel res}\\
      \ulineDel^{\balpha}_{\ell}(\bTheta; \Phi,t) &:=  \ulineDel_{\ell}^{\bbeta}(\bTheta; \Phi,t) \cdot \th'_2\left(\Phi^{\balpha}_{\ell}(\bTheta)(t)\right),\notag\\
  \ulineDel_{\ell-1}^{\bbeta}(\bTheta; \Phi,t) &:=  \ulineDel_{\ell}^{\bbeta}(\bTheta; \Phi,t) \!+\! \left[\int \! \Phi^{\bv}_{\ell}\left(\bTheta, \bbTheta\right)(t) ~ \ulineDel_{\ell}^{\balpha}(\bbTheta; \Phi,t)d p\left(\bbTheta\right)\right]\! \cdot  \th'_1\left(\Phi^{\bbeta}_{\ell-1}\left(\bTheta\right)(t)\right).\notag
\end{align}
For all $\bTheta,\bbTheta\in\supp(p)$, the drift term for the weights is given by
\begin{subequations}
\begin{align}
 \ulineGrad^{\bv}_{L+1}\left(\bTheta;\Phi,t\right) :=&  \frac{1}{N} \left[\ulineDel_{L+1}(\Phi,t)\right]^\top \th_1\left(\Phi^{\bbeta}_{L}(\bTheta)(t)\right),\label{linegrad wL+1 res}\\
\ulineGrad^{\bv}_{\ell}\left(\bTheta, \bbTheta ;\Phi,t\right) :=&   \frac{1}{N}\left[\ulineDel_{\ell}^{\balpha}(\bbTheta;\Phi,t)\right]^\top \th_1\left(\Phi^{\bbeta}_{\ell-1}\left(\bTheta\right)(t)\right), \quad \ell\in[2:L],\notag\\
  \ulineGrad^{\bv}_{1}\left(\bTheta;\Phi,t\right) :=&   \frac{1}{N} \bX~ \ulineDel_{1}^{\bbeta}\left(\bTheta;\Phi,t\right).\notag
\end{align}
\end{subequations}
Moreover, the drift term for the residuals and features can be obtained by the chain rule: for $\ell\in[L-1]$ and  $\bTheta\in\supp(p)$, 
\begin{subequations}
\begin{align}
      \ulineGrad^{\bbeta}_{1}\left(\bTheta;\Phi,t\right) &:=  \frac{1}{d} \left[ X \ulineGrad^{\bv}_{1}(\bTheta;\Phi, t) \right], \notag\\
      \ulineGrad^{\balpha}_{\ell+1}\left( \bTheta ;\Phi, t\right) &:= \int 
      \Phi^{\bv}_{\ell+1}\left(\bbTheta, \bTheta\right)(t)~
    %   v_{\ell+1}^t\left(\bbTheta, \bTheta\right)~ 
      \left[\th_1'\left(\Phi^{\bbeta}_{\ell}(\bbTheta)(t)\right)\cdot \ulineGrad^{\bbeta}_{\ell}\left(\bbTheta; \Phi,t\right)\right] d p\left(\bbTheta\right)+ \notag\\
  &\quad~~ +\int \th_1\left(\Phi^{\bbeta}_{\ell}\left(\bbTheta\right)(t)\right)\cdot \ulineGrad_{\ell+1}^{\bv}\left(\bbTheta, \bTheta;\Phi,t\right) d p\left(\bbTheta\right),\notag\\
        \ulineGrad^{\bbeta}_{\ell+1}\left( \bTheta ;\Phi, t\right) &:=  \ulineGrad^{\bbeta}_{\ell}\left(\bTheta; \Phi,t\right) +  \ulineGrad^{\balpha}_{\ell+1}\left( \bTheta ;\Phi, t\right)~ \th'_2\left(\Phi^{\balpha}_{\ell+1}\left(\bTheta\right)(t)\right).  \notag
\end{align}
\end{subequations}

The process of  a continuous Res-Net trained by Gradient Descent can be defined below.
\begin{mdframed}[style=exampledefault]
\begin{definition}[Neural Feature Flow for Res-Net]\label{resNFLp0}
Given an initial continuous Res-Net represented by  $(\left\{v_{\ell}\right\}_{\ell=2}^{L+1}, p)$ and $T<\infty$, we say a trajectory $\Phi_*$ is
a neural feature flow 
% a solution of the continuous Res-Net trained by Gradient Descent  
if   for all $\bTheta=(\bv_1,\balpha_2,\dots, \balpha_L )\in \supp(p)$,  $\bbTheta\in\supp(p)$, and $t\in[0,T]$,
\begin{align}
  \Phi_{*,\ell}^{\bbeta}\left(\bTheta\right)(t) &= \left[\frac{1}{d}\bX \bv_1+  \sum_{i=2}^{\ell}   \th_2\left(\balpha_{i}\right)\right] -\int_{0}^t\ulineGrad^{\bbeta}_{\ell}\left( \bTheta;\Phi_*,s\right),\quad\ell\in[L],  \notag\\
  \Phi_{*,\ell}^{\balpha}\left(\bTheta\right)(t) &=   \balpha_{\ell} -\int_{0}^t\ulineGrad^{\balpha}_{\ell}\left( \bTheta;\Phi_*,s\right)ds,\quad \ell\in[2:L], \notag\\
  \Phi_{*,1}^{\bv}\left(\bTheta\right)(t) &= \bv_1 -\int_{0}^t\ulineGrad^{\bv}_{1}\left( \bTheta;\Phi_*,s\right)ds,\notag\\
  \Phi_{*,\ell}^{\bv}\left(\bTheta, \bbTheta\right)(t) &= v_{\ell}(\bTheta,\bbTheta) -\int_{0}^t\ulineGrad^{\bv}_{\ell}\left( \bTheta, \bbTheta;\Phi_*,s\right)ds,\quad \ell\in[2:L],\notag\\
  \Phi_{*,L+1}^{\bv}\left(\bTheta\right)(t) &= v_{L+1}(\bTheta) -\int_{0}^t\ulineGrad^{\bv}_{L+1}\left( \bTheta;\Phi_*,s\right)ds. \notag  
\end{align}
\end{definition}
\end{mdframed}
\section{Analysis of Continuous Res-Net}\label{sec:theo resnet}
\subsection{Assumptions for Res-Net}
We  make the following assumptions that are needed in our analysis. Firstly, the assumptions for the loss and  activation functions in analyzing the DNNs still hold. Specially, we assume that 
\begin{assumption}[Activation Functions and  Loss Function]\label{ass:5}
For the activation functions, we assume  that  there exist  constants $L_1, L_2, L_3>0$ such that, for all $x\in\RR$,
\begin{eqnarray}
       \left|h_1(x)\right| \leq  L_1,\quad \left|h_2(x)\right| \leq  L_1,\quad    \left|   h_1' (x) \right| \leq L_2,\quad\left|   h_2' (x) \right| \leq L_2.\notag
\end{eqnarray}
Moreover,  for all $x,y\in\RR$, 
$$  \left|   h_1' (x) -  h_1'(y) \right|\leq L_3 |x-y |,\quad  \left|   h_2' (x) -  h_2'(y) \right|\leq L_3 |x-y |. $$
For the loss function, we assume that  there exist  constants $L_4,L_5 > 0$ such that, for all $y\in\mathcal{Y}$,  $x_1 \in\RR$, and $x_2\in \RR$,
\begin{eqnarray}
        \left|   \phi'_1 (x_1,y) \right| \leq L_4,\quad\quad  \left|   \phi'_1 (x_1,y)  - \phi'_1 (x_2,y) \right|\leq L_5 |x_1-x_2 |.\notag
\end{eqnarray}
\end{assumption}

We also assume that $p$ is a sub-gaussian distribution  and the weights in the Res-Net are initialized with proper  boundedness and continuity property.
\begin{assumption}[Initialization for Res-Net]\label{ass:6}
We assume  that $p$ is $\sigma$-sub-gaussian distribution. %There exist a constant $\sigma>0$,  for all $\bc_3\in  \mathbb{S}^{D-1}$, we have
%$$ \EE_{\bTheta\sim p}\left[ \exp\left( \left(\bc_3^\top \bTheta\right)^2/\sigma^2\right)\right]\leq e.  $$
We assume that, for all $\ell\in[2:L]$, $v_{\ell}(\cdot, \cdot)$ has sublinear growth on the second argument, that is, there is  a constant $C_5$ such that 
\begin{eqnarray}
        \left|v_{\ell}\left(\bTheta,\bbTheta\right)\right| &\leq& C_5 \left(1+ \left\|\bbTheta\right\|_{\infty}\right), \quad \text{for all}~~ \bTheta, \bbTheta\in \supp(p)  , ~ \ell \in [2: L]\notag.
\end{eqnarray}
Moreover, we assume  that  $v_{\ell}(\cdot, \cdot)$   are locally Lipschitz continuous where the Lipschitz constant has sub-linear growth on  the second argument. In detail, there is a  constant $C_6$, such that  for  $\bTheta_1 \in \supp(p)$,  $\tbTheta_1\in \supp(p)\cap \mathcal{B}_{\infty}\left(\bTheta_1,1\right)$, $\bTheta_2 \in \supp(p)$,  and $\tbTheta_2 \in \supp(p)\cap \mathcal{B}_{\infty}\left(\bTheta_2,1\right)$, we have
\begin{eqnarray}
\left|v_{\ell}\big(\bTheta_1,\bTheta_2\big) -v_{\ell}\big(\tbTheta_1,\tbTheta_2\big) \right| &\leq&C_6\big(  1+ \left\|\bTheta_2\right\|_{\infty} \big)\left(\big\|\bTheta_1 -\tbTheta_1\big\|_{\infty}+\big\|\bTheta_2 -\tbTheta_2\big\|_{\infty}\right).\notag
\end{eqnarray}
For the last layer,  there exist constants  $C_7$ and $C_8$, such that for all $\bTheta, \bbTheta\in\supp(p)$, we have
\begin{eqnarray}
   \left|v_{L+1}\big(\bTheta\big)\right|\leq C_7 \quad\text{and}\quad  \left|v_{L+1}\big(\bTheta\big) -v_{L+1}(\bbTheta) \right| &\leq& C_8\left\|\bTheta -\bbTheta\right\|_{\infty}.      \notag
\end{eqnarray}
\end{assumption}

\iffalse
\begin{assumption}[Topology of Res-Net]\label{ass:6}
We assume that $M_{\ell}$ is Lipschitz continuous.  Specially, for all $\ell\in[L-1]$,  there exists a  constant  $L_6\geq0$, such that for all $\bbeta_{\ell}\in\supp(p_{\ell})$ and $\bbbeta_{\ell}\in\supp(p_{\ell})$,
\begin{eqnarray}
 \left\| M_{\ell}(\bbeta) - M_{\ell}(\bbbeta)  \right\|_{\infty} \leq L_6 \left\|\bbeta - \bbbeta  \right\|_{\infty}.\notag
\end{eqnarray}
\end{assumption}
The simplest way to achieve this assumption is to set $M_{\ell}(\bbeta)=\mathbf{0}^{N}$ for all $\bbeta_{\ell}\in\supp(p_{\ell})$.
\fi

We then propose the assumptions for the global convergence guarantee.

\begin{assumption}[Initial Topological Structure of Res-Net]\label{ass:7}
 We assume that there exists a continuous function $f_1:\RR^{d}\to \RR^{D-d}$ such that $\supp(p) \supseteq \left\{ \left(\bv_1,  f_1(\bv_1)\right): \bv_1\in\RR^d\right\}$.
\end{assumption}
 Assumption \ref{ass:7} implies that marginal distribution of $p$ on $\bv_1$ has a full support. 
Note that Assumption \ref{ass:7} can be realized by both Algorithms \ref{algo:init:sim} and \ref{algo:init:resnet}.

\begin{assumption}[Strong Universal Approximation Property]\label{ass:8}
% Assume that for any bounded $\epsilon_i:\RR^d\mapsto \RR$ that $\|\epsilon_i\|_\infty\le C_B$ for $i\in[N]$, by letting $g_i(x)=h_1(v_1^\top \bx^i+ \epsilon_i(v_1))$, 
% the functions $(g_1,\dots,g_N)$ are linearly independent and the smallest singular value is lower bound by $\delta>0$:
% \[
% \min_{\ba\ne \mathbf{0}}\frac{\EE\left|\sum_{i=1}^N a_i g_i(\bZ)\right|}{\|\ba\|_2}>\delta,
% \]
% where $\bZ\sim \mathcal{N}\left(0, \mathbf{I}^d\right)$ and $\delta$ only depends on $C_B$.
 Assume that  for  any function $f_2:\RR^{d}\to \RR^{N}$ that is bounded by $C_{B}$, i.e.,  for all $\bv_1\in \RR^{d}$, $\left\|f_2(\bv_1) \right\|_{\infty}\leq C_{B}$, we have
\begin{eqnarray}\label{uni}
 \lambda_{\min} \left[\int     \left[\th_1\left(\frac{1}{d} \bX \bv_1+ f_2\left(\bv_1\right)  \right)\right]\left[\th_1\left( \frac{1}{d} \bX\bv_1 + f_2\left(\bv_1\right)  \right)\right]^\top  d\tp_1\left(\bv_1\right)\right]  \geq \blambda >0.
\end{eqnarray}
where   $\blambda$ only depends on $\bX$,  $C_B$, and $h_1$, and  $\tp_1 = \mathcal{N}\left(\mathbf{0}^d, \mathbf{I}^d\right)$.   
\end{assumption}

Assumption \ref{ass:8} is a technical assumption that we conjecture to hold under fairly general conditions. 
Notably when $C_B=0$, it is shown in \cite[Lemma F.1]{du2018gradient} that the assumption holds for all analytic non-polynomial $h_1$.
% Intuitively, it says 
Lemma \ref{lemma:uni} affords many examples that satisfy the assumption for constant~$C_B$.
\begin{lemma}\label{lemma:uni}
Suppose that the data is  non-parallel, i.e., $\bx_i \notin \mathrm{Span}(\bx_j)$ for all $i\ne j$. 
\begin{enumerate}[(i)]
    \item\label{lemmai} If $g:\RR\to \RR$ is a non-polynomial function that is bounded and has Lipschitz continuous gradient, then  $h_1(x):=g(c x)$ satisfies Assumption \ref{ass:8} when $c>0$ is sufficiently small.
    \item\label{lemmaii}  The Relu-type function $h_1(x) = (x)_+^{\alpha}$ for $\alpha>0$ satisfies Assumption \ref{ass:8}.
    \item\label{lemmaiii}  
    If $h_1(x)=c |x|^{-\alpha}$ or $h_1(x)=c (x)_+^{-\alpha}$ for $|x|>c'$, where $c,c',\alpha>0$, then $h_1$ satisfies Assumption \ref{ass:8}.
\end{enumerate}
\end{lemma}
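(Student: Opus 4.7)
The plan is to establish each case separately, with the common strategy of reducing the uniform lower bound on the Gram matrix
\[
\bK(f_2) := \int \dot{h}_1\!\left(\tfrac{1}{d}\bX\bv_1 + f_2(\bv_1)\right)\dot{h}_1\!\left(\tfrac{1}{d}\bX\bv_1 + f_2(\bv_1)\right)^{\!\top} d\tp_1(\bv_1)
\]
to a positive-definiteness argument that is robust against the bounded perturbation~$f_2$. In all three cases the non-parallel data assumption will be used to make the matrix $\bX\bX^\top$ (or a suitable data-dependent Gram-like object) strictly positive on the relevant subspace.

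For part~(\ref{lemmai}), I would exploit the small scaling parameter~$c$ via a Taylor expansion of $g$ at a base point. Writing $g(c u + cf_2(\bv_1)) = g(c f_2(\bv_1)) + c g'(c f_2(\bv_1))\, u + r(u,\bv_1)$ with $|r|\le \tfrac12 L_{g''} c^2 u^2$, one expands $\bK(f_2)$ in powers of~$c$. The constant piece is a rank-one matrix aligned with $\mathbf{1}\mathbf{1}^\top$; the linear and quadratic cross-terms produce, to leading nontrivial order, a term proportional to $c^2 (g'(0))^2 \bX\bX^\top/d^2$ after averaging the Gaussian moments of~$\bv_1$. Boundedness of $f_2$ confines $cf_2(\bv_1)$ to a compact interval on which $g'$ is bounded and continuous, uniformly in the choice of $f_2$. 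Since $g$ is non-polynomial, by choosing a suitable reference point (or iterating the argument) one can ensure the relevant leading coefficient is nonzero; the non-parallel data then makes $\bX\bX^\top$ strictly positive on the orthogonal complement of $\mathbf{1}$, while a Schur-type splitting deals with the constant direction. Picking $c$ small enough, the leading $c^2$ contribution dominates the $O(c^3)$ remainder, yielding a uniform $\bar\lambda$.

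For part~(\ref{lemmaii}), I would proceed by contradiction. If no uniform $\bar\lambda$ existed, choose sequences $f_2^{(k)}$ and unit vectors $\bc^{(k)}\to \bc$ with $\bc^{\top}\dot{h}_1(\tfrac{1}{d}\bX\bv_1+f_2^{(k)}(\bv_1))\to 0$ in $L^2(\tp_1)$. The activation $h_1(x)=(x)_+^{\alpha}$ has its singular set (kink for $\alpha=1$, singular derivative for general $\alpha$) along a single hyperplane per coordinate; the $n$-th coordinate is singular along $\{\bv_1: \tfrac{1}{d}\bx_n^{\top}\bv_1 + f_2(\bv_1)_n=0\}$. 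The non-parallel data assumption implies these singular manifolds are geometrically distinct for distinct $n$, so no nontrivial combination of the coordinates can be smooth along any one of them. Making this precise through a localization/blow-up argument along a chosen hyperplane forces $c_n=0$ for every $n$, contradicting $\|\bc\|=1$. A limit-passage argument ensures the contradiction is uniform in the sequence $f_2^{(k)}$ because $\|f_2^{(k)}\|_\infty\le C_B$.

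For part~(\ref{lemmaiii}), I would isolate the region of large $\|\bv_1\|$ where $h_1$ takes its explicit power form, and show that the contribution from this tail already yields a positive-definite matrix. In this regime $\tfrac{1}{d}\bx_n^{\top}\bv_1$ dominates the bounded perturbation $f_2(\bv_1)_n$, so $h_1$ behaves up to a controlled multiplicative factor like $c|\tfrac{1}{d}\bx_n^{\top}\bv_1|^{-\alpha}$, and the tail Gram matrix is essentially a power-weighted Gaussian integral of $\bX\bv_1$; non-parallel data again give strict positivity, and the explicit form makes the lower bound independent of~$f_2$.

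The main obstacle will be part~(\ref{lemmaii}): unlike the analytic case, one cannot invoke identity principles, and the \emph{uniform} control over all bounded perturbations $f_2$ is delicate because the singular hyperplanes themselves depend on~$f_2$ and need not be affine once $f_2$ is nonlinear. The careful step is the localization along the singular set, and showing the contradiction survives passing to weak limits in $\bc^{(k)}$ and $f_2^{(k)}$.
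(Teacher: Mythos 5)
Your proposal takes a genuinely different route from the paper on all three parts, and part~(\ref{lemmai}) has a real gap.

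On part~(\ref{lemmai}), your Taylor-expansion argument cannot close. Expanding $g$ to second order in $c$, the leading non-constant contribution to the Gram matrix is, as you say, proportional to $c^2 g'(0)^2 \bX\bX^\top/d^2$. But $\bX\bX^\top$ has rank at most $\min(N,d)$, so when $N>d+1$ the matrix $a_0\mathbf{1}\mathbf{1}^\top + a_2 c^2 \bX\bX^\top$ is degenerate regardless of how small $c$ is; the ``non-parallel'' assumption only rules out proportional rows of $\bX$ and does not make $\bX\bX^\top$ full rank. The reason the Gram matrix of Assumption~\ref{ass:8} is actually positive definite is that the non-polynomiality of $g$ injects contributions from \emph{all} degrees (Hermite orders), and those appear at ever higher powers of $c$ in your expansion — precisely the terms your remainder estimate throws away. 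So picking $c$ small enough to control $O(c^3)$ also kills the terms you need. The paper instead starts from the unperturbed positivity result of Du et al.\ (which already handles full rank for $N>d$), and makes the perturbation $f_2$ harmless by a change of variables: with $\bfV\sim\mathrm{Uniform}(R)$ on a fixed compact set and $\bfV'=\bfV/\alpha$, the argument $\bx_i\cdot\bfV' + C_i(\bfV')$ after rescaling reads $\bx_i\cdot\bfV + \alpha C_i(\bfV/\alpha)$, whose perturbation is $O(\alpha C_B)$. Lipschitz continuity of $g$ then gives $\bK(f_2)\succeq (\lambda_R - C\alpha C_B)\mathbf{I}^N \succeq \tfrac{\lambda_R}{2}\mathbf{I}^N$ for small $\alpha$, uniformly in $f_2$. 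No expansion is needed, and the rank issue never arises because $\lambda_R$ already encodes it.

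On parts~(\ref{lemmaii}) and~(\ref{lemmaiii}), your routes differ substantially from the paper's. For~(\ref{lemmaii}) you propose a contradiction/localization argument along the singular hyperplanes of $(\cdot)_+^\alpha$; you correctly flag this as delicate. The paper's argument is much more elementary: it exploits the \emph{homogeneity} $h_1(\beta x)=\beta^\alpha h_1(x)$. Rescaling $\bfV\to\beta\bfV$ inflates the unperturbed Gram lower bound to $\beta^{2\alpha}\lambda_R$, while the local Lipschitz constant of $h_1$ at scale $\beta$ is $O(\beta^{\alpha-1})$, so the bounded perturbation $C_i$ costs only $O(\beta^{2\alpha-1})$. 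Taking $\beta$ large enough makes the main term dominate, with a lower bound that only depends on $\lambda_R, C_B$. For~(\ref{lemmaiii}) your ``isolate the tail'' idea is in the same spirit as the paper's, but the paper first constructs an explicit compact shell $R$ on which $|\bx^i\cdot\bv|\ge c'$ for all $i$ (so the power form applies exactly, not just asymptotically), then runs the same scaling-plus-Lipschitz argument with the negative exponent. Your sketch asserts that ``non-parallel data again give strict positivity'' on the tail but does not supply the mechanism; in the paper this is again inherited from the unperturbed positivity, not re-derived.
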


\iffalse
\begin{assumption}[Input Data]\label{ass:9}
We assume that for all $i\in[N]$ $\|\bx^i\|\leq C_x$, moreover, the data is  non-parallel, i.e. for all $i\in[N]$, and $j\in[N]$,  if $i\neq j$, then  $\bx_i \notin \mathrm{Span}(\bx_j)$.
\end{assumption}

\begin{assumption}[Universal Approximation for $h_1$]\label{ass:10}
 For all bounded function $f:\RR^d\to\RR$, i.e. for any $\bv\in\RR^d$, $|f(\bv)|\leq C_{10}$,  $h_1$
 
 we assume that $H$ has full rank.

\end{assumption}

\fi

\subsection{Properties of Neural Feature Flow for Res-Net}
Simialar to fully-connected DNNs, we show the existence and uniqueness of  neural feature flow, and the solution $\Phi_*$ is a continuous mapping on $\bTheta$ given a time $t$.
\begin{theorem}[Existence and Uniqueness of Neural Feature Flow on Res-Net]\label{theorm:resflow p0}
Under Assumptions \ref{ass:5} and \ref{ass:6}, for any $T<\infty$,   there exists an 
 unique neural feature flow $\Phi_*$. 
\end{theorem}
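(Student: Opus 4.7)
The plan is to parallel the proof of Theorem~\ref{theorm:flow p0} via a Picard-style fixed-point construction, carried out on a carefully chosen subspace of trajectories that preserves the quantitative initialization conditions (Assumption~\ref{ass:6}) along the iteration. Concretely, I would view Definition~\ref{resNFLp0} as defining an integral operator $\mathcal{T}$ acting on candidate trajectories $\Phi$: given $\Phi$, the right-hand sides produce a new trajectory $\mathcal{T}\Phi$. A fixed point of $\mathcal{T}$ is exactly a neural feature flow, so it suffices to show that $\mathcal{T}$ admits a unique fixed point on an appropriate complete metric space.

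First I would introduce a weighted sup-norm that absorbs the sub-gaussian tails of $p$. For each component of $\Phi$ I would use weights of the form $(1+\|\bTheta\|_\infty)$ and $(1+\|\bTheta\|_\infty+\|\bbTheta\|_\infty)$ (matching the growth assumed on $v_\ell$ and $v_{L+1}$), and take the supremum over $t\in[0,T]$ and over $\bTheta,\bbTheta\in\supp(p)$. Let $\mathcal{X}_T$ be the set of trajectories $\Phi$ that (i) start at the prescribed initial data, (ii) satisfy uniform-in-$t$ sublinear-growth and local-Lipschitz bounds in $\bTheta$ of the same form as in Assumption~\ref{ass:6} but with slightly larger constants $C_5',C_6',C_7',C_8'$ allowed to grow like $e^{C'T}$, and (iii) are continuous in $t$. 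Using Assumption~\ref{ass:5} (boundedness of $h_1,h_2,h_1',h_2',\phi_1'$) together with Assumption~\ref{ass:6}, direct substitution into \eqref{betal+1}--\eqref{def linedel res} and the subsequent definitions of $\ulineGrad^{\bv}_\ell,\ulineGrad^{\balpha}_\ell,\ulineGrad^{\bbeta}_\ell$ shows that these drift terms are uniformly bounded by an affine function of $\|\bTheta\|_\infty$ and Lipschitz in $\bTheta$ with an affine Lipschitz profile; integrating in $t$ and using Gronwall then implies $\mathcal{T}\mathcal{X}_T\subseteq \mathcal{X}_T$ provided the inflated constants are chosen large enough. This step is essentially bookkeeping, but must be done layer-by-layer, starting from $\ulineDel_{L+1}$ and propagating backwards to $\ulineDel^{\bbeta}_1$, then forward again for the feature/residual drifts.

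Next I would show $\mathcal{T}$ is a contraction on $\mathcal{X}_T$ for $T$ small enough. Given $\Phi,\widetilde\Phi\in\mathcal{X}_T$, subtracting the defining integrals in Definition~\ref{resNFLp0} and estimating the differences of $\ulineDel$'s and $\ulineGrad$'s using (a) the uniform Lipschitz bounds on $h_1,h_2,\phi_1'$ from Assumption~\ref{ass:5}, (b) the Lipschitz bounds on $v_\ell$ restricted to the invariant class $\mathcal{X}_T$, and (c) sub-gaussian tail integrals $\int(1+\|\bTheta\|_\infty)^k\,dp(\bTheta)<\infty$ from Assumption~\ref{ass:6}, one obtains an inequality of the form $\|\mathcal{T}\Phi-\mathcal{T}\widetilde\Phi\|\le C\,T\,e^{C'T}\,\|\Phi-\widetilde\Phi\|$ in the weighted norm. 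For $T\le T_0$ small, this is a strict contraction, giving a unique fixed point by Banach. The extension to arbitrary $T<\infty$ is then obtained by concatenating solutions on $[0,T_0],[T_0,2T_0],\dots$, using the endpoint of the previous piece (which still satisfies inflated versions of Assumption~\ref{ass:6} by the invariance argument) as new initial data.

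The main obstacle I expect is controlling the unboundedness of $\bTheta$: the drift $\ulineGrad^{\bv}_\ell$ involves an inner integral against $p(\bbTheta)$ whose integrand grows polynomially in $\|\bbTheta\|_\infty$ through $v_\ell$, so one cannot hope for a uniform Lipschitz bound in the usual sup-norm. The weighted norm and the invariant class $\mathcal{X}_T$ with affine growth/Lipschitz profiles are designed exactly to close this loop; verifying that the inflated constants $C_5',\ldots,C_8'$ can be chosen compatibly across all $L$ layers (so that layer-by-layer backward propagation of $\ulineDel$ does not blow up the growth exponent beyond affine) is the delicate book-keeping step. Once this invariance is established, the contraction argument and the existence/uniqueness conclusion follow by the standard Picard template, and I would expect the resulting constants to match the ones appearing in the analogue Theorem~\ref{theorem:conpsi} for DNNs.
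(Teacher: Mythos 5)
Your approach is sound and reaches the same conclusion, but it differs from the paper's in two substantive ways. First, the paper does not use the small-$T$ contraction plus concatenation scheme: it instead proves a Gr\"onwall-type integral estimate $\Dis^{[0,t]}(F_2(\Phi_1),F_2(\Phi_2))\le R\int_0^t\Dis^{[0,s]}(\Phi_1,\Phi_2)\,ds$ and iterates it to get $\Dis^{[0,T]}(F_2^m\Phi_1,F_2^m\Phi_2)\le \frac{(CT)^m}{m!}\Dis^{[0,T]}(\Phi_1,\Phi_2)$, which vanishes for \emph{any} fixed $T$ as $m\to\infty$; this gives existence and uniqueness directly on $[0,T]$ with no time-splitting. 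Your concatenation variant is workable, but note that it forces you to carry the local Lipschitz-in-$\bTheta$ bounds (your $C_6',C_8'$) inside the invariant class $\mathcal{X}_T$, because you must re-verify Assumption~\ref{ass:6} at each restart time $kT_0$; the paper sidesteps this entirely by proving Theorem~\ref{theorm:resflow p0} using only Lipschitz-in-$t$ continuity and deferring the Lipschitz-in-$\bTheta$ estimates to a separate argument (Theorem~\ref{theorm:phi}), where they belong. Second, the paper makes a structural reduction you do not exploit: it restricts the search space to ``restricted trajectories'' in which the residual and feature trajectories are \emph{determined} by the weight trajectories through the forward-propagation equations, and accordingly takes the metric $\Dis^{[0,t]}$ only over the weight components $\Phi_\ell^{\bv}$. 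This collapses the fixed-point problem to the weights alone, making both the contraction estimate and the completeness argument (Lemma~\ref{lmm:psi-complete R}) simpler; your weighted norm over all components would also work but requires a few extra consistency checks. Neither difference is an error—your plan buys nothing over the paper's and costs the bookkeeping you already flagged—but the iterated-Gr\"onwall route avoids both the inflated-constant compounding worry and the need to track $\bTheta$-Lipschitzness prematurely.
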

% We also have that $\Phi_*$ is a continuous mapping on $\bTheta$ given time $t$, which is stated  as follows.
\begin{theorem}[Property of $\Phi_*$]\label{theorm:phi}
Under Assumptions \ref{ass:5} and \ref{ass:6}, let $\Phi_*$ be the neural feature flow, there  are constants    $R\geq0$ and $R'\geq0$ such that  for all $t\in[0,T]$,  $\bTheta_1 \in \supp(p)$ and  $\tbTheta_1 \in \supp(p)\cap \mathcal{B}_{\infty}\left(\bTheta,1\right)$, $\bTheta_2 \in \supp(p)$, and $\tbTheta_2\in \supp(p)\cap \mathcal{B}_{\infty}\left(\bTheta_2,1\right)$,  we have
\begin{equation}\label{tho:22res}
    \begin{split}
 \left\|\Phi^{\bbeta}_{*,\ell}\big(\bTheta_1\big)(t)-\Phi^{\bbeta}_{*,\ell}\big(\tbTheta_1\big)(t) \right\|_{\infty}&\leq  Re^{R' t}\left( \|\bTheta_1\|_{\infty}+1 \right)\left\|\bTheta_1 -\tbTheta_1 \right\|_{\infty},\quad \ell\in[L],\\
  \left\|\Phi^{\balpha}_{*,\ell}\big(\bTheta_1\big)(t)-\Phi^{\balpha}_{*,\ell}\big(\tbTheta_1\big)(t) \right\|_{\infty}&\leq  Re^{R' t}\left( \|\bTheta_1\|_{\infty}+1 \right)\left\|\bTheta_1 -\tbTheta_1 \right\|_{\infty},\quad \ell\in[2:L],\\    
         \left\|\Phi^{\bv}_{*,1}\big(\bTheta_1\big)(t)-\Phi^\bv_{*,1}\big(\tbTheta_1\big)(t) \right\|_{\infty}&\leq  Re^{R' t}\left( \|\bTheta_1\|_{\infty}+1 \right)\left\|\bTheta_1 - \tbTheta_1 \right\|_{\infty},\\
  \left|\Phi^{\bv}_{*,\ell}\big(\bTheta_1, \bTheta_2\big)(t) - \Phi^{\bv}_{*,\ell}\big(\tbTheta_1, \bTheta_2\big)(t)  \right|&\leq R e^{R't}\left( \|\bTheta_1\|_{\infty}+ \|\bTheta_2\|_{\infty}+1\right)  \left\|\bTheta_1 -\tbTheta_1 \right\|_{\infty},\quad \ell\in[2:L],\\
   \left|\Phi^{\bv}_{*,\ell}\big(\bTheta_1, \bTheta_2\big)(t) - \Phi^{\bv}_{*,\ell}\big(\bTheta_1, \tbTheta_2\big)(t)  \right|&\leq R e^{R't}\left( \|\bTheta_1\|_{\infty}+ \|\bTheta_2\|_{\infty}+1\right) \left\|\bTheta_2 -\tbTheta_2 \right\|_{\infty},\quad \ell\in[2:L],\\
    \left|\Phi^{\bv}_{*,L+1}\big( \bTheta_1\big)(t) - \Phi^{\bv}_{*,L+1}\big( \tbTheta_1\big)(t)  \right|&\leq R e^{R't} \left(  \|\bTheta_1\|_{\infty}+1 \right)  \left\|\bTheta_1-\tbTheta_1 \right\|_{\infty}. \notag
    \end{split}
\end{equation}
\end{theorem}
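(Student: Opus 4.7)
The plan is to adapt the Picard-iteration / Grönwall strategy that drives Theorem \ref{theorem:conpsi} to the skip-connected parametrization of the Res-Net. For fixed $t\in[0,T]$, I would introduce the base-point differences
$$
D^{\bbeta}_{\ell}(t):=\|\Phi^{\bbeta}_{*,\ell}(\bTheta_1)(t)-\Phi^{\bbeta}_{*,\ell}(\tbTheta_1)(t)\|_{\infty},\quad D^{\balpha}_{\ell}(t):=\|\Phi^{\balpha}_{*,\ell}(\bTheta_1)(t)-\Phi^{\balpha}_{*,\ell}(\tbTheta_1)(t)\|_{\infty},
$$
together with the corresponding weight differences $D^{\bv}_{\ell}(t)$ (first-argument perturbation) and $E^{\bv}_{\ell}(t)$ (second-argument perturbation). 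The target is a closed system
$$
D^{\bullet}_{\ell}(t)\le C(1+\|\bTheta_1\|_{\infty})\,\|\bTheta_1-\tbTheta_1\|_{\infty}+C'\int_{0}^{t}\sum_{\bullet',\ell'} D^{\bullet'}_{\ell'}(s)\,ds,
$$
which, after summation over layers, yields the claimed $Re^{R't}(1+\|\bTheta_1\|_{\infty})$ bound via scalar Grönwall.

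Before doing perturbations, I would establish a priori boundedness of $\Phi_*$, the Res-Net analogue of Lemma \ref{lmm:Psi-property}. Uniform boundedness of $h_1,h_2$ and $\phi'_1$ (Assumption \ref{ass:5}) and of $v_{L+1}$ (Assumption \ref{ass:6}) make $\ulineGrad^{\bv}_{L+1}$ bounded, so $|\Phi^{\bv}_{*,L+1}(\bTheta)(t)|\le C_7+C T$; by the sublinear growth of $v_\ell$, the same chaining upward yields $|\Phi^{\bv}_{*,\ell}(\bTheta,\bbTheta)(t)|\le C_T(1+\|\bbTheta\|_{\infty})$ and $\|\Phi^{\bbeta}_{*,\ell}(\bTheta)(t)\|_{\infty}\le C_T(1+\|\bTheta\|_{\infty})$. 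This is the step that injects the linear-in-$\|\bTheta_1\|_\infty$ prefactor into every subsequent estimate; in particular $\|\bbeta_1(\bTheta)\|_\infty\le\|\bX\|_\infty\|\bv_1\|_\infty/d$ contributes that factor through the starting conditions of the feature recursion.

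Next, at $t=0$, Definition \ref{resNFLp0} together with Assumption \ref{ass:6} immediately gives $D^{\bbeta}_\ell(0)\le (\|\bX\|_\infty/d)\,\|\bv_1-\tilde{\bv}_1\|_\infty + L_2\sum_{i=2}^{\ell}\|\balpha_i-\tilde{\balpha}_i\|_\infty$, $D^{\balpha}_\ell(0)\le\|\balpha_\ell-\tilde{\balpha}_\ell\|_\infty$, $D^{\bv}_1(0)\le\|\bv_1-\tilde{\bv}_1\|_\infty$, and the local-Lipschitz clauses yield $D^{\bv}_\ell(0)\le C_6(1+\|\bbTheta\|_\infty)\|\bTheta_1-\tbTheta_1\|_\infty$ and $D^{\bv}_{L+1}(0)\le C_8\|\bTheta_1-\tbTheta_1\|_\infty$. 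For the integral parts I would telescope each $\ulineGrad^{\bullet}_\ell(\bTheta_1;\Phi_*,s)-\ulineGrad^{\bullet}_\ell(\tbTheta_1;\Phi_*,s)$ through the explicit formulas \eqref{betal+1}--\eqref{def linedel res} and the subsequent backward recursion, separating differences of (i) weight trajectories $\Phi^{\bv}_\ell$, (ii) features/residuals inside $h_1',h_2'$, and (iii) fixed initial weight values $v_\ell$. Each resulting term is controlled by combining the a priori bounds of Step~2, the Lipschitz continuity of $h_1,h_2,\phi'_1$ (Assumption \ref{ass:5}), the sublinear growth and local Lipschitzness of $v_\ell$ (Assumption \ref{ass:6}), and the finite moments $\int(1+\|\bbTheta\|_\infty)^{k}\,dp(\bbTheta)<\infty$ implied by sub-gaussianity. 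The symmetric computation perturbing the second argument produces the corresponding estimates for $E^{\bv}_\ell(t)$.

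The principal obstacle is the bookkeeping. The backward recursion for $\ulineDel^{\bbeta}_\ell$ piles up products of weight trajectories and activation derivatives across all intermediate layers, so when one telescopes the difference one generates $\cO(L)$ cross terms per layer, each carrying a $(1+\|\bbTheta\|_\infty)$ factor from the sublinear-growth bound on $v_\ell$. Ensuring these factors integrate away against the sub-gaussian $p$ (rather than compounding into an unbounded prefactor in $L$) requires invoking the local Lipschitz estimates only for $\tbTheta\in\mathcal{B}_\infty(\bTheta,1)$, so that $\|\tbTheta\|_\infty$ is absorbed into $\|\bTheta\|_\infty+1$ uniformly in the argument being perturbed, and using the a priori bound on $\Phi^{\bv}_{*,\ell}$ to control the weight trajectories throughout $[0,T]$. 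Once this bookkeeping is in place, Grönwall closes the system and the constants $R,R'$ depend only on $T$, $\sigma$, and the parameters in Assumptions \ref{ass:5}--\ref{ass:6}.
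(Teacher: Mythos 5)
Your strategy — establish the a priori bounds of Lemma \ref{lmm:Phi-property}, set up the difference quantities $D^{\bullet}_{\ell}(t)$ (with $\bbTheta$-supremum normalization for the weights), and close a system of integral inequalities via Gr\"onwall — is a valid route, but it is not the route the paper takes. The paper proves this statement, as it does Theorem \ref{theorem:conpsi}, through a Picard-iteration invariant-set argument: it posits the set $\bfPhi_{b}$ of $b$-locally-Lipschitz trajectories (Definition \ref{def:local-lip2}, where the Lipschitz moduli are \emph{imposed} to be of the form $e^{bt}(1+\|\bTheta\|_\infty)\|\bTheta - \tbTheta\|_\infty$), shows in Lemma \ref{const Lstar R} that $F_2(\bfPhi\cap\bfPhi_{b_*})\subseteq\bfPhi_{b_*}$ for a large enough constant $b_*$, and then invokes uniqueness of the Picard fixed point (Lemmas \ref{lmm:contraction R}, \ref{psi closed}) to conclude $\Phi_*\in\bfPhi_{b_*}$. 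The technical work — unwinding the forward and backward recursions and integrating out the $\bbTheta$-dependent Lipschitz constants against the sub-gaussian $p$ — is identical in both approaches. What the paper's invariant-set method buys is that the desired inequality holds \emph{by definition} of the Picard search space, so one never has to justify that the scalar quantity fed into Gr\"onwall is finite or measurable; your direct approach must explicitly invoke the a priori bounds of Lemma \ref{lmm:Phi-property} to show each $D^{\bullet}_\ell(t)$ is finite before Gr\"onwall applies. The paper flags exactly this issue in its discussion of Theorem \ref{theorem:conpsi} ("it is not straightforward to directly derive a simple differential inequality"), and the citation to \cite{turinici1986abstract} points to the fact that Picard iteration \emph{is} how the abstract Gr\"onwall inequality is proved.

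One substantive correction on the bookkeeping: you attribute the $(1+\|\bTheta_1\|_\infty)$ prefactor to the a priori boundedness of the first-layer feature, $\|\bbeta_1(\bTheta)\|_\infty\lesssim\|\bv_1\|_\infty$. That is not the dominant source. The prefactor enters through the locally-Lipschitz hypothesis of Assumption \ref{ass:6}: the Lipschitz modulus of $v_\ell(\cdot,\cdot)$ grows sublinearly in the \emph{second} argument, and in the forward propagation of $\balpha_{\ell}(\bTheta)=\int v_\ell(\bbTheta,\bTheta)\,\th_1(\bbeta_{\ell-1}(\bbTheta))\,dp(\bbTheta)$ the path $\bTheta$ sits precisely in that second slot, so perturbing $\bTheta_1\mapsto\tbTheta_1$ produces a factor $(1+\|\bTheta_1\|_\infty)$ that cannot be integrated away. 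The a priori bounds of Lemma \ref{lmm:Phi-property} contribute constants — for instance they control $|\int\Phi^{\bv}_\ell(\bTheta,\bbTheta)\,\ulineDel^{\balpha}_\ell(\bbTheta)\,dp|$ when multiplied against the Lipschitz difference of $\th_1'$ — but they do not themselves create the $\bTheta_1$-dependent modulus. If you track the prefactor to the wrong source you will not correctly identify which of the integral inequalities genuinely need it, and the Gr\"onwall chain may not close with the stated form of the bound. Other than this misattribution your decomposition is sound, and with the corrected bookkeeping the direct route would reach the same conclusion as Lemma \ref{const Lstar R}.
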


\subsection{Approximation Using Finite Neurons for Res-Net}
We consider the approximation between   a discrete DNN  trained by scaled Gradient Descent and a continuous one evolving as neural feature flow. Following the  procedure of Subsection \ref{subsection:app}, we first propose the general initial condition for the  discrete Res-Net and  define the actual and idea processes, respectively.

% \begin{mdframed}[style=exampledefault]
\begin{definition}[$\ep_1$-independent Initial Res-Net]\label{Res-Net condition}
We say an initial discrete Res-Net $(\hbv, \hbalpha,\hbbeta)$ is $\ep_1$-independent if there exist  a continuous initial Res-Net $(\left\{v_\ell\right\}^{L+1}_{\ell=2},p)$ satisfying Assumption~\ref{ass:6} and $(\bbv, \bbalpha,\bbbeta)$ such that
\begin{enumerate}[(1)]
\item  $\bbTheta_i=\left(\bbv_{1,i}, \bbalpha_{2,i}, \dots, \bbalpha_{L,i} \right)\iiddistr p$;
\item For $\bbbeta$ and $\linebv$,
\begin{itemize}
    \item $\bbbeta_{\ell,i} =  \frac{1}{d}\left(\bX \bbv_{1,i}\right)  +\sum_{\ell_1=2}^{\ell}\th_2(\balpha_{\ell_1,i}) $ for $\ell\in[L]$ and $i\in[m]$;
    \item $\linev_{\ell,i,j} = v_{\ell}\left(\bbTheta_i, \bbTheta_j\right)$ for  $\ell\in[2:L]$, $i,j\in[m]$;
    \item $\linev_{L+1,i,1} = v_{L+1}\left(\bbTheta_i\right)$ for $i\in[m]$;
\end{itemize}
\item $\ep_1$-closeness:
\begin{itemize}
    \item  $\| \bbv_{1,i} -  \hbv_{1,i}  \|_{\infty} \leq  \left( 1+  \left\| \bbTheta_{i}\right\|_{\infty}\right)\ep_1$ for  $i\in[m]$;
    \item    $\left|\bar{v}_{\ell+1, i,j}- \hv_{\ell+1,i, j}\right| \leq \left(1+ \left\|\bbTheta_{i}\right\|_{\infty}+ \left\|\bbTheta_{j}\right\|_{\infty} \right)\ep_1$ for  $\ell\in[L-1]$, $i,j\in[m]$;
    \item $\left|\bar{v}_{L+1, i,1}- \hv_{L+1,i, 1}\right|\leq  \left(1+ \left\|\bbTheta_{i}\right\|_{\infty} \right)\ep_1$ for  $i\in [m]$.
\end{itemize}
\end{enumerate}

% Suppose there is  a continuous Res-Net $\left(\left\{v_\ell\right\}^{L+1}_{\ell=2},p\right)$ that satisfies Assumptions \ref{ass:5} and \ref{ass:6} and  suppose there is an ideal discrete Res-Net $\left(\bbv, \bbalpha,\bbbeta\right)$ such that 
% \begin{enumerate}[(1)]
% \item  For $i\in[m]$, denote $\bbTheta_i=\left(\bbv_{1,i}, \bbalpha_{2,i}, \dots, \bbalpha_{L,i} \right)$.  $\left\{\bbTheta_i\right\}_{i=1}^m$ i.i.d. follow the distribution $p$.
% \item   For  $\ell\in[L]$ and $i\in[m]$,   $\bbbeta_{\ell,i} =  \frac{1}{d}\left(\bX \bbv_{1,i}\right)  +\sum_{\ell_1=2}^{\ell}\th_2(\balpha_{\ell_1,i}) $.
%     \item  $\linev_{\ell,i,j} = v_{\ell}\left(\bbTheta_i, \bbTheta_j\right)$ for  $\ell\in[2:L]$, $i\in[m]$, and $j\in[m]$.
%         \item  $\linev_{L+1,i,1} = v_{L+1}\left(\bbTheta_i\right)$ where $i\in[m]$.
% \end{enumerate}
\end{definition}
% We assume that the actual discrete Res-Net $\big(\hbv, \hbalpha,\hbbeta\big)$  satisfies the following.
% \begin{enumerate}[(1)]
%     \item  $\| \bbv_{1,i} -  \hbv_{1,i}  \|_{\infty} \leq  \left( 1+  \left\| \bbTheta_{i}\right\|_{\infty}\right)~\ep_1$ for  $i\in[m]$.
%     \item    $\left|\bar{v}_{\ell+1, i,j}- \hv_{\ell+1,i, j}\right| \leq \left(1+ \left\|\bbTheta_{i}\right\|_{\infty}+ \left\|\bbTheta_{j}\right\|_{\infty} \right)~\ep_1$ for  $\ell\in[L-1]$, $i\in[m]$, and $j\in[m]$.
%     \item $\left|\bar{v}_{L+1, i,1}- \hv_{L+1,i, 1}\right|\leq  \left(1+ \left\|\bbTheta_{i}\right\|_{\infty} \right)~\ep_1$ for  $i\in [m]$.
%     \item $\hbalpha$ and $\hbbeta$  satisfy the forward-propagation constrains  \eqref{dfor 00} -- \eqref{dfor 6}. 
% \end{enumerate}

% \end{mdframed}

We compare the discrete and ideal processes:
\begin{itemize}
    \item Actual process $(\hbv^{[0:K]}, \hbalpha^{[0:K]},\hbbeta^{[0:K]})$ by executing Algorithm \ref{algo:RGD} in $K=\frac{T}{\eta}$ steps on the discrete Res-Net from $(\hbv, \hbalpha,\hbbeta)$;
    \item Ideal process $\left(\bbv^{[0,T]}, \bbalpha^{[0,T]},\bbbeta^{[0,T]}\right)$that evolves as neural feature flow:
    \begin{align}
    &~~~~~~~   \bbbeta_{\ell,i}^t\!\!\!\!\! \!\!\!\!\!\!&&=&&\!\!\!\!\!\!\!\!\! \Phi_{*,\ell}^{\bbeta}\left(\bbTheta_{i}\right)(t), \quad\ell\in[L],~ i\in [m],  ~t\in[0,T], \notag\\
    &~~~~~~~   \bbalpha_{\ell,i}^t\!\!\!\!\! \!\!\!\!\!\!&&=&&\!\!\!\!\!\!\!\!\! \Phi_{*,\ell}^{\balpha}\left(\bbTheta_{i}\right)(t), \quad \ell\in[2:L],~ i\in [m],  ~t\in[0,T], \notag\\   
    &~~~~~~~   \bbv_{1,i}^t\!\!\!\!\! \!\!\!\!\!\!&&=&&\!\!\!\!\!\!\!\!\! \Phi_{*,1}^{\bv}\left(\bbTheta_{i}\right)(t), \quad i\in [m],  ~t\in[0,T], \notag\\  
    &~~~~~~~   \bar{v}_{\ell,i,j}^t\!\!\!\!\! \!\!\!\!\!\!&&=&&\!\!\!\!\!\!\!\!\! \Phi_{*,\ell}^{\bv}\left(\bbTheta_{i} ,\bbTheta_{j}\right)(t), \quad \ell\in[2:L],~ i\in [m],i\in[m],  ~t\in[0,T], \notag\\  
    &~~~~~~~   \bar{v}_{L+1,i,1}^t\!\!\!\!\! \!\!\!\!\!\!&&=&&\!\!\!\!\!\!\!\!\! \Phi_{*,L+1}^{\bv}\left(\bbTheta_{i}\right)(t), \quad i\in [m],  ~t\in[0,T].  \notag
    \end{align}
\end{itemize}
We also compare the discrete and the continuous losses denoted by $\hat{\bL}^{k}_R:= \frac{1}{n}\sum_{n=1}^N\phi(\hbbeta_{L+1,1}^{k}(n), y^n )$ and $\bL^{t}_R:=\frac{1}{N}\sum_{n=1}^N\phi\left(\bbeta_{L+1}(\Phi_*,t)(n), y^n \right)$, respectively.

% \begin{enumerate}[(A)]
%     \item Actual process \label{realres} $\left(\hbv^{[0:K]}, \hbalpha^{[0:K]},\hbbeta^{[0:K]}\right)$:  Execute  Algorithm \ref{algo:RGD} in $K$ steps on the discrete Res-Net initialized by $\big(\hbv, \hbalpha,\hbbeta\big)$, where $K =\frac{T}{\eta}$. 
%     \item Ideal process \label{idealres} $\left(\bbv^{[0,T]}, \bbalpha^{[0,T]},\bbbeta^{[0,T]}\right)$: Let  $\left(\bbv^{[0,T]}, \bbalpha^{[0,T]},\bbbeta^{[0,T]}\right)$ evolves as neural feature flow. Namely, we define 
%     \begin{align}
%         &~~~~~~~   \bbbeta_{\ell,i}^t\!\!\!\!\! \!\!\!\!\!\!&&=&&\!\!\!\!\!\!\!\!\! \Phi_{*,\ell}^{\bbeta}\left(\bbTheta_{i}\right)(t), \quad\ell\in[L],~ i\in [m],  ~t\in[0,T], \notag\\
%           &~~~~~~~   \bbalpha_{\ell,i}^t\!\!\!\!\! \!\!\!\!\!\!&&=&&\!\!\!\!\!\!\!\!\! \Phi_{*,\ell}^{\balpha}\left(\bbTheta_{i}\right)(t), \quad \ell\in[2:L],~ i\in [m],  ~t\in[0,T], \notag\\   
%         &~~~~~~~   \bbv_{1,i}^t\!\!\!\!\! \!\!\!\!\!\!&&=&&\!\!\!\!\!\!\!\!\! \Phi_{*,1}^{\bv}\left(\bbTheta_{i}\right)(t), \quad i\in [m],  ~t\in[0,T], \notag\\  
%     &~~~~~~~   \bar{v}_{\ell,i,j}^t\!\!\!\!\! \!\!\!\!\!\!&&=&&\!\!\!\!\!\!\!\!\! \Phi_{*,\ell}^{\bv}\left(\bbTheta_{i} ,\bbTheta_{j}\right)(t), \quad \ell\in[2:L],~ i\in [m],i\in[m],  ~t\in[0,T], \notag\\  
%     &~~~~~~~   \bar{v}_{L+1,i,1}^t\!\!\!\!\! \!\!\!\!\!\!&&=&&\!\!\!\!\!\!\!\!\! \Phi_{*,L+1}^{\bv}\left(\bbTheta_{i}\right)(t), \quad i\in [m],  ~t\in[0,T].  \notag
%     \end{align}
% \end{enumerate}
% We have the following  theorem which bounds the approximation errors using the finite neurons.  
\begin{theorem}\label{theorm:appres}
Under Assumption \ref{ass:5}, suppose $\ep_1\leq \cO(1)$ and $m\geq\tilde{\Omega}(\ep_1^{-2})$, and teat the  parameters in assumptions and $T$ as constants.
Consider  the  actual  process  from an $\ep_1$-independent initialization in Definition \ref{Res-Net condition} with step size $\eta \leq \tO(\ep_1) $. 
Then, the following holds with probability $1-\delta$:
%Under Assumptions \ref{ass:5} and \ref{ass:6}, 
% Assume $\ep_1\leq \cO(1)$ and $\delta\leq1$. Suppose there is an actual discrete Res-Net that satisfies  initial condition in Definition \ref{Res-Net condition}. Let the actual and ideal processes defined  in (\ref{realres}) and (\ref{idealres}), respectively. Teat   the  parameters in Assumptions \ref{ass:5} and \ref{ass:6}  and $T$ as constants.
% Then by setting  $m\geq\tilde{\Omega}(\ep^{-2})$ and the step size $\eta \leq \tO(\ep) $ with probability $1-\delta$, we have
\begin{itemize}
    \item The two processes are close to each other:
$$    \sup_{k\in[0:K]}\bigg\{~\sup_{ i\in [m]}\left\| \hbv^k_{1,i} -\linebv^{k\eta}_{1,i} \right\|_{\infty},~\sup_{ \ell\in[2:L], ~i, j\in[m]}\left| \hv^k_{\ell,i,j} -\linev^{k\eta}_{\ell,i,j} \right|   \bigg\}\leq \tO(\ep_1),$$
$$   \sup_{k\in[0:K],~i\in[m]}\bigg\{\left| \hv^k_{L+1,i,1} -\bar{v}^{k\eta}_{L+1,i,1} \right|  ,~\sup_{\ell\in[2:L]}\left\| \hbalpha^k_{\ell,i} -\bbalpha^{k\eta}_{\ell,i} \right\|_{\infty},
~\sup_{\ell\in[L]}\left\| \hbbeta^k_{\ell,i} -\bbbeta^{k\eta}_{\ell,i} \right\|_{\infty}\bigg\}\leq \tO(\ep_1).$$

\item  The training losses are also close to each other:
% Moreover,  let $\hat{\bL}^{k}_R:= \frac{1}{n}\sum_{n=1}^N\phi(\hbbeta_{L+1,1}^{k}(n), y^n )$ be the loss of running scaled Gradient Descent Algorithm \ref{algo:RGD} at $k$-th step and $\bL^{t}_R:=\frac{1}{N}\sum_{n=1}^n\phi\left(\bbeta_{L+1}(\Phi_*,t)(n), y^n \right)$ be the loss of neural feature flow at time $t$, we have
    $$ \sup_{k\in[0:K]}\left|\hat{\bL}^k_R -  \bL^{k\eta}_R\ \right|  \leq \cO(\ep_1). $$
\end{itemize}
\end{theorem}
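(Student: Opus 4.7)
The plan is to mirror the structure of Theorem \ref{theorm:app} for fully-connected DNNs, adapting the coupling-of-trajectories argument to the skip-connected-path representation $\bTheta_i=(\bbv_{1,i},\bbalpha_{2,i},\dots,\bbalpha_{L,i})\iiddistr p$. The key point is that, thanks to the continuity result in Theorem \ref{theorm:phi}, the ideal trajectories $\Phi_*^{\bv}, \Phi_*^{\balpha}, \Phi_*^{\bbeta}$ are locally Lipschitz in $\bTheta$, so random samples drawn from $p$ yield an ideal discrete system $(\bbv^{[0,T]},\bbalpha^{[0,T]},\bbbeta^{[0,T]})$ in which distinct indices behave almost independently; this is exactly the ``propagation of chaos'' structure needed to run a concentration argument.

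First, I would set up three processes and compare them successively:
\begin{enumerate}
\item[(A)] the actual process $(\hbv^{[0:K]},\hbalpha^{[0:K]},\hbbeta^{[0:K]})$ produced by Algorithm \ref{algo:RGD};
\item[(B)] an intermediate ``Euler'' process where at each step one uses the empirical $\frac{1}{m}\sum_{i=1}^m$ averages over $\bbTheta_i$ but evaluates the gradients at the ideal continuous weights, features, and residuals $(\bbv^{t},\bbalpha^t,\bbbeta^t)$ obtained by evaluating $\Phi_*$ along $\bbTheta_i$;
\item[(C)] the ideal process $(\bbv^{[0,T]},\bbalpha^{[0,T]},\bbbeta^{[0,T]})$.
\end{enumerate}
The gap (B)$\leftrightarrow$(C) is a pure concentration error: for each forward and backward quantity, the empirical mean over $m$ sub-gaussian (by Assumption \ref{ass:6} and Theorem \ref{theorm:phi}) samples is compared to the integral against $p$; sub-gaussian Hoeffding combined with a union bound over the $\tO(1)$ layers and $K=\tO(\ep_1^{-1})$ time steps yields a uniform $\tO(\sqrt{\log(m\delta^{-1})/m})=\tO(\ep_1)$ bound whenever $m\geq \tilde{\Omega}(\ep_1^{-2})$. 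The gap (A)$\leftrightarrow$(B) is the sum of the $\ep_1$-closeness of the initializations (Definition \ref{Res-Net condition}), the Euler discretization error $\cO(\eta)$ per step (coming from replacing a continuous ODE on $[k\eta,(k+1)\eta]$ by a single gradient step), and the feedback from the perturbations in the weights/features/residuals into subsequent forward and backward passes. Here the Lipschitz regularity given by Assumptions \ref{ass:5}, \ref{ass:6}, and the bounds of Theorem \ref{theorm:phi} are used to show that a uniform $\tO(\ep_1)$ bound on $(\hbv^k-\linebv^{k\eta},\hbv^k-\linebv^{k\eta},\hbbeta^k-\bbbeta^{k\eta},\hbalpha^k-\bbalpha^{k\eta})$ propagates through both the forward pass (\eqref{dfor 00}--\eqref{dfor 6}) and the backward pass to give only a $(1+\cO(\eta))$ expansion of the gradient error.

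Combining these two pieces, a discrete Gr\"onwall inequality on $k\in[0:K]$ converts the per-step expansion into the uniform bound $\sup_{k\in[0:K]}\|\cdot\|_\infty\leq \tO(\ep_1)e^{\cO(T)}=\tO(\ep_1)$, which gives the first conclusion. The loss bound then follows from the Lipschitzness of $\phi$ in its first argument (Assumption \ref{ass:5}) together with the closeness of $\hbbeta^k_{L+1,1}$ to $\bbbeta^{k\eta}_{L+1,1}=\btheta_{L+1}(\Phi_*,k\eta)$ shown in the first part.

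The main obstacle is controlling the backward-pass error propagation in the Res-Net. Unlike the DNN case, $\ulineDel^{\bbeta}_{\ell-1}$ is defined recursively in terms of $\ulineDel^{\bbeta}_{\ell}$ with an additive coupling through the skip connection (see the recursion after \eqref{def linedel res}), so errors in $\Del^{\bbeta,k}_{\ell,i}$ feed directly into $\Del^{\bbeta,k}_{\ell-1,i}$ without passing through an activation-induced contraction. The saving grace is that the boundedness of $h_2$ keeps $\|\hbbeta^k_{\ell,i}-\hbbeta^k_{\ell-1,i}\|_\infty\leq L_1$, so both the scaling factor $\th'_1$ in the recursion and the empirical covariance entering the forward pass remain uniformly bounded in $\ell$; combined with the scaling $\frac{1}{m}$ in \eqref{dfor 4} and the step size $\eta m_{\ell-1}m_\ell$ in Algorithm \ref{algo:RGD}, this keeps all time-derivatives of the trajectory $\cO(1)$ (analogously to Lemma \ref{lmm:Psi-property} for DNNs), so the accumulated error grows only linearly rather than exponentially in depth, matching the $\tO(\ep_1)$ budget. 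A careful bookkeeping of these constants — treating $L,d,N$ and the sub-gaussian and Lipschitz parameters as constants, as the theorem statement permits — closes the argument.
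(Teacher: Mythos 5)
Your proposal is correct and tracks the paper's own argument closely: the paper also conditions on high-probability concentration events analogous to your (B)$\leftrightarrow$(C) gap (forward/backward empirical-vs-ideal approximations, max and moment bounds for the i.i.d.\ $\bbTheta_i$), then runs an induction in $k$ combining the per-step forward/backward propagation lemma with time-Lipschitz continuity of the ideal gradients and a discrete Gr\"onwall step $1+C\eta \leq e^{C\eta}$, exactly as you describe. Two minor imprecisions worth noting: the local Lipschitz continuity of $\Phi_*$ in $\bTheta$ (Theorem~\ref{theorm:phi}) is not actually what makes the ideal-process samples independent (they are exactly independent since $\Phi_*$ is deterministic and the $\bbTheta_i$ are i.i.d.\ --- what is used is the boundedness/time-Lipschitz properties from Lemma~\ref{lmm:Phi-property}); and the additive skip-connection recursion for $\ulineDel^{\bbeta}_{\ell-1}$ still multiplies the per-layer error by a factor $(1+C')$, so the dependence on depth is exponential rather than linear, though this is harmless here because $L$ is treated as a constant.
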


One can directly verify that Algorithm \ref{algo:init:sim} produces a discrete Res-Net satisfying the initial condition in Definition \ref{Res-Net condition}.  
Next we show that Algorithm \ref{algo:init:resnet} also produces an $\ep_1$-independent initialization. 
\begin{theorem}\label{ini lemma2}
Under Assumptions \ref{ass:5} and \ref{ass:8}, treat the parameters in assumptions as constants. 
With probability  at least $1-\delta$,     Algorithm \ref{algo:init:resnet} produces an $\ep_1$-independent initial discrete Res-Net with $\ep_1\le \tO(\frac{1}{\sqrt{m}})$. %that satisfies Assumptions \ref{ass:6} and \ref{ass:7} .
% Under Assumption \ref{ass:5},  assume that $h_1$ can achieve Assumption \ref{ass:8}.  Then there is a continuous Res-Net $\left(\left\{v_\ell\right\}^{L+1}_{\ell=2}, p\right)$ that satisfies Assumptions \ref{ass:6} and \ref{ass:7} and   an ideal discrete DNN that satisfies initial condition in Definition \ref{Res-Net condition}. Moreover,    treat the problem-dependent parameters, i.e.,  $L_1$, $L_2$, $L_3$, $N$, $d$, and $\bar{\lambda}$ as  constants. For $\ep_1\leq \tO(1)$ and $\delta\leq 1$, with probability  at least $1-\delta$,     Algorithm \ref{algo:init:resnet} produces a  discrete Res-Net that satisfies the initial condition in Definition  \ref{Res-Net condition} when $m\geq \tilde{\Omega}(\ep^{-2})$.
\end{theorem}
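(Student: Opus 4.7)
The plan is to exhibit the continuous Res-Net $(\{v_\ell\}_{\ell=2}^{L+1}, p)$ promised by the theorem as the mean-field limit described immediately after Algorithm \ref{algo:init:resnet}: take $p$ to be of the product form \eqref{ppp} with $p_1^\bv = \mathcal{N}(\mathbf{0}^d, d\sigma_1^2 \mathbf{I}^d)$ and $p_\ell^\balpha = \mathcal{N}(\mathbf{0}^N, \sigma_1^2 \bK_{\ell-1}^\bbeta)$ constructed recursively as in \eqref{palpha}, and take the minimum-norm solution of \eqref{forres},
$$v_\ell(\bTheta, \bbTheta) := h_1(\bbeta_{\ell-1}(\bTheta))^\top (\bK_{\ell-1}^\bbeta)^{-1} \balpha_\ell(\bbTheta), \qquad v_{L+1}(\bTheta) := C_5,$$
which by construction satisfies the fixed-point constraint. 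To verify Assumption \ref{ass:6}, I would first establish $\lambda_{\min}(\bK_{\ell-1}^\bbeta) \geq \blambda$ for every $\ell$ by conditioning on $(\balpha_2,\dots,\balpha_{\ell-1})$ and applying Assumption \ref{ass:8} with the bounded offset $f_2(\bv_1) := \sum_{i=2}^{\ell-1} h_2(\balpha_i)$ (which has $\|f_2\|_\infty \leq L L_1$ uniformly in the $\balpha_i$), then marginalizing via the semidefinite ordering. The boundedness $|h_1| \leq L_1$, the invertibility bound $\|(\bK_{\ell-1}^\bbeta)^{-1}\|_2 \leq 1/\blambda$, and the linear dependence of $\balpha_\ell(\bbTheta)$ on its argument together yield the sublinear growth and local Lipschitz bounds required of $v_\ell$; sub-Gaussianity of $p$ is immediate from its Gaussian marginals.

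Next, I would construct the ideal discrete Res-Net $(\bbv, \bbalpha, \bbbeta)$ by an inductive coupling. Set $\bbv_{1,i} = \hbv_{1,i}$ exactly, so the $\ep_1$-closeness holds trivially at layer~$1$. For layers $\ell \geq 2$, conditional on $\{\hbbeta_{\ell-1,i'}\}_{i'}$, the actual residual $\hbalpha_{\ell,i} = \frac{1}{m}\sum_{i'} \tilde{v}_{\ell,i',i}\, h_1(\hbbeta_{\ell-1,i'})$ is Gaussian with covariance $\sigma_1^2 \mathbf{G}_{\ell-1}^\bbeta$, where $\mathbf{G}_{\ell-1}^\bbeta := \frac{1}{m}\sum_{i'} h_1(\hbbeta_{\ell-1,i'}) h_1(\hbbeta_{\ell-1,i'})^\top$. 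I couple $\bbalpha_{\ell,i} := (\bK_{\ell-1}^\bbeta)^{1/2} (\mathbf{G}_{\ell-1}^\bbeta)^{-1/2}\, \hbalpha_{\ell,i}$ whenever $\mathbf{G}_{\ell-1}^\bbeta$ is invertible (with an independent Gaussian supplement on a rank-deficiency event whose probability vanishes). By construction $\bbalpha_{\ell,i} \iiddistr p_\ell^\balpha$, independent across $i$ and across $\ell$; I then set $\bbbeta_{\ell,i} := \bbbeta_{\ell-1,i} + h_2(\bbalpha_{\ell,i})$ and $\linev_{\ell,i,j} := v_\ell(\bbTheta_i, \bbTheta_j)$, $\linev_{L+1,i,1} := C_5$.

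The inductive $\ep_1$-closeness proceeds from three layer-wise facts, each holding with probability $1 - \delta/L$: (a) matrix concentration $\|\mathbf{G}_{\ell-1}^\bbeta - \bK_{\ell-1}^\bbeta\|_2 = \tO(1/\sqrt{m})$ by a matrix Bernstein inequality for the bounded rank-one summands, combined with the inductive bound $\|\hbbeta_{\ell-1,i} - \bbbeta_{\ell-1,i}\|_\infty \leq \tO(1/\sqrt{m})$ transferred through the Lipschitzness of $h_1$; (b) the lower bound $\lambda_{\min}(\bK_{\ell-1}^\bbeta) \geq \blambda$ established above, which combined with (a) upgrades to $\lambda_{\min}(\mathbf{G}_{\ell-1}^\bbeta) \geq \blambda/2$ for $m$ large; and (c) matrix square-root Lipschitzness on $\{M \succeq (\blambda/2)\mathbf{I}\}$, giving $\|(\mathbf{G}_{\ell-1}^\bbeta)^{1/2} - (\bK_{\ell-1}^\bbeta)^{1/2}\|_2 = \tO(1/\sqrt{m})$. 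Combined with the sub-Gaussian tail bound $\|\hbalpha_{\ell,i}\|_\infty = \tO(1)$, these yield $\|\hbalpha_{\ell,i} - \bbalpha_{\ell,i}\|_\infty = \tO(1/\sqrt{m})$, which propagates to $\bbbeta$ via $h_2$. For the weights, the explicit $\ell_2$-regression solution is $\hv_{\ell,i,j} = h_1(\hbbeta_{\ell-1,i})^\top (\mathbf{G}_{\ell-1}^\bbeta)^{-1} \hbalpha_{\ell,j}$ while $\linev_{\ell,i,j} = h_1(\bbbeta_{\ell-1,i})^\top (\bK_{\ell-1}^\bbeta)^{-1} \bbalpha_{\ell,j}$; the difference telescopes into three $\tO(1/\sqrt{m})$ contributions by the estimates above, and a union bound over the $\tO(Lm^2)$ events closes the proof.

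The main obstacle is maintaining the uniform lower bound $\lambda_{\min}(\mathbf{G}_\ell^\bbeta) \geq \blambda/2$ through all $L$ layers simultaneously: each layer's concentration depends on the previous layer's closeness, which in turn depends on the previous layer's invertibility, so the induction must carefully propagate high-probability events without accumulating catastrophic constants. Assumption \ref{ass:8} is essential precisely because it supplies this invertibility uniformly over all bounded offsets $f_2$ that arise from fusing residuals via $h_2$; without a uniform-in-$C_B$ lower bound, the continuous-limit Gram matrices could degenerate at deep layers and the Gaussian copula would become ill-defined.
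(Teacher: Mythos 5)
Your proposal is correct and follows essentially the same route as the paper's proof: you construct the same continuous Res-Net with $p$ in product form \eqref{ppp} and $v_\ell$ as the minimum-norm solution involving $(\bK_{\ell-1}^\bbeta)^{-1}$, verify Assumption~\ref{ass:6} via the same conditional-expectation argument using Assumption~\ref{ass:8} with the bounded offset $\sum_i h_2(\balpha_i)$ (the paper's Lemma~\ref{maxgreat}), build the ideal discrete net by the same Gaussian coupling $\bbalpha_{\ell+1,j}=(\bK_\ell^\bbeta)^{1/2}(\hbK_\ell^\bbeta)^{-1/2}\hbalpha_{\ell+1,j}$, and close with the same Gram-matrix concentration plus matrix square-root perturbation bound and the explicit $\ell_2$-regression formula for $\hv_{\ell+1,i,j}$ (the paper's Lemma~\ref{lmm:hK-K res} and the triangle-inequality telescoping that follows it). The only cosmetic difference is that you invoke a matrix Bernstein inequality for the operator-norm concentration of the empirical Gram matrix, whereas the paper applies entrywise Hoeffding and passes to the operator norm via $\|\cdot\|_2\le N\|\cdot\|_\infty$.
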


\subsection{Finding Global Minimal Solution}
We study the converge of neural feature flow. In fact, using the same technique as Subsection \ref{sec:convex}, we can also transform the learning problem in \eqref{problem resnet} to a convex optimization.
\begin{theorem}\label{theo:convex res}
 Suppose  $R_R$ can be written in form of
 $$ R_R\left( \left\{v_{\ell}\right\}_{\ell=2}^{L+1}, p\right)  :=   \sum_{\ell=2}^{L+1}  \lambda^{v}_{\ell} {R}_{R,\ell}^{v}( v_{\ell}, p)+ \lambda^{p} R^{p}_R\left(p\right), $$
 where $\{ \lambda^{v}_{\ell}\}_{\ell=2}^{L+1}$  and $\lambda^{p}$   are non-negative and
 \begin{equation}\label{rv}
     R_{R,\ell}^{v} :=
 \begin{cases}
 \int\left[\int  \left|v_{\ell}(\bTheta, \bbTheta)\right| d p(\bbTheta) \right]^r d p(\bTheta),&\quad \ell\in[2:L], \\
 \int\left[v_{L+1}(\bTheta)\right]^r d p(\bTheta),&\quad \ell=L+1, \\
 \end{cases}
 \end{equation}
 $r\geq1$ and $R^p_R$ is convex on $p$.
If  $p$ is equivalent to the Lebesgue measure and $\phi$ is convex in the first argument,  then \eqref{problem resnet} is convex under suitable changes of variables. 
\end{theorem}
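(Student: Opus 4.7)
The plan is to mirror the change-of-variables argument used in Theorem~\ref{theo:con}. Since $p$ is equivalent to the Lebesgue measure on $\RR^D$, write $\dot{p}$ for its density and introduce the reparameterization
$$
\tilde{v}_\ell(\bTheta, \bbTheta) := v_\ell(\bTheta, \bbTheta)\,\dot{p}(\bTheta)\,\dot{p}(\bbTheta), \quad \ell\in[2:L],\qquad \tilde{v}_{L+1}(\bTheta) := v_{L+1}(\bTheta)\,\dot{p}(\bTheta).
$$
Under this substitution the residual constraint in \eqref{forres} becomes
$$
\balpha_\ell\,\dot{p}(\bTheta) \;=\; \int \tilde{v}_\ell(\bTheta, \bbTheta)\,\th_1(\bbeta_{\ell-1}(\bbTheta))\,d\bbTheta, \qquad \bTheta\in\RR^D,
$$
and the prediction becomes $\bbeta_{L+1} = \int \tilde{v}_{L+1}(\bTheta)\,\th_1(\bbeta_L(\bTheta))\,d\bTheta$. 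Note that $\bbeta_\ell(\bTheta)=\frac{1}{d}\bX\bv_1+\sum_{i=2}^\ell \th_2(\balpha_i)$ depends only on $\bTheta$ as a free variable in $\RR^D$, so the above identities are \emph{affine} in the new primitives $(\tilde{v}_2,\ldots,\tilde{v}_{L+1},\dot{p})$, with $\balpha_\ell$ on the left-hand side treated merely as the $\ell$-th block of the integration variable $\bTheta$.

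Next I will check convexity of each piece of the reformulated objective. The data-fitting term $\phi(\bbeta_{L+1}(n),y^n)$ is a convex function composed with a linear map of $\tilde{v}_{L+1}$, hence convex. For the regularizers \eqref{rv}, the change of variables gives
$$
R_{R,\ell}^v \;=\; \int \frac{\left(\int |\tilde{v}_\ell(\bTheta,\bbTheta)|\,d\bbTheta\right)^r}{\dot{p}(\bTheta)^{r-1}}\,d\bTheta,\qquad R_{R,L+1}^v \;=\; \int \frac{|\tilde{v}_{L+1}(\bTheta)|^r}{\dot{p}(\bTheta)^{r-1}}\,d\bTheta.
$$
The integrand $(x,t)\mapsto x^r/t^{r-1}$ on $\{x\ge 0, t>0\}$ is the perspective of the convex function $x\mapsto x^r$ ($r\ge 1$), so it is jointly convex. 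Since $\tilde{v}_\ell\mapsto \int|\tilde{v}_\ell|\,d\bbTheta$ is convex and nonnegative and $x\mapsto x^r$ is convex nondecreasing on $[0,\infty)$, the composition preserves convexity; combining with the perspective construction shows that $R_{R,\ell}^v$ is jointly convex in $(\tilde{v}_\ell,\dot{p})$. By assumption $R^p_R$ is convex in $p$, and the nonnegative linear combination inherits convexity.

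Finally I will assemble the pieces: the feasible set is cut out by affine equality constraints in $(\{\tilde{v}_\ell\}, \dot{p})$, the objective is a sum of a convex loss and convex regularizers, and $\dot{p}\ge 0$, $\int \dot{p}=1$ define a convex set. Therefore Problem \eqref{problem resnet} becomes a joint convex optimization over $(\{\tilde{v}_\ell\}_{\ell=2}^{L+1},p)$ whenever $p$ is equivalent to Lebesgue measure, proving the claim. The one subtlety I expect to need care with is the constraint $\balpha_\ell\,\dot{p}(\bTheta)=\int \tilde{v}_\ell(\bTheta,\bbTheta)\th_1(\bbeta_{\ell-1}(\bbTheta))d\bbTheta$: the factor $\balpha_\ell$ on the left multiplies $\dot{p}(\bTheta)$, but since $\balpha_\ell$ is a coordinate of the \emph{fixed} argument $\bTheta$ (not an unknown), the expression is genuinely linear in $\dot{p}$, and no nonconvex coupling arises. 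Aside from this bookkeeping, the argument is a direct analogue of Theorem~\ref{theo:con}.
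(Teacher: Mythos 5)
Your proposal is correct and follows essentially the same route as the paper's own proof: you perform the identical change of variables $\ttv_\ell = v_\ell\,\ddp\,\ddp$, observe that the constraints become affine in $(\{\ttv_\ell\},\ddp)$, and recognize that the reformulated regularizer $\int (\int|\ttv_\ell|\,d\bbTheta)^r/\ddp(\bTheta)^{r-1}\,d\bTheta$ is jointly convex. The paper establishes this last point by reducing to Theorem~\ref{theo:con}, whose key ingredient (Lemma~\ref{property of convex}: $x^a/y^b$ is convex when $a-1\ge b\ge 0$, applied with $a=r$, $b=r-1$) is exactly the perspective-function fact you invoke; naming it as a perspective is a cleaner way to say the same thing, not a different argument. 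Your remark that $\balpha_\ell$ on the left of the constraint is a coordinate of the integration variable $\bTheta$ rather than an unknown, so the constraint is genuinely linear in $\ddp$, is a correct and worthwhile clarification of a point the paper passes over silently.
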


However,  we consider a relatively simple case to achieve a global minimal solution here. We assume that  $h_1$ satisfies the strong universal approximation property in Assumption \ref{ass:8}. We show in Theorem \ref{global conver} that the neural feature flow always finds a globally optimal solution when it converges.

\begin{theorem}\label{global conver}
Under Assumptions   \ref{ass:5} -- \ref{ass:8},  assume that the loss function $\phi$ is convex in the first argument.  
%Let , $\Phi_*$, and $\bL^t$ be from Theorem \ref{theorm:appres}.
Let $\Phi_*$ and $\bL^t_R$ be the solution and loss of the neural feature flow in Theorem \ref{theorm:appres}, respectively.
If $\Phi^{\bbeta}_{*, L}(\bTheta)(t)$ converges in $\ell_{\infty}(p)$ and  $\Phi_{*, L+1}^{\bv}(\bTheta)(t)$  converges in $\ell_1(p)$ as $t\to\infty$, where $\bTheta\sim p$,
\iffalse
\begin{enumerate}[(A)]
    \item\label{theo3：A}  there is a function $\Phi_{\infty, L}^{\bbeta}: \supp(p) \to \RR^N$, such that $\Phi^{\bbeta}_{*, L}(\cdot)(t)$ converges to  $\Phi_{\infty, L}^{\bbeta}$ in $\ell_{\infty}$, i.e., $\lim_{t\to\infty}\mathrm{ess}\text{-}\mathrm{sup}_{\bTheta\sim p}\|\Phi^{\bbeta}_{*, L}(\bTheta)(t) -\Phi^{\bbeta}_{\infty, L}(\bTheta)\|_\infty=0$.   
    \item\label{theo3:B} there is a function $\Phi_{\infty, L+1}^{\bv}: \supp(p) \to \RR$, such that  $\Phi^{\bv}_{*, L+1}(\cdot)(t)$ converges to  $\Phi_{\infty, L+1}^{\bv}$ in $\ell_1$, i.e., $\lim_{t\to\infty} \int  \left|\Phi^{\bv}_{*, L+1}(\bTheta)(t) - \Phi^{\bv}_{\infty, L+1}(\bTheta)  \right| d p(\bTheta)  = 0  $.
\end{enumerate}
\fi
% then for $L^{t}$ defined Theorem \ref{theorm:appres},  we have
then we have
$$ \lim_{t\to\infty} \bL^{t}_R  = \sum_{n=1}^N\left[\min_{y'}\phi \left(y', y^{n}\right) \right]. $$
\end{theorem}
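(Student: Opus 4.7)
The plan is to reduce the claim to showing that the limiting output-layer gradient $\ulineDel_{L+1}^{\infty}:=\lim_{t\to\infty}\ulineDel_{L+1}(\Phi_*,t)$ vanishes. Since $\phi$ is convex in its first argument and $C^1$-smooth (Assumption~\ref{ass:5}), $\phi'_1(\bbeta_{L+1}^{\infty}(n),y^n)=0$ is equivalent to $\bbeta_{L+1}^{\infty}(n)\in\argmin_{y'}\phi(y',y^n)$; together with the well-defined limit $\bbeta_{L+1}^{\infty}:=\lim_t\bbeta_{L+1}(\Phi_*,t)$, furnished via \eqref{betal+1} from the $\ell_{\infty}(p)$-convergence of $\Phi_{*,L}^{\bbeta}$, the $\ell_1(p)$-convergence of $\Phi_{*,L+1}^{\bv}$, and the boundedness of $\th_1$, this yields $\lim_t\bL_R^t=\sum_{n}\min_{y'}\phi(y',y^n)$.

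To prove $\ulineDel_{L+1}^{\infty}=0$, I would exploit the flow dynamics $\frac{d}{dt}\Phi^{\bv}_{*,L+1}(\bTheta)(t) = -\ulineGrad^{\bv}_{L+1}(\bTheta;\Phi_*,t) = -\frac{1}{N}\ulineDel_{L+1}(\Phi_*,t)^{\top}\th_1(\Phi^{\bbeta}_{*,L}(\bTheta)(t))$. For any fixed $T>0$, the identity $\Phi^{\bv}_{*,L+1}(\bTheta)(t+T)-\Phi^{\bv}_{*,L+1}(\bTheta)(t) = -\int_{t}^{t+T}\ulineGrad^{\bv}_{L+1}(\bTheta;\Phi_*,s)\,ds$ has its left-hand side vanishing in $\ell_1(p)$ as $t\to\infty$, by the $\ell_1$-Cauchy condition on $\Phi_{*,L+1}^{\bv}$. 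Using Lipschitz continuity of $\phi'_1$, boundedness and continuity of $\th_1$, and convergence of $\Phi_{*,L}^{\bbeta}$ and $\bbeta_{L+1}$, the integrand converges in $\ell_1(p)$, uniformly in $s\in[t,t+T]$, to $\frac{1}{N}(\ulineDel_{L+1}^{\infty})^{\top}\th_1(\Phi^{\bbeta}_{\infty,L}(\bTheta))$, giving $(\ulineDel_{L+1}^{\infty})^{\top}\th_1(\Phi^{\bbeta}_{\infty,L}(\bTheta))=0$ for $p$-a.e.~$\bTheta$.

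The closing step applies Assumptions~\ref{ass:7} and~\ref{ass:8}. The $\bTheta$-Lipschitz bounds in Theorem~\ref{theorm:phi} propagate continuity to the limit $\Phi^{\bbeta}_{\infty,L}$ on $\supp(p)$, so the preceding identity holds pointwise on $\supp(p)$, in particular on the graph $\{(\bv_1,f_1(\bv_1)):\bv_1\in\RR^d\}$ provided by Assumption~\ref{ass:7}. Writing $\Phi^{\bbeta}_{\infty,L}((\bv_1,f_1(\bv_1)))=\frac{1}{d}\bX\bv_1+f_2(\bv_1)$, the residual $f_2$ is pointwise bounded on $\RR^d$ because the initial layer-$L$ features differ from $\frac{1}{d}\bX\bv_1$ only by a bounded sum of $h_2$-residuals (Assumption~\ref{ass:5}) and the cumulative drift is controlled by the $\ell_{\infty}(p)$-convergence, with continuity allowing the estimate to pass from $p$-a.e.~to $\supp(p)$. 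Multiplying the vanishing identity by $\th_1(\frac{1}{d}\bX\bv_1+f_2(\bv_1))^{\top}$ and integrating against $\tp_1$, Assumption~\ref{ass:8} delivers the spectral lower bound $\succeq\blambda\mathbf{I}^N$, forcing $\ulineDel_{L+1}^{\infty}=0$.

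The main hurdle will be the transfer from the $p$-a.e.~vanishing identity to the (possibly $p$-null) graph $\{(\bv_1,f_1(\bv_1))\}$, together with verifying pointwise boundedness of $f_2$ on all of $\RR^d$. The Lipschitz estimates in Theorem~\ref{theorm:phi} blow up as $e^{R't}$, so the requisite uniform-in-$t$ continuity of the limit must be extracted jointly from the convergence assumptions via a careful limiting argument rather than from those time-dependent bounds.
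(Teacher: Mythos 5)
Your high-level reduction is the right one and matches the paper's: the claim follows once $\ulineDel_{L+1}(\Phi_*,t)\to 0$, and Assumption~\ref{ass:8} is the tool that converts a vanishing inner-product identity into $\ulineDel_{L+1}^\infty=0$. Your derivation of the $p$-a.e.\ identity $(\ulineDel_{L+1}^\infty)^\top\th_1(\Phi_{\infty,L}^{\bbeta}(\bTheta))=0$ from the $\ell_1$-Cauchy condition is also morally the same as the paper's (the paper phrases it as a contradiction with a positive-measure ball, but the underlying mechanism is identical).

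The genuine gap is in the closing step, and you have mislocated where the correct argument lives. You propose to parameterize the identity by the \emph{initial} first-layer weight, writing $\Phi^{\bbeta}_{\infty,L}\big((\bv_1,f_1(\bv_1))\big)=\tfrac{1}{d}\bX\bv_1+f_2(\bv_1)$ with $f_2$ bounded. This decomposition is false. By the Res-Net forward constraint, $\Phi^{\bbeta}_{*,L}(\bTheta)(t) = \tfrac{1}{d}\bX\,\Phi^{\bv}_{*,1}(\bTheta)(t)+\sum_{\ell=2}^{L}\th_2(\Phi^{\balpha}_{*,\ell}(\bTheta)(t))$, so after training the feature sits on the affine span of the \emph{trained} weight $\bv_1^t(\bTheta)$, not the initial $\bv_1$. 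The discrepancy $\tfrac{1}{d}\bX(\bv_1^{\infty}(\bTheta)-\bv_1)$ is what would have to be absorbed into $f_2$, and the convergence hypotheses (on $\Phi^{\bbeta}_{*,L}$ and $\Phi^{\bv}_{*,L+1}$, not on $\Phi^{\bv}_{*,1}$) give no uniform bound on this drift; only $\|\bv_1^t(\bTheta)-\bv_1\|\le Ct$, which is useless as $t\to\infty$. Similarly, you cannot pass the $p$-a.e.\ identity to the potentially $p$-null graph $\{(\bv_1,f_1(\bv_1))\}$ by continuity of the limit, because, as you note, the Lipschitz constants in Theorem~\ref{theorm:phi} scale like $e^{R't}$ and do not give continuity at $t=\infty$.

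The missing idea is to run the entire closing argument at a large but \emph{finite} time $T$, where both problems disappear. Fix $\ep$, choose $T$ so that the $\ell_\infty$ and $\ell_1$ convergence give $\ep_2$-closeness of $\bbeta_L^T$, $v_{L+1}^T$ to their limits. Then use Lemma~\ref{lmm:surjection} (a Brouwer fixed-point argument, relying on Assumption~\ref{ass:7} and the boundedness of $\ulineGrad^{\bv}_{1}$) to show that $\bTheta\mapsto\bv_1^T(\bTheta)$ restricted to the initial graph is \emph{surjective} onto $\RR^d$. For any $\bv_1\in\RR^d$, pick $\bTheta=\tilde g(\bv_1)$ with $\bv_1^T(\bTheta)=\bv_1$; then $\bbeta_L^T(\bTheta)=\tfrac{1}{d}\bX\bv_1+g_\ep(\bv_1)$ where $g_\ep(\bv_1)=\sum_{\ell=2}^{L}\th_2(\balpha_\ell^T(\bTheta))$ is genuinely uniformly bounded. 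Working at finite $T$ also makes Theorem~\ref{theorm:phi} available, so if $|\tphi_1'(\bbeta_{L+1}^\infty)^\top\th_1(\bbeta_L^T(\bTheta'))|>\ep$ for some $\bTheta'\in\supp(p)$, continuity yields a positive-$p$-measure ball on which the integrand of the $v_{L+1}$ flow is bounded below, forcing divergence of $\int|v_{L+1}^t-v_{L+1}^T|dp$ and contradicting the $\ell_1$-convergence. This is exactly what replaces your two flagged ``hurdles.'' The remaining mop-up ($\ep$-closeness of $\bbeta_{L+1}^T$ and $\bbeta_{L+1}^\infty$, then $\ep\to 0$) is straightforward.
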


Theorem \ref{global conver} is an important application of our  mean-field framework, which shows that neural feature flow can find a global minimizer after it converges. 
We prove that the distribution of the weights in the first layer always has a full support in any finite time by Brouwer's fixed-point theorem. 
Then, using a similar argument to \cite{chizat2018global}, we show that all bad local minima are unstable.
Note that under Assumptions \ref{ass:5} and \ref{ass:8}, the continuous limits of the Res-Nets generated from Algorithms \ref{algo:init:sim} and \ref{algo:init:resnet}, respectively,  can   achieve Assumptions \ref{ass:6} and \ref{ass:7}.  We also note that 
our global convergence   holds for Res-Nets with arbitrary (finite) depth.   Before us, the global convergence result was proved only for two-level NNs \cite{MeiE7665, chizat2018global}, and more recently for three-level ones \cite{nguyen2020rigorous} under a similar convergence assumption on the weights in the second layer.   % It is not clear how to extend their analysis to deeper NNs.

\section{Conclusions and Future Directions}\label{sec:conclu}
This paper proposed a new mean-field framework for DNNs where features in hidden layers have non-vanishing variance. 
We constructed a continuous dynamic called neural feature flow that captures the evolution of sufficiently over-parametrized DNNs trained by Gradient Descent.   We study both the standard DNN and the Res-Net architectures.
Furthermore, for Res-Net, we show that the neural feature flow reaches a globally optimal solution after it converges. 
% Extensions and open problems are discussed in the extended version of this paper in the supplementary material. 
% And it was shown that   
%   the process  finds a global minimum under suitable conditions.  
We hope that our new analytical tool pioneers better understandings for DNN training.

\iffalse
In this paper,  we propose   a new mean-field framework which is based on features to model  over-parameterized DNNs.   We  study both DNNs and Res-Nets and show that the evolutional processes of properly initialized DNNs and Res-Nets  can be captured by a special kind of dynamics equations named as neural feature flow.   We prove the existence and uniqueness of their solutions under mild conditions. Furthermore,  we show that the global minimum  is achievable for the Res-Net architecture  under suitable conditions. 
\fi

% There are still some important questions left.
There are many interesting questions under this framework to be further investigated:
\begin{enumerate}[(A)]
\item It is not clear whether the dynamics of DNNs trained by Gradient Descent can be  characterized by  PDEs of Mckean-Vlason type. 
Recently \cite{araujo2019mean} pointed out the difficulty lied in the potential discontinuity of the conditional distribution under Wasserstein metric. 
From the viewpoint of our framework, the features of the hidden units potentially collide with others along the evolution.
% there potentially exists a worse case when the hidden units encounter with others along the evolution. 
% Is it possible to impose proper regularizers to avoid the bad situation?

\item It is  not answered in this paper  how to analyze the evolution of DNN with  special regularizers such as relative entropy regularizer. 
Can we prove that  Gradient Descent find a global minimum  under such regularizers?
% with  complete mathematical  rigor?

\item  The approximation error bounds established in Theorems \ref{theorm:app} and \ref{theorm:appres}  follow the   ``propagation of chaos'' technique. Such type of  analyses result in  complexities with exponential dependency on time $T$.  It is still not known how to sharpen the complexities even under simple settings. 

\item It would be encouraging to  conduct a deeper analysis on the strong  universal approximation property in Assumption \ref{ass:8}.  
\end{enumerate}

\bibliographystyle{alpha}
\bibliography{overbib2}

\pagebreak\appendix
\pb
\section{Proofs of Theorems \ref{theorm:flow p0} and \ref{theorem:conpsi}}
\subsection{Proof of Theorem \ref{theorm:flow p0}}
In the proof, we first show that our neural feature flow in Definition~\ref{NFLp0} necessarily satisfies several continuity properties in Lemma~\ref{lmm:flow-condition}, which allows us to narrow down the search space for the solution. 
Then we construct a contraction mapping (also known as Picard iteration) to show the existence of uniqueness of solution in that search space. 
Recall that the a trajectory $\Psi$ consists of trajectories of weights $\Psi_\ell^\bw$ for $\ell\in[L+1]$ and features $\Psi_\ell^{\btheta}$ for $\ell\in[2:L]$.
In the proof, we also abbreviate the notations for individual trajectories as
\begin{align*}
 \Psi_\ell^{\bw}(\bu_\ell)(t)&=\Psi(w_\ell(\bu_\ell),t)=w_\ell^t(\bu_\ell),\\
 \Psi_\ell^{\btheta}(\btheta_\ell)(t)&=\Psi(\btheta_\ell,t)=\btheta_\ell^t,
\end{align*}
where $\bu_\ell$ stands for $\bw_1$, $(\bw_1,\btheta_2)$, $(\btheta_{\ell-1},\btheta_{\ell})$, $\btheta_L$ for $\ell=1$, $\ell=2$, $3\le\ell\le L$, $\ell=L+1$, respectively. 
Throughout the proof, we fix $T$ as a constant. 

For a precise statement of the continuity property of the neural feature flow, we first define the set of continuous trajectories: 
\begin{definition}[$(\bC,\bC')$-Continuous Trajectory]\label{special Function 2}
Given $\bC:=(\bC_1,\dots,\bC_{L+1})\in\RR_+^{L+1}$ and $\bC':=(\bC_2',\dots,\bC_{L}')\in\RR_+^{L-1}$, we say $\Psi$ is $(\bC,\bC')$-continuous 
if $\Psi_\ell^{\bw}(\bu_\ell)(t)$ is $\bC_\ell$-Lipschitz continuous in $t\in[0,T]$ for $\ell\in[L+1]$, 
and $\Psi_\ell^{\btheta}(\btheta_\ell)(t)$ is $\bC_\ell'(1+\|\btheta_\ell\|_\infty)$-Lipschitz continuous in $t\in[0,T]$ for $\ell\in[2:L-1]$. 
The set of all $(\bC,\bC')$-continuous trajectories is denoted as $\mathbf{\Psi}^{(\bC,\bC')}$.
\end{definition}

\begin{lemma}
\label{lmm:flow-condition}
There exists constants $\bC\in\RR_+^{L+1}$ and $\bC'\in\RR_+^{L-1}$ such that every solution $\Psi$ of the neural feature flow is $(\bC,\bC')$-continuous.
\end{lemma}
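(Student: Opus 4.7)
The plan is to unfold the fixed-point equations in Definition \ref{NFLp0}. Any solution $\Psi_*$ satisfies, e.g.,
\[
\Psi_{*,\ell}^{\bw}(\bu_\ell)(t) - \Psi_{*,\ell}^{\bw}(\bu_\ell)(s) = -\int_s^t \lineGrad^{\bw}_\ell(\bu_\ell;\Psi_*,r)\,dr,
\]
so a uniform-in-$t$ bound on the drift $\lineGrad^{\bw}_\ell$ yields a Lipschitz constant $\bC_\ell$, and a bound of the form $\bC_\ell'(1+\|\btheta_\ell\|_\infty)$ on $\lineGrad^{\btheta}_\ell$ yields $\bC_\ell'$. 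I therefore need two recursive a priori bounds: a \emph{backward} induction on $\lineDel_\ell$ and $\lineGrad^{\bw}_\ell$ (from layer $L{+}1$ down), and a \emph{forward} induction on $\lineGrad^{\btheta}_\ell$ (from layer $1$ up), both valid for every candidate solution.

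For the backward induction I would start from $\|\lineDel_{L+1}\|_\infty\le L_4$ (Assumption \ref{ass:2}) and $|h|\le L_1$ to get $\bC_{L+1}\le L_1L_4$; Assumption \ref{ass:3} then gives $|\Psi^{\bw}_{*,L+1}(\btheta_L)(t)|\le C_3+T\bC_{L+1}$ uniformly. In the inductive step I bound $\lineDel_\ell(\btheta_\ell)$ via \eqref{def linedel1}: the integrand has $\Psi^{\bw}_{*,\ell+1}(\btheta_\ell,\btheta_{\ell+1})(t)$, controlled initially by $C_1(1+\|\btheta_{\ell+1}\|_\infty)$ and then by an extra $T\bC_{\ell+1}$. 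The crucial point is that the sublinear growth in Assumption \ref{ass:3} is in the \emph{second} argument $\btheta_{\ell+1}$, and integrating $(1+\|\btheta_{\ell+1}\|_\infty)$ against the sub-gaussian $p_{\ell+1}$ (Assumption \ref{ass:4}) produces a constant that does \emph{not} depend on $\btheta_\ell$. Combined with $|h'|\le L_2$ and the previous bound $\|\lineDel_{\ell+1}\|_\infty\le D_{\ell+1}$, this yields $\|\lineDel_\ell\|_\infty\le D_\ell$ (a constant) and $|\lineGrad^{\bw}_\ell|\le L_1 D_\ell=:\bC_\ell$. The layers $\ell=2$ and $\ell=1$ are handled identically, using $\bX$ as a fixed data matrix (since $N,d$ are constants).

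For the forward induction on features, from \eqref{grad11} the quantity $|\lineGrad^{\btheta}_1|$ is already uniformly bounded. For $\ell\ge 2$, the decomposition \eqref{grad1}--\eqref{grad12} splits $\lineGrad^{\btheta}_\ell(\btheta_\ell)$ into two integrals against $p_{\ell-1}$: the first contains $\Psi^{\bw}_\ell(\btheta_{\ell-1},\btheta_\ell)$, whose bound $C_1(1+\|\btheta_\ell\|_\infty)+T\bC_\ell$ is precisely the allowed sublinear growth in $\|\btheta_\ell\|_\infty$, multiplied by the previously bounded $\lineGrad^{\btheta}_{\ell-1}$ and $|h'|\le L_2$ and integrated out against sub-gaussian $p_{\ell-1}$; the second contains $\lineGrad^{\bw}_\ell$ (already bounded by the backward step) times $|h|\le L_1$. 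This produces $|\lineGrad^{\btheta}_\ell(\btheta_\ell)|\le\bC_\ell'(1+\|\btheta_\ell\|_\infty)$, as needed.

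The main obstacle is bookkeeping: the sublinear-growth factor $(1+\|\btheta\|_\infty)$ supplied by Assumption \ref{ass:3} must land in the right argument at each step. In the $\lineDel$ recursion, $\btheta_\ell$ is the \emph{first} argument of $w_{\ell+1}$, so the growth in $\|\btheta_{\ell+1}\|_\infty$ can be averaged away against the sub-gaussian measure, keeping the bound uniform in $\btheta_\ell$; in the $\lineGrad^{\btheta}$ recursion, $\btheta_\ell$ is the \emph{second} argument of $w_\ell$, so the growth in $\|\btheta_\ell\|_\infty$ is retained and matches the format in Definition \ref{special Function 2}. Once this accounting is set up, all constants $\bC_\ell,\bC_\ell'$ are finite (depending on $T$, $L_1$--$L_5$, $C_1$--$C_4$, $\sigma$, $N$, $d$) and the lemma follows.
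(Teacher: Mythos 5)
Your proposal is correct and follows essentially the same route as the paper: a backward induction bounding $\lineDel_\ell$ (and hence $\lineGrad^{\bw}_\ell$) using the sublinear growth of $w_{\ell+1}$ in its second argument integrated against the sub-gaussian $p_{\ell+1}$, followed by a forward induction on $\lineGrad^{\btheta}_\ell$ in which the residual $(1+\|\btheta_\ell\|_\infty)$ factor is retained. The bookkeeping observation you highlight — that the growth factor must be absorbed by the integration variable in the backward pass but carried along in the forward pass — is precisely the key point of the paper's argument.
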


In the remaining of the proof we let $\bC$ and $\bC'$ be constants in Lemma~\ref{lmm:flow-condition}, and let $\mathbf{\Psi}:=\mathbf{\Psi}^{(\bC,\bC')}$, which will serve as the search space. 
The solution can be equivalently characterized as the fixed-point of a mapping from $\mathbf{\Psi}$ to itself that we introduce next:

\begin{definition}\label{def:F}
Define $F: \mathbf{\Psi} \to  \mathbf{\Psi}$ as follows: for all $t\in[0,T]$,
\begin{enumerate}[(1)]
    \item for all $\ell\in[2:L-1]$ and all $\btheta_{\ell}$,  $$F(\Psi)_{\ell}^{\btheta}(\btheta_{\ell})(t) = \btheta_{\ell} -\int_{0}^t \lineGrad^{\btheta}_{\ell}\left( \btheta_\ell; \Psi,s\right)ds; $$
    \item for all $\ell\in[L+1]$ and all $\bu_\ell$,
    $$F(\Psi)_{\ell}^\bw(\bu_\ell)(t) = w_{\ell}(\bu_\ell)-\int_{0}^t\lineGrad^{\bw}_{\ell}( \bu_\ell;\Psi,s)ds.$$ 
\end{enumerate}
\end{definition}
It follows from the same argument as Lemma~\ref{lmm:flow-condition} that the image of $\mathbf{\Psi}$ under $F$ is indeed contained in $\mathbf{\Psi}$. 
Comparing the definition of neural feature flow in Definition~\ref{NFLp0}, it is clear that finding a solution of neural feature flow in $\mathbf{\Psi}$ is equivalent to finding a fixed-point of $F$.
We will show in Lemma~\ref{lmm:contraction} the contraction property of $F$ under an appropriate metric defined below:

\begin{definition}
\label{def:metric}
For a pair $\Psi_1,\Psi_2\in  \mathbf{\Psi}$, we define the normalized distance between each trajectories over $[0,t]$ as 
\begin{align*}
& \rho_\bw^{[0,t]}(\Psi_{1,\ell}^\bw,\Psi_{2,\ell}^\bw)
:=\sup_{s\in[0,t],\bu_\ell}
\frac{\|\Psi_{1,\ell}^\bw(\bu_\ell)(s)
-\Psi_{2,\ell}^\bw(\bu_\ell)(s)\|_\infty}
{1+\|\bu_\ell\|_\infty},\\
& \rho_{\btheta}^{[0,t]}(\Psi_{1,\ell}^{\btheta},\Psi_{2,\ell}^{\btheta})
:=\sup_{s\in[0,t],\btheta_\ell}
\frac{\|\Psi_{1,\ell}^{\btheta}(\btheta_\ell)(s)
-\Psi_{2,\ell}^{\btheta}(\btheta_\ell)(s)\|_\infty}
{1+\|\btheta_{\ell}\|_\infty}.
\end{align*}
Finally we define the distance between $\Psi_1$ and $\Psi_2$ as 
\[
\Dis^{[0,t]}(\Psi_1,\Psi_2)
:=\max\left\{
\max_{\ell\in[L+1]}\rho_\bw^{[0,t]}(\Psi_{1,\ell}^\bw,\Psi_{2,\ell}^\bw), \max_{\ell\in[2:L-1]}\rho_{\btheta}^{[0,t]}(\Psi_{1,\ell}^{\btheta},\Psi_{2,\ell}^{\btheta})
\right\}.
\]
\end{definition}

\begin{lemma}
\label{lmm:contraction}
There exists a constant $C$ such that
\begin{eqnarray}
\Dis^{[0,t]}(F(\Psi_1),F(\Psi_2) )\leq C \int_{0}^t \Dis^{[0,s]}(\Psi_1,\Psi_2 ) ds. \notag
\end{eqnarray}
\end{lemma}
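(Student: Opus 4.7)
The plan is to reduce the contraction estimate to layer-wise Lipschitz bounds on the drift terms $\lineGrad^\bw_\ell$ and $\lineGrad^\btheta_\ell$. Since by Definition~\ref{def:F},
\[
F(\Psi_1)_\ell^\bu(\bu_\ell)(t) - F(\Psi_2)_\ell^\bu(\bu_\ell)(t)
= \int_0^t \bigl[\lineGrad^\bu_\ell(\bu_\ell;\Psi_2,s)-\lineGrad^\bu_\ell(\bu_\ell;\Psi_1,s)\bigr]ds,
\]
it suffices to prove the pointwise estimate
\[
\bigl\|\lineGrad^\bu_\ell(\bu_\ell;\Psi_1,s)-\lineGrad^\bu_\ell(\bu_\ell;\Psi_2,s)\bigr\|_\infty
\le C\,(1+\|\bu_\ell\|_\infty)\,\Dis^{[0,s]}(\Psi_1,\Psi_2)
\]
for $\bu\in\{\bw,\btheta\}$ uniformly in $s\in[0,t]$; dividing by $1+\|\bu_\ell\|_\infty$, integrating over $s$, and taking a maximum over $\ell$ then gives the claim of the lemma.

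First I would collect uniform a~priori bounds valid on the whole search space $\mathbf{\Psi}=\mathbf{\Psi}^{(\bC,\bC')}$. Membership in $\mathbf{\Psi}$ together with the initial bounds of Assumption~\ref{ass:3} give
$|\Psi^\bw_\ell(\bu_\ell)(s)| \le C(1+\|\bu_\ell\|_\infty)$ and $\|\Psi^\btheta_\ell(\btheta_\ell)(s)\|_\infty \le C(1+\|\btheta_\ell\|_\infty)$; then Assumptions~\ref{ass:1} and~\ref{ass:2} yield $|\btheta_{L+1}(\Psi,s)|\le C$ and $\|\lineDel_{L+1}(\Psi,s)\|_\infty\le L_4$. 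Propagating backward through~\eqref{def linedel}--\eqref{def linedel2} and using the sub-Gaussian moment bounds from Assumption~\ref{ass:4} to control the integrals against $p_\ell$, one obtains polynomial-in-$\|\btheta_\ell\|_\infty$ envelopes for $\lineDel_\ell(\btheta_\ell;\Psi,s)$ and $\lineGrad^\bu_\ell(\bu_\ell;\Psi,s)$. These are essentially the same envelopes already needed for Lemma~\ref{lmm:flow-condition}.

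Second, I perform a forward-then-backward Lipschitz induction in $\Psi$. The constraint relating $\btheta_{L+1}$ to the preceding layer, combined with the forward constraints on $\{\Psi^\btheta_\ell\}$, yields (after adding and subtracting interpolating terms, bounding $h,h'$ by Assumption~\ref{ass:1}, and using the uniform bounds)
\[
\|\btheta_{L+1}(\Psi_1,s)-\btheta_{L+1}(\Psi_2,s)\|_\infty \le C\,\Dis^{[0,s]}(\Psi_1,\Psi_2).
\]
Composing with the $L_5$-Lipschitz $\phi_1'$ gives the same bound for $\lineDel_{L+1}$. Propagating the same decomposition through~\eqref{def linedel}--\eqref{def linedel2}, and again invoking the sub-Gaussian moments to integrate polynomial envelopes against $p_\ell$, I obtain for each $\ell$
\[
\|\lineDel_\ell(\btheta_\ell;\Psi_1,s)-\lineDel_\ell(\btheta_\ell;\Psi_2,s)\|_\infty
\le C\,(1+\|\btheta_\ell\|_\infty)\,\Dis^{[0,s]}(\Psi_1,\Psi_2),
\]
and likewise for $\lineDel_1$ in terms of $(1+\|\bw_1\|_\infty)$. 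Substituting these into the gradient formulas \eqref{linegrad wL+1}--\eqref{linegrad wL+13} and into the chain-rule expressions \eqref{grad11}--\eqref{grad12}, and once more controlling the integrals by sub-Gaussian moments, delivers the pointwise Lipschitz bound for each $\lineGrad^\bu_\ell$ with the correct normalization $1+\|\bu_\ell\|_\infty$ required for the metric $\Dis$.

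The main obstacle is the bookkeeping: each backward step through~\eqref{def linedel}--\eqref{def linedel2} and each chain-rule step through~\eqref{grad1}--\eqref{grad12} multiplies envelopes by factors like $(1+\|\btheta_{\ell-1}\|_\infty)$ or $(1+\|\btheta_\ell\|_\infty)$, so one has to track polynomial-in-indices prefactors across all $L$ layers and verify that they are absorbed by the sub-Gaussian moments of $p_\ell$ (Assumption~\ref{ass:4}), leaving a single layer-independent constant $C$ on the right-hand side. Everything else---adding and subtracting, Lipschitzness of $h,h',\phi_1'$, and exchanging suprema with integrals---is standard once the envelopes are in place.
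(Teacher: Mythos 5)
Your plan follows the paper's proof essentially step for step. The paper likewise reduces the contraction to showing pointwise bounds
\[
\|\lineGrad^{\bw}_{\ell}(\bu_\ell;\Psi_1,t)-\lineGrad^{\bw}_{\ell}(\bu_\ell;\Psi_2,t)\|_\infty
\le C(1+\|\bu_\ell\|_\infty)\,d_t,
\qquad
\|\lineGrad^{\btheta}_{\ell}(\btheta_\ell;\Psi_1,t)-\lineGrad^{\btheta}_{\ell}(\btheta_\ell;\Psi_2,t)\|_\infty
\le C(1+\|\btheta_\ell\|_\infty)\,d_t,
\]
where $d_t:=\Dis^{[0,t]}(\Psi_1,\Psi_2)$, and then integrates and normalizes exactly as you describe. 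The paper's Lemma~\ref{lmm:Psi-property} is precisely the package of uniform a~priori envelopes you invoke in your first step, and your backward induction on $\lineDel_\ell$ (starting from $\btheta_{L+1}$ and $\phi_1'$, then descending through \eqref{def linedel}--\eqref{def linedel2}) followed by the chain-rule forward pass through \eqref{grad11}--\eqref{grad12} is exactly the paper's ``backward then forward'' decomposition; your label ``forward-then-backward'' is a naming mismatch, not a different route. Two small cautions worth noting: (i) the search space $\mathbf{\Psi}$ is defined only by Lipschitz-in-time continuity, so a generic $\Psi\in\mathbf{\Psi}$ does \emph{not} satisfy the forward-propagation constraints — but your argument never actually needs them, since $\btheta_{L+1}(\Psi,t)$ in \eqref{def linedeltl+1} is defined directly from $\Psi^{\bw}_{L+1}$ and $\Psi^{\btheta}_{L}$, both of which the metric controls; and (ii) because $L$ is fixed, the layer-by-layer compounding of constants is benign and does not require careful bookkeeping of polynomial prefactors — a single generic constant per step suffices, as in the paper.
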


\begin{proof}[Proof of Theorem \ref{theorm:flow p0}]
Firstly, it is clear that $\mathbf{\Psi}$ contains the constant trajectory and thus is nonempty.
Applying Lemma~\ref{lmm:contraction}, the proof of existence and uniqueness  follows from a similar argument of Picard–Lindel\"of theorem.
Specifically, iteratively applying Lemma~\ref{lmm:contraction} yields that
\[
\Dis^{[0,T]}(F^m(\Psi_1),F^m(\Psi_2) )\leq \frac{(CT)^m}{m!} \Dis^{[0,T]}(\Psi_1,\Psi_2 ).
\]
 Let $\Psi$ be  the constant trajectory,  for any $\tPsi\in \mathbf{\Psi}$,  by  the upper bounds of  $\lineGrad^{\btheta}_{\ell}$ and $\lineGrad^{\bw}_{\ell}$ in Lemma \ref{lmm:Psi-property} and the Definition of $\Dis^{[0,T]}$ in Definition \ref{def:metric}, there is a constant $C$ such that 
$$ \Dis^{[0,T]}(F(\tPsi),\Psi) \leq CT<\infty.$$

We first show the uniqueness. 
For two fixed points of $F$ denoted by $\Psi_1$ and $\Psi_2$, we have 
\[
\Dis^{[0,T]}(\Psi_1,\Psi_2)
=\Dis^{[0,T]}(F^m(\Psi_1),F^m(\Psi_2))
\le \frac{(CT)^{m-1}}{(m-1)!} \Dis^{[0,T]}(F(\Psi_1),F(\Psi_2)),
\]
By the triangle inequality $\Dis^{[0,T]}(F(\Psi_1),F(\Psi_2))\leq  \Dis^{[0,T]}(F(\Psi_1),\Psi) +  \Dis^{[0,T]}(F(\Psi_2),\Psi)<\infty $,  hence
the right-hand side of the above inequality vanishes as $m$ diverges.
For the existence, we consider the sequence $\{F^i(\Psi):i\ge 0\}$ that satisfies 
\[
\Dis^{[0,T]}(F^{m+1}(\Psi),F^m(\Psi))
\le \frac{(CT)^m}{m!} \Dis^{[0,T]}(F(\Psi),\Psi),
\]
Because $\Dis^{[0,T]}(F(\Psi),\Psi)<\infty$,   $\{F^i(\Psi):i\ge 0\}$ is a Cauchy sequence. 
Since $\mathbf{\Psi}$ is complete under $\Dis^{[0,T]}$ by Lemma~\ref{lmm:psi-complete}, the limit point $\Psi_*\in \mathbf{\Psi}$, which is a fixed-point of $F$.
\end{proof}

\subsection{Proof of Theorem \ref{theorem:conpsi}}
Theorem~\ref{theorem:conpsi} is a Gr\"onwall-type of result. 
However, it is not straightforward to directly derive a simple differential inequality due to the involved relations among the parameters of deep neural networks. 
Again we turn to the technique of Picard iterations used in the proof of Theorem~\ref{theorm:flow p0}. This approach has also been used to prove the abstract Gr\"onwall inequality in \cite{turinici1986abstract}.

Recall the set $\mathbf{\Psi}$ in the proof of Theorem~\ref{theorm:flow p0}, and the mapping $F:\mathbf{\Psi}\mapsto\mathbf{\Psi}$ in Definition~\ref{def:F}. 
It is shown that $F$ is a contraction mapping and thus there exists a unique solution $\Psi_*\in \mathbf{\Psi}$. 
We will construct a closed nonemtpy subset $\btPsi \subseteq \bfPsi$ with the desired properties in Theorem~\ref{theorem:conpsi} such that $F(\btPsi)\subseteq \btPsi$.
By the same argument as the proof of Theorem~\ref{theorm:flow p0}, there exists a solution in $\btPsi$, thereby proving $\Psi_* \in \btPsi$.

Next we introduce the set of $\beta$-locally Lipschitz trajectories with the desired properties in Theorem \ref{theorem:conpsi}.
We use similar notations as in the proof of Theorem~\ref{theorm:flow p0} by letting $\bu_\ell$ denote $\bw_1$, $(\bw_1,\btheta_2)$, $(\btheta_{\ell-1},\btheta_{\ell})$, $\btheta_L$ for $\ell=1$, $\ell=2$, $3\le\ell\le L$, $\ell=L+1$, respectively.

\begin{definition}[$\beta$-Locally Lipschitz Trajectory]
\label{def:local-lip}
Recall the constants $C_2$ and $C_4$ in Assumption~\ref{ass:3} for the locally Lipschitz continuity at $t=0$.
We say $\Psi$ is $\beta$-locally Lipschitz if for all $t\in[0,T]$,  $\bw_1$, $\bbw_1\in\mathcal{B}_{\infty}(\bw_1,1) $, $\btheta_{\ell}$, and $\bbtheta_{\ell}\in \mathcal{B}_{\infty}(\btheta_{\ell},1) $ with $\ell\in[2:L]$, we have
\begin{subequations}
\begin{align}
&\left\|\Psi^\bw_{1}(\bw_1)(t)-\Psi^\bw_{1}(\bbw_1)(t) \right\|_{\infty}
\leq  e^{\beta t}( \|\bw_1\|_{\infty}+1 )\|\bw_1 -\bbw_1 \|_{\infty},\label{local lip-w1}\\
&\left|\Psi^\bw_{L+1}( \btheta_{L})(t) - \Psi^\bw_{L+1}( \bbtheta_{L})(t)  \right|
\leq (1+C_4)e^{\beta t} (  \|\btheta_L\|_{\infty}+1 )  \|\btheta_{L}-\bbtheta_{L} \|_{\infty},\label{local lip-wL+1}\\
&\left|\Psi^\bw_{\ell}(\bu_\ell)(t) - \Psi^\bw_{\ell}(\bbtheta_{\ell-1}, \btheta_\ell)(t)  \right|
\leq (1+C_2)e^{\beta t}( \|\bu_{\ell}\|_{\infty}+1)  \|\btheta_{\ell-1}-\bbtheta_{\ell-1} \|_{\infty},\label{local lip-wl1}\\
&\left|\Psi^\bw_{\ell}(\bu_\ell)(t) - \Psi^\bw_{\ell}(\btheta_{\ell-1}, \bbtheta_\ell)(t)  \right|
\leq (1+C_2)e^{\beta t}( \|\bu_{\ell}\|_{\infty}+1) \|\btheta_{\ell}-\bbtheta_{\ell} \|_{\infty},\label{local lip-wl2}\\
&\left\|\Psi^{\btheta}_{\ell}(\btheta_{\ell})(t)-\Psi^{\btheta}_{\ell}(\bbtheta_{\ell})(t) \right\|_{\infty}
\leq  e^{\beta t}( \|\btheta_\ell\|_{\infty}+1 )\|\btheta_\ell -\bbtheta_{\ell} \|_{\infty},\label{local lip-tl}
\end{align}
\end{subequations}
for $\ell\in[2:L]$.
Denote the set of all $\beta$-locally Lipschitz trajectories as $\bfPsi_{\beta}$.
\end{definition}

\begin{lemma}
\label{const Lstar}
There exists a constant $\beta_*$ such that $F(\bfPsi\cap \bfPsi_{\beta_*})\subseteq   \bfPsi_{\beta_*}$.
\end{lemma}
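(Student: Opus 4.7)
The plan is to verify that if $\Psi \in \bfPsi\cap \bfPsi_{\beta_*}$ for an appropriately large $\beta_*$, then each of the five local Lipschitz inequalities \eqref{local lip-w1}--\eqref{local lip-tl} is also satisfied by $F(\Psi)$; by the definition of $F$, the central quantity is the integrated gradient difference $\int_0^t [\lineGrad(\cdot;\Psi,s) - \lineGrad(\cdot';\Psi,s)]\,ds$, and I would look for a bound of the form $K\cdot(\text{norm}+1)\cdot \|\cdot - \cdot'\|_\infty\cdot e^{\beta_* s}$ on the integrand, where $K$ is a constant depending only on the assumption parameters ($L_1,\dots,L_5$, $C_1,\dots,C_4$, $\sigma$, $d$, $N$, $L$) and \emph{not} on $\beta_*$.

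The core trick is the following algebraic identity: if the triangle inequality yields a bound of the form
\[
1 + \tfrac{K}{\beta_*}\bigl(e^{\beta_* t}-1\bigr),
\]
then this is $\le e^{\beta_* t}$ whenever $\beta_* \ge K$, because $e^{\beta_* t}(1 - K/\beta_*) \ge 1 - K/\beta_*$ holds trivially (both sides nonnegative for $\beta_*\ge K$, and the inequality is sharpened by $e^{\beta_* t}\ge 1$). So for each of the five conditions I would carry out the following decomposition: applying the definition of $F$ gives, e.g.,
\[
\Psi^{\bw}_{1}(\bw_1)(t)-\Psi^{\bw}_{1}(\bbw_1)(t) = (\bw_1-\bbw_1) - \int_0^t\bigl[\lineGrad^{\bw}_1(\bw_1;\Psi,s)-\lineGrad^{\bw}_1(\bbw_1;\Psi,s)\bigr]ds,
\]
so that the leading term contributes the ``$1$'' coefficient in front of $\|\bw_1-\bbw_1\|_\infty$, and the integral contributes the $\frac{K}{\beta_*}(e^{\beta_* t}-1)$. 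For the two weight-in-middle conditions and the weight-in-last condition, the leading contributions are the Assumption \ref{ass:3} local Lipschitz bounds, yielding the factors $(1+C_2)$ and $(1+C_4)$ that already appear in Definition \ref{def:local-lip}.

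The step that I expect to absorb most of the work is the derivation of the gradient-difference bounds themselves. Each $\lineGrad$ is a product of a nested backward chain $\lineDel_\ell$ defined recursively by \eqref{def linedel}--\eqref{def linedel2} with the corresponding $\lineGrad^\bw$ via \eqref{linegrad wL+1}--\eqref{linegrad wL+13} and analogously for features via \eqref{grad11}--\eqref{grad12}. To bound, say, $\|\lineDel_\ell(\btheta_\ell;\Psi,s)-\lineDel_\ell(\bbtheta_\ell;\Psi,s)\|_\infty$, I would peel off one layer at a time, using (i) $|h|,|h'|\le \max(L_1,L_2)$ and $|h'(x)-h'(y)|\le L_3|x-y|$ (Assumption \ref{ass:1}); (ii) $|\phi_1'|\le L_4$ and its Lipschitz bound $L_5$ (Assumption \ref{ass:2}); (iii) the hypotheses \eqref{local lip-w1}--\eqref{local lip-tl} of Definition \ref{def:local-lip} on $\Psi$, which provide the factor $e^{\beta_* s}$ with polynomial prefactors $(1+\|\btheta\|_\infty)$; and (iv) the sub-gaussianity of $p_\ell$ (Assumption \ref{ass:4}) to integrate polynomial-in-$\|\btheta\|_\infty$ quantities to finite constants. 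Each layer of the recursion multiplies the running bound by a layer-dependent constant, so the final $K$ is large but independent of $\beta_*$.

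The main obstacle is not any deep idea but rather the bookkeeping: the backward chain couples $\Psi^\bw_{\ell+1}(\btheta_\ell,\cdot)$, $\Psi^{\btheta}_\ell(\btheta_\ell)$, and the activation derivative $h'(\Psi^{\btheta}_\ell(\btheta_\ell))$ through $\btheta_\ell$, and a clean expansion must control all three Lipschitz contributions simultaneously, together with the polynomial moments under $p_{\ell-1}$, $p_{\ell+1}$. Once all five bounds are in place with their constants $K_1,\ldots,K_5$, I set $\beta_* := \max_{i}K_i$, and the algebraic identity above closes the argument, giving $F(\Psi)\in\bfPsi_{\beta_*}$.
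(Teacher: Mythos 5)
Your proposal is correct and takes essentially the same route as the paper's proof. You correctly identify both the key structural observation (that the gradient-difference bounds have a constant prefactor $K$ that is independent of $\beta_*$, because $K$ arises only from the assumption constants, the sub-gaussian moment bounds via Corollary \ref{cor:subg-q}, and the uniform $\Psi$-independent estimates of Lemma \ref{lmm:Psi-property}) and the closing algebraic step ($1+\tfrac{K}{\beta_*}(e^{\beta_* t}-1)\le e^{\beta_* t}$ once $\beta_*\ge K$, combined with $\|\bw_1-\bbw_1\|_\infty \le (1+\|\bw_1\|_\infty)\|\bw_1-\bbw_1\|_\infty$ to absorb the time-zero term). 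The paper's proof organizes the gradient-difference estimate as you describe — a backward pass through $\lineDel_\ell$ from $\ell=L+1$ down to $1$, then a forward pass through $\lineGrad^{\btheta}_\ell$ — and the final verification step in the paper's display \eqref{ftheo4} is exactly your identity applied to the $\Psi^\bw_1$ case, with the others declared analogous; your note on how the $(1+C_2)$ and $(1+C_4)$ prefactors in Definition \ref{def:local-lip} absorb the Assumption \ref{ass:3} Lipschitz constants at $t=0$ is the right bookkeeping for those cases.
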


\begin{proof}[Proof of Theorem~\ref{theorem:conpsi}]
Let $\beta_*$ be the constant in Lemma~\ref{const Lstar} and $\bfPsi':=\bfPsi\cap \bfPsi_{\beta_*}\subseteq \bfPsi$, which clearly contains the constant trajectory and thus is nonempty. 
It follows from Lemma~\ref{const Lstar} that $F(\bfPsi')\subseteq \bfPsi'$. 
Since $F$ is a contraction mapping by Lemma~\ref{lmm:contraction} and $\bfPsi'$ is a closed set by Lemma~\ref{psi closed}, by the same argument as the proof of Theorem~\ref{theorm:flow p0}, there exists a unique solution in $\bfPsi'$, which is necessarily $\Psi_*$ by the uniqueness of the solution in Theorem~\ref{theorm:flow p0}. 
\end{proof}

\subsection{Proofs of Lemmas}
\begin{proof}[Proof of Lemma~\ref{lmm:flow-condition}]
To prove the Lipschitz continuity of $\Psi$ in time, by the definition of neural feature flow in Definition~\ref{NFLp0}, it suffices to show upper bounds of $\lineGrad^{\bw}_{\ell}$ and $\lineGrad^{\btheta}_{\ell}$ for each layer $\ell$. 
In the following, we use the backward equations to inductively upper bound $\lineDel_{\ell}$ and thus $\lineGrad^{\bw}_{\ell}$ from $\ell=L+1$ to $1$, and then use the forward equations to upper bound $\lineGrad^{\btheta}_{\ell}$ from $\ell=1$ to $L$.

We first consider the backward steps. 
We will focus on the proof of $\|\lineDel_{\ell}\|_\infty\le \tilde 
\bC_\ell$ for constants $\tilde 
\bC_\ell$ to be specified, which immediately yield upper bounds $\|\lineGrad^{\bw}_\ell\|_\infty\le \bC_{\ell}$ for constants $\bC_\ell$ since both $h$ and $\bX$ are bounded.
For the top layer $\ell=L+1$, by Assumption \ref{ass:2} that $|\phi_1'|\leq L_4$, we have
\[
\|\lineDel_{L+1}(\Psi,t)\|_{\infty} 
\le L_4 := \tilde \bC_{L+1}.\]
At layer $\ell=L$, since $|h'|\le L_2$, 
\begin{eqnarray}
  \left\|\lineDel_{L}(\btheta_{L}; \Psi,t) \right\|_{\infty} 
  = \left|w_{L+1}^t(\btheta_L)|\|\lineDel_{L+1}(\Psi,t)\right\|_{\infty} \left\|\th'\left(\btheta_{L}^t\right)\right\|_{\infty}
  \le \tilde \bC_{L},\notag
\end{eqnarray}
where $\tilde \bC_{L}:= (C_3+\bC_{L+1}T)L_2\tilde \bC_{L+1}$ and $|w_{L+1}^t|\le C_3+\bC_{L+1}t\le C_3+\bC_{L+1}T$ by the upper bound of initialization \eqref{ass45} and the $\bC_{L+1}$-Lipschitz continuity of $w_{L+1}^t$ in $t$. 
For each $\ell=L-1,\dots,2$, we similarly apply the upper bounds of initialization in \eqref{ass42} and the $\bC_{\ell+1}$-Lipschitz continuity of $w_{\ell+1}^t$ in $t$ and obtain that
\begin{eqnarray}
 &&\int  |w_{\ell+1}^t(\btheta_{\ell},\btheta_{\ell+1})| \|\lineDel_{\ell+1}(\btheta_{\ell+1}; \Psi,t)\|_\infty  dp_{\ell+1}(\btheta_{\ell+1}) \label{Del bound}\\
 &\leq&\tilde \bC_{\ell+1}\int\left|w_{\ell+1}^t(\btheta_{\ell},\btheta_{\ell+1})\right| dp_{\ell+1}(\btheta_{\ell+1})  \notag\\
  &\leq&  \tilde \bC_{\ell+1}\left(\bC_{\ell+1}t+  \int |w_{\ell+1}(\btheta_{\ell},\btheta_{\ell+1})|dp_{\ell+1}(\btheta_{\ell+1})  \right)   \notag\\
  &\leq&\tilde \bC_{\ell+1}(\bC_{\ell+1}T + C'),\notag
\end{eqnarray}
for a constant $C'$,
where in the last inequality we used the upper bound of $w_{\ell+1}$ in \eqref{ass42}, the sub-gaussian property of $p_{\ell+1}$ in Assumption~\ref{ass:4}, and Corollary~\ref{cor:subg-q}.
Consequently,
\begin{eqnarray}
  \left\|\lineDel_{\ell}(\btheta_{\ell}; \Psi,t)\right\|_{\infty} &\leq& \left\|\int  w_{\ell+1}^t(\btheta_{\ell},\btheta_{\ell+1})~ \lineDel_{\ell+1}(\btheta_{\ell+1}; \Psi,t)  dp_{\ell+1}(\btheta_{\ell+1})\right\|_{\infty} \left\|\th'\left(\btheta_{\ell}^t\right)\right\|_{\infty}
  \leq \tilde \bC_{\ell}, \notag
\end{eqnarray}
where $\tilde \bC_{\ell}:= ( C'+\bC_{\ell+1}T)L_2\tilde \bC_{\ell+1}$. 
For $\ell=1$, the upper bound can be obtained by replacing $\btheta_1$ by $\bw_1$ in \eqref{Del bound}.

Now we consider the forward steps and upper bound $\lineGrad^{\btheta}_{\ell}$.
For the first layer $\ell=1$, since $\bX$ is bounded, it follows from \eqref{grad11} that 
\begin{eqnarray}\label{qq18}
  \left\|\lineGrad^{\btheta}_{1}(\bw_1;\Psi,t)\right\|_{\infty} 
  \leq \bC_1'
  \le \bC_1'(1+\|\bw_1\|_\infty),
\end{eqnarray}
for a constant $\bC_1'$.  Next we prove for $\ell\ge 2$. The  analysis uses the notations for $\ell\ge 3$, and for the $\ell=2$ case $\btheta_1$ should be replaced by $\bw_1$.  By a similar argument to \eqref{Del bound}, we have
\begin{eqnarray}
 && \int |w_{\ell}^t(\btheta_{\ell-1}, \btheta_{\ell})| \left\|\lineGrad^{\btheta}_{\ell-1}(\btheta_{\ell-1}; \Psi,t)\right\|_{\infty}  d p_{\ell-1}(\btheta_{\ell-1})\notag\\
    &\leq& \bC_{\ell-1}'\int  (\| \btheta_{\ell-1}\|_{\infty}+1 )(\bC_\ell T + \left|w_{\ell}(\btheta_{\ell-1}, \btheta_{\ell}) \right| ) d p_{\ell-1}(\btheta_{\ell-1})   \notag\\
    &\overset{\eqref{ass42}}{\leq}& \bC_{\ell-1}'(\bC_\ell T +  C_1 (\| \btheta_{\ell}\|_{\infty}+1)) \int  (\| \btheta_{\ell-1}\|_{\infty}+1)   d p_{\ell-1}(\btheta_{\ell-1})     \notag\\
    & \le& \tilde\bC_\ell'(\| \btheta_{\ell}\|_{\infty}+1),\notag
\end{eqnarray}
for some constant $\tilde\bC_\ell'$.
Therefore, applying \eqref{grad12} yields that
\begin{align}
 \left\|\lineGrad^{\btheta}_{\ell}( \btheta_{\ell} ;\Psi, t)\right\|_{\infty}
&\leq L_2\tilde\bC_\ell'(\| \btheta_{\ell}\|_{\infty}+1) 
+ \int \underbrace{\left\|\th\left(\btheta_{\ell-1}^t\right)\right\|_{\infty}}_{\leq L_1}\underbrace{\left| \lineGrad^{\bw}_{\ell}(\btheta_{\ell-1}, \btheta_{\ell};\Psi,t)\right|}_{\leq\bC_\ell} d p_{\ell-1}(\btheta_{\ell-1}) \nonumber\\
&\leq \bC_\ell'(\| \btheta_{\ell}\|_{\infty}+1),\label{tend1}
\end{align}
for some constant $\bC_\ell'$.
\end{proof}

Before proving Lemma~\ref{lmm:contraction}, we first present in Lemma~\ref{lmm:Psi-property} properties of $\Psi \in \mathbf{\Psi}$ that will be used to prove the contraction lemma. 
The proof is exactly the same as Lemma~\ref{lmm:flow-condition} and is omitted. 
\begin{lemma}[Property of $\mathbf{\Psi}$]
\label{lmm:Psi-property}
There exist constants $\tilde\bC_\ell$, $\bC_\ell$, and $\bC_\ell'$ such that, for any $\Psi\in\mathbf{\Psi}$, we have
\begin{itemize}
    \item $\|\lineDel_{L+1}(\Psi,t)\|_\infty \le \tilde\bC_{L+1}$ and $\|\lineDel_{\ell}(\btheta_\ell;\Psi,t)\|_\infty \le \tilde\bC_\ell$ for $\ell\in[L]$;
    \item $\|\lineGrad^{\bw}_{\ell}( \bu_\ell;\Psi,t)\|_\infty \le \bC_\ell$ and $\|w_{\ell}^t(\bu_\ell)\|_\infty \le \|w_{\ell}^0(\bu_\ell)\|_\infty + \bC_\ell t$ for $\ell\in[L+1]$;
    \item $\|\lineGrad^{\btheta}_{\ell}( \btheta_\ell; \Psi,t)\|_\infty \le\bC_\ell'(\| \btheta_\ell\|_{\infty} +1)$ for $\ell\in[2:L-1]$.
\end{itemize}
\end{lemma}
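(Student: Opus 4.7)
The plan is to mirror the structure of the proof of Lemma~\ref{lmm:flow-condition} essentially verbatim, exploiting the key observation that nothing in that argument used the specific fact that $\Psi$ was the true solution of neural feature flow — only that $\Psi \in \mathbf{\Psi}$, i.e. that each $\Psi_\ell^\bw(\bu_\ell)(t)$ is $\bC_\ell$-Lipschitz in $t$ and each $\Psi_\ell^\btheta(\btheta_\ell)(t)$ has a Lipschitz constant $\bC_\ell'(1+\|\btheta_\ell\|_\infty)$ in $t$. Therefore the same chain of bookkeeping that produced $\bC$ and $\bC'$ in the first place will produce constants $\tilde\bC_\ell$, $\bC_\ell$, $\bC_\ell'$ uniformly over $\mathbf{\Psi}$.

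First I would do a backward sweep to control the $\lineDel$ quantities. At the top layer, Assumption~\ref{ass:2} immediately gives $\|\lineDel_{L+1}(\Psi,t)\|_\infty \le L_4 =: \tilde\bC_{L+1}$. Because $\Psi \in \mathbf{\Psi}$ is $\bC_{L+1}$-Lipschitz in time, the pointwise growth bound $|w_{L+1}^t(\btheta_L)| \le |w_{L+1}(\btheta_L)| + \bC_{L+1}T \le C_3 + \bC_{L+1}T$ (from \eqref{ass45}) together with $|h'|\le L_2$ yields $\|\lineDel_L(\btheta_L;\Psi,t)\|_\infty \le \tilde\bC_L$. For $\ell = L-1,\dots,2$, I recurse: the integral $\int |w_{\ell+1}^t(\btheta_\ell,\btheta_{\ell+1})|\,dp_{\ell+1}$ is bounded by $\bC_{\ell+1}T$ plus $C_1\int(1+\|\btheta_{\ell+1}\|_\infty)\,dp_{\ell+1}$, and the latter is a constant by the sub-gaussian property of $p_{\ell+1}$ (Assumption~\ref{ass:4}) combined with Corollary~\ref{cor:subg-q}. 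Combining with $|h'|\le L_2$ gives $\|\lineDel_\ell(\btheta_\ell;\Psi,t)\|_\infty \le \tilde\bC_\ell$ for a constant $\tilde\bC_\ell$. The case $\ell = 1$ is analogous, with $\bw_1$ in place of $\btheta_1$. From these bounds, using $|h|\le L_1$ and boundedness of $\bX/d$, the estimates $|\lineGrad^\bw_\ell(\bu_\ell;\Psi,t)| \le \bC_\ell$ follow immediately from \eqref{linegrad wL+1}--\eqref{linegrad wL+13}. The bound $\|w_\ell^t(\bu_\ell)\|_\infty \le \|w_\ell^0(\bu_\ell)\|_\infty + \bC_\ell t$ is then the $\bC_\ell$-Lipschitz condition in Definition~\ref{special Function 2}.

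Next I would perform a forward sweep to bound $\lineGrad^\btheta_\ell$, in which the $(1+\|\btheta_\ell\|_\infty)$ factor emerges. For $\ell = 1$, \eqref{grad11} and boundedness of $\bX$ give $\|\lineGrad^\btheta_1(\bw_1;\Psi,t)\|_\infty \le \bC_1'(1+\|\bw_1\|_\infty)$. Inductively, assume $\|\lineGrad^\btheta_{\ell-1}(\btheta_{\ell-1};\Psi,t)\|_\infty \le \bC_{\ell-1}'(1+\|\btheta_{\ell-1}\|_\infty)$. In the first term of \eqref{grad12}, I split $|w_\ell^t(\btheta_{\ell-1},\btheta_\ell)| \le \bC_\ell T + |w_\ell(\btheta_{\ell-1},\btheta_\ell)|$ and then apply \eqref{ass42} to factor out $C_1(1+\|\btheta_\ell\|_\infty)$, producing an integrand $(1+\|\btheta_{\ell-1}\|_\infty)^2$, whose expectation under $p_{\ell-1}$ is a constant by sub-gaussianity. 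The second term of \eqref{grad12} is bounded using $|h|\le L_1$ and $|\lineGrad^\bw_\ell|\le\bC_\ell$. The upshot is $\|\lineGrad^\btheta_\ell\|_\infty \le \bC_\ell'(1+\|\btheta_\ell\|_\infty)$, with $\bC_\ell'$ a constant depending only on the problem parameters.

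The only subtle point — hence the "main obstacle" label, though it is really a matter of careful bookkeeping rather than a genuine difficulty — is tracking the $\ell_\infty$-growth factors through the nested integrals so that constants stay independent of $\Psi$ and the chosen $\btheta_\ell$. The sub-gaussian Assumption~\ref{ass:4} is what prevents the integrals $\int(1+\|\btheta_{\ell-1}\|_\infty)^q\,dp_{\ell-1}$ (for small integer $q$) from blowing up, and the sublinear growth in Assumption~\ref{ass:3} is what isolates the outer $(1+\|\btheta_\ell\|_\infty)$ factor. Since both assumptions are already in force and Corollary~\ref{cor:subg-q} provides the moment bound in closed form, the argument goes through with constants determined only by $L_1,\dots,L_5,C_1,\dots,C_4,\sigma,\bC,\bC',T$, and the dimensions $N,d$. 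This yields the three families of bounds in the lemma statement.
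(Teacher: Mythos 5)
Your proposal is correct and matches the paper's intended argument: the paper explicitly omits this proof with the remark that it is exactly the same as Lemma~\ref{lmm:flow-condition}, and you correctly identify that nothing in that argument uses the fact that $\Psi$ solves the neural feature flow, only the $(\bC,\bC')$-Lipschitz property defining $\mathbf{\Psi}$. One small imprecision: in the forward sweep the integrand under $p_{\ell-1}$ is $(1+\|\btheta_{\ell-1}\|_\infty)$ rather than $(1+\|\btheta_{\ell-1}\|_\infty)^2$, since the factor $(1+\|\btheta_\ell\|_\infty)$ you extract from \eqref{ass42} is deterministic and leaves the integral; the quadratic integrand only arises in Lemma~\ref{lmm:contraction}. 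This does not affect the conclusion since sub-gaussianity controls either power.
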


\begin{proof}[Proof of Lemma~\ref{lmm:contraction}]
The proof entails upper bounds of the gradient differences 
$\| \lineGrad^{\btheta}_{\ell}( \btheta_\ell; \Psi_1,t)-\lineGrad^{\btheta}_{\ell}( \btheta_\ell; \Psi_2,t)\|_\infty$ 
and 
$\|\lineGrad^{\bw}_{\ell}( \bu_\ell;\Psi_1,t)-\lineGrad^{\bw}_{\ell}( \bu_\ell;\Psi_2,t)\|_{\infty}$ in terms of the differences $\|\btheta_{1,\ell}^t-\btheta_{2,\ell}^t\|_\infty$ for $\ell\in[2:L]$ and $|w_{1,\ell}^t-w_{2,\ell}^t|$ for $\ell\in[L+1]$, which can all be further upper bounded in terms of $d_t:=\Dis^{[0,t]}(\Psi_1,\Psi_2 )$, that is, by definition, 
\begin{align}
&\left\|\btheta_{1,\ell}^t - \btheta_{2,\ell}^t\right\|_{\infty}\leq(\| \btheta_\ell\|_{\infty} +1)d_t,\quad \ell\in[2:L],\label{eq:dt-theta}\\
&\left\| w_{1,\ell}^t(\bu_\ell) - w_{2,\ell}^t(\bu_\ell)\right\|_{\infty} \leq(\| \bu_\ell\|_{\infty} +1)d_t\label{eq:dt-w},\quad \ell\in [L+1].    
\end{align}
Analogous to the proof of Lemma~\ref{lmm:flow-condition}, we will use the backward equations to inductively upper bound the differences between $\lineDel_{\ell}$ and thus between $\lineGrad^{\bw}_{\ell}$ from $\ell=L+1$ to $1$, and then use the forward equations to upper bound the difference between $\lineGrad^{\btheta}_{\ell}$ from $\ell=1$ to $L$. 
Specifically, we will prove that, for some constant $C$, 
\begin{align*}
&\|\lineDel_{\ell}(\btheta_\ell;\Psi_1,t) -\lineDel_{\ell}(\btheta_\ell; \Psi_2,t)\|_\infty
\le C (1+\|\btheta_\ell\|_\infty) d_t,\quad \ell\in[L],\\
&\|\lineGrad^{\bw}_{\ell}( \bu_\ell;\Psi_1,t)-\lineGrad^{\bw}_{\ell}( \bu_\ell;\Psi_2,t)\|_{\infty}
\le C (1+\|\bu_\ell\|_\infty) d_t,\quad \ell\in[L+1],\\
&\| \lineGrad^{\btheta}_{\ell}( \btheta_\ell; \Psi_1,t)-\lineGrad^{\btheta}_{\ell}( \btheta_\ell; \Psi_2,t)\|_\infty
\le C (1+\|\btheta_\ell\|_\infty)d_t,\quad \ell\in[2:L].
\end{align*}
Then the conclusion follows from the definition of $F$ and $\Dis^{[0,t]}$ in Definitions~\ref{def:F} and \ref{def:metric}, respectively.

We first consider the backward steps. 
Again we focus on the upper bound of the difference between $\lineDel_{\ell}$. 
Since both $h$ and $\bX$ are bounded, $h$ is Lipschitz continuous by assumption, and $\lineDel_{\ell}$ is bounded by Lemma~\ref{lmm:Psi-property}, the upper bound of the difference between $\lineGrad^{\bw}_{\ell}$ follows immediately. 
For the top layer $\ell=L+1$, the Lipschitz continuity of $\phi'_1$ in Assumption~\ref{ass:2} implies that,
\begin{eqnarray}
&&\|\lineDel_{L+1}(\Psi_1,t) -\lineDel_{L+1}(\Psi_2,t)\|_\infty\label{qwe}\\
&\le& L_5 \|\btheta_{1,L+1}^t - \btheta_{2,L+1}^t\|_\infty \notag\\
&\le& L_5 \int \| h(\btheta_{1,L}^t)w^t_{1,L+1}(\btheta_{L}) -  h(\btheta_{2,L}^t)w^t_{2,L+1}(\btheta_{L})\|_\infty d p_L(\btheta_L) \notag.
\end{eqnarray}
Since $h$ is bounded and Lipschitz continuous, $w_{i,L+1}^t$ is bounded for $t\le T$ by Lemma~\ref{lmm:Psi-property}, we have
\[
\|\lineDel_{L+1}(\Psi_1,t) -\lineDel_{L+1}(\Psi_2,t)\|_\infty
\le \tilde \bC_{L+1} d_t,
\]
for a constant $\tilde \bC_{L+1}$.
At layer $\ell=L$, recall that
$$\lineDel_{L}(\btheta_{L}; \Psi,t) =    w_{L+1}^t(\btheta_L)~ \lineDel_{L+1}(\Psi,t)\cdot    \th'\left(\btheta_{L}^t\right).$$
Since the three terms in the product are all bounded, and $h'$ is $L_3$-Lipschitz continuous, we have
\begin{eqnarray}\label{cos r}
 \|\lineDel_{L}(\btheta_{L}; \Psi_1,t)-\lineDel_{L}(\btheta_{L}; \Psi_2,t)\|_\infty
\le \tilde\bC_{L}(1+\|\btheta_L\|_\infty) d_t,
\end{eqnarray}
for a constant $\tilde \bC_{L}$.
For each $\ell=L-1,\dots,1$, 
\begin{eqnarray}
 &&\int  \|w_{1,\ell+1}^t(\btheta_{\ell},\btheta_{\ell+1})  ~\lineDel_{\ell+1}(\btheta_{\ell+1}; \Psi_1,t) -  w_{2,\ell+1}^t(\btheta_{\ell},\btheta_{\ell+1})  ~\lineDel_{\ell+1}(\btheta_{\ell+1}; \Psi_2,t)\|_{\infty}dp_{\ell+1}(\btheta_{\ell+1})  \notag\\
 &\leq&\int \underbrace{\left| w_{1,\ell+1}^t(\btheta_{\ell},\btheta_{\ell+1}) -w_{2,\ell+1}^t(\btheta_{\ell},\btheta_{\ell+1})\right|}_{\leq  (\|\btheta_{\ell} \|_{\infty}+\|\btheta_{\ell+1} \|_{\infty}+1 )d_t } \left\| \lineDel_{\ell+1}(\btheta_{\ell+1}; \Psi_1,t) \right\|_{\infty}  \notag\\
 &&\quad\quad\quad\quad\quad\quad+ \left|w_{2,\ell+1}^t(\btheta_{\ell},\btheta_{\ell+1}) \right| \underbrace{\left\|\lineDel_{\ell+1}(\btheta_{\ell+1}; \Psi_1,t) -  \lineDel_{\ell+1}(\btheta_{\ell+1}; \Psi_2,t)\right\|_{\infty}}_{\leq \tilde \bC_{\ell+1}(\|\btheta_{\ell+1} \|_{\infty}+1 )d_t  }d p_{\ell+1}(\btheta_{\ell+1})\notag\\
 &\leq& C' (\| \btheta_{\ell}\|_{\infty} +1 )  d_t, \label{qq1}
\end{eqnarray}
for a constant $C'$, where the last step is due to the sub-gaussianness of $p_{\ell+1}$, Corollary~\ref{cor:subg-q}, and the upper bounds of $\lineDel_{\ell+1}$ and $w_{\ell+1}^t$ in Lemma~\ref{lmm:Psi-property}.
Consequently, 
\[
\|\lineDel_{\ell}(\btheta_{\ell}; \Psi_1,t)-\lineDel_{\ell}(\btheta_{\ell}; \Psi_2,t)\|_\infty
\le \tilde\bC_{\ell}(1+\|\btheta_\ell\|_\infty) d_t,
\]
for a constant $\tilde\bC_{\ell}$.

Now we turn to the forward steps and upper bound $\| \lineGrad^{\btheta}_{\ell}( \btheta_\ell; \Psi_1,t)-\lineGrad^{\btheta}_{\ell}( \btheta_\ell; \Psi_2,t)\|_\infty$.
The case $\ell=1$ follows from the boundedness of $\bX$.   Next we prove for $\ell\ge 2$. The following analysis uses the notations for $\ell\ge 3$, and for the $\ell=2$ case $\btheta_1$ should be replaced by $\bw_1$. For each layer $\ell\in [3:L]$, we consider the two terms in \eqref{grad12} separately. 
For the first term, since $h'$ is bounded and Lipschitiz, we apply the upper bound of $\lineGrad^{\btheta}_{\ell-1}$ in Lemma~\ref{lmm:Psi-property} and obtain that 
\begin{equation}
\left\|\th'\left(\btheta_{1,\ell-1}^t\right)\cdot \lineGrad^{\btheta}_{\ell-1}(\btheta_{\ell-1}; \Psi_1,t)  -\th'\left(\btheta_{2,\ell-1}^t\right)\cdot \lineGrad^{\btheta}_{\ell-1}(\btheta_{\ell-1}; \Psi_2,t)    \right\|_{\infty}
\le C' (\| \btheta_{\ell-1}\|_{\infty}+1)^2 d_t,\label{tu2}
\end{equation}
which further implies that 
\begin{eqnarray}
 && \int \|w_{1,\ell}^t(\btheta_{\ell-1}, \btheta_{\ell})\th'\left(\btheta_{1,\ell-1}^t\right)\cdot \lineGrad^{\btheta}_{\ell-1}(\btheta_{\ell-1}^t; \Psi_1,t)  \notag\\
 &&\quad\quad\quad\quad\quad\quad\quad\quad\quad\quad\quad
 -w_{2,\ell}^t(\btheta_{\ell-1}, \btheta_{\ell})\th'\left(\btheta_{2,\ell-1}^t\right)\cdot \lineGrad^{\btheta}_{\ell-1}(\btheta_{\ell-1}; \Psi_2,t)\|_{\infty}  dp_{\ell-1}(\btheta_{\ell-1}) \notag\\
 &\leq&  \int C' (\| \btheta_{\ell-1}\|_{\infty}+1)^2 d_t \left|w_{1,\ell}^t(\btheta_{\ell-1}, \btheta_{\ell}) \right|dp_{\ell-1}(\btheta_{\ell-1})\notag\\
 &&\quad\quad\quad\quad+\int 
 \underbrace{\left\| \th'\left(\btheta_{2,\ell-1}^t\right)\cdot \lineGrad^{\btheta}_{\ell-1}(\btheta_{\ell-1}; \Psi_2,t) \right\|_{\infty}}_{\leq C' (\|\btheta_{\ell-1}\|_{\infty}+1) }\underbrace{\left| w_{1,\ell}^t(\btheta_{\ell-1}, \btheta_{\ell}) -w_{2,\ell}^t(\btheta_{\ell-1}, \btheta_{\ell}) \right|}_{\leq (\|\btheta_{\ell-1}\|_{\infty}+\|\btheta_{\ell}\|_{\infty}+1)d_t } dp_{\ell-1}(\btheta_{\ell-1})\notag\\
 &\leq& C'' (\|\btheta_{\ell} \|_{\infty}+1)d_t, \notag
\end{eqnarray}
for a constant $C''$, where in the last step we used the sub-gaussianness of $p_{\ell-1}$, Corollary~\ref{cor:subg-q}, and the upper bound of $\w^t_{\ell}$ in Lemma~\ref{lmm:Psi-property}.
For the second term of \eqref{grad12}, we apply the upper bound of $\lineGrad^{\bw}_{\ell-1}$ in Lemma~\ref{lmm:Psi-property} and obtain that
\begin{eqnarray}
&& \int \|\lineGrad^{\bw}_{\ell-1}(\btheta_{\ell-1}, \btheta_{\ell};\Psi_1,t)~ \th\left(\btheta_{1,\ell-1}^t\right)  -  \lineGrad^{\bw}_{\ell-1}(\btheta_{\ell-1}, \btheta_{\ell};\Psi_2,t)~ \th\left(\btheta_{2,\ell-1}^t\right)\|_{\infty} dp_{\ell-1}(\btheta_{\ell-1}) \notag\\
&\leq& \tilde C'\int (\| \btheta_{\ell-1}\|_{\infty}+\|\btheta_{\ell} \|_{\infty}+1) + (\| \btheta_{\ell-1}\|_{\infty}+1)d_t dp_{\ell-1}(\btheta_{\ell-1})\notag\\
&\leq&\tilde C'' (\|\btheta_{\ell} \|_{\infty}+1)d_t. \notag
\end{eqnarray}
We conclude that
\[
\| \lineGrad^{\btheta}_{\ell}( \btheta_\ell; \Psi_1,t)-\lineGrad^{\btheta}_{\ell}( \btheta_\ell; \Psi_2,t)\|_\infty
\le \tilde \bC_\ell'(\|\btheta_{\ell} \|_{\infty}+1)d_t,
\]
for a constant $\tilde \bC_\ell'$. %For $\ell=2$, the upper bound can be obtained by replacing $\btheta_1$ by $\bw_1$.

\end{proof}

\begin{lemma}
\label{lmm:psi-complete}
$\mathbf{\Psi}$ is complete under $\Dis^{[0,T]}$.    
\end{lemma}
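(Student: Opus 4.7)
The plan is the standard completeness argument for function spaces with a weighted metric: take an arbitrary Cauchy sequence $\{\Psi^{(n)}\}\subset \mathbf{\Psi}$, construct a candidate limit $\Psi^*$ via pointwise convergence in each indexing argument, then verify both that $\Psi^*\in\mathbf{\Psi}$ and that $\Psi^{(n)}\to\Psi^*$ under $\Dis^{[0,T]}$. Because $\mathbf{\Psi}=\mathbf{\Psi}^{(\bC,\bC')}$ is defined purely by $t$-Lipschitz regularity (Definition \ref{special Function 2}), and $\Dis^{[0,T]}$ is built from weighted sup-norms, there are essentially no deep estimates to carry out; the proof is a bookkeeping exercise about limits and suprema.

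First I would fix one layer $\ell$ and one individual argument (say $\bu_\ell$ for the $\ell$-th weight trajectory, with the analogous reasoning for the feature trajectory at $\btheta_\ell$). The Cauchy condition gives
\[
\sup_{t\in[0,T]}\|\Psi^{(n)}_\ell(\bu_\ell)(t)-\Psi^{(m)}_\ell(\bu_\ell)(t)\|_{\infty}
\le (1+\|\bu_\ell\|_\infty)\,\Dis^{[0,T]}(\Psi^{(n)},\Psi^{(m)})\xrightarrow{n,m\to\infty}0,
\]
so $\{\Psi^{(n)}_\ell(\bu_\ell)(\cdot)\}_n$ is Cauchy in the Banach space of bounded maps $[0,T]\to\RR^{*}$ under the sup-in-$t$ norm, hence converges uniformly in $t$ to a continuous limit which I define to be $\Psi^*_\ell(\bu_\ell)(\cdot)$. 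Doing this for every $\ell$ and every argument assembles the candidate trajectory $\Psi^*$.

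Next I would confirm $\Psi^*\in\mathbf{\Psi}$ by checking the $t$-Lipschitz constants of Definition~\ref{special Function 2}. Since each $\Psi^{(n)}_\ell(\bu_\ell)(\cdot)$ is $\bC_\ell$-Lipschitz in $t$ and pointwise convergence preserves Lipschitz constants,
\[
\|\Psi^*_\ell(\bu_\ell)(s)-\Psi^*_\ell(\bu_\ell)(t)\|_\infty
=\lim_n \|\Psi^{(n)}_\ell(\bu_\ell)(s)-\Psi^{(n)}_\ell(\bu_\ell)(t)\|_\infty
\le \bC_\ell|s-t|,
\]
and the analogous inequality with constant $\bC_\ell'(1+\|\btheta_\ell\|_\infty)$ holds for the feature components. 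For convergence in the metric, given $\ep>0$ I pick $N$ with $\Dis^{[0,T]}(\Psi^{(n)},\Psi^{(m)})<\ep$ for $n,m\ge N$, fix $n\ge N$ and any $\ell,\bu_\ell,t$, and let $m\to\infty$ in
\[
\frac{\|\Psi^{(n)}_\ell(\bu_\ell)(t)-\Psi^{(m)}_\ell(\bu_\ell)(t)\|_\infty}{1+\|\bu_\ell\|_\infty}\le\ep;
\]
since the left-hand side tends to $\|\Psi^{(n)}_\ell(\bu_\ell)(t)-\Psi^*_\ell(\bu_\ell)(t)\|_\infty/(1+\|\bu_\ell\|_\infty)$, taking the supremum over $t,\bu_\ell,\ell$ gives $\Dis^{[0,T]}(\Psi^{(n)},\Psi^*)\le\ep$.

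The only minor subtlety—and hence the one place to be explicit—is the interchange of the limit in $m$ with the supremum over the index $\bu_\ell$ in the definition of $\Dis^{[0,T]}$. This is benign because the Cauchy inequality already contains the supremum over $\bu_\ell$, so passing to the $m$-limit inside a fixed $\bu_\ell$ and then re-taking the supremum gives an upper bound by $\ep$ rather than requiring uniform convergence in $\bu_\ell$. The normalization factor $1+\|\bu_\ell\|_\infty$ in $\Dis^{[0,T]}$ is essential here: it is what makes the per-argument Cauchy estimate finite and therefore drives the entire construction, so no further obstacle arises.
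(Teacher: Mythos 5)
Your proof is correct and takes essentially the same approach as the paper: construct the candidate limit by fixing each indexing argument and using uniform convergence in $t$, verify $\Psi^*\in\mathbf{\Psi}$ because Lipschitz continuity is preserved under pointwise convergence, and conclude $\Dis^{[0,T]}(\Psi^{(n)},\Psi^*)\to 0$ by letting $m\to\infty$ in the Cauchy bound before re-taking the supremum. Your version is somewhat more explicit about the interchange of limit and supremum, but the argument is the same.
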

\begin{proof}
Let $\{\Psi_n:n\ge 0\}$ be a Cauchy sequence under $\Dis^{[0,T]}$. 
Then $\frac{\Psi_{n,\ell}^\bw(\bu_\ell)(t)}{1+\|\bu_\ell\|_\infty}$ and $\frac{\Psi_{n,\ell}^{\btheta}(\btheta_\ell)(t)}{1+\|\btheta_\ell\|_\infty}$ converge uniformly under the $\ell_\infty$-norm. 
Let 
\begin{align*}
& \Psi_{*,\ell}^\bw(\bu_\ell)(t)=\lim_{n\to\infty}\Psi_{n,\ell}^\bw(\bu_\ell)(t),\\
& \Psi_{*,\ell}^{\btheta}(\btheta_\ell)(t)=\lim_{n\to\infty}\Psi_{n,\ell}^{\btheta}(\btheta_\ell)(t).
\end{align*}
Then $\Psi_*$ is a limit point of $\{\Psi_n:n\ge 0\}$ under $\Dis^{[0,T]}$.
Since the Lipschitz continuity is preserved under the pointwise convergence, we have $\Psi_*\in\mathbf{\Psi}$.
\end{proof}

% \subsection{Proof of lemmas}
Next we prove lemmas for Theorem~\ref{theorem:conpsi}.
Analogous to the notation of $\bu_\ell$, for the convenience of presenting continuity of $\Psi^\bw_{\ell}$, we introduce notations $\bbfu_\ell$ and $\bbfu'_\ell$ by letting
\[
\bbfu_\ell=
\begin{cases}
\bbw_1,\\
(\bbw_1,\btheta_2),\\
(\bbtheta_{\ell-1},\btheta_{\ell}),\\
\bbtheta_L
\end{cases}
\quad 
\bbfu_\ell'=
\begin{cases}
\bbw_1,&\ell=1,\\
(\bw_1,\bbtheta_2),&\ell=2,\\
(\btheta_{\ell-1},\bbtheta_{\ell}),&\ell\in[3:L],\\
\bbtheta_L&\ell=L+1.
\end{cases}
\]
We also abbreviate the notations for the individual trajectories as:
$$
 \Psi_\ell^{\bw}(\bu_\ell)(t)=w_\ell^t(\bu_\ell),\quad   \Psi_\ell^{\bw}(\bbfu_\ell)(t)=w_\ell^t(\bbfu_\ell),\quad   \Psi_\ell^{\bw}(\bbfu_\ell')(t)=w_\ell^t(\bbfu_\ell'),\quad \ell\in[L+1],
$$
and 
$$  \Psi_\ell^{\btheta}(\btheta_\ell)(t)=\btheta_\ell^t,\quad\quad\quad ~ \Psi_\ell^{\btheta}(\bbtheta_\ell)(t)=\bbtheta_\ell^t,\quad \quad\quad~ \ell\in[2:L]. $$

\begin{proof}[Proof of Lemma~\ref{const Lstar}]
We first investigate the set $F(\bfPsi\cap \bfPsi_{\beta})$ for a general $\beta$.
We follow similar steps as the proof of Lemma~\ref{lmm:contraction} by inductively showing upper bound for the differences between $\lineDel_{\ell}$ and $\lineGrad^{\bw}_{\ell}$ from $\ell=L+1$ to $1$ using backward equations, and then for the differences between $\lineGrad^{\btheta}_{\ell}$ from $\ell=1$ to $L$ using forward equations.
Specifically, we will prove that (cf. Definition~\ref{def:local-lip}) there exists a constant $C$ independent of $\beta$ such that for any $\Psi\in\bfPsi\cap \bfPsi_{\beta}$,
\begin{align*}
&\|\lineDel_{\ell}(\btheta_\ell;\Psi,t) -\lineDel_{\ell}(\bbtheta_\ell; \Psi,t)\|_\infty
\le C e^{\beta t}(1+\|\btheta_\ell\|_\infty)\| \btheta_\ell - \bbtheta_\ell\|_{\infty} ,& \ell\in[L],\\
&\|\lineGrad^{\bw}_{\ell}(\bu_\ell;\Psi,t) - \lineGrad^{\bw}_{\ell}(\bbfu_{\ell};\Psi,t) \|_\infty
\le C e^{\beta t}(1+\|\bu_\ell\|_\infty)\|\bu_{\ell}-\bbfu_{\ell}\|_\infty,& \ell\in[L+1],\\
&\|\lineGrad^{\bw}_{\ell}(\bu_\ell;\Psi,t) - \lineGrad^{\bw}_{\ell}(\bbfu'_{\ell};\Psi,t) \|_\infty
\le C e^{\beta t}(1+\|\bu_\ell\|_\infty)\|\bu_{\ell}-\bbfu'_{\ell}\|_\infty,& \ell\in[L+1],\\
&\| \lineGrad^{\btheta}_{\ell}( \btheta_\ell; \Psi,t)-\lineGrad^{\btheta}_{\ell}( \bbtheta_\ell; \Psi,t)\|_\infty
\le C e^{\beta t}(1+\|\btheta_\ell\|_\infty)\|\btheta_\ell-\bbtheta_\ell\|_\infty,& \ell\in[2:L].
\end{align*}

We first consider the backward steps. 
Again we focus on the difference between $\lineDel_{\ell}$. 
Then the upper bound for the difference between $\lineGrad^{\bw}_{\ell}$ follows immediately. 
In particular, for the top layer $\ell=L+1$, since $h$ is Lipschitz continuous and $\Psi\in\bfPsi_\beta$, applying \eqref{local lip-tl} and the formula of $\lineGrad^{\bw}_{L+1}$ in \eqref{linegrad wL+1} yields that
$$ \left|\lineGrad^{\bw}_{L+1}(\btheta_{L};\Psi,t) - \lineGrad^{\bw}_{L+1}(\bbtheta_{L};\Psi,t) \right|\leq  Ce^{\beta t}
(\|\btheta_L\|_{\infty}+1)\| \btheta_L - \bbtheta_L\|_{\infty}. $$
Other layers can be analogously obtained.
At layer $\ell=L$, 
recall that 
$$\lineDel_{L}(\btheta_{L}; \Psi,t) 
=    w_{L+1}^t(\btheta_L)~ \lineDel_{L+1}(\Psi,t) \cdot    \th'\left(\btheta_{L}^t\right).$$
Since $\Psi\in\bfPsi_\beta$, we have an upper bound for $|w_{L+1}^t(\btheta_L) - w_{L+1}^t(\bbtheta_L)  |$ from \eqref{local lip-wL+1}.
Applying the Lipschitz continuity of $h'$ and \eqref{local lip-tl} yields that
$$ \left\|\th'\left(\btheta_{L}^t\right) -\th'\left(\bbtheta_{L}^t\right)\right\|_{\infty}
\leq C e^{\beta t}( \|\btheta_{L}\|_{\infty}+1  ) \|\btheta_{L} - \bbtheta_{L}\|_{\infty}.  $$
Since $h'$ is bounded, we apply the upper bound of $w_{L+1}^t$ and $\lineDel_{L+1}$ in Lemma~\ref{lmm:Psi-property} and obtain that
\begin{eqnarray}\label{qq+1}
\left\|\lineDel_{L}(\btheta_L;\Psi,t) - \lineDel_{L}(\bbtheta_L;\Psi,t) \right\|_{\infty}\leq C e^{\beta t}( \|\btheta_{L}\|_{\infty}+1  ) \|\btheta_{L} - \bbtheta_{L}\|_{\infty}.
\end{eqnarray}
For each layer $\ell=L-1,\dots,1$, we have
\begin{eqnarray}
 &&\int \underbrace{\left| w_{\ell+1}^t(\btheta_{\ell},\btheta_{\ell+1}) - w_{\ell+1}^t(\bbtheta_{\ell},\btheta_{\ell+1})\right|}_{\leq  C e^{\beta t}(\|\btheta_{\ell} \|_{\infty}+\|\btheta_{\ell+1} \|_{\infty}+1 )\|\btheta_{\ell} -\bbtheta_{\ell}  \|_{\infty} } \left\| \lineDel_{\ell+1}(\btheta_{\ell+1}; \Psi,t) \right\|_{\infty}dp_{\ell+1}(\btheta_{\ell+1}) \notag\\
 &\leq& Ce^{\beta t}(\| \btheta_{\ell}\|_{\infty} +1 ) \|\btheta_{\ell} -\bbtheta_{\ell}  \|_{\infty},\label{cqq1}
\end{eqnarray}
where in the last step we used the sub-gaussianness of $p_{\ell+1}$, Corollary~\ref{cor:subg-q}, and upper bound of $\lineDel_{\ell+1}$ in Lemma~\ref{lmm:Psi-property}.
Then, by the upper bound in \eqref{Del bound}, boundedness and Lipschitz continuity of $h'$, we obtain that
\begin{eqnarray}\label{dellater}
\left\|\lineDel_{\ell}(\btheta_\ell;\Psi,t) -\lineDel_{\ell}(\bbtheta_\ell;\Psi,t) \right\|_{\infty}\leq C e^{\beta t} ( \|\btheta_{\ell}\|_{\infty}+1  ) \|\btheta_{\ell} - \bbtheta_{\ell}\|_{\infty}. 
\end{eqnarray}

Now we turn to the forward steps.
For $\ell=1$, the boundedness of $\bX$ yields that
$$   \left\|\lineGrad^{\bw}_{1}(\bbw_1;\Psi,t) - \lineGrad^{\bw}_{1}(\bbw_1;\Psi,t) \right\|_{\infty}\leq 
C e^{\beta t} (\|\bw_{1}\|_{\infty}+1)\| \bw_1 - \bbw_1\|_{\infty}.  $$
For $\ell\in[2:L]$, we consider the two terms in \eqref{grad12} separately. 
For the first term, we have 
\begin{eqnarray}\label{cqq13}
 &&\int \underbrace{\left\| \th'(\btheta_{\ell-1}) \cdot \lineGrad^{\btheta}_{\ell-1}(\btheta_{\ell-1}; \Psi,t) \right\|}_{\leq C (\|\btheta_{\ell-1}\|_{\infty}+1) }
 \underbrace{\left| w_{\ell}^t(\btheta_{\ell-1}, \btheta_{\ell}) -w_{\ell}^t(\btheta_{\ell-1}, \bbtheta_{\ell}) \right|}_{\leq C e^{\beta t}(\|\btheta_{\ell-1}\|_{\infty}+\|\btheta_{\ell}\|_{\infty}+1)\|\btheta_{\ell}-\bbtheta_{\ell} \|_{\infty}  } dp_{\ell-1}(\btheta_{\ell-1})\notag\\
 &\leq& C e^{\beta t}(\|\btheta_{\ell} \|_{\infty}+1)  \|\btheta_{\ell}-\bbtheta_{\ell} \|_{\infty},\notag
\end{eqnarray}
by the sub-gaussianness of $p_{\ell-1}$ and Corollary~\ref{cor:subg-q}.
Similarly, for the second term, applying the boundedness of $h$ yields that
\begin{eqnarray}
&& \int \left\|\lineGrad^{\bw}_{\ell-1}(\btheta_{\ell-1}, \btheta_{\ell};\Psi,t)  -  \lineGrad^{\bw}_{\ell-1}(\btheta_{\ell-1}, \bbtheta_{\ell};\Psi,t)\right\|_{\infty} \|\th(\btheta_{\ell-1}^t)\|_{\infty} dp_{\ell-1}(\btheta_{\ell-1}) \notag\\
&\leq& \int C e^{\beta t} (\| \btheta_{\ell-1}\|_{\infty}+\|\btheta_{\ell} \|_{\infty}+1) \|\btheta_{\ell}-\bbtheta_{\ell} \|_{\infty} dp_{\ell-1}(\btheta_{\ell-1})\notag\\
&\leq& C e^{\beta t}(\|\btheta_{\ell}\|_{\infty}+1)\| \btheta_{\ell} -\bbtheta_{\ell}\|_{\infty}.   \notag
\end{eqnarray}
Therefore, we obtain that
$$   \left\|\lineGrad^{\btheta}_{\ell}(\btheta_{\ell};\Psi,t) - \lineGrad^{\btheta}_{\ell}(\bbtheta_{\ell};\Psi,t) \right\|_\infty \leq C e^{\beta t}(\|\btheta_{\ell}\|_{\infty}+1)\| \btheta_{\ell} -\bbtheta_{\ell}\|_{\infty}. $$

Finally, let $\beta_*=C$. 
It remains to verify that $F(\Psi)\in \bfPsi_{\beta_*}$ for any $\Psi\in \bfPsi\cap \bfPsi_{\beta_*}$, that is, to verify the conditions in Definition~\ref{def:local-lip}. 
For $F(\Psi)_1^{\bw}$, we have 
\begin{eqnarray}
 &&\left\|F(\Psi)^\bw_{1}(\bw_1)(t)-F(\Psi)^\bw_{1}(\bbw_1)(t) \right\|_{\infty}\notag\\
 &\leq&  \left\| \bw_1 -\bbw_1 \right\|_{\infty}+   \int_{0}^t\left\| \lineGrad^{\bw}_{1}(\bw_1;\Psi,s)- \lineGrad^{\bw}_{1}(\bbw_1;\Psi,s)\right\|_{\infty}ds \notag\\
 &\leq&    \left\| \bw_1 -\bbw_1 \right\|_{\infty} +\int_{0}^t   C (\|\bw_1 \|_{\infty}+1) e^{\beta_* s} \left\| \bw_1 -\bbw_1 \right\|_{\infty}  ds\notag\\
 &\leq& e^{\beta_* s}(\|\bw_1  \|_{\infty}+1)  \|\bw_1 -\bbw_1 \|_{\infty}.\label{ftheo4}
\end{eqnarray}
The verification of other cases are entirely analogous and is omitted. 
\end{proof}

\begin{lemma}
\label{psi closed}
$\bfPsi\cap\bfPsi_{\beta}$ is a closed set.
\end{lemma}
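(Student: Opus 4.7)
The plan is to argue that closedness follows from two observations: first, Lemma~\ref{lmm:psi-complete} already shows that $\bfPsi$ is closed (indeed complete) under $\Dis^{[0,T]}$, so it suffices to show that $\bfPsi_\beta$ is closed; second, every defining inequality in Definition~\ref{def:local-lip} is a pointwise inequality in $(\bu_\ell,\bbfu_\ell,t)$ (or $(\btheta_\ell,\bbtheta_\ell,t)$), and such inequalities pass to pointwise limits. I would therefore only need to verify that convergence under the metric $\Dis^{[0,T]}$ implies pointwise convergence of every individual trajectory.

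Concretely, let $\{\Psi_n\}\subseteq\bfPsi\cap\bfPsi_\beta$ be a Cauchy sequence with limit $\Psi_*$ under $\Dis^{[0,T]}$. By the definition of $\Dis^{[0,T]}$ in Definition~\ref{def:metric}, for each fixed $\ell$ and each fixed $\bu_\ell$ (resp.\ $\btheta_\ell$) and $t\in[0,T]$, the normalizing factor $1+\|\bu_\ell\|_\infty$ (resp.\ $1+\|\btheta_\ell\|_\infty$) is a finite constant, so convergence in $\Dis^{[0,T]}$ yields
\[
\Psi_{n,\ell}^{\bw}(\bu_\ell)(t)\to \Psi_{*,\ell}^{\bw}(\bu_\ell)(t),\qquad
\Psi_{n,\ell}^{\btheta}(\btheta_\ell)(t)\to \Psi_{*,\ell}^{\btheta}(\btheta_\ell)(t),
\]
in $\ell_\infty$-norm. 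Since $\bfPsi$ is closed by Lemma~\ref{lmm:psi-complete}, we have $\Psi_*\in\bfPsi$.

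It remains to verify $\Psi_*\in\bfPsi_\beta$, i.e., each of \eqref{local lip-w1}--\eqref{local lip-tl}. Take, for instance, \eqref{local lip-tl}: for every $n$, every $\ell\in[2:L]$, every $\btheta_\ell$, every $\bbtheta_\ell\in\mathcal{B}_\infty(\btheta_\ell,1)$, and every $t\in[0,T]$,
\[
\left\|\Psi^{\btheta}_{n,\ell}(\btheta_\ell)(t)-\Psi^{\btheta}_{n,\ell}(\bbtheta_\ell)(t)\right\|_\infty
\leq e^{\beta t}(\|\btheta_\ell\|_\infty+1)\|\btheta_\ell-\bbtheta_\ell\|_\infty.
\]
Letting $n\to\infty$ on the left-hand side and invoking the pointwise convergence established above yields the same inequality for $\Psi_*$. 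The verifications of \eqref{local lip-w1}--\eqref{local lip-wl2} are identical, using pointwise convergence of $\Psi^{\bw}_{n,\ell}$ on each of the two relevant arguments. Hence $\Psi_*\in\bfPsi_\beta$, so $\bfPsi\cap\bfPsi_\beta$ is closed.

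I do not anticipate any real obstacle here: the entire argument is a standard ``pointwise inequalities survive pointwise limits'' observation, and the only subtlety is making sure the metric $\Dis^{[0,T]}$ really does dominate pointwise convergence at every fixed $(\bu_\ell,t)$ or $(\btheta_\ell,t)$, which is immediate once one notes that the normalization $1+\|\cdot\|_\infty$ is finite at each point.
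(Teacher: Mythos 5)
Your argument is correct and is essentially the paper's own proof: invoke Lemma~\ref{lmm:psi-complete} to get $\Psi_*\in\bfPsi$, then pass the pointwise $\beta$-locally Lipschitz inequalities of Definition~\ref{def:local-lip} to the pointwise limit (with the helpful extra observation, which the paper leaves implicit, that $\Dis^{[0,T]}$-convergence implies pointwise convergence because the normalizing factor $1+\|\cdot\|_\infty$ is finite at each fixed argument).
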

\begin{proof}
Given a convergent sequence $\{\Psi_n:n\ge 0\}\subseteq \bfPsi\cap\bfPsi_{\beta}$, it follows  from Lemma~\ref{lmm:psi-complete} that the limit point $\Psi_*\in \bfPsi$.
Since Lipschitz property is preserved under pointwise convergence, we also have $\Psi_*\in \bfPsi_{\beta}$.
\end{proof}
\section{Proofs of Theorems \ref{theorm:app} -- \ref{theo:con}}
\subsection{Proof of Theorem \ref{theorm:app}}
In the proof, we fix $\Psi_*$ and the initialization $\{\bbw_{1,i}\}_{i\in[m]}$, $\{\bbtheta_{\ell,i}\}_{\ell\in[2:L],i\in[m]}$ of the ideal process.
Similar to the notation $\bu_\ell$ in the proof of Theorem~\ref{theorm:flow p0}, we introduce the notations $\bbfu_{\ell,i,j}$ that stands for $\bbw_{1,j}$, $(\bbw_{1,i},\bbtheta_{2,j})$, $(\bbtheta_{\ell-1,i},\bbtheta_{\ell,j})$, $\bbtheta_{L,i}$ for $\ell=1$, $\ell=2$, $3\le\ell\le L$, $\ell=L+1$, respectively. 
% $\bu_\ell$ stands for $\bw_1$, $(\bw_1,\btheta_2)$, $(\btheta_{\ell-1},\btheta_{\ell})$, $\btheta_L$ for $\ell=1$, $\ell=2$, $3\le\ell\le L$, $\ell=L+1$, respectively. 
We also abbreviate the gradients of the ideal process as 
\[
\lineDel_{\ell,i}^{t}:=\lineDel_{\ell}(\linetheta_{\ell,i}, \Psi_*, t),
\qquad \lineGrad^{t}_{\ell,i,j}:= \lineGrad_{\ell}^{\bw}(\bbfu_{\ell,i,j};  \Psi_*, t).
\]
We use a common notation $\linew_{\ell,i,j}^t$ to the weights at layer $\ell$; for $\ell=1$ let $\linew_{1,i,j}^t=\bbw_{1,j}^t$.
% ; for $\ell=L+1$ let $\linew_{L+1,i,j}^t=w_{L+1,i}^t$ 
To compare the discrete and continuous trajectories on the same time scale, we normalize discrete gradients by 
\[
\NhDel^k_{\ell,i} := m_{\ell}~\hDel^k_{\ell,i},
\qquad \NhGrad^{k}_{\ell,i,j}  := [m_{\ell-1}m_{\ell}]~\hGrad^k_{\ell,i,j}.
\]
When $m$ is finite, the forward and backward propagation for the ideal process is no long exact. Nevertheless, for sufficiently large $m$, those propagations relations approximately holds by the following events that happen with high probability:
% Throughout the proof, we condition on the following high probability events on the ideal process:
\begin{align}
& \left\| \frac{1}{m}\sum_{i=1}^m \left[\th\left(\bbtheta_{\ell-1,i}^{k\eta}\right) \linew_{\ell,i,j}^{k\eta}\right] - \linetheta_{\ell,j}^{k\eta}\right\|_{\infty} 
\leq C(\|\linetheta_{\ell,j}\|_\infty+1)\ep_1,\qquad \ell\in[2:L], ~j\in[m], \label{event forward}\\
& \left\| \frac{1}{m}\sum_{j=1}^m \left[\linew_{\ell+1,i,j}^{k\eta} \lineDel_{\ell+1,j}^{k\eta}  \right]\cdot \th'\left(\linetheta_{\ell,i}^{k\eta}\right)  - \lineDel_{\ell,i}^{k\eta}\right\|_{\infty} 
\leq \ep_1,\quad \ell\in[L-1],~i\in[m],\label{event backward}\\
& \max_{i}\left\|\bbw_{1,i} \right\|_{\infty} \leq C\sqrt{\log\frac{m}{\delta}},
% \sigma_2 \sqrt{\log(4em L \delta^{-1})}, 
\qquad \max_{i}\left\|\bbtheta_{\ell,i} \right\|_{\infty} \leq C\sqrt{\log\frac{m}{\delta}},
% \sigma_2 \sqrt{\log(4em L \delta^{-1})} , \quad \ell\in[2:L],~ i\in[m],
\label{event max}\\
& \frac{1}{m}\sum_{i=1}^m \left\| \bbw_{1,i}\right\|_{\infty}^j \leq C,  
\quad  
\frac{1}{m}\sum_{i=1}^m \| \linetheta_{\ell,i}\|_{\infty}^j\leq C,  
\quad j\in[2],~\ell\in[2:L],\label{event moment}
\end{align}
for a constant $C$.
In the proofs of this section, we condition on those events. 
\begin{lemma}
\label{lmm:high-prob}
% If $m\ge \tilde{\Omega}(\frac{1}{\ep_1^2}\log\frac{1}{\delta})$, then 
The events \eqref{event forward} -- \eqref{event moment} happen with probability $1-\delta$.
\end{lemma}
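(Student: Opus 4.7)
The plan is to verify each of the four events via a concentration argument combined with a union bound over all indices and all $K+1$ time steps, where $K=T/\eta=\tO(\ep_1^{-1})$.

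Events \eqref{event max} and \eqref{event moment} concern only the initialization samples $\{\bbw_{1,i}\}$ and $\{\bbtheta_{\ell,i}\}$, which are i.i.d.~$\sigma$-sub-gaussian by Assumption~\ref{ass:4}. The sup-norm bound \eqref{event max} follows from a standard sub-gaussian tail estimate at scale $\sigma\sqrt{\log(NmL/\delta)}$ union-bounded over the $\cO(mL)$ samples and the $N$ coordinates. For the moment bound \eqref{event moment}, since the squares of sub-gaussian quantities are sub-exponential with bounded expectation (Corollary~\ref{cor:subg-q}), Bernstein's inequality gives concentration of each empirical moment to a bounded constant with probability $1-\delta/\mathrm{poly}(m,L)$.

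For the forward consistency \eqref{event forward}, fix $\ell\in[2:L]$, $j\in[m]$, $k\in[0:K]$, and coordinate $n\in[N]$. Conditional on $\bbtheta_{\ell,j}$, the summands $\{h(\bbtheta_{\ell-1,i}^{k\eta})\linew_{\ell,i,j}^{k\eta}(n)\}_{i=1}^m$ are i.i.d.~across $i$, since they are deterministic functions (via the fixed trajectory $\Psi_*$) of the independent samples $\bbtheta_{\ell-1,i}\sim p_{\ell-1}$. Their conditional expectation equals $\linetheta_{\ell,j}^{k\eta}(n)$ exactly, because the forward constraint in \eqref{forward l} is preserved at all times along the neural feature flow (the drift $\lineGrad^{\btheta}_{\ell}$ in Definition~\ref{NFLp0} was derived via the chain rule precisely to preserve it). By \eqref{ass41}--\eqref{ass42} and Lemma~\ref{lmm:Psi-property}, each summand is bounded in magnitude by $L_1\cdot C(\|\bbtheta_{\ell,j}\|_\infty+1)$ uniformly in $t\in[0,T]$. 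Hoeffding's inequality then yields deviation at most $C(\|\bbtheta_{\ell,j}\|_\infty+1)\ep_1$ with probability $1-2e^{-cm\ep_1^2}$, and a union bound over $(\ell,j,k,n)$ costs only a polylogarithmic factor absorbed into $m\ge\tilde\Omega(\ep_1^{-2})$.

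The backward consistency \eqref{event backward} follows by an entirely parallel argument with the roles of $i$ and $j$ swapped: conditional on $\bbtheta_{\ell,i}$, the summands $\{\linew_{\ell+1,i,j}^{k\eta}\lineDel_{\ell+1,j}^{k\eta}\}_j$ are i.i.d.~across $j$ and have expectation reproducing the integral in \eqref{def linedel1}, so that after multiplication by $\th'(\linetheta_{\ell,i}^{k\eta})$ one recovers $\lineDel_{\ell,i}^{k\eta}$. Here the summands are sub-gaussian in $\bbtheta_{\ell+1,j}$ (rather than bounded) because $|\linew_{\ell+1,i,j}^{k\eta}|$ grows linearly in $\|\bbtheta_{\ell+1,j}\|_\infty$ while $\|\lineDel_{\ell+1,j}^{k\eta}\|_\infty$ is uniformly bounded by Lemma~\ref{lmm:Psi-property}. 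A sub-gaussian Hoeffding bound yields the same exponential concentration rate at the $\ep_1$-scale, and the union bound is identical to the forward case. The main technical obstacle is time-uniformity: the events must hold simultaneously across all $K=\tO(\ep_1^{-1})$ iterations, which costs a $\log K$ factor in the union bound; this is only polylogarithmic in $\ep_1^{-1}$ and absorbed into $\tilde\Omega(\ep_1^{-2})$, but crucially it is effective only because Lemma~\ref{lmm:Psi-property} bounds the per-step concentration constants (in particular $\|\lineDel^t\|_\infty$ and the sub-linear growth of $|\linew^t|$) uniformly in $t\in[0,T]$, independently of the time step.
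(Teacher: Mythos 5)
Your proposal is correct and follows essentially the same route as the paper's own proof: fix $(k,\ell,j,n)$, condition on the relevant feature sample, observe that the summands become i.i.d.~with the right conditional mean via the preserved forward/backward equations of the flow, apply Hoeffding (bounded case, event \eqref{event forward}) or the sub-gaussian concentration inequality (Lemma~\ref{azuma}, event \eqref{event backward}) using the uniform-in-time bounds from Lemma~\ref{lmm:Psi-property}, and union-bound over $\cO(NmL K)$ index tuples, while \eqref{event max} and \eqref{event moment} come from sub-gaussian tail bounds and Bernstein's inequality for the sub-exponential squares. The only cosmetic deviation is that you bound the un-normalized summands by $C(\|\bbtheta_{\ell,j}\|_\infty+1)$ while the paper normalizes $\xi_i$ by $\|\bbtheta_{\ell,j}\|_\infty+1$ before applying Hoeffding; these are equivalent.
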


The proof consists of the deviation of the actual discrete trajectory from the ideal trajectory over the iteration $k\in[0:K]$. 
We will upper bound the deviation by induction on $k$.
For $k=0$, we have the deviation of weights $\|\linew_{\ell,i,j}^{0} - \hw_{\ell,i,j}^{0}\|_\infty$ from the initial conditions in Definition~\ref{DNN condition}. 
The induction proceeds as follows. 
In Lemma~\ref{lmm:induction-k}, we first upper bound the deviation of features using the forward propagation, and then upper bound the deviation of gradients using the backward propagation.
% $\| \linetheta_{\ell,i}^{k\eta} - \hbtheta_{\ell,i}^{k} \|_{\infty}$ using the forward propagation; then we upper bound $|\lineGrad^{k\eta}_{\ell,i,j}-\NhGrad^k_{\ell, i,j}|$ using the backward propagation. 
Note that
\begin{equation}
\label{induction w^k}
\left\| \linew_{\ell,i,j}^{(k+1)\eta} - \hw_{\ell,i,j}^{k+1}   \right\|_\infty
\le \|\linew_{\ell,i,j}^{k\eta} - \hw_{\ell,i,j}^{k} \|_\infty 
    + \int_{k\eta}^{(k+1)\eta} \left\| \lineGrad^{s}_{\ell,i,j} - \NhGrad^k_{\ell, i,j}\right \|_\infty ds.
\end{equation}
Combining with the Lipschitz continuity of $\lineGrad^{t}_{\ell,i,j}$ in Lemma~\ref{lmm:gradw-lip-t}, we complete the inductive step.
% we obtain the upper bound for $|\linew_{\ell,i,j}^{(k+1)\eta} - \hw_{\ell,i,j}^{k+1}|$. 

\begin{lemma}
\label{lmm:induction-k}
Given $k\in[0:K]$ and $\ep<1$. 
Suppose
\begin{equation}
\label{eq:induction weights}
\left\| \linew_{\ell,i,j}^{k\eta} - \hw_{\ell,i,j}^{k}   \right\|_\infty
\le (\| \bbfu_{\ell,i,j}\|_\infty+1) \ep,\qquad \forall~\ell\in[L+1],~i\in[m_{\ell-1}],~j\in [m_\ell].
\end{equation}
Then there exists a constant $C$ such that
\begin{align}
& \left\| \linetheta_{L+1,1}^{k\eta} - \hbtheta_{L+1,1}^{k} \right\|_{\infty} \!\!\!\leq C\left(\ep+\ep_1\right), \label{eq:induction forward last}\\
& \left\| \linetheta_{\ell,i}^{k\eta} - \hbtheta_{\ell,i}^{k} \right\|_{\infty} \!\!\!\leq  C\left(  \left\| \linetheta_{\ell,i} \right\|_{\infty}+1 \right)\left(\ep+\ep_1\right),~ \forall~\ell \in[L],~ i\in[m_\ell],\label{eq:induction forward}\\
& \left\|\lineGrad_{\ell,i,j}^{k\eta} -  \NhGrad_{\ell,i,j}^k  \right\|_\infty 
\!\!\!\leq  C\left(\| \bbfu_{\ell,i,j}\|_{\infty}+1\right)\left(\ep+\ep_1\right),~ \forall~\ell\in[L+1],~i\in[m_{\ell-1}],~j\in [m_\ell]\label{eq:induction backward}.
\end{align}
\end{lemma}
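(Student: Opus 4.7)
My plan is to prove the three bounds by a two-pass induction on the layers — a forward pass to establish the feature deviations \eqref{eq:induction forward}--\eqref{eq:induction forward last} from $\ell=1$ up to $\ell=L+1$, followed by a backward pass to establish the gradient deviations \eqref{eq:induction backward} from $\ell=L+1$ down to $\ell=1$. Throughout, I will condition on the high-probability events \eqref{event forward}--\eqref{event moment} supplied by Lemma~\ref{lmm:high-prob}, and apply Assumptions~\ref{ass:1}--\ref{ass:3} together with the uniform bounds on $\Psi_*$ from Lemma~\ref{lmm:Psi-property} to control each term.

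For the forward pass, the base case $\ell=1$ is immediate because $\linetheta^{k\eta}_{1,j}-\hbtheta^k_{1,j}=\frac{1}{d}\bX(\linebw^{k\eta}_{1,j}-\hbw^k_{1,j})$, so boundedness of $\bX$ combined with \eqref{eq:induction weights} yields the claim. For the inductive step $\ell\in[2:L]$, I compare the exact discrete recursion $\hbtheta^k_{\ell,j}=\frac{1}{m}\sum_i\hw^k_{\ell,i,j}\dot h(\hbtheta^k_{\ell-1,i})$ with the approximate one for the ideal process given by \eqref{event forward}, and split the difference into a weight-discrepancy sum (handled by \eqref{eq:induction weights} and boundedness of $h$) plus a feature-discrepancy sum (handled by the Lipschitz property of $h$ and the previous-layer inductive hypothesis). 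To close both sums into the form $(\|\linetheta_{\ell,j}\|_\infty+1)(\ep+\ep_1)$, I will use two ingredients: the average moment bound \eqref{event moment} to replace $\frac{1}{m}\sum_i(\|\linetheta_{\ell-1,i}\|_\infty+1)$ by a constant, and a uniform upper bound on $|\linew^{k\eta}_{\ell,i,j}|$ obtained from the initial sub-linear growth of $w_\ell$ (Assumption~\ref{ass:3}) together with the time-Lipschitz estimate in Lemma~\ref{lmm:Psi-property}. The final step $\ell=L+1$ is analogous except that Assumption~\ref{ass:3} gives a uniform bound on $|\linew^{k\eta}_{L+1,i,1}|$, killing the $\|\linetheta_{L+1,1}\|$ factor and producing the cleaner bound \eqref{eq:induction forward last}.

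For the backward pass, I will first show an auxiliary claim, $\|\NhDel^k_{\ell,i}-\lineDel^{k\eta}_{\ell,i}\|_\infty\le C(\|\linetheta_{\ell,i}\|_\infty+1)(\ep+\ep_1)$ (and the obvious analog at $\ell=L+1$), by downward induction. The base case uses the Lipschitz property of $\phi_1'$ (Assumption~\ref{ass:2}) combined with \eqref{eq:induction forward last} to control $\|\hDel^k_{L+1,1}-\lineDel_{L+1}(\Psi_*,k\eta)\|_\infty$. The inductive step compares the exact discrete backward recurrence against the near-recurrence \eqref{event backward} for the ideal process; the difference decomposes into (i) a weight-discrepancy term bounded by \eqref{eq:induction weights} and the uniform $\lineDel$ bound from Lemma~\ref{lmm:Psi-property}, (ii) a downstream-gradient-discrepancy term bounded by the inductive hypothesis and the weight bound derived in the forward pass, and (iii) an $h'$-discrepancy term bounded by the Lipschitz property of $h'$ and the forward bound \eqref{eq:induction forward}. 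Averaging over $j$ and invoking \eqref{event moment} gives the claimed bound. Finally, \eqref{eq:induction backward} follows from the $\NhDel$ bound by the explicit formula $\NhGrad^k_{\ell,i,j}=\frac{1}{N}[\NhDel^k_{\ell,j}]^\top \dot h(\hbtheta^k_{\ell-1,i})$ (with $\hbtheta_0$ replaced by $\bX$ for $\ell=1$ and an analogous expression at $\ell=L+1$), using Lipschitz continuity of $h$ and the forward bound once more.

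The main obstacle will be the bookkeeping of constants across the layer inductions: each layer step multiplies the error by a factor involving $\|\linetheta\|_\infty$, and it is essential that the resulting constant remain independent of $m$ and absorbed into $\tO(1)$. The key technical device is to normalize every bound by the factor $(\|\cdot\|_\infty+1)$, so that the moment estimates \eqref{event moment} convert each $\frac{1}{m}\sum_i$ appearing in the recursions into an $\cO(1)$ factor rather than a factor growing with $\log(m/\delta)$. Once this normalization is consistently maintained, the finite depth $L$ ensures that the compounded constant is $\cO(1)$, and the induction closes.
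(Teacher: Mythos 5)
Your proposal follows the same two-pass structure as the paper's proof — a forward pass that bounds the feature deviations layer by layer using the approximate forward-propagation event \eqref{event forward}, the initial weight assumption, and the moment bound \eqref{event moment}, followed by a downward induction on an auxiliary $\|\NhDel^k_{\ell,i}-\lineDel^{k\eta}_{\ell,i}\|_\infty$ claim that uses the approximate backward-propagation event \eqref{event backward} and the same normalization by $(\|\cdot\|_\infty+1)$ before converting to the gradient bound. The decomposition into weight-discrepancy, downstream-gradient-discrepancy, and $h'$-discrepancy terms, and the appeal to the first and second moment estimates in \eqref{event moment} to absorb the $\frac{1}{m}\sum_i$ averages, match the paper's argument; this is essentially the same proof.
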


\begin{lemma}
\label{lmm:gradw-lip-t}
There exists a constant $C$ such that, for all $\ell\in[L+1]$, $t_1,t_2\in[0,T]$, and $\bu_\ell$,
\[
\left\|\lineGrad_{\ell,i,j}^{t_1}-\lineGrad_{\ell,i,j}^{t_2}\right\|_\infty \le C (\|\bbfu_{\ell,i,j}\|_\infty+1)|t_1-t_2|.
\]
% \[
% \left\|\lineGrad_\ell^{\bw}(\bu_\ell;\Psi_*,t_1)-\lineGrad_\ell^{\bw}(\bu_\ell;\Psi_*,t_2)\right\|_\infty \le C (1+\|\bu_\ell\|_\infty)|t_1-t_2|.
% \]
\end{lemma}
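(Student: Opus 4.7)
The plan is to establish the Lipschitz continuity of $\lineGrad^{\bw}_\ell$ in $t$ by decomposing its definition (equations \eqref{linegrad wL+1}--\eqref{linegrad wL+13}) into a product of a $\lineDel$-factor and an $h$-composed trajectory factor, and bounding the Lipschitz constant in $t$ of each factor separately. Since $\Psi_*$ lies in $\mathbf{\Psi}=\mathbf{\Psi}^{(\bC,\bC')}$ by Theorem \ref{theorm:flow p0} and Lemma \ref{lmm:flow-condition}, Definition \ref{special Function 2} immediately yields $|\Psi_{*,\ell}^{\bw}(\bu_\ell)(t_1)-\Psi_{*,\ell}^{\bw}(\bu_\ell)(t_2)|\le \bC_\ell |t_1-t_2|$ and $\|\Psi_{*,\ell}^{\btheta}(\btheta_\ell)(t_1)-\Psi_{*,\ell}^{\btheta}(\btheta_\ell)(t_2)\|_\infty \le \bC'_\ell(1+\|\btheta_\ell\|_\infty)|t_1-t_2|$. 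Combined with the Lipschitzness of $h$ from Assumption \ref{ass:1}, the $h$-composed factor in $\lineGrad^{\bw}_\ell$ is Lipschitz in $t$ with constant of order $(1+\|\btheta_{\ell-1}\|_\infty)$.

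The heart of the proof is to control the Lipschitz constant of $\lineDel_\ell(\btheta_\ell;\Psi_*,t)$ in $t$, which I would do by backward induction from $\ell=L+1$ down to $\ell=1$, paralleling the backward step structure already used in the proof of Lemma \ref{lmm:contraction}. For the base case $\ell=L+1$, the Lipschitzness of $\phi_1'$ (Assumption \ref{ass:2}) reduces the problem to the Lipschitz continuity of $\btheta_{L+1}(\Psi_*,t)$ in $t$; this follows from \eqref{def linedeltl+1} by differentiating inside the integral and using the sub-gaussian moment bounds (Corollary \ref{cor:subg-q}) together with the already-established Lipschitz estimates on $\Psi_{*,L+1}^{\bw}$ and $h\circ \Psi_{*,L}^{\btheta}$, plus the uniform upper bounds from Lemma \ref{lmm:Psi-property}. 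For the inductive step, I would view \eqref{def linedel}--\eqref{def linedel2} as a product of three $t$-dependent pieces and combine: (i) the Lipschitzness of $h'$ with the Lipschitz bound on $\Psi_{*,\ell}^{\btheta}(\btheta_\ell)(t)$, which contributes a factor $(1+\|\btheta_\ell\|_\infty)|t_1-t_2|$; (ii) the inductive hypothesis on $\lineDel_{\ell+1}$ integrated against $p_{\ell+1}$, whose tail is tamed by sub-gaussianness via Corollary \ref{cor:subg-q}; and (iii) the Lipschitz bound on $\Psi_{*,\ell+1}^{\bw}$.

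Once the intermediate bound $\|\lineDel_\ell(\btheta_\ell;\Psi_*,t_1)-\lineDel_\ell(\btheta_\ell;\Psi_*,t_2)\|_\infty\le C(1+\|\btheta_\ell\|_\infty)|t_1-t_2|$ is in place, the conclusion follows by applying the ``product rule'' inequality $\|fg-\tilde f\tilde g\|\le \|f\|\cdot\|g-\tilde g\|+\|\tilde g\|\cdot\|f-\tilde f\|$ to the two factors of $\lineGrad^{\bw}_\ell$, controlling the $\|\cdot\|$-pieces with the uniform bounds from Lemma \ref{lmm:Psi-property}. The linear dependence on $\|\bbfu_{\ell,i,j}\|_\infty$ on the right-hand side then emerges naturally: for $\ell\in[2:L]$ one has $\bbfu_{\ell,i,j}=(\btheta_{\ell-1},\btheta_\ell)$ and each factor contributes one of these arguments, and analogous accounting handles $\ell=1$ (using boundedness of $\bX$) and $\ell=L+1$.

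The main obstacle I anticipate is keeping the bookkeeping tidy across the backward induction so that the $(1+\|\btheta_\ell\|_\infty)$ growth does not amplify into higher polynomial powers when composing estimates layer-by-layer. The $T<\infty$ assumption and the sub-gaussianness of the $\{p_\ell\}$ are what make this controllable: they ensure that the moments that arise when integrating against $p_{\ell+1}$ in each backward step, as well as the growing weight norms $|w_\ell^t|\le |w_\ell^0|+\bC_\ell t$ coming from Lemma \ref{lmm:Psi-property}, remain uniformly bounded in constants depending only on $T$ and $\sigma$.
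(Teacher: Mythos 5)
Your proposal is correct and follows essentially the same route as the paper's proof: reduce to a time-Lipschitz bound on $\lineDel_\ell$ via backward induction from $\ell=L+1$ to $\ell=1$, using the $\bC_\ell$-Lipschitzness of the weight trajectories and the uniform bounds from Lemma~\ref{lmm:Psi-property}, with sub-gaussianness (Corollary~\ref{cor:subg-q}) taming the integrals against $p_{\ell+1}$, and then a product-rule estimate to pass to $\lineGrad^{\bw}_\ell$. The intermediate bound $\|\lineDel_\ell(\btheta_\ell;\Psi_*,t_1)-\lineDel_\ell(\btheta_\ell;\Psi_*,t_2)\|_\infty\le C(1+\|\btheta_\ell\|_\infty)|t_1-t_2|$ you identify is exactly \eqref{lip-t-del} in the paper's proof.
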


\begin{proof}[Proof of Theorem \ref{theorm:app}]
By Lemma~\ref{lmm:high-prob}, the events in \eqref{event forward} -- \eqref{event moment} happen with probability $1-\delta$.
Conditioned on those events, we prove by induction on $k\in[0:K]$ that
\begin{equation}
\label{eq:w-diff}
\left\|\linew_{\ell,i,j}^{k\eta} - \hw_{\ell,i,j}^k  \right\|_\infty 
\leq  \left(\|\bbfu_{\ell,i,j} \|_{\infty}+1\right) e^{  C  k \eta} \ep_1,\quad \forall~\ell\in[L+1],~~i\in[m_{\ell-1}],~j\in [m_\ell],
\end{equation}
for some constant $C$ to be specified. 
The base case $k=0$ follows from Definition~\ref{DNN condition}. 
Suppose that \eqref{eq:w-diff} holds for $k\in[0:K-1]$. 
By Lemmas~\ref{lmm:induction-k} and \ref{lmm:gradw-lip-t}, for $s\in[k\eta, (k+1)\eta]$,  
\[
\left\| \lineGrad^{s}_{\ell,i,j} - \NhGrad^k_{\ell, i,j}\right \|_\infty
\le C'\left(\| \bbfu_{\ell,i,j}\|_{\infty}+1\right)\left(e^{Ck\eta}\ep_1+\ep_1+s-k\eta\right).
\]
Applying \eqref{induction w^k} yields that
\begin{align*}
\left\| \linew_{\ell,i,j}^{(k+1)\eta} - \hw_{\ell,i,j}^{k+1}   \right\|_\infty
&\le \left\|\linew_{\ell,i,j}^{k\eta} - \hw_{\ell,i,j}^{k} \right\|_\infty 
    + \int_{k\eta}^{(k+1)\eta} \left\| \lineGrad^{s}_{\ell,i,j} - \NhGrad^k_{\ell, i,j}\right \|_\infty ds\\
&\le \left(\|\bbfu_{\ell,i,j} \|_{\infty}+1\right) 
\left(e^{  C  k \eta} \ep_1 + 2C'e^{Ck\eta}\ep_1 \eta + C'\frac{\eta^2}{2} \right)\\
&\le \left(\|\bbfu_{\ell,i,j} \|_{\infty}+1\right) e^{  C  k \eta} \ep_1(1+C'' \eta),
\end{align*}
for a constant $C''$. 
By letting $C=C''$, we arrive at \eqref{eq:w-diff} for $k+1$ using $1+C\eta \le e^{C\eta}$.
Note that $k\eta\le T$ for $k\in[0:K]$, $\ep_1\le\tO(\frac{1}{\sqrt{m}})$, and $\|\bbfu_{\ell,i,j} \|_{\infty}\le \cO(\sqrt{\log\frac{m}{\delta}})$ by \eqref{event max}.
The conclusion follows from Lemma~\ref{lmm:induction-k} and the Lipschitz continuity of $\phi$.
\end{proof}

\subsection{Proof of Theorem \ref{ini lemma}}
We first introduce the initialization of the continuous DNN: 
\[
p_1 := \mathcal{N}\left(\mathbf{0}^d,d\sigma_1^2~ \mathbf{I}^{d}\right),
\qquad
p_{\ell} := \mathcal{N}\left(\mathbf{0}^N, \sigma_1^2~ \bK_{\ell-1}\right), \quad \ell\in[2:L].
\]
The connecting weights between consecutive layers are given by
\[
w_{2}(\bw_1, \btheta_{2}) := \btheta_{2}^\top \bK_1^{-1} \th\big(\btheta_{1}(\bw_1) \big),
\qquad
w_{\ell+1}(\btheta_{\ell}, \btheta_{\ell+1}) :=    \btheta_{\ell+1}^\top \bK_{\ell}^{-1} \th\left(\btheta_{\ell} \right), \quad \ell\in[2:L-1].
\]
The weights at the output layer are initialized as a constant $C_3$ given in Algorithm~\ref{algo:init}.
Then the forward propagation constraints \eqref{forward l2} and \eqref{forward l} are satisfied by the definitions of $\bK_\ell$.
The weights also satisfy the conditions in  Assumption \ref{ass:4} since $\|\bK_\ell^{-1} \|_2\le \blambda^{-1} $ and $h$ is bounded and Lipschitz continuous.

Next we construct the initialization for ideal discrete DNN $(\linebw, \linetheta)$ that are mutually independent with $\linebw_{1,i} \iiddistr p_1$, $\linetheta_{\ell,i} \iiddistr p_\ell$. The  closeness to the actual discrete DNN will be shown in Lemma~\ref{lmm:hK-K}.
% $\linebw_{1,i}=\hbw_{1,i}$ for $i\in[m]$, and with probability $1-\delta$,
% \begin{align}
% \|\linetheta_{\ell,i}-\hbtheta_{\ell,i}\|_2\le .. \left\|\linetheta_{\ell,i}\right\|_2,\qquad \forall~\ell\in[2:L],i\in[m].\label{eq:diff-ideal-theta}  
% \end{align}
% We first introduce the ideal discrete DNN $(\linebw, \linetheta)$ and show the distributions at the initialization. 
Let $\linebw_{1,i} := \hbw_{i,1}$ for  $i\in[m]$.
For $\ell\in[L-1]$, define the empirical Gram matrix as
    $$  \hbK_{\ell} = \frac{1}{m}\sum_{i=1}^m \th\left( \hbtheta_{\ell,i}\right)\th^\top\left(\hbtheta_{\ell,i}\right),$$
where $\hbtheta_{1,i}=\btheta_1(\hbw_{1,i})$.
Let $\linetheta_{\ell+1,j} :=  \bK_{\ell}^{1/2} \hbK_{\ell}^{-1/2}  \hbtheta_{\ell+1,j}$ for all $j\in[m]$ when $\hbK_{\ell}$ is invertible, and otherwise let $\linetheta_{\ell+1,j}\iiddistr p_{\ell+1}$.
Here $\linetheta_{\ell+1,j}$ are determined by the outputs of previous layer $\hbtheta_{\ell,i}$ and the connecting weights $\hw_{\ell+1,i,j}$.
Hence, they are independent of $\linebw_{1,i}$ and $\linetheta_{2,i},\dots,\linetheta_{\ell,i}$ for $i\in[m]$ given $\{\hbtheta_{\ell,i}\}_{i\in[m]}$.
% ..dependency relations ..
Since $\hw_{\ell+1,i,j}$ are independent Gaussian, $\hbtheta_{\ell+1,j}$ and thus $\linetheta_{\ell+1,j}$ are conditionally independent Gaussian given $\{\hbtheta_{\ell,i}\}_{i\in[m]}$.
Furthermore, the conditional distribution of $\linetheta_{\ell+1,j}$ given  $\{\hbtheta_{\ell,i}\}_{i\in[m]}$ is $\mathcal{N}\left(\mathbf{0}^N, \sigma_1^2\bK_{\ell}\right) = p_{\ell+1}$. %for all $\{\hbtheta_{\ell,i}\}_{i\in[m_\ell]}$. 
Therefore, marginally $\linetheta_{\ell+1,j}\iiddistr p_{\ell+1}$ and they are independent of $\linebw_{1,i}$ and $\linetheta_{2,i},\dots,\linetheta_{\ell,i}$ for $i\in[m]$.
% We prove \eqref{eq:diff-ideal-theta} in Lemma~\ref{lmm:hK-K}.
% For \eqref{eq:diff-ideal-theta}, we use the following upper bound 
% \begin{align*}
% \left\|\hbtheta_{\ell+1,i} - \linetheta_{\ell+1,i}  \right\|_2
% & = \left\|\left( \hbK^{1/2}_{\ell}\bK^{-1/2}_{\ell}  - \mathbf{I}^N \right)     \linetheta_{\ell+1,i} \right\|_2\\
% & \le \left\|\hbK^{1/2}_{\ell}  - \bK^{1/2}_{\ell}\right\|_2
% \left\|\bK^{-1/2}_{\ell}\right\|_2
% \left\|\linetheta_{\ell+1,i}\right\|_2,
% \end{align*}
% and the concentration of $\hbK_{\ell}$ in Lemma~\ref{lmm:hK-K}.
\begin{lemma}
\label{lmm:hK-K}
Let $\ep_2:=\cO(\ep_1)$ such that $\ep_2< \blambda^{-1}$. 
If $m\ge \tilde\Omega(\ep_2^{-2})$, then, with probability $1-\delta$, for all $\ell\in[L-1]$,
\begin{align*}
& \left\|\hbK_{\ell}  - \bK_{\ell}\right\|_2
\le \ep_2,
\qquad
\norm{\linetheta_{\ell+1,i}-\hbtheta_{\ell+1,i} }_2\le \ep_2\left\|\linetheta_{\ell+1,i}\right\|_2,\\
& \norm{\bbtheta_{\ell+1,i} }_2
\le B_3:= C\sqrt{\log(m/\delta)}.
\end{align*}
% where $\tO(\cdot)$ hides poly-logarithmic factors of $m$ and $\frac{1}{\delta}$.
\end{lemma}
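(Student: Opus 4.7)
The plan is to prove the three conclusions simultaneously by induction on $\ell \in [L-1]$, exploiting the fact that the coupling $\linetheta_{\ell+1,j} = \bK_{\ell}^{1/2}\hbK_{\ell}^{-1/2}\hbtheta_{\ell+1,j}$ is tailor-made so that, once $\hbK_{\ell}$ is shown to be close to $\bK_{\ell}$ in operator norm, the transformation $\bK_{\ell}^{1/2}\hbK_{\ell}^{-1/2}$ is close to the identity. A union bound over $\ell \in [L-1]$ will control the failure probability, treating $L$ as a constant.

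For the base case $\ell=1$, I would first control $\|\linetheta_{1,i}\|_2 = \|\hbtheta_{1,i}\|_2 = \|\frac{1}{d}\bX \hbw_{1,i}\|_2$ by Gaussian concentration on $\hbw_{1,i} \sim \mathcal{N}(0, d\sigma_1^2\mathbf{I}^d)$, yielding the $B_3 = \cO(\sqrt{\log(m/\delta)})$ bound uniformly in $i$. Then, because $h$ is bounded by $L_1$, the rank-one matrices $h(\hbtheta_{1,i})h(\hbtheta_{1,i})^\top$ are i.i.d. with mean $\bK_1$ by the very definition \eqref{kdef}, and operator-norm bounded by $N L_1^2$. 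A matrix Bernstein (or Hoeffding) inequality then gives $\|\hbK_1 - \bK_1\|_2 \leq \ep_2$ with probability $1-\delta/L$ when $m \geq \tilde{\Omega}(\ep_2^{-2})$.

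For the inductive step, assume the three bounds hold up to layer $\ell$. First, because $\lambda_{\min}(\bK_{\ell}) \geq \blambda$ and $\|\hbK_{\ell} - \bK_{\ell}\|_2 \leq \ep_2 < \blambda^{-1}$ (the constants can be adjusted so that $\ep_2 \leq \blambda/2$, say), standard matrix perturbation inequalities for the square root and the inverse give $\|\bK_{\ell}^{1/2}\hbK_{\ell}^{-1/2} - \mathbf{I}^N\|_2 \leq C \ep_2$ for a constant $C$ depending only on $\blambda$ and $\|\bK_\ell\|_2$. Applied coordinatewise this yields $\|\linetheta_{\ell+1,i} - \hbtheta_{\ell+1,i}\|_2 \leq C\ep_2 \|\hbtheta_{\ell+1,i}\|_2 \leq \cO(\ep_2)\|\linetheta_{\ell+1,i}\|_2$, which is the second bound at level $\ell+1$. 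Next, since marginally $\linetheta_{\ell+1,i} \iiddistr \mathcal{N}(\mathbf{0}^N, \sigma_1^2 \bK_\ell)$ and $\|\bK_\ell\|_2 \leq L_1^2$, a Gaussian tail bound combined with a union bound over $i\in[m]$ yields $\|\linetheta_{\ell+1,i}\|_2 \leq B_3$ uniformly, proving the third bound.

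Finally, for $\|\hbK_{\ell+1} - \bK_{\ell+1}\|_2$, I would split the difference into two pieces using the auxiliary empirical Gram matrix $\tilde{\bK}_{\ell+1} := \frac{1}{m}\sum_{i=1}^m h(\linetheta_{\ell+1,i})h(\linetheta_{\ell+1,i})^\top$. The term $\|\tilde{\bK}_{\ell+1} - \bK_{\ell+1}\|_2 \leq \ep_2/2$ with high probability by the same matrix Bernstein argument as the base case, using that the $\linetheta_{\ell+1,i}$ are i.i.d. with the correct distribution $p_{\ell+1}$. The term $\|\hbK_{\ell+1} - \tilde{\bK}_{\ell+1}\|_2$ is controlled via the Lipschitz continuity of $h$ together with $\|\hbtheta_{\ell+1,i} - \linetheta_{\ell+1,i}\|_2 \leq \cO(\ep_2) B_3$ from the step above and the uniform bound $\|h(\cdot)\|_\infty \leq L_1$, yielding a bound of order $\ep_2 B_3 L_1 L_2 = \tilde{\cO}(\ep_2)$; adjusting constants hidden in $\tilde{\cO}$ and tightening the required lower bound on $m$ by a poly-logarithmic factor closes the induction.

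The main obstacle I anticipate is keeping the conditional independence bookkeeping clean: although $\linetheta_{\ell+1,i}$ are marginally i.i.d.\ with the right Gaussian law, they are defined conditionally on $\{\hbtheta_{\ell,i}\}$ through the matrix $\hbK_\ell^{-1/2}$. One needs to verify that the matrix concentration inequality applied to $\tilde{\bK}_{\ell+1}$ remains valid either unconditionally (using the marginal distribution) or conditionally on the previous layer (treating $\hbK_\ell$ as frozen), and that the small-probability event $\{\hbK_\ell \text{ not invertible}\}$ is absorbed into the union bound. Handling this carefully, together with making sure that the poly-logarithmic factors in $m$ accumulate harmlessly across the $L$ layers, is where the bookkeeping is most delicate.
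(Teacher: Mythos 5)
Your proposal follows essentially the same route as the paper: induct over layers, compare $\hbK_{\ell}$ to an auxiliary empirical Gram matrix built from the i.i.d.\ idealized features $\linetheta_{\ell,i}$ (the paper's $\bbK_\ell$, your $\tilde{\bK}_{\ell+1}$), apply a matrix-concentration argument (the paper uses entry-wise Hoeffding plus a union bound, you use matrix Bernstein, which is equivalent up to the $N$-dependence that is treated as a constant), and use a square-root perturbation bound (the paper cites Bhatia's Fr\'echet-derivative estimate [V.20]) to propagate $\|\hbK_\ell-\bK_\ell\|_2$ to $\|\hbtheta_{\ell+1,i}-\linetheta_{\ell+1,i}\|_2$ and then, via Lipschitz continuity of $h$, to $\|\hbK_{\ell+1}-\bK_{\ell+1}\|_2$. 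The only bookkeeping you gloss over is that the resulting error grows by a multiplicative factor of order $N^{3/2}\blambda^{-1}\beta$ per layer (so geometrically in $\ell$), not merely poly-logarithmically; since $L$ and $N$ are treated as constants this is harmless, and your identification of the conditional-independence issue as the delicate point matches the paper's handling (the independence of $\{\linetheta_{\ell,i}\}$ is established unconditionally before the lemma is invoked).
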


Finally we show that the initial connecting weights are also close to the actual discrete DNN as specified by the upper bound of $| \linew_{\ell+1,i, j} -\hw_{\ell+1, i,j}|$ in Definition~\ref{DNN condition}. Under Lemma \ref{lmm:hK-K}, $\hbK_{\ell}$ is invertible. Then we have
 the following formula for $\hw_{\ell+1, i, j}$ of Algorithm~\ref{algo:init} (see  Lemma~\ref{lmm:hat-w}):
\[
\hw_{\ell+1, i, j} = \hbtheta_{\ell+1,j}^\top\hbK^{-1}_{\ell}\th(\hbtheta_{\ell,i}), \quad \ell\in[L-1], ~i,j\in[m].
\]
By the triangle inequality,
\begin{align*}
&\phantomeq\left| \linew_{\ell+1,i, j} -\hw_{\ell+1, i,j}\right|\\
&=\norm{ \linetheta_{\ell+1,j}^\top\bK^{-1}_{\ell}\th(\linetheta_{\ell,i}) 
- \hbtheta_{\ell+1,j}^\top\hbK^{-1}_{\ell}\th(\hbtheta_{\ell,i})}_2\\
&\le \Norm{\bbtheta_{\ell+1,j}}_2 \norm{\bK_\ell^{-1}}_2 \norm{\th(\linetheta_{\ell,i})- \th(\hbtheta_{\ell,i})}_2
+ \Norm{\bbtheta_{\ell+1,j}}_2 \norm{\bK_\ell^{-1} - \hbK^{-1}_{\ell}}_2 \norm{\th(\hbtheta_{\ell,i})}_2\\
&\qquad\qquad + \Norm{\bbtheta_{\ell+1,j}- \hbtheta_{\ell+1,j} }_2 \Norm{\hbK^{-1}_{\ell}}_2 \norm{\th(\hbtheta_{\ell,i})}_2.
% &\le 2\sqrt{N}L_1\blambda^{-1}\Norm{\bbtheta_{\ell+1,j}}_2
% + \Norm{\bbtheta_{\ell+1,j}}_2 .. \sqrt{N}L_1 
% + .. .. \sqrt{N}L_1 
\end{align*}
Under the same event in Lemma~\ref{lmm:hK-K}, we upper bound three terms separately. 
By the Lipschitz continuity of $h$, the first term is at most $CB_3\blambda^{-1}\ep_2\Norm{\bbtheta_{\ell+1,j}}_2$;
for the second term, since $h$ is bounded and
\[
\Norm{\bK_\ell^{-1} - \hbK^{-1}_{\ell}}_2
\le \Norm{\bK_\ell^{-1}}_2 
\Norm{\bK_\ell - \hbK_{\ell}}_2 \Norm{\hbK^{-1}_{\ell}}_2
\le 2\blambda^{-2}\ep_2,
\]
we have an upper bound $C\blambda^{-2}\ep_2\Norm{\bbtheta_{\ell+1,j}}_2$;
the third term is at most $C\lambda^{-1}\ep_2\Norm{\bbtheta_{\ell+1,j}}_2$.

\subsection{Proof of Theorem \ref{theo:con}}
The proof is based on the lemma below.
\begin{lemma}\label{property of convex}
	$\frac{x^{a}}{y^b}$ is convex on $(x,y)\in [0, +\infty)\otimes(0, +\infty)  $ when $a-1\geq b  \geq 0$.
\end{lemma}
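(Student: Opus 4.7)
The plan is to verify convexity by direct computation of the Hessian on the open domain $(0,\infty) \times (0,\infty)$ and then extend to the boundary $\{0\} \times (0,\infty)$ by continuity. Since the function $f(x,y) = x^a y^{-b}$ is smooth on the interior, convexity there reduces to showing that the Hessian is positive semidefinite.

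First I would dispose of the degenerate case $b = 0$, in which case $f(x,y) = x^a$ with $a \geq 1$, and convexity is immediate. Assuming $b > 0$ and $a \geq b + 1 > 1$, I would compute
\begin{equation*}
\partial_{xx} f = a(a-1) x^{a-2} y^{-b}, \quad \partial_{yy} f = b(b+1) x^a y^{-b-2}, \quad \partial_{xy} f = -ab\, x^{a-1} y^{-b-1}.
\end{equation*}
Both diagonal entries are non-negative under the stated hypotheses, so it remains to check the determinant:
\begin{equation*}
\det(\nabla^2 f) = ab(a-1)(b+1) x^{2a-2} y^{-2b-2} - a^2 b^2 x^{2a-2} y^{-2b-2} = ab\,[(a-1)(b+1) - ab]\, x^{2a-2} y^{-2b-2}.
\end{equation*}
A direct simplification gives $(a-1)(b+1) - ab = a - b - 1$, which is exactly $\geq 0$ under the assumption $a - 1 \geq b$. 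Hence $\nabla^2 f \succeq 0$ on $(0,\infty) \times (0,\infty)$, proving convexity on that open set.

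Finally, to extend the convexity to the closed half-space $[0,\infty) \times (0,\infty)$, I would use a standard continuity argument: $f$ is continuous on $[0,\infty) \times (0,\infty)$ (since $a \geq 1 > 0$), so the midpoint inequality $f\bigl(\tfrac{z_1+z_2}{2}\bigr) \leq \tfrac{1}{2}[f(z_1)+f(z_2)]$ established for interior points passes to the limit whenever one endpoint has $x = 0$. Combined with continuity, midpoint convexity yields convexity. There is no real obstacle here; the only subtlety is confirming that the required algebraic identity $(a-1)(b+1) - ab = a - b - 1$ holds, which matches the hypothesis exactly and explains why $a - 1 \geq b$ is the sharp condition.
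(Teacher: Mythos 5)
Your proof is correct and takes the same route the paper indicates: the paper's proof is the one-line remark that one verifies positive semidefiniteness of the Hessian, and you carry out that verification explicitly (including the boundary extension, which the paper leaves implicit).
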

\begin{proof}
One can   verify that the Hessian matrix of $\frac{x^{a}}{y^b}$ is positive semi-definite when $a-1\geq b  \geq 0$.
\end{proof}
\begin{proof}[Proof of Theorem \ref{theo:con}]
We can observe that the Problem \eqref{problem2} only has linear constraints. We prove that the objective function is convex. It is sufficient to show that $ {R}_{\ell}^{\ttw}( \cdot, \cdot)$ is convex for all $\ell\in[2:L+1]$. For $\ell\in[2:L]$,  for  any $\left( \ttw_{\ell}^{(1)}, \ddp_{\ell}^{(1)} \right)$ and   $\left( \ttw_{\ell}^{(2)}, \ddp_{\ell}^{(2)} \right)$, we define $$( \ttw_{\ell}^{(3)}, \ddp_{\ell}^{(3)} )= \left(  \alpha \ttw_{\ell}^{(1)} +(1-\alpha)\ttw_{\ell}^{(2)},  \alpha \ddp_{\ell}^{(1)} +(1-\alpha) \ddp_{\ell}^{(2)}\right),$$ where $0\leq \alpha\leq 1$. Because $\ell_1$-norm is convex, we have
\begin{eqnarray}\label{1333}
  \int \left|\ttw_{\ell}^{(3)}(\btheta_{\ell-1}, \btheta_{\ell})  \right|d \btheta_{\ell}\leq  \alpha  \int \left|\ttw_{\ell}^{(1)}(\btheta_{\ell-1}, \btheta_{\ell})  \right| d \btheta_{\ell}+ (1-\alpha)  \int |\ttw_{\ell}^{(2)}(\btheta_{\ell-1}, \btheta_{\ell})  |d \btheta_{\ell}.
\end{eqnarray}
Let us introduce 
$$ \tu( \btheta_{\ell-1} ) := \alpha\int \left|\ttw_{\ell}^{(1)}( \btheta_{\ell-1},\btheta_{\ell} )\right|  d\btheta_{\ell} +(1-\alpha)\int \left|\ttw_{\ell}^{(2)}( \btheta_{\ell-1},\btheta_{\ell} )\right|  d\btheta_{\ell}.  $$
Because $r\geq1$,  from Lemma \ref{property of convex}, we have
\begin{eqnarray}\label{13332}
   \frac{\left(\tu( \btheta_{\ell-1} )\right)^r}{\left(\ddp^{(3)}(\btheta_{\ell-1})\right)^{r-1}}\leq \alpha    \frac{\left(\int \left|\ttw_{\ell}^{(1)}( \btheta_{\ell-1},\btheta_{\ell} )\right|  d\btheta_{\ell} \right)^r}{\left(\ddp^{(1)}(\btheta_{\ell-1})\right)^{r-1}} + (1-\alpha)\frac{\left(\int \left|\ttw_{\ell}^{(2)}( \btheta_{\ell-1},\btheta_{\ell} )\right|  d\btheta_{\ell} \right)^r}{\left(\ddp^{(2)}(\btheta_{\ell-1})\right)^{r-1}}.  
\end{eqnarray}
Plugging \eqref{1333} into \eqref{13332}, using that $|x |^r$  is monotonically increasing when $x\geq0$, we have
$$    \frac{\left(\int \left|\ttw_{\ell}^{(3)}( \btheta_{\ell-1},\btheta_{\ell} )\right|  d\btheta_{\ell} \right)^r}{\left(\ddp^{(3)}(\btheta_{\ell-1})\right)^{r-1}}\leq \alpha    \frac{\left(\int \left|\ttw_{\ell}^{(1)}( \btheta_{\ell-1},\btheta_{\ell} )\right|  d\btheta_{\ell} \right)^r}{\left(\ddp^{(1)}(\btheta_{\ell-1})\right)^{r-1}} + (1-\alpha)\frac{\left(\int \left|\ttw_{\ell}^{(2)}( \btheta_{\ell-1},\btheta_{\ell} )\right|  d\btheta_{\ell} \right)^r}{\left(\ddp^{(2)}(\btheta_{\ell-1})\right)^{r-1}}.   $$
Integrating the above inequality on $\btheta_{\ell}$, we have that   $ {R}_{\ell}^{\ttw}$ is convex.   In the same way, we can obtain  the convexity of   $ {R}_{L+1}^{\ttw}$. We achieve Theorem \ref{theo:con}.
\end{proof}

\subsection{Proofs of Lemmas}
\begin{proof}[Proof of Lemma~\ref{lmm:high-prob}]
We prove each of the four events happens with probability $1-\frac{\delta}{4}$ by standard concentration inequalities thanks to mutual independence of the ideal process. 
For \eqref{event forward} with a given $k,\ell,j,n$, consider random vectors 
\[
\xi_i := \frac{h\left( \linetheta_{\ell-1,j}^{k\eta}(n) \right) \linew_{\ell,i,j}^{k\eta}}{\left\|\linetheta_{\ell,j}\right\|_{\infty}+1},
\]
which are bounded by a constant $C'$ due to the upper bound of $\linew_\ell$ in Lemma~\ref{lmm:Psi-property}.
Conditioned on $\bbtheta_{\ell,j}$, those $\xi_i$ are independent and $\EE [ \xi_i | \bbtheta_{\ell,j}]= \frac{\linetheta_{\ell,j}^{k\eta}(n)}{\|\linetheta_{\ell,j}\|_{\infty}+1}$. 
By Hoeffding's inequality and the union bound, we have 
\[
\left| \frac{1}{m}\sum_{i=1}^m \xi_i - \frac{\linetheta_{\ell,j}^{k\eta}(n)}{\|\linetheta_{\ell,j}\|_{\infty}+1} \right| 
< \ep_1,
\]
with probability $1-\frac{\delta}{4mL(K+1)N}$.
Therefore, applying the union bound again over $k\in[0:K],~\ell\in[L],~j\in[m]$ and $n\in[N]$, we have \eqref{event forward} with probability $1-\frac{\delta}{4}$.

For \eqref{event backward} with a given $k,\ell,i,n$, consider the random vectors 
$$\xi_j'  :=  [\linew_{\ell+1,i,j}^{k\eta}~ \lineDel_{\ell+1,j}^{k\eta}(n)] ~ h'\left( \bbtheta_{\ell,i}^{k\eta}(n)\right).  $$
Conditioned on $\bbtheta_{\ell,i}$, those $\xi_j'$ are independent and $\EE[\xi_j'|\bbtheta_{\ell,i}]=\lineDel_{\ell,i}^{k\eta}(n)$.
By the boundedness of $h'$ and the upper bound of $\lineDel_{\ell+1}$ in Lemma~\ref{lmm:Psi-property}, we have 
\[
| \xi_j'|
\le C' | \linew_{\ell+1,i,j}^{k\eta} |
\le C (1+\|\bbtheta_{\ell+1,j} \|_\infty),
\]
and thus $\xi_j'$ is sub-gaussian. 
Applying Lemma~\ref{azuma}, we obtain that
\[
\left| \frac{1}{m}\sum_{j=1}^m \xi_j' - \lineDel_{\ell,i}^{k\eta}(n) \right| 
< \ep_1,
\]
with probability $1-\frac{\delta}{4mL(K+1)N}$.
Therefore, applying the union bound again over $k\in[0:K],~\ell\in[L], ~j\in[m]$, and $n\in[N]$, we have \eqref{event backward} with probability $1-\frac{\delta}{4}$.

Finally both \eqref{event max} and \eqref{event moment} happen with probability $1-\frac{\delta}{4}$ by the concentration of sub-gaussian random variables; in particular, \eqref{event max} follows from Lemma \ref{theo:subgau} and \eqref{event moment} follows from Lemmas~\ref{azuma} and \ref{azumaexp}.
\end{proof}

\begin{proof}[Proof of Lemma~\ref{lmm:induction-k}]
We first consider the forward propagation and prove \eqref{eq:induction forward last} and \eqref{eq:induction forward}.
For $\ell=1$, since $\bX$ is bounded, 
\begin{eqnarray}
  \left\| \linetheta_{1,i}^{k\eta} - \hbtheta_{1,i}^{k} \right\|_{\infty} \leq C \|\linebw_{1,i}^{k\eta} - \hat{\bw}_{1,i}^k \|_{\infty} \leq   C (\|\linebw_{1,i}  \|_{\infty}+1) \ep. \notag
\end{eqnarray}
For $\ell\in[2:L]$, by the triangle inequality,
\begin{eqnarray}
&&\left\| \linetheta_{\ell,j}^{k\eta} - \hbtheta_{\ell,j}^{k}  \right\|_{\infty}\label{fordisa}\\
&\leq&\left\| \linetheta_{\ell,j}^{k\eta} -   \frac{1}{m}\sum_{i=1}^m \left[ \linew_{\ell,i,j}^{k\eta}~\th\left(\bbtheta_{\ell-1,i}^{k\eta}\right)\right]  \right\|_{\infty} 
+ \left\| \frac{1}{m}\sum_{i=1}^m \left[\linew_{\ell,i,j}^{k\eta}~\th\left(\linetheta_{\ell-1,i}^{k\eta}\right)  -\hw_{\ell,i,j}^{k}~\th\left(\hbtheta_{\ell-1,i}^{k}\right)  \right] \right\|_{\infty}. \notag
\end{eqnarray}
The first term is approximately the forward propagation that is at most $(\|\bbtheta_{\ell,j}\|_\infty+1)\ep_1$ by \eqref{event forward}. 
For the second term, since $h$ is bounded and Lipschitz continuous and the weights $\linew_{\ell,i,j}$ are upper bounded by Lemma~\ref{lmm:Psi-property} and Assumption~\ref{ass:3}, we have a further upper bound
\begin{align*}
&\phantomeq\frac{1}{m}\sum_{i=1}^m 
\underbrace{\left|\linew^{k\eta}_{\ell,i,j}\right|}_{\leq C(\| \linetheta_{\ell,j}\|_{\infty}+1)}~ \left\|\th\left(\linetheta^{k\eta}_{\ell-1,i}\right) -\th\left(\hbtheta^{k}_{\ell-1,i}\right)  \right\|_{\infty} +\frac{1}{m}\sum_{i=1}^m \underbrace{\left|\linew^{k\eta}_{\ell,i,j} - \hw^k_{\ell,i,j} \right|}_{\eqref{eq:induction weights}} ~\left\|\th (\hbtheta^{k}_{\ell-1,i}) \right\|_{\infty}\\
&\le C(\| \linetheta_{\ell,j}\|_{\infty}+1)(\ep+\ep_1),
\end{align*}
where in the last step we used \eqref{event moment}.
The output layer $\ell=L+1$ is similar by applying the upper bound of $w_{L+1}$ in Assumption~\ref{ass:3}.

Next we consider the backward propagation and prove \eqref{eq:induction backward}.
Since $\bX$ is bounded, $h$ is bounded and Lipschitz continuous, and $\lineDel_{\ell}$ is bounded by Lemma~\ref{lmm:Psi-property}, it suffices to prove that
\begin{equation}
\label{lineDel approx back}    
\left\|\lineDel_{\ell,i}^{k\eta} - \hDel_{\ell, i}^k \right\|_{\infty} 
\leq C\left(\| \linetheta_{\ell,i}\|_{\infty}+1\right)\left(\ep + \ep_1\right), \quad \forall~\ell\in[L+1],~i\in[m_\ell].
\end{equation}
At the output layer $\ell=L+1$, since $\phi_1'$ is Lipschitz continuous on the first argument, 
\[
\left\|\lineDel_{L+1,1}^{k\eta} -  \NhDel_{L+1,1}^k \right\|_{\infty}\leq L_5  \left\| \linetheta_{L+1,1}^{k\eta} - \hbtheta_{L+1,1}^{k} \right\|_{\infty}\leq C\left(\ep +\ep_1\right).
\]
At layer $\ell=L$, since $h'$ is bounded and Lipschitz continuous and $\lineDel_{L+1}$ is bounded by Lemma~\ref{lmm:Psi-property}, applying \eqref{eq:induction forward} yields that
\[
\left\| \lineDel_{L+1,1}^{k\eta}~ h'\left(\linetheta^{k \eta}_{L,i}\right) -   \NhDel_{L+1,1}^k~  h'\left(\hbtheta^{k \eta}_{L,i}\right) \right\|_\infty 
\le C\left(\left\|\linetheta_{L,i}\right\|_{\infty}+1\right)\left(\ep+\ep_1\right).
\]
Applying \eqref{eq:induction weights} and the upper bound of $\linew_{L,i}$ in Lemma~\ref{lmm:Psi-property}, we obtain that
\begin{equation}\label{laqqa}
\left\|\lineDel_{L,i}^{k\eta} -  \NhDel_{L,i}^k \right\|_\infty
\le C \left(\|\linetheta_{L,i} \|_{\infty}+ 1\right) \left(\ep+\ep_1\right).
\end{equation}
For each layer $\ell$ from $L-1$ to $1$, by the triangle inequality, 
\begin{eqnarray}
&&\!\!\!\!\left\|\lineDel_{\ell,i}^{k\eta} -  \NhDel_{\ell,i}^k \right\|_{\infty} \label{laqqa1}\\
&\leq& \!\!\!\!\left\| \lineDel_{\ell,i}^{k\eta} - \frac{1}{m}\sum_{j=1}^m \barw^{k\eta}_{\ell+1,i,j}~ \lineDel_{\ell+1,j}^{k\eta}\cdot \th'\left(\linetheta_{\ell+1,i}^{k\eta}\right) \right\|_{\infty} \notag\\
&+&\!\!\!\!\left\| \frac{1}{m}\sum_{j=1}^m \linew_{\ell+1,i,j}^{k\eta}\left[ \lineDel_{\ell+1,j}^{k\eta}\cdot \th'\left(\linetheta_{\ell,i}^{k\eta}\right)\right] -\frac{1}{m}\sum_{j=1}^m\hw_{\ell+1,i,j}^k\left[ \NhDel_{\ell+1,j}\cdot \th'\left(\hbtheta_{\ell,i}^{k\eta}\right)\right]\right\|_{\infty}. \notag
\end{eqnarray}
The first term is approximately backward propagation and is at most $\ep_1$ by \eqref{event backward}.
For the second term, note that $h'$ is bounded and Lipschitz continuous, $\linew_{\ell+1,i,j}$ and $\lineDel_{\ell,j}$ are upper bounded by Lemma~\ref{lmm:Psi-property}.
Applying \eqref{eq:induction weights}, \eqref{eq:induction forward}, and \eqref{lineDel approx back} at layer $\ell+1$ yields that
\begin{align*}
&\phantomeq \left\|\lineDel_{\ell+1,j}^{k\eta}\cdot\left[\linew_{\ell+1,i,j}^{k\eta}~ \th'\left(\linetheta_{\ell,i}^{k\eta}\right)\right] -\NhDel_{\ell+1,j}\cdot\left[\hw_{\ell+1,i,j}^k ~\th'\left(\hbtheta_{\ell,i}^{k\eta}\right)\right]\right\|_{\infty}\\
&\le C(\| \linetheta_{\ell+1,j}\|_{\infty}+1)\left(\|\linetheta_{\ell+1,j} \|_{\infty}+\|\linetheta_{\ell,i} \|_{\infty}+ 1\right) \left(\ep+\ep_1\right).
\end{align*}
Therefore, by \eqref{event moment}, we obtain \eqref{lineDel approx back} at layer $\ell$.
\end{proof}

\begin{proof}[Proof of Lemma~\ref{lmm:gradw-lip-t}]
The proof is similar to the backward steps in Lemma~\ref{lmm:induction-k}.
Since $h$ is Lipschitz continuous, by the upper bound of $\lineGrad_{\ell}^{\btheta}$ in Lemma~\ref{lmm:Psi-property}, we have
\begin{equation}
\label{eq:diff htheta}
\left\|\th\left(\btheta_\ell^{t_1}\right) -\th\left(\btheta_\ell^{t_2}\right) \right\|_{\infty}
\leq C\left(\|\btheta_\ell\|_{\infty}+1 \right)|t_1 -t_2 |.
\end{equation}
By the boundedness of $h$ and $\bX$, it suffices to prove the Lipschitz continuity $\lineDel_\ell$ that
\begin{align}
&\left\|\lineDel_{L+1}\left(\Psi_*, t_1\right) -\lineDel_{L+1}\left(\Psi_*, t_2\right)  \right\|_{\infty}\le C|t_1-t_2|,\label{lip-t-del-L+1}\\
&\left\|\lineDel_{\ell}(\btheta_{\ell};\Psi_*,t_1)-\lineDel_{\ell}(\btheta_{\ell};\Psi_*,t_2)\right\|_\infty
\le C\left(\|\btheta_\ell\|_{\infty}+1 \right)|t_1 -t_2 |,\quad \ell\in[L]\label{lip-t-del}.
\end{align}
At the output layer $\ell=L+1$, by the Lipschitz continuity of $\phi'$, we have 
\begin{eqnarray}
&&\left\|\lineDel_{L+1}\left(\Psi_*, t_1\right) -\lineDel_{L+1}\left(\Psi_*, t_2\right)  \right\|_{\infty}\label{qfrt}\\
&\leq& L_5\left\| \btheta_{L+1}^{t_1} -\btheta_{L+1}^{t_2} \right\|_{\infty}\notag\\
&\leq&L_5\left\|  \int w_{L+1}^{t_1}~ \th\left(\btheta_L^{t_1}\right) -w_{L+1}^{t_2}~ \th\left(\btheta_L^{t_2}\right) d p_L(\btheta_L)\right\|_{\infty}\notag.
\end{eqnarray}
By the upper bound and Lipschitz continuity of $w_{L+1}$ in Lemma~\ref{lmm:Psi-property}, we obtain \eqref{lip-t-del-L+1}.
At layer $\ell=L$, using \eqref{def linedel}, we obtain \eqref{lip-t-del} from the upper bounds and the Lipschitz continuity of $\lineDel_{L+1}(\btheta_L, \Psi_*, t)$, $ w^t_{L+1}(\btheta_L)$, and $\th'\left(\btheta_L^t\right)$.
For each layer $\ell$ from $L-1$ to $1$, we have 
\begin{eqnarray}
 &&\!\!\!\!\!\!\int  \left\|w_{\ell+1}^{t_1}(\btheta_{\ell},\btheta_{\ell+1}) ~ \lineDel_{\ell+1}\left(\btheta_{\ell+1}; \Psi_*,t_1\right) \!- \! w_{\ell+1}^{t_2}(\btheta_{\ell},\btheta_{\ell+1}) ~ \lineDel_{\ell+1}\left(\btheta_{\ell+1}; \Psi_*,t_2\right)\right\|_{\infty}dp_{\ell+1}(\btheta_{\ell+1})  \notag\\
 &&\leq\int \underbrace{\left| w_{\ell+1}^{t_1}(\btheta_{\ell},\btheta_{\ell+1}) -w_{\ell+1}^{t_2}(\btheta_{\ell},\btheta_{\ell+1})\right|}_{\leq  C| t_1 -t_2| } \left\| \lineDel_{\ell+1}(\btheta_{\ell+1}; \Psi_*,t) \right\|_{\infty} dp_{\ell+1}(\btheta_{\ell+1})  \notag\\
 &&\quad\quad\quad\quad + \int  \left|w_{\ell+1}^{t_2}(\btheta_{\ell},\btheta_{\ell+1}) \right| \underbrace{\left\|\lineDel_{\ell+1}(\btheta_{\ell+1}; \Psi_*,t_1) -  \lineDel_{\ell+1}(\btheta_{\ell+1}; \Psi,t_2)\right\|_{\infty}}_{\leq C\left(\|\btheta_{\ell+1} \|_{\infty}+1 \right)~|t_1 -t_2|  }dp_{\ell+1}(\btheta_{\ell+1})\notag\\
 &&\leq C' |t_1 -t_2|, \label{btcon}
\end{eqnarray}
where in the last step we used the upper bounds of $w_{\ell+1}$ and $\lineDel_{\ell+1}$ in Lemma~\ref{lmm:Psi-property}, sub-gaussianness of $p_{\ell+1}$, and Corollary~\ref{cor:subg-q}.
Then, combining \eqref{eq:diff htheta}, we obtain \eqref{lip-t-del} at layer $\ell$.
\end{proof}

\begin{proof}[Proof of Lemma~\ref{lmm:hK-K}]
In the proof of  Lemma~\ref{lmm:hK-K}, we treat the parameters in Assumptions \ref{ass:1} -- \ref{ass:3} as constants  and focus on the dependency on $N$, $\delta$, and $\ep_2$.

Recall that $\linetheta_{\ell,i}\iiddistr p_\ell$. 
Consider auxiliary matrices
\[
\bbK_\ell:= \frac{1}{m}\sum_{i=1}^m \th(\linetheta_{\ell,i})\th^\top(\linetheta_{\ell,i}),
\]
whose entry consists of i.i.d.~summation of the form
\[
\bbK_\ell(i,j)
= \frac{1}{m}\sum_{k=1}^m h(\bbtheta_{\ell,k}(i))h(\bbtheta_{\ell,k}(j)).
\]
Since $h$ is bounded, by Hoeffding's inequality, with probability $1-\frac{\delta}{3N^2 (L-1)}$,
\[
|\bbK_\ell(i,j)-\bK_\ell(i,j)|\le C\sqrt{\frac{1}{m}\log\frac{3N^2 (L-1)}{\delta}}.
\]
By the union bound, with probability $1- \delta/3$,
\begin{eqnarray}\label{eq fl1}
\max_{\ell\in[L-1]}\norm{\bbK_\ell-\bK_\ell}_2
\le N \max_{\ell\in[L-1]}\norm{\bbK_\ell-\bK_\ell}_\infty
\le \ep_3:=CN\sqrt{\frac{1}{m}\log\frac{N}{\delta}}.
\end{eqnarray}
The upper bounds of $\Norm{\bbtheta_{\ell+1,i} }_2$ happen with probability $1-\delta/3$ due to the sub-gaussianness of $p_{\ell+1}$ and Lemma~\ref{theo:subgau}.
We will also use the following upper bound that happen with probability $1-\delta/3$ by the sub-gaussianness of $p_\ell$ and Lemma~\ref{azuma}:
\[
\frac{1}{m}\sum_{i=1}^m\Norm{\bbtheta_{\ell,i}}_2
\le \beta:=C\sqrt{N}\log(eL/\delta),\qquad \forall~\ell\in[2:L].
\]

Next we inductively prove that, for $\ell\in[L-1]$, 
\begin{align}
\Norm{\bK_\ell-\hbK_\ell}_2
&\le (CN^{3/2}\blambda^{-1}\beta)^{\ell-1}\ep_3,\label{eq:init-bk-induction}\\
\left\|\hbtheta_{\ell+1,i} - \linetheta_{\ell+1,i}  \right\|_2
&\le (CN^{3/2}\blambda^{-1}\beta)^{\ell-1}N\blambda^{-1}\ep_3 \norm{\linetheta_{\ell+1,i}}_2.\label{eq:init-theta-induction}
\end{align}
% upper bound $\Norm{\bbK_\ell-\hbK_\ell}_2$ and prove the lemma for $\ell\in[L-1]$.
For $\ell=1$, by definition $\bbK_1=\hbK_1$.
The upper bound of $\Norm{\hbK_1^{1/2}-\bK_1^{1/2}}_2$ is achieved by matrix calculus \cite[Section V.3]{bhatia2013matrix}. 
Since $\Norm{\hbK_1-\bK_1}_2\le \frac{\blambda}{2}$, then the eigenvalues of $\hbK_1$ are at least $\frac{\blambda}{2}$. 
Let $f(x):=\sqrt{x}$. Then $|f'(x)|\ge \frac{1}{\sqrt{2\blambda}}$ when $x$ is the eigenvalue of $\hbK_1$. Applying \cite[(V.20)]{bhatia2013matrix} yields that
\begin{equation}
\label{matrix sqrt}
\left\|\hbK^{1/2}_{1}  - \bK^{1/2}_{1}\right\|_2
\le \frac{N}{\sqrt{2\blambda}}\norm{\hbK_1-\bK_1}_2.
\end{equation}
Consequently, 
\begin{align}
\left\|\hbtheta_{2,i} - \linetheta_{2,i}  \right\|_2
& = \left\|\left( \hbK^{1/2}_{1}\bK^{-1/2}_{1}  - \mathbf{I}^N \right)     \linetheta_{2,i} \right\|_2\nonumber\\
& \le \left\|\hbK^{1/2}_{1}  - \bK^{1/2}_{1}\right\|_2
\left\|\bK^{-1/2}_{1}\right\|_2
\left\|\linetheta_{2,i}\right\|_2\nonumber\\
&\le \frac{N\blambda^{-1}\ep_3}{\sqrt{2}}\Norm{\linetheta_{2,i}}_2.\label{theta-ub-cmp}
\end{align}
For $\ell\in[2:L-1]$,  suppose we have 
\[ \left\|\hbtheta_{\ell,i} - \linetheta_{\ell,i}  \right\|_2
\le (CN^{3/2}\blambda^{-1}\beta)^{\ell-2}N\blambda^{-1}\ep_3 \norm{\linetheta_{\ell,i}}_2.\]
By the boundedness and Lipschitz continuity of $h$, we have 
\begin{equation}\label{eq fl2}
    \norm{\hbK_\ell-\bbK_\ell}_2
\le \frac{C'N^{1/2}}{m} \sum_{i=1}^m \norm{\th(\linetheta_{\ell,i}) - \th(\hbtheta_{\ell,i})}_2
\le (C'' N^{3/2}\blambda^{-1}\beta)^{\ell-1}\ep_3.
\end{equation}
Then we obtain \eqref{eq:init-bk-induction} by triangle inequality from \eqref{eq fl1} and \eqref{eq fl2}. 
The upper bound in \eqref{eq:init-theta-induction} for $\ell+1$ follows from a similar argument of \eqref{matrix sqrt} and \eqref{theta-ub-cmp}.
\end{proof}

\begin{lemma}
\label{lmm:hat-w}
If $\hbK_\ell$ is invertible, then
\[
\hw_{\ell+1, i, j} = \hbtheta_{\ell+1,j}^\top\hbK^{-1}_{\ell}\th(\hbtheta_{\ell,i}), \quad \ell\in[L-1], ~i,j\in[m].
\]
\end{lemma}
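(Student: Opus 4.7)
The lemma concerns the closed-form solution of the $\ell_2$-regression subproblem \eqref{ell2regression} performed in Algorithm \ref{algo:init} for each layer $\ell+1 \in [2:L]$ and each node $j \in [m]$. Since the constraint is an $N$-dimensional linear equation in $m$ unknowns $\{\hw_{\ell+1,i,j}\}_{i=1}^m$, this is a minimum-norm problem with an underdetermined linear system (provided $m \ge N$, which is guaranteed by the scaling in the theorem). The plan is therefore to apply the standard Lagrangian/KKT characterization of minimum-norm solutions and then solve for the dual variable explicitly using the invertibility hypothesis on $\hbK_\ell$.

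More concretely, I would first rewrite the constraint $\hbtheta_{\ell+1,j} = \frac{1}{m}\sum_{i=1}^m \hw_{\ell+1,i,j}\,\th(\hbtheta_{\ell,i})$ in matrix form as $\hbtheta_{\ell+1,j} = \tfrac{1}{m} A \mathbf{w}$, where $A \in \RR^{N\times m}$ has columns $\th(\hbtheta_{\ell,i})$ and $\mathbf{w} = (\hw_{\ell+1,1,j},\dots,\hw_{\ell+1,m,j})^\top$. Forming the Lagrangian $L(\mathbf{w},\lambda) = \tfrac{1}{m}\|\mathbf{w}\|_2^2 - \lambda^\top(\tfrac{1}{m} A\mathbf{w} - \hbtheta_{\ell+1,j})$ for $\lambda \in \RR^N$, stationarity in $\mathbf{w}$ yields $\mathbf{w} = \tfrac{1}{2} A^\top \lambda$, i.e.\ every coordinate of the optimizer has the form
\[
\hw_{\ell+1,i,j} = \th(\hbtheta_{\ell,i})^\top \mu_j
\]
for some vector $\mu_j \in \RR^N$ (absorbing the factor $\tfrac{1}{2}$).

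Substituting this representation back into the feasibility constraint gives
\[
\hbtheta_{\ell+1,j} \;=\; \frac{1}{m}\sum_{i=1}^m \bigl[\th(\hbtheta_{\ell,i})^\top \mu_j\bigr]\,\th(\hbtheta_{\ell,i}) \;=\; \Bigl[\tfrac{1}{m}\sum_{i=1}^m \th(\hbtheta_{\ell,i})\th(\hbtheta_{\ell,i})^\top\Bigr]\mu_j \;=\; \hbK_\ell\, \mu_j.
\]
By the assumed invertibility of $\hbK_\ell$, this uniquely determines $\mu_j = \hbK_\ell^{-1}\hbtheta_{\ell+1,j}$, and substituting back produces the claimed formula $\hw_{\ell+1,i,j} = \hbtheta_{\ell+1,j}^\top \hbK_\ell^{-1} \th(\hbtheta_{\ell,i})$.

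There is no real obstacle here: the argument is a textbook calculation for the minimum $\ell_2$-norm solution of a consistent linear system, and the only nontrivial hypothesis, namely that $\hbK_\ell$ is invertible, is used exactly at the last step to invert the reduced normal equation. The only point worth double-checking is that the $\ell_2$-regression is indeed feasible under the invertibility of $\hbK_\ell$: feasibility follows because the proposed $\mathbf{w}$ itself satisfies the constraint, confirming that the KKT point is both well-defined and unique (strict convexity of $\|\mathbf{w}\|_2^2$ together with linear constraints).
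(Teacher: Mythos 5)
Your proof is correct and reaches the same conclusion as the paper, though it is packaged slightly differently. The paper writes the min-norm solution by explicitly decomposing $\hbw = \hbH^\top \mathbf{z} + \hbw'$ with $\hbH\hbw' = 0$, observing that the constraint only sees the row-space component and the objective separates, so $\hbw'=\mathbf{0}$ and $\mathbf{z}=\hbK_\ell^{-1}\hbtheta_{\ell+1,j}$. You instead invoke Lagrangian stationarity to conclude the optimizer has the form $\mathbf{w}=A^\top\mu_j$, then substitute into the constraint to solve for $\mu_j$. The two routes are mathematically equivalent: both hinge on the fact that the unique min-$\ell_2$-norm solution of a consistent linear system lies in the row space of the constraint matrix, after which invertibility of $\hbK_\ell = \frac{1}{m}\hbH\hbH^\top$ pins down the dual variable. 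The orthogonal-decomposition route is marginally more self-contained (it doesn't need to appeal to the KKT theorem), while your KKT route is slightly shorter; there is nothing substantive at stake. One minor remark worth keeping in mind but not fatal: your parenthetical ``provided $m\ge N$'' is not actually needed as a hypothesis, since invertibility of $\hbK_\ell\in\RR^{N\times N}$ already forces $\mathrm{rank}(\hbH)=N$ and hence $m\ge N$.
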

\begin{proof}
For a given layer $\ell$ and $j$, the $\ell_2$-regression problem in Algorithm \ref{algo:init} can be equivalently written as
\begin{eqnarray}
\min_{\hbw } &\quad&\frac{1}{2}\left\|\hbw \right\|^2\label{rrop}\\
\mathrm{s.t.} &\quad& \frac{1}{m}\hbH \hbw = \hbtheta_{\ell+1,j},  \notag
\end{eqnarray}
where $\hbw=(\hw_{\ell+1,1,j}, \dots, \hw_{\ell+1,m,j})^\top$ and $\hbH = [\th(\hbtheta_{\ell,1}),\dots,\th(\hbtheta_{\ell,m})] $.
Decompose $\hbw$ as 
\[
\hbw = \hbH^\top \mathbf{z} +  \hbw',
\]
where $\mathbf{z}\in\RR^m$ and $\hbH \hbw'=0$.
Then \eqref{rrop} is equivalent to 
\begin{eqnarray}
\min_{~\mathbf{z},~ \hbw' } &\quad&\frac{1}{2} \left\| \hbH^\top \mathbf{z}\right\|_{2}^2 + \frac{1}{2}\left\| \hbw' \right\|_2^2 \notag\\
\mathrm{s.t.} &\quad& \frac{1}{m}\hbH \hbH^\top \mathbf{z} =  \hbtheta_{\ell+1,j}.\notag
\end{eqnarray}
Since $\frac{1}{m}\hbH \hbH^\top = \hbK_\ell$ is invertible, the optimal solution is $\mathbf{z}   =   \hbK_{\ell}^{-1}\hbtheta_{\ell+1,j}$ and $\hbw'=\mathbf{0}^N$.
\end{proof}

\section{Proofs of Theorems \ref{theorm:resflow p0} and \ref{theorm:phi}}\label{sec:c}
%We prove  Theorems \ref{theorm:resflow p0} and \ref{theorm:phi} in a unified scheme. 
\subsection{Proof of  Theorem \ref{theorm:resflow p0} }
The proof of Theorem \ref{theorm:resflow p0} is similar to that of Theorem \ref{theorm:flow p0} with a special consideration  on the weights.  We also first show that our neural feature flow  in Definition \ref{resNFLp0} satisfies several continuity properties, which allows us to narrow down the search space for the solution. Recall that a trajectory $\Phi$ consists of trajectories of weights $\Phi_\ell^\bv$ for $\ell\in[L+1]$, features $\Phi_\ell^{\bbeta}$ for $\ell\in[L]$, and residuals $\Phi_\ell^{\balpha}$ for $\ell\in[2:L]$. For $\bTheta, \bbTheta\in \supp(p)$,
we also abbreviate the notations for individual trajectories as
$$ \Phi_\ell^{\bv}(\bu_\ell)(t)=v_\ell^t(\bu_{\ell}), \quad \Phi_\ell^{\bbeta}(\bTheta)(t)=\bbeta_\ell^t(\bTheta),\quad    \Phi_\ell^{\balpha}(\bTheta)(t)=\balpha_\ell^t(\bTheta), $$
where $\bu_\ell$ stands for $\bTheta$, $(\bTheta,\bbTheta)$, $\bTheta$  for $\ell=1$, $2\le\ell\le L$, $\ell=L+1$, respectively.  

Throughout the proof, we fix $T$ as a constant. We define the set of continuous restricted trajectories below.
\begin{definition}[$\bR$-Continuous Restricted Trajectory]\label{special Function R}
Given $\bR:=(\bR_1,\dots,\bR_{L+1})\in\RR_+^{L+1}$, we say $\Phi$ is a $\bR$-continuous restricted trajectory  if
$\Phi_{\ell}^{\bv}(\bu_{\ell})(t)$ is $\bR_{\ell}$-Lipschitz continuous in $t\in[0,T]$ for $\ell\in[L+1]$, and  $\Phi_{\ell}^{\balpha}(\bu_{\ell})(t)$ and $\Phi_{\ell}^{\bbeta}(\bu_{\ell})(t)$  are determined by the forward-propagation process, i.e., 
$\bbeta_1^t(\bTheta) =  \frac{1}d \bX v_1^t(\bTheta)$,  $\balpha^t_{\ell+1}(\bTheta) = \int v_{\ell+1}^t(\bTheta, \bbTheta) \th_1\left(\bbeta_{\ell}^t(\bbTheta) \right) d p(\bbTheta)$,  $\bbeta_{\ell+1}^t(\bTheta) = \bbeta_{\ell}^t(\bTheta)+ \th_2\left( \balpha^t_{\ell+1}(\bTheta)\right)$ for $\ell\in[L-1]$ and $\bTheta\in\supp(p)$.
The set of  $\bR$-continuous restricted  trajectories is denoted as $\mathbf{\Phi}^{\bR}$.
\end{definition}
We can find that given the trajectories of weights, the trajectories of residuals  and features are determined by the forward-propagation process.  Lemma \ref{lmm:flow-condition R} below shows that it suffices to consider a restricted search space.
\begin{lemma}
\label{lmm:flow-condition R}
There exists constants $\bR\in\RR_+^{L+1}$ such that every solution $\Phi$ of the neural feature flow is a  $\bR$-continuous restricted  trajectory.
\end{lemma}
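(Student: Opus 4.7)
The plan follows the same two-stage pattern as Lemma \ref{lmm:flow-condition}, adapted to handle the skip-connection structure. First I would establish that the forward-propagation constraints are preserved along any solution of the flow, thereby justifying the "restricted" descriptor. Concretely, at $t=0$ the relations
\[
\bbeta_{\ell+1}^0(\bTheta)=\bbeta_\ell^0(\bTheta)+h_2(\balpha_{\ell+1}^0(\bTheta)),\qquad
\balpha_{\ell+1}^0(\bTheta)=\int v_{\ell+1}^0(\bTheta,\bbTheta)\,h_1(\bbeta_\ell^0(\bbTheta))\,dp(\bbTheta)
\]
hold by the initialization in Definition \ref{resNFLp0} and the continuous Res-Net formulation in \eqref{forres}. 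Differentiating both sides in $t$ and using the definitions of $\ulineGrad^{\balpha}_{\ell+1}$, $\ulineGrad^{\bbeta}_{\ell+1}$, and $\ulineGrad^{\bv}_{\ell+1}$, the right-hand side derivative matches that of the left-hand side via the chain rule: this is in fact how those drift terms were constructed. Hence the forward-propagation relations persist for all $t\in[0,T]$, matching the requirement of Definition \ref{special Function R}.

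Next I would prove Lipschitz continuity of $\Phi^{\bv}_\ell$ by upper bounding $\|\ulineGrad^{\bv}_\ell(\cdot;\Phi,t)\|_\infty$ uniformly in $t\in[0,T]$. The key observation is that the skip-connection bound $|h_2|\le L_1$ forces $\|\bbeta^t_{\ell}(\bTheta)-\bbeta^t_{1}(\bTheta)\|_\infty\le (\ell-1)L_1$, while $\|\bbeta^t_1(\bTheta)\|_\infty\le C\|\bv_1\|_\infty$ from boundedness of $\bX$. Combined with Assumption \ref{ass:5}, this yields uniform-in-$t$ bounds on $\|h_1(\bbeta^t_\ell(\bTheta))\|_\infty$. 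Then, working backward from layer $L+1$, I would inductively bound $\|\ulineDel_{L+1}\|_\infty$ via $|\phi_1'|\le L_4$, then $\|\ulineDel^{\bbeta}_L\|_\infty$, $\|\ulineDel^{\balpha}_L\|_\infty$ using the boundedness of $v_{L+1}^t$ (which is uniformly bounded by $C_7+\bR_{L+1}T$ via \eqref{ass45} and the Lipschitz bound being proved), and so on down to layer $1$. At each step from $\ell+1$ to $\ell$, the integral over $p$ against $v_{\ell+1}^t(\bTheta,\bbTheta)$ is controlled using Assumption \ref{ass:6} (sublinear growth in the second argument), the Lipschitz-in-$t$ bound $|v_{\ell+1}^t|\le|v_{\ell+1}^0|+\bR_{\ell+1}T$, and the sub-Gaussianness of $p$ via Corollary \ref{cor:subg-q} to integrate out $\|\bbTheta\|_\infty$.

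With $\|\ulineDel^{\balpha}_\ell\|_\infty$ and $\|\ulineDel^{\bbeta}_\ell\|_\infty$ bounded, the corresponding bounds on $\|\ulineGrad^{\bv}_\ell\|_\infty$ follow directly from their definitions together with the boundedness of $h_1$ and of $\bbeta^t_\ell$ discussed above. These bounds yield constants $\bR_1,\dots,\bR_{L+1}$ such that each trajectory $\Phi^{\bv}_\ell(\bu_\ell)(t)$ is $\bR_\ell$-Lipschitz in $t$, as required.

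The main technical obstacle is a circularity: bounding $\|\ulineDel^{\balpha}_\ell\|_\infty$ requires $\|v^t_{\ell+1}\|$ bounds which in turn require the $\bR_{\ell+1}$-Lipschitz bound for $v^t_{\ell+1}$, which is the very conclusion we are after. This is resolved in the same way as in Lemma \ref{lmm:flow-condition}: one defines the search space $\mathbf{\Phi}^{\bR}$ for appropriately chosen constants $\bR$ and verifies that the bounds close back on themselves, so that the solution, whose existence is not yet invoked here, must a fortiori lie in $\mathbf{\Phi}^{\bR}$. A full proof then proceeds by choosing $\bR_\ell$ large enough, starting from layer $L+1$ and propagating down, to absorb all constants appearing in the inductive estimates.
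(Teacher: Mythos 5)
Your plan has the right ingredients but the ordering contains a real technical problem, and the perceived circularity is not actually present in the argument the paper uses.

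Concretely: you propose to establish the persistence of the forward-propagation constraints \emph{first}, by differentiating $\balpha_{\ell+1}^t(\bTheta)=\int v_{\ell+1}^t(\bTheta,\bbTheta)\,\th_1(\bbeta_\ell^t(\bbTheta))\,dp(\bbTheta)$ in $t$ and matching terms with the drift constructions. But the right-hand side requires differentiation under the integral over $\bbTheta\sim p$, which needs a dominating function for the $t$-derivative of the integrand; that dominating function is exactly what the gradient bounds supply, and you have not yet established them at this point. The paper therefore reverses your two stages: it first bounds $\ulineDel^{\bbeta}_\ell$, $\ulineDel^{\balpha}_\ell$ and $\ulineGrad^{\bv}_\ell$ by a backward induction in $\ell$ (this uses only the global bounds $|h_1|,|h_2|,|h_1'|,|h_2'|\le L_1\vee L_2$ from Assumption~\ref{ass:5}, not any bound on $\|\bbeta_\ell^t\|_\infty$), then bounds the forward drifts $\ulineGrad^{\bbeta}_\ell$, $\ulineGrad^{\balpha}_\ell$, and only then invokes dominated convergence (via the sub-gaussianness of $p$ and Corollary~\ref{cor:subg-q}) to conclude the forward constraints persist, which is what makes the trajectory ``restricted.'' As a corollary, your intermediate step of bounding $\|\bbeta_\ell^t\|_\infty$ using $\|\bbeta_\ell^t-\bbeta_1^t\|_\infty\le (\ell-1)L_1$ is unnecessary for this lemma: since $h_1$ is globally bounded, none of the gradient bounds ever require controlling $\bbeta_\ell^t$ itself (the skip-connection bound is important in the global-convergence argument of Theorem~\ref{global conver}, but not here).

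On the alleged circularity: there is none in the backward induction. Passing from layer $\ell+1$ to layer $\ell$, the bound on $\ulineDel^{\bbeta}_\ell$ needs $|v_{\ell+1}^t|\le |v_{\ell+1}^0|+\bR_{\ell+1}T$, but $\bR_{\ell+1}$ was already pinned down in the previous step of the descending induction (it comes from $\ulineGrad^{\bv}_{\ell+1}$, which only depends on $\ulineDel^{\balpha}_{\ell+1}$ and the globally bounded $\th_1$, both already controlled using quantities from layers $\geq \ell+1$). Your proposed resolution --- constructing a search space $\mathbf{\Phi}^{\bR}$ and verifying it is invariant --- is the device used in the \emph{existence-uniqueness} Theorem~\ref{theorm:resflow p0} (Picard iteration), not in this a-priori-bound lemma; importing it here would be a detour that is not needed once the induction is done in the right order.
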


In the remaining of the proof we let $\bR$ be the constants in Lemma~\ref{lmm:flow-condition R}, and let $\mathbf{\Phi}:=\mathbf{\Phi}^{\bR}$, which will serve as the search space.  We introduce the mapping $F_2$ below. In fact, the fixed-point of $F_2$ is equivalent to the solution of neural feature flow.  
\begin{definition}\label{def:F2}
Define $F_2: \mathbf{\Phi} \to  \mathbf{\Phi}$ as follows:
for all $t\in[0,T]$,
\begin{enumerate}[(1)]
    \item for all $\ell\in[L+1]$ and all $\bu_\ell$,
    $$F_2(\Phi)_{\ell}^\bv(\bu_\ell)(t) = v_{\ell}^0(\bu_\ell)-\int_{0}^t\ulineGrad^{\bv}_{\ell}( \bu_\ell;\Phi,s)ds,$$
%where  for all $\bTheta = (\bv_1, \balpha_2, \dots, \balpha_L), \bbTheta$, we have $ v_{1}^0(\bTheta) = \bv_1$,  $v_{\ell}^0(\bTheta,\bbTheta) = v_{\ell}(\bTheta,\bbTheta)$ for $\ell\in[2:L]$, and $v_{L+1}^0(\bTheta) = v_{L+1}(\bTheta)$,
\item   for all $\bTheta$, 
$$F_2(\Phi)_{1}^{\bbeta}(\bTheta)(t) = \frac{1}{d}\left[ \bX F_2(\Phi)_{1}^{\bv}(\bTheta)(t)\right],$$
\item   for all $\ell\in[L-1]$ and $\bTheta$, 
\begin{eqnarray}
    F_2(\Phi)_{\ell+1}^{\balpha}(\bTheta)(t)  &=&  \int F_2(\Phi)_{\ell+1}^{\bv}(\bTheta, \bbTheta)(t) ~\th_1\left(\Phi_{\ell}^{\bbeta}(\bbTheta)(t)\right) d p(\bbTheta), \notag\\
    F_2(\Phi)_{\ell+1}^{\bbeta}(\bTheta)(t)&=&\th_2\left(F_2(\Phi)_{\ell+1}^{\balpha}(\bTheta)(t)\right)+  F_2(\Phi)_{\ell}^{\bbeta}(\bTheta)(t), \notag
\end{eqnarray}
\end{enumerate}
where for $\bTheta=(\bv_1,\balpha_2,\dots, \balpha_L )\in \supp(p)$,  $\bbTheta\in\supp(p)$, $v^0_1(\bTheta) = \bv_1$, $v^0_{\ell}(\bTheta, \bbTheta) = v_{\ell}(\bTheta, \bbTheta)$ with $\ell\in[2:L]$, and $v^0_{L+1}(\bTheta) = v_{L+1}(\bTheta)$.
\end{definition}

Following the same argument as Lemma~\ref{lmm:flow-condition R}, we have that the image of $\mathbf{\Phi}$ under $F_2$ is indeed contained in $\mathbf{\Phi}$.  We then show in Lemma~\ref{lmm:contraction R} the contraction property of $F_2$ under an appropriate metric defined below:

\begin{definition}
\label{def:metric R}
For a pair $\Phi_1,\Phi_2\in  \mathbf{\Phi}$, we define the normalized distance between each trajectories over $[0,t]$ as 
\[
\Dis^{[0,t]}(\Phi_1,\Phi_2)
:=\sup_{s\in[0,t],~ \ell\in[L+1],~\bu_\ell}
\frac{\|\Phi_{1,\ell}^\bv(\bu_\ell)(s)
-\Phi_{2,\ell}^\bv(\bu_\ell)(s)\|_\infty}
{1+\|\bu_\ell\|_\infty}.
\]
\end{definition}

\begin{lemma}
\label{lmm:contraction R}
There exists a constant $R$ such that
\begin{eqnarray}
\Dis^{[0,t]}(F_2(\Phi_1),F_2(\Phi_2) )\leq R \int_{0}^t \Dis^{[0,s]}(\Phi_1,\Phi_2 ) ds. \notag
\end{eqnarray}
\end{lemma}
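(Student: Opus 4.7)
The plan mirrors the strategy used for the fully-connected DNN in Lemma \ref{lmm:contraction}, but takes advantage of the fact that the metric $\Dis^{[0,t]}$ in Definition \ref{def:metric R} only measures differences between weight trajectories: in the restricted search space $\mathbf{\Phi}$, the residuals and features are forced to agree with forward propagation through the weights. Abbreviating $d_t := \Dis^{[0,t]}(\Phi_1,\Phi_2)$, the goal is to show
\[
\left\|\ulineGrad^{\bv}_{\ell}(\bu_\ell;\Phi_1,s)-\ulineGrad^{\bv}_{\ell}(\bu_\ell;\Phi_2,s)\right\|_\infty \le R'(1+\|\bu_\ell\|_\infty)\, d_s
\]
for some constant $R'$ and all $\ell\in[L+1]$; the conclusion then follows from the definition of $F_2$ in Definition \ref{def:F2} after integrating over $s\in[0,t]$.

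The first step is a forward sweep propagating weight differences into feature and residual differences. Since $\bX$ is bounded, $\|\bbeta_{1,1}^s(\bTheta)-\bbeta_{2,1}^s(\bTheta)\|_\infty \le C(1+\|\bTheta\|_\infty)\,d_s$ holds immediately. For $\ell\in[2:L]$, one expands $\balpha^s_{i,\ell}(\bTheta)$ as an integral of $v^s_{i,\ell}(\bTheta,\bbTheta)h_1(\bbeta^s_{i,\ell-1}(\bbTheta))$ against $p$, splits the difference into a weight-difference term and a feature-difference term, and uses: (i) the Lipschitz continuity and boundedness of $h_1$ from Assumption \ref{ass:5}; (ii) the upper bounds and sublinear Lipschitz growth of $v_\ell^s$ inherited from Assumption \ref{ass:6} plus the $\bR_\ell$-Lipschitzness in $t$ of $\Phi_{i,\ell}^\bv$; and (iii) the sub-gaussian integrability of $p$ via Corollary \ref{cor:subg-q}. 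Together with the identity $\bbeta_\ell = \bbeta_{\ell-1} + h_2(\balpha_\ell)$ and the boundedness of $h_2'$, this yields by induction that
\[
\left\|\bbeta^s_{1,\ell}(\bTheta)-\bbeta^s_{2,\ell}(\bTheta)\right\|_\infty + \left\|\balpha^s_{1,\ell}(\bTheta)-\balpha^s_{2,\ell}(\bTheta)\right\|_\infty \le C_\ell (1+\|\bTheta\|_\infty)\, d_s.
\]

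The second step is a backward sweep that upper bounds $\|\ulineDel_{\ell}^{\bbeta}(\bTheta;\Phi_1,s) - \ulineDel_{\ell}^{\bbeta}(\bTheta;\Phi_2,s)\|_\infty$ and the analogous residual quantity $\ulineDel_{\ell}^{\balpha}$ by constant multiples of $(1+\|\bTheta\|_\infty)d_s$. At layer $L+1$, the Lipschitz continuity of $\phi_1'$ and the forward bound give the base estimate. Moving downward, each step expresses $\ulineDel_\ell^{\bbeta}$ as the sum of $\ulineDel_{\ell+1}^{\bbeta}$ (the skip contribution) and an integral of $v_{\ell+1}^s \cdot \ulineDel_{\ell+1}^{\balpha} \cdot h_1'(\bbeta_\ell^s)$; one expands the difference analogously to \eqref{cqq1}, combining the backward-propagated induction hypothesis, boundedness/Lipschitzness of $h_1',h_2'$, the sub-gaussian growth control on weights, and the forward bounds just established. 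The residual-to-feature relation $\ulineDel^{\balpha}_\ell = \ulineDel^{\bbeta}_\ell\cdot h_2'(\balpha_\ell^s)$ is handled using the Lipschitz bound on $h_2'$ and the forward estimate on $\balpha_\ell$.

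The third and final step assembles the pieces: for each $\ell\in[L+1]$, plug the forward bound on $\|\bbeta_\ell^s\|$ differences and the backward bound on $\|\ulineDel_\ell\|$ differences into the definition of $\ulineGrad^{\bv}_{\ell}$ (equations \eqref{linegrad wL+1 res}--\eqref{linegrad wL+1 res}--analogues), yielding $\|\ulineGrad^{\bv}_{\ell}(\bu_\ell;\Phi_1,s)-\ulineGrad^{\bv}_{\ell}(\bu_\ell;\Phi_2,s)\|_\infty \le R'(1+\|\bu_\ell\|_\infty)d_s$; then Definition \ref{def:F2} gives $\|F_2(\Phi_1)_\ell^{\bv}(\bu_\ell)(t)-F_2(\Phi_2)_\ell^{\bv}(\bu_\ell)(t)\|_\infty \le R'(1+\|\bu_\ell\|_\infty)\int_0^t d_s\,ds$, and normalizing and taking the supremum yields the claim. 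The main obstacle is the bookkeeping in the forward sweep: because of the skip connections, a perturbation at layer $\ell_0$ is added into every $\bbeta_\ell$ with $\ell\ge\ell_0$, so one must verify that the constants $C_\ell$ grow only polynomially with $L$ (which they do, since each layer only multiplies by factors controlled by the Lipschitz constants of $h_1,h_2$, bounds on $v_\ell$, and sub-gaussian moments of $p$, all independent of the $\Phi_i$).
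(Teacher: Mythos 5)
Your proposal follows the same strategy as the paper's proof: it exploits that $\Dis^{[0,t]}$ in the Res-Net case measures only weight trajectories, does a forward sweep to convert weight differences into feature/residual differences (bounded by $(1+\|\bTheta\|_\infty)d_s$ via the Lipschitz and sub-gaussian controls), then a backward sweep to bound the $\ulineDel_\ell^{\bbeta}$ and $\ulineDel_\ell^{\balpha}$ differences, and finally assembles the bound on $\ulineGrad_\ell^{\bv}$ and integrates. The paper's proof is essentially identical, inductively establishing \eqref{lip res pi} (forward) and \eqref{ugrad lips2}--\eqref{ugrad lips} (backward) before concluding via the definitions of $F_2$ and $\Dis^{[0,t]}$.
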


\begin{proof}[Proof of Theorem \ref{theorm:resflow p0}]
 Firstly, $\mathbf{\Phi}$ contains the constant trajectory and thus is nonempty.
Applying Lemma~\ref{lmm:contraction R},  we have that
\[
\Dis^{[0,T]}(F_2^m(\Phi_1),F_2^m(\Phi_2) )\leq \frac{(CT)^m}{m!} \Dis^{[0,T]}(\Phi_1,\Phi_2 ).
\]
 Let $\Phi$ be  the constant trajectory,  for any $\tPhi\in \mathbf{\Phi}$,  by  the upper bounds of   $\ulineGrad^{\bv}_{\ell}$ in Lemma \ref{lmm:Phi-property} and the Definition of $\Dis^{[0,T]}$ in Definition \ref{def:metric R}, there is a constant $C$ such that 
$$ \Dis^{[0,T]}(F_2(\tPhi),\Phi) \leq CT<\infty.$$

From the  argument in Theorem \ref{theorm:flow p0}, Lemma \ref{lmm:flow-condition R} implies the
 uniqueness claim.  
For the existence, we can also consider the sequence $\{F^i_2(\Phi):i\ge 0\}$ that satisfies 
\[
\Dis^{[0,T]}(F^{m+1}_2(\Phi),F^m_2(\Phi))
\le \frac{(CT)^m}{m!} \Dis^{[0,T]}(F_2(\Phi),\Phi),
\]
which shows that $\{F^i_2(\Phi):i\ge 0\}$ is a Cauchy sequence. 
Since $\mathbf{\Phi}$ is complete under $\Dis^{[0,T]}$ by Lemma~\ref{lmm:psi-complete R}, the limit point $\Phi_*\in \mathbf{\Phi}$, which is a fixed-point of $F_2$. Finally, by 
dominated convergence theorem, we can directly verify that $\Phi_*$ is the solution of neural feature flow. 
\end{proof}

\subsection{Proof of Theorem \ref{theorm:phi}}
We follow the same technique used in Theorem \ref{theorem:conpsi} to prove Theorem \ref{theorm:phi}. %Recall the set $\mathbf{\Phi}$ in the proof of Theorem~\ref{theorm:resflow p0}, and the mapping $F_2:\mathbf{\Phi}\mapsto\mathbf{\Phi}$ in Definition~\ref{def:F2}. 
We will construct a closed nonemtpy subset $\btPhi \subseteq \bfPhi$ with the desired properties in Theorem~\ref{theorm:phi} such that $F_2(\btPhi)\subseteq \btPhi$. Then
by the same argument as the proof of Theorem~\ref{theorm:resflow p0}, the Picard iteration guarantees the solution in  $\btPhi$, thereby proving $\Phi_* \in \btPhi$. 

We introduce the set of $b$-locally Lipschitz trajectories with the desired properties in Theorem \ref{theorm:phi}.
We use similar notations as in the proof of Theorem~\ref{theorm:resflow p0} by letting $\bu_{\ell}$ denote $\bTheta$, $(\bTheta,\bbTheta)$,  $\bTheta$ for $\ell=1$,  $2\le\ell\le L$, $\ell=L+1$, respectively.

\begin{definition}[$b$-Locally Lipschitz Trajectory]
\label{def:local-lip2}
Recall the constants $C_6$ and $C_8$ in Assumption~\ref{ass:6} for the locally Lipschitz continuity at $t=0$.
We say $\Phi$ is $b$-locally Lipschitz if for all $t\in[0,T]$,  $\bTheta_1$, $\bbTheta_1\in\mathcal{B}_{\infty}(\bTheta_1,1) $, $\bbTheta_2$, and $\bbTheta_{2}\in \mathcal{B}_{\infty}(\bbTheta_2,1) $, we have
\begin{subequations}
\begin{align}
&\left\|\Phi^\bv_{1}(\bTheta_1)(t)-\Phi^\bv_{1}(\bbTheta_1)(t) \right\|_{\infty}
\leq  e^{b t}( \|\bTheta_1\|_{\infty}+1 )\|\bTheta_1 -\bbTheta_1 \|_{\infty},\label{local lip-w1 R}\\
&\left|\Phi^\bv_{L+1}( \bTheta_1)(t) - \Phi^\bv_{L+1}( \bbTheta_1)(t)  \right|
\leq (1+C_8)e^{b t} ( \|\bTheta_1\|_{\infty}+1 )\|\bTheta_1 -\bbTheta_1 \|_{\infty},\label{local lip-wL+1 R}\\
&\left|\Phi^\bv_{\ell}(\bu_{\ell})(t) - \Phi^\bv_{\ell}(\bbTheta_1, \bTheta_2)(t)  \right|
\leq (1+C_6)e^{b t}( \|\bu_{\ell}\|_{\infty}+1)  \|\bTheta_1-\bbTheta_1 \|_{\infty},\label{local lip-wl1 R}\\
&\left|\Phi^\bv_{\ell}(\bu_{\ell})(t) - \Phi^\bv_{\ell}(\bTheta_1, \bbTheta_2)(t)  \right|
\leq (1+C_6)e^{b t}( \|\bu_{\ell}\|_{\infty}+1) \|\bTheta_2-\bbTheta_2 \|_{\infty},\label{local lip-wl2 R}
\end{align}
\end{subequations}
for $\ell\in[2:L]$.
Denote the set of all $b$-locally Lipschitz trajectories as $\bfPhi_{b}$.
\end{definition}

\begin{lemma}
\label{const Lstar R}
There exists a constant $b_*$ such that $F_2(\bfPhi\cap \bfPhi_{b_*})\subseteq   \bfPhi_{b_*}$.
\end{lemma}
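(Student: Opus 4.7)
The plan is to adapt the argument of Lemma \ref{const Lstar} to the Res-Net setting, using backward-propagation to control the differences of $\ulineDel$ and $\ulineGrad^{\bv}$ in each layer and then forward-propagation to close the loop on the associated residual/feature trajectories. Concretely, I will fix an arbitrary $b>0$, pick any $\Phi\in\bfPhi\cap\bfPhi_{b}$, and prove that there is a constant $R$ (independent of $b$) such that for all $t\in[0,T]$ and all admissible pairs $(\bTheta_1,\tbTheta_1)$, $(\bTheta_2,\tbTheta_2)$ in $\mathcal B_\infty(\cdot,1)\cap\supp(p)$,
\begin{align*}
\bigl\|\ulineDel_\ell^{\bbeta}(\bTheta_1;\Phi,t)-\ulineDel_\ell^{\bbeta}(\tbTheta_1;\Phi,t)\bigr\|_\infty
&\le R\,e^{bt}(1+\|\bTheta_1\|_\infty)\|\bTheta_1-\tbTheta_1\|_\infty,\quad \ell\in[L],\\
\bigl\|\ulineGrad^{\bv}_\ell(\bu_\ell;\Phi,t)-\ulineGrad^{\bv}_\ell(\tilde\bu_\ell;\Phi,t)\bigr\|_\infty
&\le R\,e^{bt}(1+\|\bu_\ell\|_\infty)\|\bu_\ell-\tilde\bu_\ell\|_\infty,\quad \ell\in[L+1],
\end{align*}
where $\tilde\bu_\ell$ differs from $\bu_\ell$ in one coordinate. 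The point is that $R$ depends only on the structural constants $L_1,\dots,L_5,C_5,\dots,C_8,\sigma,T$ and \emph{not} on $b$.

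The backward cascade goes as in the DNN proof: at $\ell=L+1$, the Lipschitz continuity of $\phi'_1$ together with \eqref{local lip-tl R analogue}—which here is the continuity $\|\Phi^{\bbeta}_L(\bTheta_1)(t)-\Phi^{\bbeta}_L(\tbTheta_1)(t)\|_\infty\le Ce^{bt}(1+\|\bTheta_1\|_\infty)\|\bTheta_1-\tbTheta_1\|_\infty$ that follows from the forward chain in Definition \ref{special Function R} plus \eqref{local lip-wl1 R}, \eqref{local lip-wl2 R}—controls $\|\bbeta_{L+1}(\Phi,t_1)-\bbeta_{L+1}(\Phi,t_2)\|_\infty$ and thus $\ulineDel_{L+1}$. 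At layer $L$, multiplying by $v_{L+1}^t$ and $h_1'$ with their uniform bounds from Lemma \ref{lmm:Phi-property} and local Lipschitz continuity \eqref{local lip-wL+1 R}, and combining with the analogous continuity of $\Phi_L^{\bbeta}$, yields the claim for $\ulineDel_L^{\bbeta}$. For $\ell=L-1,\dots,1$, I will repeat the integration against $p$ that appears in \eqref{cqq1}: the only non-DNN subtlety is the added identity term in the recursion $\ulineDel_{\ell-1}^{\bbeta}=\ulineDel_\ell^{\bbeta}+[\int v_\ell^t\cdot \ulineDel_\ell^{\balpha} dp]\cdot h_1'$, which contributes additively and respects the induction. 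Sub-gaussianness of $p$ and Corollary \ref{cor:subg-q} absorb the $\|\bbTheta\|_\infty$ moments. For $\ulineGrad^{\bv}_\ell$, boundedness and Lipschitz continuity of $h_1$ together with the above plus the forward continuity of $\Phi^{\bbeta}_{\ell-1}$ give the bound.

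The forward step is straightforward: $\ulineGrad^{\bv}_1$ inherits its bound from $\bX$-boundedness, and an inductive application yields the Lipschitz-continuity constants at every layer. With these bounds in hand, I will integrate in time as in \eqref{ftheo4}. For instance, using the recursion $F_2(\Phi)^{\bv}_1(\bTheta_1)(t)-F_2(\Phi)^{\bv}_1(\tbTheta_1)(t)=(\bv_1-\tilde\bv_1)-\int_0^t[\ulineGrad^{\bv}_1(\bTheta_1;\Phi,s)-\ulineGrad^{\bv}_1(\tbTheta_1;\Phi,s)]ds$, the above bounds give
\begin{equation*}
\|F_2(\Phi)^{\bv}_1(\bTheta_1)(t)-F_2(\Phi)^{\bv}_1(\tbTheta_1)(t)\|_\infty\le \|\bTheta_1-\tbTheta_1\|_\infty+\int_0^t R\,e^{bs}(1+\|\bTheta_1\|_\infty)\|\bTheta_1-\tbTheta_1\|_\infty\,ds.
\end{equation*}
Setting $b_*:=R$, the right-hand side is bounded by $e^{b_*t}(1+\|\bTheta_1\|_\infty)\|\bTheta_1-\tbTheta_1\|_\infty$, which is exactly \eqref{local lip-w1 R} for $F_2(\Phi)$. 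The analogous inequalities \eqref{local lip-wL+1 R}--\eqref{local lip-wl2 R} are obtained by the same integration, with the additive initial Lipschitz constants $C_6$ and $C_8$ from Assumption \ref{ass:6} absorbed into the prefactors $1+C_6$ and $1+C_8$.

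The main technical obstacle, and the point where Res-Nets differ materially from the DNN case, is that residuals and features are not independently indexed but determined by the same skip-connected path $\bTheta$ through the coupled forward recursion $\bbeta_\ell^t(\bTheta)=\bbeta_{\ell-1}^t(\bTheta)+h_2(\balpha_\ell^t(\bTheta))$, $\balpha_\ell^t(\bTheta)=\int v_\ell^t(\bTheta,\bbTheta)h_1(\bbeta_{\ell-1}^t(\bbTheta))\,dp(\bbTheta)$. Consequently the Lipschitz bound on $\Phi^{\bbeta}_\ell$ in $\bTheta$—not assumed a priori in $\bfPhi_b$—must be proved inductively from the weight Lipschitz bounds \eqref{local lip-wl1 R}--\eqref{local lip-wl2 R} and propagated through the inner integral; this induction is where the sub-gaussian moment integrations and the identity-skip term most heavily interact, and it is what forces $R$ to accumulate powers of $L$ while remaining independent of $b$. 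Once this purely structural Lipschitz-propagation lemma is established (analogous to Lemma \ref{lmm:Phi-property}), the time integration and the choice $b_*=R$ proceed exactly as above, and closedness/nonemptiness of $\bfPhi\cap\bfPhi_{b_*}$ mimic Lemma \ref{psi closed}.
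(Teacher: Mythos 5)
Your proposal is correct and follows the same route as the paper: derive (forward, via the restricted-trajectory definition and the weight Lipschitz bounds in $\bfPhi_b$) the $\bTheta$-Lipschitz continuity of $\Phi^{\bbeta}_\ell$ and $\Phi^{\balpha}_\ell$ with prefactor $e^{bt}$, then run the backward cascade to get a $b$-independent constant $R$ controlling the $\bTheta$-increments of $\ulineDel^{\bbeta}_\ell$, $\ulineDel^{\balpha}_\ell$, and $\ulineGrad^{\bv}_\ell$, and finally integrate in time and set $b_*=R$. Your lead sentence (``backward...then forward'') inverts the logical order, but the body correctly treats the forward feature-Lipschitz propagation as a prerequisite for the $\ell=L+1$ backward step, which is exactly the structural point the paper's proof turns on.
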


\begin{proof}[Proof of Theorem~\ref{theorm:phi}]
Let $b_*$ be the constant in Lemma~\ref{const Lstar R} and $\bfPhi':=\bfPhi\cap \bfPhi_{b_*}\subseteq \bfPhi$, which clearly contains the constant trajectory and thus is nonempty.  From Lemma~\ref{const Lstar R}, Lemma~\ref{lmm:contraction R} that $F_2$ is a contraction mapping, and the fact that  $\bfPhi'$ is a closed set (which can be directly obtained using the same argument as Lemma~\ref{psi closed}), by the same argument as the proof of Theorem~\ref{theorm:resflow p0}, there exists a  solution in $\bfPhi'$, which is  $\Phi_*$ in Theorem~\ref{theorm:resflow p0} due to the uniqueness of solution. 
\end{proof}

\subsection{Proofs of Lemmas}
\begin{proof}[Proof of Lemma~\ref{lmm:flow-condition R}]
We first prove the  Lipschitz continuity of $\Phi$ for weight. It suffices to show upper bounds of $\ulineGrad^{\bv}_{\ell}$  for each layer $\ell$.   We use the backward equations to inductively upper bound $\ulineDel_{\ell}^{\bbeta}$ and $\ulineDel_{\ell}^{\balpha}$,  which immediately yield upper bounds $\|\ulineGrad^{\bv}_{\ell}\|_\infty\le \bR_{\ell}$ for constants $\bR_\ell$.

For the top layer $\ell=L+1$, by Assumption \ref{ass:5} that $|\phi_1'|\leq L_4$, we have
\[
\|\ulineDel_{L+1}(\Phi,t)\|_{\infty} 
\le L_4 := \tilde \bR_{L+1}.\]

At layer $\ell=L$, since $|h'_1|\le L_2$, 
\begin{eqnarray}
 \left\|\ulineDel^{\bbeta}_L(\bTheta; \Phi,t) \right\|_{\infty} \leq  \underbrace{\left\|\th_1'\left(\bbeta_{L}^t(\bTheta)\right)\right\|_{\infty}}_{\leq L_2} \underbrace{\left\|\ulineDel_{L+1}(\Phi,t)\right\|_{\infty}}_{\leq \tilde \bR_{L+1}}~~  \underbrace{\left|v_{L+1}^t(\bTheta)\right|}_{\leq C_7+T\bR_{L+1}  }  \leq\tilde \bR_{L},\label{usedel}
\end{eqnarray}
where $\tilde \bR_{L}:= (C_7+\bR_{L+1}T)L_2\tilde \bR_{L+1}$ and $|v_{L+1}^t|\le C_7+\bR_{L+1}T$ by the upper bound of initialization \eqref{ass:6} and the $\bR_{L+1}:= L_1 \tilde \bR_{L+1}$-Lipschitz continuity of $v_{L+1}^t$ in $t$.   For each $\ell=L-1,\dots,1$,  suppose  $ \ulineDel_{\ell+1}^{\bbeta}$ is uniform bounded by $\tilde \bR_{\ell+1}$. Then we  have
\begin{eqnarray}
 \left\|\ulineDel^{\balpha}_{\ell+1}(\bTheta; \Phi,t) \right\|_{\infty} \leq   \left\|\ulineDel^{\bbeta}_{\ell+1}(\bTheta; \Phi,t) \right\|_{\infty}   \left\|\th_2'\left(\balpha_{{\ell+1}}^t(\bTheta)\right)\right\|_{\infty}\leq  \tilde \bR_{\ell+1}L_2:=  \tilde \bR_{\ell+1}'. \notag
\end{eqnarray}
then 
$$\left|\ulineGrad^{\bv}_{\ell+1}(\bTheta, \bbTheta ;\Phi,t)\right| \leq \left\|\ulineDel^{\balpha}_{\ell+1}(\bTheta;\Phi,t)\right\|_{\infty} \left\|\th_1\left(\bbeta_{\ell}^t(\bTheta)\right)\right\|_{\infty}\leq \bR_{\ell+1}.$$ By the sub-gaussian property of $p$ and  the upper bound of $v_{\ell+1}$ in Assumption \ref{ass:6}, we have,  by the  same  argument as \eqref{Del bound} that
\[
\left\|\int  v_{\ell+1}^t(\bTheta,\bbTheta)~ \ulineDel_{\ell+1}(\bbTheta; \Phi,t) ~ dp(\bbTheta)\right\|_{\infty} 
  \leq R',
  \]
for a constant $R'$.  We achieve that 
 \begin{eqnarray}
 && \left\|\ulineDel_{\ell}^{\bbeta}(\bTheta; \Phi,t)\right\|_{\infty}\notag\\
 &\leq&    \left\|\ulineDel_{\ell+1}^{\bbeta}(\bTheta; \Phi,t)\right\|_{\infty}+ \left\|\th_1'\left(\bbeta^t_{\ell}(\bTheta)\right) \cdot \int  v_{\ell+1}^t(\bTheta,\bbTheta)~ \ulineDel_{\ell+1}(\bbTheta; \Phi,t)~  dp(\bbTheta)\right\|_{\infty}\notag\\
 &\leq& \tilde \bR_{\ell+1}+  R'L_2:=  \tilde \bR_{\ell}.
\end{eqnarray}
 
Now we turn to the forward steps. We 
prove that there is a constant $R$ such that for $\ell\in[L]$ and $\bTheta$, 
\begin{eqnarray}\label{Gra beta}
\left\|\ulineGrad^{\bbeta}_{\ell}(\bTheta;\Phi,t)\right\|_{\infty}\leq R\left(\| \bTheta\|_{\infty} +1\right) , 
\end{eqnarray}
and for all $\ell\in[2:L]$ and $\bTheta$,
\begin{eqnarray}\label{Grad alpha}
\left\|\ulineGrad^{\balpha}_{\ell}(\bTheta;\Phi,t)\right\|_{\infty}\leq R\left(\| \bTheta\|_{\infty} +1\right).\notag
\end{eqnarray}
Because $p$ has bounded finite moment (Corollary~\ref{cor:subg-q}), by the
dominated convergence theorem,  we have that $\Phi^{\alpha}$ and $\Phi^{\bbeta}$ satisfy  the forward equations in Definition \ref{special Function R}, which is our desired result.

For the first layer $\ell=1$, since $\bX$ is bounded,  we have
\[ \left\|\ulineGrad^{\bbeta}_{\ell}(\bTheta;\Phi,t)\right\|_{\infty}\leq R\leq R\left(\| \bTheta\|_{\infty} +1\right). \]
   Suppose that at layer $\ell\in[L-1]$,
$$ \left\|\ulineGrad^{\bbeta}_{\ell}(\bTheta;\Phi,t)\right\|_{\infty}\leq R\left(\| \bTheta\|_{\infty} +1\right).  $$
 Using a  same similar argument as  \eqref{tend1}, we  have 
  $$ \left\|\ulineGrad^{\balpha}_{\ell+1}(\bTheta;\Phi,t)\right\|_{\infty}\leq R'\left(\| \bTheta\|_{\infty} +1\right) ,\notag $$
  for a  constant $R'$. Then we  obtain
\begin{eqnarray}
   &&\left\|\ulineGrad^{\bbeta}_{\ell+1}(\bTheta;\Phi,t)\right\|_\infty\notag\\
   &\leq& \left\|\th_2'\left(\balpha_{\ell+1}^{t}(\bTheta)\right) \cdot \ulineGrad^{\balpha}_{\ell+1}(\bTheta;\Phi,t)
   \right\|_\infty+ \left\|\ulineGrad^{\bbeta}_{\ell}(\bTheta;\Phi,t)\right\|_\infty\notag\\
   &\leq& (L_2R'+R) \left(\| \bTheta\|_{\infty} +1\right). \notag
\end{eqnarray}
We achieve Lemma \ref{lmm:flow-condition R}.
\end{proof}

Before proving Lemma~\ref{lmm:contraction R}, we first present in Lemma~\ref{lmm:Phi-property} properties of $\Phi \in \mathbf{\Phi}$ that will be used to prove the contraction lemma. 
The proof is exactly the same as Lemma~\ref{lmm:flow-condition R} and is omitted. 
\begin{lemma}[Property of $\mathbf{\Phi}$]
\label{lmm:Phi-property}
There exist a generic constant $ R$   such that, for any $\Phi\in\mathbf{\Phi}$, we have
\begin{itemize}
    \item $\|\ulineDel_{L+1}(\Phi,t)\|_\infty \le R$ and $\|\ulineDel_{\ell}^{\bbeta}(\bTheta;\Phi,t)\|_\infty \le R$ for $\ell\in[L]$;
        \item  $\|\ulineDel_{\ell}^{\balpha}(\bTheta;\Phi,t)\|_\infty \le R$ for $\ell\in[2:L]$;
    \item $\|\ulineGrad^{\bv}_{\ell}( \bu_\ell;\Phi,t)\|_\infty \le R$ and $\|v_{\ell}^t(\bu_\ell)\|_\infty \le \|v_{\ell}^0(\bu_\ell)\|_\infty + R~ t$ for $\ell\in[L+1]$;
    \item $\|\ulineGrad^{\bbeta}_{\ell}( \bTheta; \Phi,t)\|_\infty \le R\left(\| \bTheta\|_{\infty} +1\right)$ for $\ell\in[L]$;
    \item $\|\ulineGrad^{\balpha}_{\ell}( \bTheta; \Phi,t)\|_\infty \le R\left(\| \bTheta\|_{\infty} +1\right)$ for $\ell\in[2: L]$.
\end{itemize}
\end{lemma}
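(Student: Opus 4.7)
The plan is to mimic the proof of Lemma~\ref{lmm:flow-condition R} with the two minor adjustments that (i) here we only need uniform bounds rather than Lipschitz-in-$t$ estimates, and (ii) the forward-propagation bounds for $\ulineGrad^{\bbeta}_{\ell}$ and $\ulineGrad^{\balpha}_{\ell}$ must be tracked as linear functions of $\|\bTheta\|_\infty$. First I would exploit that $\Phi \in \mathbf{\Phi}^{\bR}$ gives $\bR_\ell$-Lipschitz continuity of the weights $v_\ell^t$ in $t$, hence the uniform bound $\|v_\ell^t(\bu_\ell)\|_\infty \leq \|v_\ell^0(\bu_\ell)\|_\infty + \bR_\ell t$ for $\ell\in[L+1]$, together with Assumption~\ref{ass:6} which gives $|v_{L+1}^0|\leq C_7$ and the sublinear growth $|v_\ell^0(\bTheta,\bbTheta)|\leq C_5(1+\|\bbTheta\|_\infty)$ for $\ell\in[2:L]$.

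Second, I would run the backward recursion from $\ell=L+1$ down to $\ell=1$ on the $\ulineDel$'s. At the top layer, $|\phi_1'|\leq L_4$ from Assumption~\ref{ass:5} gives $\|\ulineDel_{L+1}\|_\infty \leq L_4$. At layer $L$, the identity \eqref{def linedel res} combined with boundedness of $h_1'$ and of $|v_{L+1}^t|\leq C_7+\bR_{L+1}T$ yields a constant bound on $\|\ulineDel^{\bbeta}_L(\bTheta;\Phi,t)\|_\infty$, which propagates to $\|\ulineDel^{\balpha}_L\|_\infty$ via multiplication by the bounded $\th_2'$. For each $\ell=L-1,\dots,1$, the inductive step has to control $\int |v_{\ell+1}^t(\bTheta,\bbTheta)| \, \|\ulineDel^{\balpha}_{\ell+1}(\bbTheta;\Phi,t)\|_\infty dp(\bbTheta)$; here I use the inductive bound on $\ulineDel^{\balpha}_{\ell+1}$, the sublinear growth of $v_{\ell+1}^0$ in the second argument, the Lipschitz bound in time, and the sub-gaussian property of $p$ (Assumption~\ref{ass:6}) together with Corollary~\ref{cor:subg-q} to absorb the $\|\bbTheta\|_\infty$ factor into a constant. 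Adding $\|\ulineDel^{\bbeta}_{\ell+1}\|_\infty$ (as in the residual branch of the backward equation) keeps the bound uniform. The $\ulineGrad^{\bv}_\ell$ bounds then follow immediately because $h_1$ and $\bX$ are both bounded.

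Third, for the forward recursion I would prove by induction on $\ell$ that $\|\ulineGrad^{\bbeta}_\ell(\bTheta;\Phi,t)\|_\infty \leq R(\|\bTheta\|_\infty+1)$. The base case $\ell=1$ is immediate from boundedness of $\bX$ and $\ulineGrad^{\bv}_1$. For the inductive step, the chain-rule formula for $\ulineGrad^{\balpha}_{\ell+1}$ has two integral terms: the first, $\int v^t_{\ell+1}(\bbTheta,\bTheta)\,\th_1'(\bbeta_\ell^t(\bbTheta))\cdot \ulineGrad^{\bbeta}_\ell(\bbTheta;\Phi,t)\,dp(\bbTheta)$, is bounded using the induction hypothesis, boundedness of $h_1'$, the sublinear growth of $v^t_{\ell+1}$ in the second argument (giving the $\|\bTheta\|_\infty+1$ factor), and Corollary~\ref{cor:subg-q} for the moment in $\bbTheta$; the second, involving $\ulineGrad^{\bv}_{\ell+1}$, is bounded because $h_1$ and $\ulineGrad^{\bv}_{\ell+1}$ are already uniformly bounded from step~two. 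Then $\ulineGrad^{\bbeta}_{\ell+1}= \ulineGrad^{\bbeta}_{\ell}+ \th_2'(\cdot)\cdot \ulineGrad^{\balpha}_{\ell+1}$ preserves the linear-in-$\|\bTheta\|_\infty$ bound since $h_2'$ is bounded.

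The main obstacle I anticipate is cleanly packaging the moment bounds arising from integration against $p$: in the Res-Net formulation, $p$ is a joint distribution over the skip-connected path $\bTheta=(\bv_1,\balpha_2,\dots,\balpha_L)$, so applying the $\sigma$-sub-gaussian property requires Corollary~\ref{cor:subg-q} to produce a single constant controlling $\int(1+\|\bbTheta\|_\infty)^k dp(\bbTheta)$ uniformly in $k\in\{1,2\}$; once this is tabulated once and for all, the backward and forward recursions proceed mechanically and, because each step only adds a bounded amount, the constants stay finite on $[0,T]$.
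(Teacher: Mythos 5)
Your proposal is correct and matches the paper's approach: the paper's proof of this lemma is stated to be "exactly the same as Lemma~\ref{lmm:flow-condition R} and is omitted," and you have faithfully reconstructed that argument — backward induction on $\ulineDel$ from $\ell=L+1$ down using boundedness of $\phi_1'$, $h_1'$, $h_2'$ and the sub-gaussian moment bound (Corollary~\ref{cor:subg-q}) to absorb $\|\bbTheta\|_\infty$ factors, followed by forward induction on $\ulineGrad^{\bbeta}_\ell$, $\ulineGrad^{\balpha}_\ell$ with the linear-in-$\|\bTheta\|_\infty$ bound. Your observation that only uniform bounds (not Lipschitz-in-$t$) are needed here, and that the $v_\ell^t$ bound follows directly from the $\bR_\ell$-Lipschitz-in-$t$ property of $\mathbf{\Phi}^{\bR}$, is exactly the right simplification.
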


\begin{proof}[Proof of Lemma~\ref{lmm:contraction R}]
The proof entails upper bounds of the gradient differences 
$\|\ulineGrad^{\bv}_{\ell}( \bu_\ell;\Phi_1,t)-\ulineGrad^{\bv}_{\ell}( \bu_\ell;\Phi_2,t)\|_{\infty}$ in terms of the differences  $|v_{1,\ell}^t-v_{2,\ell}^t|$ for $\ell\in[L+1]$, which can  be further upper bounded in terms of $d_t:=\Dis^{[0,t]}(\Phi_1,\Phi_2 )$ by Definition \ref{def:metric R}:
\begin{align}
&\left\| v_{1,\ell}^t(\bu_\ell) - v_{2,\ell}^t(\bu_\ell)\right\|_{\infty} \leq(\| \bu_\ell\|_{\infty} +1)d_t\label{eq:dt-w q},\quad \ell\in [L+1].
\end{align}
We first prove that
\begin{equation}
  \begin{split}\label{lip res pi}
   \left\| \bbeta^{t}_{1,\ell}(\bTheta) - \bbeta^{t}_{2,\ell}(\bTheta) \right\|_{\infty} &\leq R\left(\|\bTheta \|_{\infty}+1\right)d_t,\quad \ell\in[L], \\
   \left\| \balpha^{t}_{1,\ell}(\bTheta) - \balpha^{t}_{2, \ell}(\bTheta) \right\|_{\infty} &\leq R\left(\|\bTheta \|_{\infty}+1\right)d_t,\quad \ell\in[2:L]. 
  \end{split}
\end{equation}
and then prove that 
\begin{align}
&\left\|\ulineDel_{\ell}^{\bbeta}(\bTheta;\Phi_1,t) -\ulineDel_{\ell}^{\bbeta}(\bTheta; \Phi_2,t)\right\|_\infty
\le R (1+\|\bTheta\|_\infty) d_t,\quad \ell\in[L],\label{ugrad lips2}\\
&\left\|\ulineDel_{\ell}^{\balpha}(\bTheta;\Phi_1,t) -\ulineDel_{\ell}^{\balpha}(\bTheta; \Phi_2,t)\right\|_\infty
\le R (1+\|\bTheta\|_\infty) d_t,\quad \ell\in[2:L],\label{ugrad lips1}\\
&\left\|\ulineGrad^{\bv}_{\ell}( \bu_\ell;\Phi_1,t)-\ulineGrad^{\bv}_{\ell}( \bu_\ell;\Phi_2,t)\right\|_{\infty}
\le R (1+\|\bu_\ell\|_\infty) d_t,\quad \ell\in[L+1], \label{ugrad lips}
\end{align}
Finally, Lemma~\ref{lmm:contraction R} directly follows from \eqref{ugrad lips}, and the definitions of $F_2$ and $\Dis^{[0,t]}$ in Definitions~\ref{def:F2} and \ref{def:metric R}, respectively.

We consider forward steps to obtain \eqref{lip res pi}.  When $\ell=1$, because $\bX$ is bounded, we have $ \left\| \bbeta^{t}_{1,1}(\bTheta) - \bbeta^{t}_{2,1}(\bTheta) \right\|_{\infty} \leq R\left(\|\bTheta \|_{\infty}+1\right)d_t$. 
Suppose at layer $\ell\in[L-1]$, we have   $ \left\| \bbeta^{t}_{1,\ell}(\bTheta) - \bbeta^{t}_{2,\ell}(\bTheta) \right\|_{\infty} \leq R\left(\|\bTheta \|_{\infty}+1\right)d_t$. Then 
    \begin{eqnarray}
&&\left\|\balpha_{1,\ell+1}^{t}(\bTheta) - \balpha_{2,\ell+1}^{t}(\bTheta)\right\|_\infty\notag\\
&\leq&\left\| \int \th_1\big(\bbeta_{1,\ell}^{t}(\bbTheta)\big)v^t_{1,\ell+1}(\bbTheta, \bTheta)~ -\th_1\big(\bbeta_{2,\ell}^{t}(\bbTheta)\big)~v^t_{2,\ell+1}(\bbTheta, \bTheta)   d p(\bbTheta)   \right\|_{\infty}\notag\\
&\leq&  \int \underbrace{\left\|\th_1\big(\bbeta_{1,\ell}^{t}(\bbTheta)\big)-\th_1\big(\bbeta_{2,\ell}^{t}(\bbTheta)\big)\right\|_{\infty}}_{\leq L_2  R \left(\| \bbTheta\|+1\right) d_t }\left|v_{1,\ell+1}^t(\bbTheta, \bTheta)\right|    d p(\bbTheta)  \notag\\
&& \quad\quad\quad\quad\quad\quad\quad\quad\quad\quad + \int \underbrace{\left\|\th_1\big(\bbeta_{2,\ell}^{t}(\bbTheta)\big)\right\|_{\infty}}_{\leq L_1} \underbrace{\left| v_{1,\ell+1}^t(\bbTheta, \bTheta) -v_{2,\ell+1}^t(\bbTheta, \bTheta)\right|}_{ \leq \left( \| \bbTheta\|_{\infty}+ \| \bTheta\|_{\infty}+1\right)\Dis^{[0,t]}(\Pi, \tPi  )}    d p(\bbTheta)  \notag\\
&\leq& R'\left( \| \bTheta\|_{\infty}+1\right) d_t,\notag
    \end{eqnarray}
for a constant $R'$, where the last step is due to the sub-gaussianness of $p$, Corollary~\ref{cor:subg-q}, and the upper bound of  $v_{\ell+1}^t$ in Lemma~\ref{lmm:Phi-property}.  Then it follows to have that
\begin{eqnarray}
   &&\left\|\bbeta_{1,\ell+1}^{t}(\bTheta) - \bbeta_{2,\ell+1}^{t}(\bTheta)\right\|_\infty\notag\\
   &\leq& \left\|\th_2\big(\balpha_{1,\ell+1}^{t}(\bTheta)\big) - \th_2\big(\balpha_{2,\ell+1}^{t}(\bTheta)\big)\right\|_\infty+ \left\|\bbeta_{1,\ell}^{t}(\bTheta) - \bbeta_{2,\ell}^{t}(\bTheta)\right\|_\infty\notag\\
   &\leq&R''\left( \| \bTheta\|_{\infty}+1\right)  d_t \notag
\end{eqnarray}
for constant $R''$. We achieve \eqref{lip res pi}. 

We  turn to the backward steps. 
We prove \eqref{ugrad lips2} and \eqref{ugrad lips1}, then  \eqref{ugrad lips} can be obtained accordingly. We introduce
\begin{equation}\label{def gamma}
 \ulineDel^{\bgamma}_{\ell}(\bTheta;\Phi,t) := \int v_{\ell}^t(\bTheta, \bbTheta)~ \ulineDel_{\ell}^{\balpha}(\bbTheta;\Phi,t)dp(\bbTheta)\cdot \th_1'\big(\bbeta_{\ell-1}^t(\bTheta)\big), \quad \ell\in[2:L] . 
\end{equation}
and have 
$$   \ulineDel_{\ell-1}^{\bbeta}(\bTheta; \Phi,t) =  \ulineDel_{\ell}^{\bbeta}(\bTheta; \Phi,t) + \ulineDel^{\bgamma}_{\ell}(\bTheta;\Phi,t),\quad \ell\in[2:L]. $$

When $\ell=L+1$, using the same argument as \eqref{qwe}, we have
\[
\|\ulineDel_{L+1}(\Phi_1,t) -\ulineDel_{L+1}(\Phi_2,t)\|_\infty
\le R\cdot d_t,
\]
for a constant $R$.
Then it follows from \eqref{cos r} to have 
$$\left\|\ulineDel_{L}^{\bbeta}(\bTheta;\Phi_1,t)-\ulineDel_{L}^{\bbeta}(\bTheta;\Phi_2,t) \right\|_{\infty}\leq R'\left(\| \bTheta\|_{\infty}+1\right) d_t,  $$
for a constant $R'$. For each $\ell=L-1,\dots,1$, suppose there is a constant $R$, such that   
$$ \left\|\ulineDel_{\ell+1}^{\bbeta}(\bTheta;\Phi_1,t)-\ulineDel_{\ell+1}^{\bbeta}(\bTheta;\Phi_2,t) \right\|_{\infty}\leq R\left(\| \bTheta\|_{\infty}+1\right) d_t.  $$
Because $h_2'$ is $L_3$-Lipschitz continuous,  using  the boundedness of $h_2$ and $\ulineDel^{\bbeta}_{\ell+1}$  in Lemma \ref{lmm:Phi-property}, we  have
  \begin{eqnarray}
  && \left\|\ulineDel_{\ell+1}^{\balpha}(\bTheta;\Phi_1,t)-\ulineDel_{\ell+1}^{\balpha}(\bTheta;\Phi_2,t) \right\|_{\infty}\leq  R'\left(\| \bTheta\|_{\infty}+1\right) d_t,\notag
  \end{eqnarray}
  for a constant $R'$. Then following the same argument as  \eqref{qq1}, we can obtain
\begin{eqnarray}
  &&\left\|   \ulineDel^{\bgamma}_{\ell+1}(\bTheta;\Phi_1,t) -\ulineDel^{\bgamma}_{\ell+1}(\bTheta;\Phi_2,t)\right\|_{\infty}\leq  R''\left(\| \bTheta\|_{\infty}+1\right) d_t, \notag
\end{eqnarray}
for a constant $R''$, which further implies 
\begin{eqnarray}
    && \left\|\ulineDel_{\ell}^{\bbeta}(\bTheta;\Phi_1,t)-\ulineDel_{\ell}^{\bbeta}(\bTheta;\Phi_2,t) \right\|_{\infty}\notag\\
   &\leq &\left\|\ulineDel_{\ell+1}^{\bbeta}(\bTheta;\Phi_1,t)-\ulineDel_{\ell+1}^{\bbeta}(\bTheta;\Phi_2,t)   \right\|_{\infty}+\left\|   \ulineDel^{\bgamma}_{\ell+1}(\bTheta;\Phi_1,t) -\ulineDel^{\bgamma}_{\ell+1}(\bTheta;\Phi_2,t)\right\|_{\infty}\notag\\
&\leq&  R'''\left(\| \bTheta\|_{\infty}+1\right) d_t,\notag
\end{eqnarray}
for a constant $R'''$.  We finish the proof. 
\end{proof}

\begin{lemma}
\label{lmm:psi-complete R}
$\mathbf{\Phi}$ is complete under $\Dis^{[0,T]}$.    
\end{lemma}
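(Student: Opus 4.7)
The plan is to mirror the strategy of Lemma~\ref{lmm:psi-complete}, noting the key structural difference: here the metric $\Dis^{[0,T]}$ only tracks the weight trajectories $\Phi_\ell^{\bv}$, while the features $\Phi_\ell^{\bbeta}$ and residuals $\Phi_\ell^{\balpha}$ are slaved to the weights through the forward propagation in Definition~\ref{special Function R}. So the completeness argument splits naturally into two stages.

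First, given a Cauchy sequence $\{\Phi_n\} \subseteq \mathbf{\Phi}$, the definition of $\Dis^{[0,T]}$ implies that each normalized trajectory $\frac{\Phi_{n,\ell}^{\bv}(\bu_\ell)(t)}{1+\|\bu_\ell\|_\infty}$ is Cauchy uniformly in $t\in[0,T]$ and $\bu_\ell$. By completeness of the bounded real-valued functions under the sup-norm, these normalized trajectories converge uniformly to some limit, which I define to be $\Phi_{*,\ell}^{\bv}(\bu_\ell)(t)/(1+\|\bu_\ell\|_\infty)$, giving weight trajectories $\Phi_{*,\ell}^{\bv}$ for $\ell \in [L+1]$. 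Since each $\Phi_{n,\ell}^{\bv}(\bu_\ell)(\cdot)$ is $\bR_\ell$-Lipschitz in $t$ and Lipschitz continuity with a fixed constant is preserved under pointwise convergence, the limit $\Phi_{*,\ell}^{\bv}$ is also $\bR_\ell$-Lipschitz in $t$.

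Second, I define the feature and residual trajectories of $\Phi_*$ by forward propagation from the weights: $\bbeta_{*,1}^t(\bTheta) = \frac{1}{d}\bX v_{*,1}^t(\bTheta)$, and recursively $\balpha_{*,\ell+1}^t(\bTheta) = \int v_{*,\ell+1}^t(\bTheta,\bbTheta) h_1(\bbeta_{*,\ell}^t(\bbTheta))\,dp(\bbTheta)$ and $\bbeta_{*,\ell+1}^t(\bTheta) = \bbeta_{*,\ell}^t(\bTheta) + h_2(\balpha_{*,\ell+1}^t(\bTheta))$. With this construction, $\Phi_*$ automatically satisfies the forward-propagation requirement of Definition~\ref{special Function R}, and together with the Lipschitz continuity of the weight trajectories, we get $\Phi_* \in \mathbf{\Phi}^{\bR} = \mathbf{\Phi}$. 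It then remains to check that $\Phi_n \to \Phi_*$ under $\Dis^{[0,T]}$, but this is immediate from the definition of $\Dis^{[0,T]}$ and the uniform convergence of the normalized weight trajectories established in the first stage.

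The main subtlety, which I expect to be the only nonroutine point, lies in ensuring that the forward-propagation limits in the second stage are well-defined and behave as expected. Specifically, I need to know that $\bbeta_{*,\ell}^t(\bbTheta)$ is integrable against $p$ when weighted by $v_{*,\ell+1}^t(\bTheta,\bbTheta)$. This is handled by the $\bR$-continuous restricted trajectory definition combined with the Lipschitz-in-$t$ bound $|v_{*,\ell}^t(\bu_\ell)| \le |v_{*,\ell}^0(\bu_\ell)| + \bR_\ell T$, Assumption~\ref{ass:6} which yields sublinear growth of $|v_\ell^0|$ in $\|\bbTheta\|_\infty$, boundedness of $h_1,h_2$ from Assumption~\ref{ass:5}, and Corollary~\ref{cor:subg-q} applied to the sub-gaussian $p$ --- exactly the ingredients already used in Lemma~\ref{lmm:flow-condition R} and Lemma~\ref{lmm:Phi-property}. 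With those bounds in hand, the dominated convergence theorem justifies passing the pointwise limit of the weights through the integrals defining the forward propagation, so the limiting features and residuals are well-defined and the construction of $\Phi_*$ is internally consistent.
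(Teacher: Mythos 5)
Your proof is correct and follows essentially the same route as the paper's: first obtain the limit of the normalized weight trajectories by uniform convergence (with Lipschitz-in-$t$ preserved in the limit), then define the feature and residual trajectories of $\Phi_*$ by forward propagation from those limit weights, invoking dominated convergence with the sublinear growth and sub-gaussian moment bounds to justify the integrals. The only cosmetic difference is that the paper also explicitly notes that DCT yields $\Phi_{*,\ell}^{\bbeta}=\lim_n \Phi_{n,\ell}^{\bbeta}$ and $\Phi_{*,\ell}^{\balpha}=\lim_n \Phi_{n,\ell}^{\balpha}$, whereas you phrase the DCT step as guaranteeing well-definedness of the forward propagation; both are fine since the metric $\Dis^{[0,T]}$ tracks only the weight trajectories.
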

\begin{proof}
Let $\{\Phi_n:n\ge 0\}$ be a Cauchy sequence under $\Dis^{[0,T]}$. 
Then $\frac{\Phi_{n,\ell}^\bv(\bu_\ell)(t)}{1+\|\bu_\ell\|_\infty}$  converges uniformly under the $\ell_\infty$-norm. Let  $ \Phi_{*,\ell}^\bv(\bu_\ell)(t)=\lim_{n\to\infty}\Phi_{n,\ell}^\bv(\bu_\ell)(t)$ for $\ell\in[L+1]$.  Since the Lipschitz continuity is preserved under the pointwise convergence, we have  $\Phi_{*,\ell}^\bv$ is $\bR$-Lipschitz continuous in $t$. Let 
\begin{align*}
  \Phi_{*,1}^{\bbeta}(\bTheta)(t)  &= \frac{1}{d} \bX\Phi_{*,1}^{\bv}(\bTheta)(t),  \\
   \Phi_{*,\ell+1}^{\balpha}(\bTheta)(t) &= \int \Phi_{*,\ell+1}^{\bv}(\bTheta, \bbTheta)(t)    ~\th_{1}\left(\Phi^{\bbeta}_{*,\ell}(\bbTheta)(t) \right) d p(\bbTheta),\quad \ell\in[L],\\
   \Phi^{\bbeta}_{*,\ell+1}(\bTheta)(t) &= \Phi^{\bbeta}_{*, \ell}(\bTheta)(t)+ \th_2\left( \Phi^{\balpha}_{*,\ell+1}(\bTheta)(t)\right), \quad \ell\in[L].
\end{align*}
By  the dominated convergence theorem, we have $\Phi_{*,\ell}^{\bbeta}(\bTheta)(t)=\lim_{n\to\infty}\Phi_{n,\ell}^{\bbeta}(\bTheta)(t)$ and   $\Phi_{*,\ell}^{\balpha}(\bTheta)(t)=\lim_{n\to\infty}\Phi_{n,\ell}^{\balpha}(\bTheta)(t)$. Then  $\Phi_*$ is a limit point of $\{\Phi_n:n\ge 0\}$ under $\Dis^{[0,T]}$ and  $\Phi_*\in\mathbf{\Phi}$.
\end{proof}

\begin{proof}[Proof of Lemma~\ref{const Lstar R}]
Analogous to the notation of $\bu_{\ell}$, for the convenience of presenting continuity of $\Phi^\bv_{\ell}$, we introduce notations $\bbfu$ and $\bbfu'_\ell$ by letting
\[
\bbfu_\ell=
\begin{cases}
\bbTheta_1,\\
(\bbTheta_1,\bTheta_2),\\
\bbTheta_1
\end{cases}
\quad 
\bbfu_\ell'=
\begin{cases}
\bbTheta_1,&\ell=1,\\
(\bTheta_1,\bbTheta_2),&\ell\in[2:L],\\
\bbTheta_1&\ell=L+1.
\end{cases}
\]
We also abbreviate the notations for the individual trajectories as:
$$
 \Phi_\ell^{\bv}(\bu_\ell)(t)=v_\ell^t(\bu_\ell),\quad ~~  \Phi_\ell^{\bv}(\bbfu_\ell)(t)=v_\ell^t(\bbfu_\ell),\quad~~   \Phi_\ell^{\bv}(\bbfu_\ell')(t)=v_\ell^t(\bbfu_\ell'),
$$
$$  \Phi_{\ell_1}^{\bbeta}(\bTheta_1)(t)=\bbeta_{\ell_1}^t(\bTheta_1),~\Phi_{\ell_1}^{\bbeta}(\bbTheta_1)(t)=\bbeta_{\ell_1}^t(\bbTheta_1), ~ \Phi_{\ell_2}^{\balpha}(\bTheta_1)(t)=\balpha_{\ell_2}^t(\bTheta_1),~ \Phi_{\ell_2}^{\balpha}(\bbTheta_1)(t)=\balpha_{\ell_2}^t(\bbTheta_1), $$
for $\ell\in[L+1]$, $\ell_1\in[L]$, and $\ell_2\in[2:L]$, respectively.

We first investigate the set $F_2(\bfPhi\cap \bfPhi_{b})$ for a general $b$. We follow similar steps as the proof of Lemma~\ref{lmm:Phi-property}.  We first consider forward steps and prove that there is a constant $R$ such that for any  $\Phi\in\bfPhi\cap \bfPhi_{\beta}$,
    \begin{align}
         \left\|\bbeta_{\ell}^t(\bTheta_1) -\bbeta^t_{\ell}(\bbTheta_1)\right\|_{\infty} \leq R e^{b t}  \left( \|\bTheta_1\|_{\infty}+1 \right)\|\bTheta_1 -\bbTheta_1 \|_{\infty}, \quad &\ell\in[L],\label{asdd1}\\
  \left\|\balpha_{\ell}^t(\bTheta_1) -\balpha^t_{\ell}(\bbTheta_1)\right\|_{\infty} \leq  R e^{b t}\left( \|\bTheta_1\|_{\infty}+1 \right)\|\bTheta_1 -\bbTheta_1 \|_{\infty}, \quad &\ell\in[2:L]. \label{asdd2}
    \end{align}
Then we study the backward steps, and prove that there is  a constant $R'$ such that for any  $\Phi\in\bfPhi\cap \bfPhi_{\beta}$, we have
\begin{align}
&\|\ulineDel_{\ell}^{\bbeta}(\bTheta_1;\Phi,t) -\ulineDel_{\ell}^{\bbeta}(\bbTheta_1; \Phi,t)\|_\infty
\le R' e^{b t}\left( \|\bTheta_1\|_{\infty}+1 \right)\|\bTheta_1 -\bbTheta_1 \|_{\infty} ,& \ell\in[L],\label{asddd3}\\
&\|\ulineDel_{\ell}^{\balpha}(\bTheta_1;\Phi,t) -\ulineDel_{\ell}^{\balpha}(\bbTheta_1; \Phi,t)\|_\infty
\le R' e^{b t}\left( \|\bTheta_1\|_{\infty}+1 \right)\|\bTheta_1 -\bbTheta_1 \|_{\infty} ,& \ell\in[2:L],\label{asddd4}\\
&\|\ulineGrad^{\bv}_{\ell}(\bu_\ell;\Phi,t) - \ulineGrad^{\bv}_{\ell}(\bbfu_{\ell};\Phi,t) \|_\infty
\le R' e^{ b t}\left(1+\|\bu_\ell\|_\infty\right)\|\bu_{\ell}-\bbfu_{\ell}\|_\infty,& \ell\in[L+1], \label{asddd5}\\
&\|\ulineGrad^{\bv}_{\ell}(\bu_\ell;\Phi,t) - \ulineGrad^{\bv}_{\ell}(\bbfu'_{\ell};\Phi,t) \|_\infty
\le R' e^{b t}\left(1+\|\bu_\ell\|_\infty\right)\|\bu_{\ell}-\bbfu'_{\ell}\|_\infty,& \ell\in[L+1]. \label{asddd6}
\end{align}
Note that once we obtain \eqref{asddd5} and \eqref{asddd6},   letting $b_*=R'$,  by the same argument as \eqref{ftheo4} in Lemma \ref{const Lstar}, we  achieve Lemma \ref{const Lstar R} immediately, which finishes the proof. 

In the following, we consider forward steps to prove \eqref{asdd1} and \eqref{asdd2}. For the $1$-st layer, because $\bX$ is bounded, we have \eqref{asdd1} from \eqref{local lip-w1 R}. Suppose at layer $\ell\in[L-1]$, we have that
\[\left\|\bbeta_{\ell}^t(\bTheta_1) -\bbeta^t_{\ell}(\bbTheta_1)\right\|_{\infty} \leq R' e^{b t}  \left( \|\bTheta_1\|_{\infty}+1 \right)\|\bTheta_1 -\bbTheta_1 \|_{\infty}\]
holds for a constant $R'$. Then it follows    that
    \begin{eqnarray}
\left\|\balpha_{\ell+1}^{t}(\bTheta_2) - \balpha_{\ell+1}^{t}(\bbTheta_2)\right\|_\infty
&\leq& \int \underbrace{\left\|\th_1\left(\bbeta_{\ell}^{t}(\bTheta_1)\right)\right\|_{\infty}}_{\leq L_1}~ \underbrace{\left| v_{\ell+1}^t(\bTheta_1, \bTheta_2) -v_{\ell+1}^t(\bTheta_1, \bbTheta_2)\right|}_{(1+C_6)e^{b t}( \|\bTheta_1\|_{\infty} +\|\bTheta_2\|_{\infty}+1) \|\bTheta_2-\bbTheta_2 \|_{\infty}   }    d p(\bTheta_1)  \notag\\
&\leq& R'' e^{b t}( \|\bTheta_2\|_{\infty}+1) \|\bTheta_2-\bbTheta_2 \|_{\infty},\notag
    \end{eqnarray}
for a constant $R''$, where we use sub-gaussianness of $p$ and Corollary~\ref{cor:subg-q}. We further have
\begin{eqnarray}
   \left\|\bbeta_{\ell+1}^{t}(\bTheta_1) - \bbeta_{\ell+1}^{t}(\bbTheta_1)\right\|_\infty&\leq& \left\|\th_2\left(\balpha_{\ell+1}^{t}(\bTheta_1)\right) - \th_2\left(\balpha_{\ell+1}^{t}(\bbTheta_1)\right)\right\|_\infty+ \left\|\bbeta_{\ell}^{t}(\bTheta_1) - \bbeta_{\ell}^{t}(\bbTheta_1)\right\|_\infty\notag\\
   &\leq& R''' e^{b t}( \|\bTheta_1\|_{\infty}+1) \|\bTheta_1-\bbTheta_1 \|_{\infty}. \notag
\end{eqnarray}
We achieve \eqref{asdd1} and \eqref{asdd2}.

We turn to backward process.  We focus on \eqref{asddd3} and \eqref{asddd4}. By the boundedness of $h_1$, $\ulineDel_{L+1}$, $\ulineDel^{\balpha}_{\ell}$ (recall Lemma \ref{lmm:Phi-property}),  \eqref{asddd5} and \eqref{asddd6} can be obtained immediately from \eqref{asdd1} and \eqref{asddd4}.

When $\ell=L$, following the same argument as \eqref{qq+1} in Lemma \ref{const Lstar}, we have \eqref{asddd3} holds.  For each $\ell=L-1, L-2,\cdots,1$, suppose we have
$$\left\|\ulineDel_{\ell+1}^{\bbeta}(\bTheta_1;\Phi,t) -\ulineDel_{\ell+1}^{\bbeta}(\bbTheta_1; \Phi,t)\right\|_\infty
\le R e^{b t}\left( \|\bTheta_1\|_{\infty}+1 \right)\|\bTheta_1 -\bbTheta_1 \|_{\infty}, $$
for a constant $R$. Because of the boundedness of $h_2$ and $\ulineDel^{\bbeta}_{\ell+1}$ (shown in Lemma \ref{lmm:Phi-property}),
we have
\begin{eqnarray}
  && \left\|\ulineDel_{\ell+1}^{\balpha}(\bTheta_1;\Phi,t)-\ulineDel_{\ell+1}^{\balpha}(\bbTheta_1;\Phi,t) \right\|_{\infty}\notag\\
   &=&\left\|\th_2'\left(\balpha^t_{\ell+1}(\bTheta_1)\right)\cdot\ulineDel_{\ell+1}^{\bbeta}(\bTheta_1;\Phi,t)- \th_2'\left(\balpha^t_{\ell+1}(\bbTheta_1)\right)\cdot\ulineDel_{\ell+1}^{\bbeta}(\bbTheta_1;\Phi,t) \right\|_{\infty}\notag\\
  &\leq&  R' e^{b t}( \|\bTheta_1\|_{\infty}+1) \|\bTheta_1-\bbTheta_1 \|_{\infty}, \notag
\end{eqnarray}
for a constant $R'$. Consequently, following the same argument as \eqref{dellater}, we have  
$$   \left\|\ulineDel_{\ell+1}^{\bgamma}(\bTheta_1;\Phi,t)-\ulineDel_{\ell+1}^{\bgamma}(\bbTheta_1;\Phi,t) \right\|_{\infty}
  \leq R'' e^{b t}( \|\bTheta_1\|_{\infty}+1) \|\bTheta_1-\bbTheta_1 \|_{\infty}, $$
for a constant $R''$, which further yields
$$    \left\|\ulineDel_{\ell}^{\bbeta}(\bTheta_1;\Phi,t)-\ulineDel_{\ell}^{\bbeta}(\bbTheta_1;\Phi,t) \right\|_{\infty}\notag\\
  \leq  R''' e^{b t}( \|\bTheta_1\|_{\infty}+1) \|\bTheta_1-\bbTheta_1 \|_{\infty}. $$
 We achieve \eqref{asddd4} and \eqref{asddd5} and thus  obtain Lemma \ref{const Lstar R}.
\end{proof}

\section{Proofs of Theorems \ref{theorm:appres} -- \ref{theo:convex res} }
\subsection{Proof of Theorem \ref{theorm:appres}}\label{secsection:appproof}
The proof follows from Theorem \ref{theorm:app}.  In the proof, we fix $\Phi_*$ and the initialization $\{\bTheta_{i}\}_{i=1}^m$.
Similar to the notation $\bu_\ell$ in the proof of Theorem~\ref{theorm:resflow p0}, we introduce the notations $\bbfu_{\ell,i,j}$ that stands for $\bbTheta_{j}$,  $(\bbTheta_{i},\bbTheta_{i})$, $\bbTheta_{i}$ for $\ell=1$, $2\le\ell\le L$, $\ell=L+1$, respectively. 
% $\bu_\ell$ stands for $\bw_1$, $(\bw_1,\btheta_2)$, $(\btheta_{\ell-1},\btheta_{\ell})$, $\btheta_L$ for $\ell=1$, $\ell=2$, $3\le\ell\le L$, $\ell=L+1$, respectively. 
We also abbreviate the gradients of the ideal process as 
\begin{eqnarray}
\ulineDel_{\ell,i}^{\bbeta, t}:=\ulineDel_{\ell}^{\bbeta}(\lineTheta_i, \Phi_*, t),\quad \ulineDel_{\ell,i}^{\balpha, t}:=\ulineDel_{\ell}^{\balpha}(\lineTheta_i, \Phi_*, t),
\quad \ulineGrad_{\ell,i,j}^{\bv} =\ulineGrad_{\ell}^{\bv}(\bbfu_{\ell,i,j};  \Phi_*, t). \notag
\end{eqnarray}
We use a common notation $\linev_{\ell,i,j}^t$ to the weights at layer $\ell$; for $\ell=1$ let $\linev_{1,i,j}^t=\bbv_{1,j}^t$.
% ; for $\ell=L+1$ let $\linew_{L+1,i,j}^t=w_{L+1,i}^t$ 
To compare the discrete and continuous trajectories on the same time scale, we normalize discrete gradients by 
\[
 \NDel^{\balpha,k}_{\ell,i} = [m]~\Del^{\balpha,k}_{\ell,i}, ~~ \ell\in[2:L],\quad~~~
\NDel^{\bbeta,k}_{\ell,i}= [m]~\Del^{\bbeta,k}_{\ell,i}, ~~ \ell\in[L]
\]
and
\[\NGrad^{k}_{\ell,i,j}  = [m_{\ell-1}m_{\ell}]~ \Grad^k_{\ell,i,j},\quad~\ell\in[L+1].\]

Moreover, recalling the definition of  $\ulineDel^{\bgamma}_{\ell}$ in \eqref{def gamma}, we also introduce
$$\NDel^{\bgamma, k}_{\ell,i} = \frac{1}{m} \sum_{j=1}^{m}\left[\hv_{\ell,i,j}^k~ \NDel^{k,\balpha}_{\ell,i}\right] \cdot \th'_1\left(\hbbeta^k_{\ell-1, i}\right), \quad \ell\in[2:L],  $$
and let
\[\ulineDel_{\ell,i}^{\bgamma, t}:=\ulineDel_{\ell}^{\bgamma}(\lineTheta_i, \Phi_*, t), \quad \ell\in[2:L]. \]

We consider the following events:
\begin{align}
 &\!\!\!\left\| \frac{1}{m}\sum_{i=1}^m \left[\linev_{\ell+1,i,j}^{k\eta}~\th_1\left(\bbbeta^{k\eta}_{\ell,i}\right) \right] - \linealpha_{\ell+1,j}^{k\eta}\right\|_{\infty}\!\!\!\!\leq \left(\|\lineTheta_{j}\|_{\infty}+1\right)\ep_1, ~ \ell\in[L-1],~ k\in[0:K],~j\in[m], \label{event forward res} \\
 &\!\!\!\left\| \frac{1}{m}\sum_{j=1}^m \left[\linev_{\ell,i,j}^{k\eta}~ \ulineDel^{\balpha, k\eta}_{\ell,j} \right]\cdot \th_1'\left(\linebeta_{\ell-1,i}^{k\eta}\right)  - \ulineDel^{\bgamma, k\eta}_{\ell,i}\right\|_{\infty}\!\!\!\! \leq \ep_1, ~\ell\in[2:L],~ k\in[0:K], ~i\in[m], \label{event backward res} \\
&\!\!\!\left\|\bbTheta_{i}\right\|_{\infty} \leq C \sqrt{\log( \frac{m } {\delta})},  \quad  i\in[m],\label{event max res}\\
&\!\!\!\frac{1}{m}\sum_{i=1}^m \| \lineTheta_{i}\|_{\infty}^j\leq C,  
\quad j\in[2]\label{event moment res},
\end{align}
for a constant $C$.
In the proofs of this section, we condition on those events. 
\begin{lemma}
\label{lmm:high-prob res}
The events \eqref{event forward res} -- \eqref{event moment res} happen with probability $1-\delta$.
\end{lemma}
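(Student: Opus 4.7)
The plan is to mirror the proof of its fully-connected counterpart Lemma \ref{lmm:high-prob}, exploiting the fact that the samples $\{\bbTheta_i\}_{i=1}^m$ are i.i.d.~draws from $p$ and that the ideal trajectories $\linev_{\ell,i,j}^{k\eta}$, $\bbalpha_{\ell,i}^{k\eta}$, $\bbbeta_{\ell,i}^{k\eta}$ are deterministic functions of $\bbTheta_i$ and $\bbTheta_j$ through $\Phi_*$. I will show each of the four events holds with probability $1-\delta/4$ and then take a union bound. For (\ref{event max res}) and (\ref{event moment res}) the argument is standard: by Assumption \ref{ass:6} the distribution $p$ is $\sigma$-sub-Gaussian, so Lemma \ref{theo:subgau} gives (\ref{event max res}) after a union bound over $i\in[m]$, and Lemmas \ref{azuma}/\ref{azumaexp} give (\ref{event moment res}) using that $\|\bbTheta_i\|_\infty^j$ for $j\in[2]$ has bounded moments.

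For the forward event (\ref{event forward res}), fix $k,\ell,j,n$ and consider
\[
\xi_i := \frac{\linev_{\ell+1,i,j}^{k\eta}\,h_1\!\big(\bbbeta_{\ell,i}^{k\eta}(n)\big)}{\|\bbTheta_j\|_\infty+1},\qquad i\in[m]\setminus\{j\}.
\]
Conditioned on $\bbTheta_j$, these are i.i.d.~functions of the independent $\bbTheta_i$, and by Definition \ref{resNFLp0} together with \eqref{forres},
\[
\mathbb{E}[\xi_i\mid\bbTheta_j]
= \frac{\bbalpha_{\ell+1,j}^{k\eta}(n)}{\|\bbTheta_j\|_\infty+1}.
\]
Using Lemma \ref{lmm:Phi-property} to bound $|\linev_{\ell+1,i,j}^{k\eta}|\le C(1+\|\bbTheta_j\|_\infty)$ and the boundedness of $h_1$, the summands $\xi_i$ are bounded by a constant. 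Hoeffding's inequality together with the fact that the missing $i=j$ term contributes at most $O(1/m)$ yields the bound with probability $1-\frac{\delta}{4 m L(K+1)N}$; a union bound over $k,\ell,j,n$ completes this case.

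The backward event (\ref{event backward res}) is treated symmetrically. Fix $k,\ell,i,n$ and set
\[
\xi_j' := \big[\linev_{\ell,i,j}^{k\eta}\,\ulineDel^{\balpha,k\eta}_{\ell,j}(n)\big]\, h_1'\!\big(\bbbeta_{\ell-1,i}^{k\eta}(n)\big).
\]
Conditioned on $\bbTheta_i$, the $\xi_j'$ are independent and, by the definition of $\ulineDel^{\bgamma}$ in \eqref{def gamma}, satisfy $\mathbb{E}[\xi_j'\mid\bbTheta_i]=\ulineDel^{\bgamma,k\eta}_{\ell,i}(n)$. Lemma \ref{lmm:Phi-property} gives $\|\ulineDel^{\balpha,k\eta}_{\ell,j}\|_\infty\le R$ and $|\linev_{\ell,i,j}^{k\eta}|\le C(1+\|\bbTheta_j\|_\infty)$, so $|\xi_j'|\le C'(1+\|\bbTheta_j\|_\infty)$; because $\bbTheta_j$ is sub-Gaussian, the summands $\xi_j'$ are sub-Gaussian. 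Applying Lemma \ref{azuma} and taking a union bound gives (\ref{event backward res}) with probability $1-\delta/4$.

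The main (minor) obstacle will be verifying that the concentration summands are properly bounded or sub-Gaussian in the Res-Net case, where the weights $\linev_{\ell,i,j}^{k\eta}$ depend on both endpoints $\bbTheta_i$ and $\bbTheta_j$; this breaks the symmetric i.i.d.~structure enjoyed by the fully-connected version, but conditioning on a single endpoint restores independence. The normalization by $1+\|\bbTheta_j\|_\infty$ (resp.~the sub-Gaussianity of the relevant factor) then compensates for the growth of the summand and turns the conditional Hoeffding/sub-Gaussian bound into a uniform one. All remaining steps — union bounds over $k\in[0:K]$, $\ell$, $i$, $j$, $n$, and combination into a single $1-\delta$ event — are routine given the polynomial dependence of the failure probability on these parameters together with $K=T/\eta = \tO(1/\ep_1)$ and $m=\tilde\Omega(\ep_1^{-2})$.
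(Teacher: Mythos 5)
Your proposal mirrors the paper's proof essentially line by line: the same $1-\delta/4$ split across the four events, the same conditioning on a single endpoint ($\bbTheta_j$ for the forward event, $\bbTheta_i$ for the backward one) to restore conditional independence of the summands, the same use of Lemma~\ref{lmm:Phi-property} to control the weights and of Hoeffding/Lemma~\ref{azuma} for concentration, and the same observation that the diagonal $i=j$ term contributes only $O(1/m)$. The one place you could be slightly more explicit is the backward event, where — just as in the forward case — you need to exclude the $j=i$ summand from the concentration argument and bound its contribution separately (the paper uses event~\eqref{event max res} for this); you flag the issue in your discussion but do not carry it out, whereas the forward case you handle explicitly.
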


The proof consists of the deviation of the actual discrete trajectory from the ideal trajectory over the iteration $k\in[0:K]$.  For $k=0$, we have the deviation of weights $\|\linev_{\ell,i,j}^{0} - \hv_{\ell,i,j}^{0}\|_\infty$ from the initial conditions in Definition~\ref{Res-Net condition}. 
The induction proceeds as follows. 
In Lemma~\ref{lmm:induction-k res}, we first upper bound the deviation of features using the forward propagation, and then upper bound the deviation of gradients using the backward propagation.
Note that
\begin{equation}
\label{induction v^k}
\left\| \linev_{\ell,i,j}^{(k+1)\eta} - \hv_{\ell,i,j}^{k+1}   \right\|_\infty
\le \|\linev_{\ell,i,j}^{k\eta} - \hv_{\ell,i,j}^{k} \|_\infty 
    + \int_{k\eta}^{(k+1)\eta} \left\| \ulineGrad^{s}_{\ell,i,j} - \NGrad^k_{\ell, i,j}\right \|_\infty ds.
\end{equation}
Combining with the Lipschitz continuity of $\ulineGrad^{t}_{\ell,i,j}$ in Lemma~\ref{lmm:gradw-lip-t res}, we complete the inductive step.

\begin{lemma}
\label{lmm:induction-k res}
Given $k\in[0:K]$ and $\ep<1$. 
Suppose
\begin{equation}
\label{eq:induction weights res}
\left\| \linev_{\ell,i,j}^{k\eta} - \hv_{\ell,i,j}^{k}   \right\|_\infty
\le (\| \bbfu_{\ell,i,j}\|_\infty+1) \ep,\qquad \forall~\ell\in[L+1],~i\in[m_{\ell-1}],~j\in [m_\ell].
\end{equation}
Then there exists a constant $C$ such that
\begin{align}
& \left\| \linebeta_{L+1,1}^{k\eta} - \hbbeta_{L+1,1}^{k} \right\|_{\infty} \!\!\!\leq C\left(\ep+\ep_1\right), \label{eq:induction forward last res}\\
& \left\| \linebeta_{\ell,i}^{k\eta} - \hbbeta_{\ell,i}^{k} \right\|_{\infty} \!\!\!\leq  C\left(  \left\| \lineTheta_{i} \right\|_{\infty}+1 \right)\left(\ep+\ep_1\right),~ \forall~\ell \in[L],~ i\in[m],\label{eq:induction forward bbeta}\\
& \left\| \linealpha_{\ell,i}^{k\eta} - \hbalpha_{\ell,i}^{k} \right\|_{\infty} \!\!\!\leq  C\left(  \left\| \lineTheta_{i} \right\|_{\infty}+1 \right)\left(\ep+\ep_1\right),~ \forall~\ell \in[2:L],~ i\in[m],\label{eq:induction forward alpha}\\
& \left\|\ulineGrad_{\ell,i,j}^{k\eta} -  \NGrad_{\ell,i,j}^k  \right\|_\infty 
\!\!\!\leq  C\left(\| \bbfu_{\ell,i,j}\|_{\infty}+1\right)\left(\ep+\ep_1\right),~ \forall~\ell\in[L+1],~i\in[m_{\ell-1}],~j\in [m_\ell]\label{eq:induction backward res}.
\end{align}
\end{lemma}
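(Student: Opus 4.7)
My plan is to mirror the structure of the proof of Lemma~\ref{lmm:induction-k}: first use the forward recursions to upper bound the feature and residual deviations layer by layer, then use the backward recursions to inductively upper bound the deviations of $\ulineDel^{\bbeta}$, $\ulineDel^{\balpha}$, and $\ulineDel^{\bgamma}$ from layer $L+1$ down to $1$, which will yield \eqref{eq:induction backward res} for the gradients. The main adaptations relative to the fully-connected case are (i) the skip-connection identity $\bbeta_\ell = \th_2(\balpha_\ell)+\bbeta_{\ell-1}$, which couples the two feature-type variables, and (ii) the extra $\bgamma$ term in the backward recursion, coming from $\ulineDel^{\bbeta}_{\ell-1}=\ulineDel^{\bbeta}_{\ell}+\ulineDel^{\bgamma}_{\ell}$.

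For the forward pass, I would proceed by induction on $\ell\in[L]$. The base case $\ell=1$ is immediate because $\bX$ is bounded and $\hbbeta_{1,j}$ depends only on the corresponding weight. For the inductive step, I split
\[
\Norm{\linealpha_{\ell+1,j}^{k\eta}-\hbalpha_{\ell+1,j}^{k}}_\infty
\le \Norm{\linealpha_{\ell+1,j}^{k\eta}-\tfrac{1}{m}\sum_i \linev_{\ell+1,i,j}^{k\eta}\th_1(\bbbeta_{\ell,i}^{k\eta})}_\infty
+\Norm{\tfrac{1}{m}\sum_i\bigl[\linev_{\ell+1,i,j}^{k\eta}\th_1(\bbbeta_{\ell,i}^{k\eta})-\hv_{\ell+1,i,j}^{k}\th_1(\hbbeta_{\ell,i}^{k})\bigr]}_\infty,
\]
bounding the first term by \eqref{event forward res} and the second by combining \eqref{eq:induction weights res}, the inductive bound on $\|\linebeta_{\ell,i}^{k\eta}-\hbbeta_{\ell,i}^{k}\|_\infty$, the Lipschitz continuity of $h_1$, the upper bound on $\linev_{\ell+1}^t$ from Lemma~\ref{lmm:Phi-property}, and the moment bound \eqref{event moment res}. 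Then \eqref{eq:induction forward bbeta} for $\ell+1$ follows from the skip-connection identity and the Lipschitz continuity of $h_2$; crucially the additive update keeps $C$ layer-independent (the dependence on $\|\lineTheta_i\|_\infty$ rather than $\|\lineTheta_\ell\|_\infty$ as in the DNN case is automatic because everything is indexed by the same $\bTheta_i$). The case $\ell=L$ giving \eqref{eq:induction forward last res} uses the uniform boundedness of $v_{L+1}$ from Assumption~\ref{ass:6}.

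For the backward pass I first control $\|\lineDel_{L+1}^{k\eta}-\NDel_{L+1,1}^k\|_\infty\le C(\ep+\ep_1)$ using the Lipschitz continuity of $\phi_1'$ together with \eqref{eq:induction forward last res}. The pointwise-in-$i$ bound on $\|\ulineDel_{L,i}^{\bbeta,k\eta}-\NDel_{L,i}^{\bbeta,k}\|_\infty$ follows from the definition \eqref{def linedel res}, the Lipschitz continuity of $h_1'$, and the upper bound on $\|v_{L+1}^t\|_\infty$ combined with \eqref{eq:induction weights res}. For the downward induction on $\ell$, I will use the decomposition
\[
\NDel^{\bbeta,k}_{\ell-1,i}=\NDel^{\bbeta,k}_{\ell,i}+\NDel^{\bgamma,k}_{\ell,i},
\qquad
\ulineDel^{\bbeta,k\eta}_{\ell-1,i}=\ulineDel^{\bbeta,k\eta}_{\ell,i}+\ulineDel^{\bgamma,k\eta}_{\ell,i},
\]
so it suffices to control the difference in each summand. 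The $\balpha$-delta differences come from the $\bbeta$-delta differences via multiplication by $\th_2'(\balpha_\ell^t)$, using the Lipschitz continuity of $h_2'$ together with \eqref{eq:induction forward alpha}; the $\bgamma$-delta differences are handled exactly as in the proof of Lemma~\ref{lmm:induction-k}: triangle-inequality against the $\frac{1}{m}\sum_j\linev_{\ell,i,j}^{k\eta}\ulineDel^{\balpha,k\eta}_{\ell,j}\cdot h_1'(\linebeta_{\ell-1,i}^{k\eta})$ average, bounding one side by the backward concentration event \eqref{event backward res} and the other by the inductive hypotheses on weight and $\balpha$-delta deviations, the Lipschitz continuity of $h_1'$, the upper bound on $\linev_\ell^t$, and the moment bound \eqref{event moment res}. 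Finally \eqref{eq:induction backward res} for $\ulineGrad^{\bv}_\ell$ is obtained by plugging the $\balpha$-delta (resp.\ $\lineDel_{L+1}$, $\bbeta$-delta for $\ell=1$) deviations and feature deviations into the definitions of the gradients, exactly as in the DNN case.

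The step I expect to be the main obstacle is propagating the backward bounds across layers without accumulating a layer-dependent factor on $\|\lineTheta_i\|_\infty$: the coupling via the $\bgamma$-term means each downward step adds a contribution proportional to $(1+\|\lineTheta_i\|_\infty)$, and keeping this compatible with a single constant $C$ requires carefully exploiting that $T$ is treated as a constant (so finitely many layers $L$ produce only a constant blow-up) and that the Lipschitz constant of $v_\ell^t$ in Assumption~\ref{ass:6} grows at most linearly in $\|\bTheta\|_\infty$. The rest is essentially bookkeeping analogous to Lemma~\ref{lmm:induction-k}.
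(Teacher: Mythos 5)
Your proposal follows essentially the same route as the paper's proof: a forward inductive pass controlling $\|\linealpha_{\ell,i}^{k\eta}-\hbalpha_{\ell,i}^{k}\|_\infty$ and $\|\linebeta_{\ell,i}^{k\eta}-\hbbeta_{\ell,i}^{k}\|_\infty$ via the concentration event \eqref{event forward res}, the weight hypothesis \eqref{eq:induction weights res}, and the skip-connection identity with $h_2$; then a downward backward pass through the decomposition $\ulineDel^{\bbeta}_{\ell-1}=\ulineDel^{\bbeta}_{\ell}+\ulineDel^{\bgamma}_{\ell}$, controlling the $\balpha$-delta via $h_2'$ and the $\bgamma$-delta via \eqref{event backward res} and moment bounds, which is exactly what the paper does (reusing the arguments behind \eqref{laqqa} and \eqref{laqqa1}). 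Your worry about layer-dependent blow-up is resolved for the reason you state, though the relevant fact is that $L$ (not $T$) is a fixed constant, so the per-layer constants compound only finitely many times.
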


\begin{lemma}
\label{lmm:gradw-lip-t res}
There exists a constant $C$ such that, for all $\ell\in[L+1]$, $t_1,t_2\in[0,T]$, and $\bu_\ell$,
\[
\left\|\ulineGrad_{\ell,i,j}^{t_1}-\ulineGrad_{\ell,i,j}^{t_2}\right\|_\infty \le C (\|\bbfu_{\ell,i,j}\|_\infty+1)|t_1-t_2|.
\]

\end{lemma}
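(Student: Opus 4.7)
The plan is to mirror the strategy used in the proof of Lemma~\ref{lmm:gradw-lip-t} for fully-connected DNNs, but adapted to the Res-Net backward structure with residuals $\balpha$ and features $\bbeta$ coupled through $\th_2$. Since $\Phi_*\in\mathbf{\Phi}$, the weight trajectories $v_\ell^t(\bu_\ell)$ are already $\bR_\ell$-Lipschitz in $t$ by Lemma~\ref{lmm:flow-condition R}, and all the quantities $w_\ell^t,\ulineDel_{L+1},\ulineDel_\ell^{\bbeta},\ulineDel_\ell^{\balpha},\ulineGrad^{\bv}_\ell,\ulineGrad^{\bbeta}_\ell,\ulineGrad^{\balpha}_\ell$ are uniformly bounded (with the usual sublinear growth on $\bTheta$) by Lemma~\ref{lmm:Phi-property}. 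These are the raw ingredients I will use throughout.

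First I would prove a forward Lipschitz-in-time estimate for the features and residuals, namely
\begin{align*}
\|\bbeta_\ell^{t_1}(\bTheta)-\bbeta_\ell^{t_2}(\bTheta)\|_\infty
&\le C(\|\bTheta\|_\infty+1)|t_1-t_2|,\quad \ell\in[L],\\
\|\balpha_\ell^{t_1}(\bTheta)-\balpha_\ell^{t_2}(\bTheta)\|_\infty
&\le C(\|\bTheta\|_\infty+1)|t_1-t_2|,\quad \ell\in[2{:}L].
\end{align*}
This follows by induction on $\ell$: the $\ell=1$ case uses that $\bbeta_1^t=\frac{1}{d}\bX v_1^t$ and the Lipschitz continuity of $v_1^t$ in $t$; the inductive step uses the forward constraint in Definition~\ref{special Function R}, together with the boundedness/Lipschitzness of $h_1,h_2$, the upper bound of $v_\ell^t$, the Lipschitz-in-$t$ bound on $v_\ell^t$, and the sub-gaussianness of $p$ via Corollary~\ref{cor:subg-q} (the same argument that produced \eqref{btcon} in Lemma~\ref{lmm:gradw-lip-t}).

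Next I would handle the backward chain to get
$$\|\ulineDel_{L+1}(\Phi_*,t_1)-\ulineDel_{L+1}(\Phi_*,t_2)\|_\infty\le C|t_1-t_2|,\qquad \|\ulineDel_\ell^{\bbeta}(\bTheta;\Phi_*,t_1)-\ulineDel_\ell^{\bbeta}(\bTheta;\Phi_*,t_2)\|_\infty\le C(\|\bTheta\|_\infty+1)|t_1-t_2|,$$
and the analogous bound for $\ulineDel_\ell^{\balpha}$. The $\ell=L+1$ case uses Lipschitzness of $\phi_1'$ together with the forward estimate on $\bbeta_{L+1}^t$. For $\ell=L$, I split the product $w_{L+1}^t\cdot\ulineDel_{L+1}\cdot\th_1'(\bbeta_L^t)$ using the triangle inequality, applying the Lipschitz-in-$t$ bounds on each factor (for $\th_1'(\bbeta_L^t)$ the forward estimate combined with Lipschitzness of $h_1'$). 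The inductive step from $\ell+1$ to $\ell$ uses the recursion $\ulineDel_\ell^{\balpha}=\ulineDel_\ell^{\bbeta}\cdot\th_2'(\balpha_\ell^t)$ (two bounded-times-Lipschitz factors) and then the integral recursion defining $\ulineDel_{\ell-1}^{\bbeta}$, where the integrand $v_\ell^t(\bTheta,\bbTheta)\,\ulineDel_\ell^{\balpha}(\bbTheta;\Phi_*,t)$ is handled exactly like \eqref{btcon}: split the difference into the two factors, control $|w_\ell^{t_1}-w_\ell^{t_2}|\le\bR_\ell|t_1-t_2|$ (which introduces the $(1+\|\bbTheta\|_\infty)$ factor after using Lemma~\ref{lmm:Phi-property}'s upper bound on $v_\ell^t$), and integrate against $p$ using sub-gaussian moments from Corollary~\ref{cor:subg-q}. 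Finally, the claimed bound on $\ulineGrad^{\bv}_\ell$ follows from its definition: each $\ulineGrad^{\bv}_\ell$ is of the form $\frac{1}{N}[\ulineDel_\ell^{\bullet}]^\top\th_1(\bbeta_{\ell-1}^t)$ (or $\bX\ulineDel_1^{\bbeta}/N$), so the triangle-inequality split combined with boundedness of $h_1$/$\bX$ and the Lipschitzness of $h_1\circ\bbeta_{\ell-1}^t$ proven in the first step delivers the result.

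The main obstacle will be bookkeeping: the weight bounds $|v_\ell^t(\bTheta,\bbTheta)|\le C(1+\|\bbTheta\|_\infty)$ and $|v_\ell^{t_1}-v_\ell^{t_2}|\le \bR_\ell|t_1-t_2|$ carry different dependencies on $\bTheta,\bbTheta$, and it is crucial that after integrating against $p(\bbTheta)$ the $\|\bbTheta\|_\infty$ factors collapse into a constant (via Corollary~\ref{cor:subg-q}) while only the single sub-linear factor $(1+\|\bTheta\|_\infty)$ survives—or, at the top layer, the constant factor $(\|\bbfu_{\ell,i,j}\|_\infty+1)$. Once these accounting details are carried consistently through the induction, the Lipschitz constant $C$ in the lemma is obtained uniformly on $[0,T]$.
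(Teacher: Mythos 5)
Your plan is correct and essentially identical to the paper's proof: forward Lipschitz-in-time estimates on $\bbeta^t_\ell,\balpha^t_\ell$ (which the paper reads off directly from the drift bounds in Lemma~\ref{lmm:Phi-property} via the integral form of the flow, rather than re-deriving them by forward induction as you propose, but the two routes are equivalent), then the backward recursion for $\ulineDel_{L+1},\ulineDel^{\bbeta}_\ell,\ulineDel^{\balpha}_\ell$ and the auxiliary term $\ulineDel^{\bgamma}_\ell$, and finally the reduction to $\ulineGrad^{\bv}_\ell$ via boundedness of $h_1$, $\bX$ and the $\ulineDel$'s. The bookkeeping you flag---sub-linear $\bbTheta$ factors from the weight bounds collapsing under integration against $p$ via Corollary~\ref{cor:subg-q}---is exactly how the paper carries through the analog of \eqref{btcon}.
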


\begin{proof}[Proof of Theorem \ref{theorm:appres}]
The proof directly follows from Theorem \ref{theorm:app}.
By Lemma~\ref{lmm:high-prob res}, the events in  \eqref{event forward res} -- \eqref{event moment res} happen with probability $1-\delta$.
Conditioned on those events, we prove by induction on $k\in[0:K]$ that
\begin{equation}
\label{eq:w-diff res}
\left\|\linev_{\ell,i,j}^{k\eta} - \hv_{\ell,i,j}^k  \right\|_\infty 
\leq  \left(\|\bbfu_{\ell,i,j} \|_{\infty}+1\right) e^{  C  k \eta} \ep_1,\quad \forall~\ell\in[L+1],~~i\in[m_{\ell-1}],~j\in [m_\ell],
\end{equation}
for some constant $C$ to be specified. 
The base case $k=0$ follows from Definition~\ref{Res-Net condition}. 
Suppose that \eqref{eq:w-diff res} holds for $k\in[0:K-1]$. 
By Lemmas~\ref{lmm:induction-k res} and \ref{lmm:gradw-lip-t res}, for $s\in[k\eta, (k+1)\eta]$,  
\[
\left\| \ulineGrad^{s}_{\ell,i,j} - \NGrad^k_{\ell, i,j}\right \|_\infty
\le C'\left(\| \bbfu_{\ell,i,j}\|_{\infty}+1\right)\left(e^{Ck\eta}\ep_1+\ep_1+s-k\eta\right).
\]
Applying \eqref{induction v^k} yields that
\begin{align*}
\left\| \linev_{\ell,i,j}^{(k+1)\eta} - \hv_{\ell,i,j}^{k+1}   \right\|_\infty
&\le \left\|\linev_{\ell,i,j}^{k\eta} - \hv_{\ell,i,j}^{k} \right\|_\infty 
    + \int_{k\eta}^{(k+1)\eta} \left\| \ulineGrad^{s}_{\ell,i,j} - \NGrad^k_{\ell, i,j}\right \|_\infty ds\\
&\le \left(\|\bbfu_{\ell,i,j} \|_{\infty}+1\right) 
\left(e^{  C  k \eta} \ep_1 + 2C'e^{Ck\eta}\ep_1 \eta + C'\frac{\eta^2}{2} \right)\\
&\le \left(\|\bbfu_{\ell,i,j} \|_{\infty}+1\right) e^{  C  k \eta} \ep_1(1+C'' \eta),
\end{align*}
for a constant $C''$. 
By letting $C=C''$, we arrive at \eqref{eq:w-diff res} for $k+1$ using $1+C\eta \le e^{C\eta}$.
Note that $k\eta\le T$ for $k\in[0:K]$, $\ep_1\le\tO(1/\sqrt{m})$, and $\|\bbfu_{\ell,i,j} \|_{\infty}\le \cO(\log(m))$ from \eqref{event max res}. The conclusion follows from Lemma~\ref{lmm:induction-k res}.
\end{proof}

\subsection{Proof of Theorem \ref{ini lemma2}}\label{sec:d2}
We first introduce the continuous Res-Net. We   let   $p$  be in \eqref{ppp}.  Under Assumption \ref{ass:8}, we have $\min_{\ell=1}^{L-1}\lambda_{\min}(\bK_{\ell}^{\bbeta}):= \blambda_1>0$ by Lemma \ref{maxgreat}. 
 For  $\bTheta =(\bv_1, \balpha_2, \dots, \balpha_L),~ \bbTheta=(\bbv_1, \bbalpha_2, \dots, \bbalpha_L)\in\supp(p)$,  we define the connecting weights between consecutive layers by 
\begin{eqnarray}
  v_{\ell}(\bTheta, \bbTheta) =  \th_1\left(\bbeta_{\ell-1}\right)^\top \left[\bK_{\ell-1}^{\bbeta}\right]^{-1} \bbalpha_{\ell}, \quad   \ell\in[2:L],
\end{eqnarray} 
 where $\bbeta_{\ell} =  \frac{1}{d}\bX \bv_1 + \sum_{i=2}^{\ell}\th_2(\balpha_{i})$.  The weights at the output layer are initialized as a constant $C_5$.
Then the forward propagation constraint \eqref{forres} is satisfied by the definitions of $\bK_\ell^{\bbeta}$.
The weights also satisfy the conditions in  Assumption \ref{ass:6} since $\left\|\left(\bK_\ell^{\bbeta}\right)^{-1} \right\|_2\le \blambda^{-1}_1 $ and $h_1$ is bounded and Lipschitz continuous.

Next we construct the initialization for ideal discrete Res-Net $(\bbv, \bbalpha,\bbbeta)$  which  follows a similar procedure in  Theorem \ref{ini lemma}.

Let $\linebv_{1,i} := \hbv_{i,1}$ for  $i\in[m]$.
For $\ell\in[L-1]$, define the empirical Gram matrix as
    $$  \hbK_{\ell}^{\bbeta} = \frac{1}{m}\sum_{i=1}^m \th_1\left( \hbbeta_{\ell,i}\right)\th_1^\top\left(\hbbeta_{\ell,i}\right).$$
Let $\linealpha_{\ell+1,j} :=  \left(\bK_\ell^{\bbeta}\right)^{1/2} \left(\hbK_\ell^{\bbeta}\right)^{-1/2}\hbalpha_{\ell+1,j}$ for all $j\in[m]$ when $\hbK_\ell^{\bbeta}$ is invertible, and otherwise let $\linealpha_{\ell+1,j}\iiddistr p_{\ell+1}^{\balpha}$. Here $\linealpha_{\ell+1,j}$ are determined by the outputs of previous layer $\hbbeta_{\ell,i}$ and the connecting weights $\hv_{\ell+1,i,j}$.
Thus they are independent of $\linebv_{1,i}$ and $\linealpha_{2,i},\dots,\linealpha_{\ell,i}$ for $i\in[m]$ given $\{\hbbeta_{\ell,i}\}_{i\in[m]}$.
% ..dependency relations ..
Since $\hv_{\ell+1,i,j}$ are independent Gaussian,  the conditional distribution of $\linealpha_{\ell+1,j}$ given  $\{\hbbeta_{\ell,i}\}_{i\in[m]}$ is $\mathcal{N}\left(\mathbf{0}^N, \sigma_1^2\bK_{\ell}^{\bbeta}\right) = p_{\ell+1}^{\balpha}$. 
Therefore, marginally $\linealpha_{\ell+1,j}\iiddistr p_{\ell+1}^{\balpha}$ and they are independent of $\linebv_{1,i}$ and $\linealpha_{2,i},\dots,\linealpha_{\ell,i}$ for $i\in[m]$. So $\{\bbTheta_{i}\}_{i\in[m]} \iiddistr p$.  

\begin{lemma}
\label{lmm:hK-K res}
Let $\ep_2:=\tO(\ep_1)$ such that $\ep_2\leq \blambda^{-1}_1$. With probability $1-\delta$, for all $\ell\in[L-1]$,
\begin{eqnarray}
& \left\|\hbK_{\ell}^{\bbeta}  - \bK_{\ell}^{\bbeta}\right\|_2
\le \ep_2,
\qquad
&\norm{\linealpha_{\ell+1,i}-\hbalpha_{\ell+1,i} }_2\le \ep_2\left\|\lineTheta_{i}\right\|_2,\notag\\
& \norm{\bbalpha_{\ell+1,i} }_2
\le B_4:= C\sqrt{\log(m/\delta)}, \qquad &\norm{\linebeta_{\ell+1,i}-\hbbeta_{\ell+1,i} }_2\le \ep_2\left\|\lineTheta_{i}\right\|_2.\notag
\end{eqnarray}

% where $\tO(\cdot)$ hides poly-logarithmic factors of $m$ and $\frac{1}{\delta}$.
\end{lemma}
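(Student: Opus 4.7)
The plan is to mirror the proof of Lemma~\ref{lmm:hK-K}, inductively establishing the concentration of the empirical Gram matrices $\hbK_\ell^{\bbeta}$ layer-by-layer while simultaneously controlling the deviation between the ideal and actual residuals and features. First, I would introduce auxiliary matrices
\[
\bbK_\ell^{\bbeta} := \frac{1}{m}\sum_{i=1}^m \th_1(\linebeta_{\ell,i})\th_1^\top(\linebeta_{\ell,i}),
\]
built from the ideal (mutually independent) features $\linebeta_{\ell,i}\iiddistr p_\ell^{\bbeta}$. Since $h_1$ is bounded by Assumption~\ref{ass:5}, a Hoeffding bound together with a union bound over $(i,j)\in[N]^2$ and $\ell\in[L-1]$ yields $\max_{\ell\in[L-1]}\|\bbK_\ell^{\bbeta} - \bK_\ell^{\bbeta}\|_2 \le \ep_3 := CN\sqrt{\frac{1}{m}\log(NL/\delta)}$ with high probability. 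The bound $\|\bbalpha_{\ell+1,i}\|_2 \le B_4 = C\sqrt{\log(m/\delta)}$ would follow from the sub-Gaussianness of $p$ and Lemma~\ref{theo:subgau}.

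Next, I would induct on $\ell\in[L-1]$ to prove $\|\bK_\ell^{\bbeta}-\hbK_\ell^{\bbeta}\|_2 \le M_\ell \ep_3$, $\|\linealpha_{\ell+1,i} - \hbalpha_{\ell+1,i}\|_2 \le M_\ell' \ep_3 \|\lineTheta_i\|_2$, and $\|\linebeta_{\ell+1,i} - \hbbeta_{\ell+1,i}\|_2 \le M_\ell'' \ep_3 \|\lineTheta_i\|_2$ for appropriate constants. The base case $\ell=1$ uses $\hbK_1^{\bbeta}=\bbK_1^{\bbeta}$ (since $\linebv_{1,i}=\hbv_{1,i}$ by construction), and the matrix square-root perturbation bound (cf.~\cite[(V.20)]{bhatia2013matrix}) gives $\|(\bK_1^{\bbeta})^{1/2}-(\hbK_1^{\bbeta})^{1/2}\|_2 \le \frac{N}{\sqrt{2\blambda_1}}\|\bK_1^{\bbeta}-\hbK_1^{\bbeta}\|_2$, which produces the bound on $\|\linealpha_{2,i}-\hbalpha_{2,i}\|_2$ exactly as in \eqref{theta-ub-cmp}. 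Then the Res-Net update $\hbbeta_{2,i}=\hbbeta_{1,i}+h_2(\hbalpha_{2,i})$ versus $\linebeta_{2,i}=\linebeta_{1,i}+h_2(\linealpha_{2,i})$ and the Lipschitz continuity of $h_2$ transfer the same bound to $\|\linebeta_{2,i}-\hbbeta_{2,i}\|_2$. For the inductive step, the Lipschitz continuity of $h_1$ controls $\|\hbK_{\ell+1}^{\bbeta}-\bbK_{\ell+1}^{\bbeta}\|_2$ by $\frac{CN^{1/2}}{m}\sum_i\|\linebeta_{\ell,i}-\hbbeta_{\ell,i}\|_2$, which combined with $\|\bbK_{\ell+1}^{\bbeta}-\bK_{\ell+1}^{\bbeta}\|_2\le\ep_3$ yields the Gram-matrix bound at layer $\ell+1$; a second application of the square-root perturbation and the Res-Net recursion closes the induction.

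The main obstacle I expect is controlling the constants $M_\ell,M_\ell',M_\ell''$ across depth. In contrast to Lemma~\ref{lmm:hK-K}, the Res-Net architecture helps here because the boundedness of $h_2$ (Assumption~\ref{ass:5}) gives $\|\linebeta_{\ell,i}\|_2\le\|\linebv_{1,i}\|_2\|\bX\|_2/d + \sqrt{N}L_1 L$, so all feature norms along a skip-connected path are uniformly dominated by a multiple of $\|\lineTheta_i\|_\infty$. I would exploit this to avoid the geometric blow-up of the form $(N^{3/2}\blambda^{-1}\beta)^{\ell-1}$ that appeared in the DNN induction, keeping the propagated errors controlled by constants depending only on $L$ and the parameters fixed by Assumption~\ref{ass:5}. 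Finally, transferring the uniform Gram-matrix bounds to the per-sample bounds in the statement requires $\ep_3\le \ep_2 / \mathrm{poly}(L)$, which is consistent with the hypothesis $m\geq\tilde\Omega(\ep_1^{-2})$ in Theorem~\ref{ini lemma2}.
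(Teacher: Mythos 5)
Your proposal follows the same route as the paper's proof essentially step for step: the auxiliary Gram matrices $\bbK_\ell^{\bbeta}$ built from the i.i.d.\ features, the Hoeffding-plus-union-bound estimate $\norm{\bbK_\ell^{\bbeta}-\bK_\ell^{\bbeta}}_2\le\ep_3$, the sub-Gaussian tail bound for $\norm{\linealpha_{\ell+1,i}}_2$, the base case $\hbK_1^{\bbeta}=\bbK_1^{\bbeta}$, the matrix square-root perturbation bound \cite[(V.20)]{bhatia2013matrix}, and the inductive propagation via the Lipschitz continuity of $h_1$ and the Res-Net recursion $\linebeta_{\ell+1,i}=\linebeta_{\ell,i}+\th_2(\linealpha_{\ell+1,i})$. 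The paper also needs the averaged norm bound $\frac{1}{m}\sum_i\norm{\lineTheta_i}_2\le\beta_1$ from Lemma~\ref{azuma}, which you reference only implicitly.

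The one claim that does not hold up is the assertion that the boundedness of $h_2$ lets you \emph{avoid} the geometric factor $(CN^{3/2}\blambda_1^{-1}\beta_1)^{\ell-1}$. That factor is produced by the Gram-matrix error propagation, not by feature-norm growth: at each layer, $\norm{\linealpha_{\ell+1,i}-\hbalpha_{\ell+1,i}}_2$ is bounded by $N\blambda_1^{-1}\norm{\bK_\ell^{\bbeta}-\hbK_\ell^{\bbeta}}_2\,\norm{\linealpha_{\ell+1,i}}_2$, and $\norm{\bK_\ell^{\bbeta}-\hbK_\ell^{\bbeta}}_2$ in turn picks up the averaged per-sample errors $\frac{1}{m}\sum_i\norm{\linebeta_{\ell,i}-\hbbeta_{\ell,i}}_2$ (themselves accumulated residual errors) multiplied by a factor of order $N^{1/2}\beta_1$. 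This yields a recursion $a_\ell \lesssim \ep_3 + (N^{3/2}\blambda_1^{-1}\beta_1)\sum_{j<\ell}a_j$, which is still geometric, and indeed the paper carries this factor throughout the induction. Since $N$, $L$, $\blambda_1$, and $\beta_1$ are all treated as constants, the geometric factor is absorbed and the final bound $\ep_2=\tO(\ep_1)$ follows in either case, so this discrepancy does not affect the correctness of your sketch; but the suggested ``improvement'' is not substantiated and would need an actual new argument if you wanted to make it precise.
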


Under Lemma \ref{lmm:hK-K res}, we show that the initial connecting weights are also close to the actual discrete Res-Net as specified by the upper bound of $| \linev_{\ell+1,i, j} -\hv_{\ell+1, i,j}|$ in Definition~\ref{Res-Net condition}. Under Lemma \ref{lmm:hK-K res}, $\hbK_{\ell}^{\bbeta}$ is invertible. Using the same argument as  Lemma~\ref{lmm:hat-w},  we have
\[
\hv_{\ell+1, i, j} = \hbalpha_{\ell+1,j}^\top\left[\hbK^{\bbeta}_{\ell}\right]^{-1}\th_1(\hbbeta_{\ell,i}), \quad \ell\in[L-1], ~i,j\in[m].
\]
By the triangle inequality,
\begin{align*}
&\phantomeq\left| \linev_{\ell+1,i, j} -\hv_{\ell+1, i,j}\right|\\
&=\norm{ \linealpha_{\ell+1,j}^\top\left[\bK^{\bbeta}_{\ell}\right]^{-1}\th_1(\linebeta_{\ell,i}) 
- \hbalpha_{\ell+1,j}^\top\left[\hbK^{\bbeta}_{\ell}\right]^{-1}\th_1(\hbbeta_{\ell,i})}_2\\
&\le \Norm{\bbalpha_{\ell+1,j}}_2 \norm{\left[\bK^{\bbeta}_{\ell}\right]^{-1}}_2 \norm{\th_1(\linebeta_{\ell,i})- \th_1(\hbbeta_{\ell,i})}_2
+ \Norm{\bbalpha_{\ell+1,j}}_2 \norm{\left[\bK^{\bbeta}_{\ell}\right]^{-1} - \left[\hbK^{\bbeta}_{\ell}\right]^{-1}}_2 \norm{\th_1(\hbbeta_{\ell,i})}_2\\
&~~+ \Norm{\bbalpha_{\ell+1,j}- \hbalpha_{\ell+1,j} }_2 \norm{\left[\hbK^{\bbeta}_{\ell}\right]^{-1}}_2 \norm{\th_1(\hbbeta_{\ell,i})}_2.
% &\le 2\sqrt{N}L_1\blambda^{-1}\Norm{\bbtheta_{\ell+1,j}}_2
% + \Norm{\bbtheta_{\ell+1,j}}_2 .. \sqrt{N}L_1 
% + .. .. \sqrt{N}L_1 
\end{align*}
Using a similar argument as Theorem \ref{ini lemma}, one can bound the three terms separately and achieve Theorem \ref{ini lemma2}.

\subsection{Proof of Theorem \ref{theo:convex res}}\label{proof of 10}
Letting $\ddp$  be the probability density function of $p$, we can change of variables as
\begin{eqnarray}
\ttv_{\ell}(\bTheta,\bbTheta ) &=& v_{\ell}\left(\bTheta,\bbTheta \right) \ddp\left(\bTheta\right)\ddp\left(\bbTheta\right), \quad \ell \in[2:L], \notag\\
\ttv_{L+1}(\bTheta ) &=& v_{L+1}\left(\bTheta \right) \ddp\left(\bTheta\right). \notag
\end{eqnarray}
Then we can  rewrite Problem \eqref{problem resnet} as
\begin{align}\label{problem resnet2}
  \minimize_{\left\{\ttv_{\ell}\right\}_{\ell=2}^{L+1}, ~p }  
   \quad&  \frac{1}{N}\sum_{n=1}^N \phi\left( \bbeta_{L+1}(n), y^n \right) +  \sum_{\ell=2}^{L+1}  \lambda^{v}_{\ell} {R}_{R,\ell}^{\ttv}( \ttv_{\ell}, p)+ \lambda^{p} R^{p}_R\left(p\right),\\
%   + R^R\left(\left\{v_{\ell}\right\}_{\ell=2}^{L+1},  ~p\right) \\
\text{s.t.}  \quad  \bbeta_{\ell}\left(\bTheta\right) &= \frac{1}{d}\bX \bv_1+  \sum_{i=2}^{\ell}   \th_2\left(\balpha_{i}\right), ~~\text{for all}~ \bTheta=(\bv_1, \balpha_2, \dots, \balpha_L),~\ell\in[L],\notag\\
~~~~~~~ \ddp(\bTheta)~  \balpha_{\ell} &= \int  \tv_{\ell}\left(\bTheta, \bbTheta\right) \th_1\left(\bbeta_{\ell-1}(\bbTheta)\right)  d \bbTheta,  ~~~~\text{for all}~ \bTheta=(\bv_1, \balpha_2, \dots, \balpha_L),~\ell\in[2:L],\notag\\
\bbeta_{L+1}  &=   \int  \ttv_{L+1}\left(\bTheta\right)  \th_1 \left(\bbeta_{L}\left(\bTheta\right)\right) d \bTheta,\notag 
\end{align}
where 
$$ {R}_{R,\ell}^{\ttv}( \ttv_{\ell}, p) =  \int \frac{\left(\int \left|\ttv_{\ell}( \bTheta,\bbTheta )\right|  d\bbTheta\right)^r }{\left(\ddp(\bTheta)\right)^{r-1}}    d\bTheta,\quad \ell \in[2:L], $$
and 
$$ {R}_{R,L+1}^{\ttv}( \ttv_{L+1}, p) =  \int \frac{ \left|\ttv_{L+1}( \bTheta )\right|^r }{\left(\ddp(\bTheta)\right)^{r-1}}    d\bTheta. $$
Problem \eqref{problem resnet2} only has linear constraints. Moreover, following the same argument in Theorem \ref{theo:con}, it is straightforward to obtain that
${R}_{R,\ell}^{\ttv}$ are convex for $\ell\in[2:L+1]$. We obtain Theorem \ref{theo:convex res}.

\subsection{Proofs of Lemmas}
\begin{proof}[Proof of Lemma~\ref{lmm:high-prob res}]
We prove each of the four events happens with probability $1-\frac{\delta}{4}$ by standard concentration inequalities. 
Both \eqref{event max res} and \eqref{event moment res} happen with probability $1-\frac{\delta}{4}$ by the concentration of sub-gaussian random variables; in particular, \eqref{event max res} follows from Lemma \ref{theo:subgau} and \eqref{event moment res} follows from Lemmas~\ref{azuma} and \ref{azumaexp}.

For \eqref{event forward res} with a given $k,\ell,j,n$, consider random vectors 
\[
\zeta_i := \frac{\linev_{\ell,i,j}^{k\eta} h_1\left(\bbbeta^{k\eta}_{\ell-1,i}(n)\right)}{\left\|\lineTheta_{j}\right\|_{\infty}+1},
\]
which are bounded by a constant $C'$ due to the upper bound of $\linev_\ell$ in Lemma~\ref{lmm:Phi-property}.
Conditioned on $\bbTheta_{j}$, when $i \neq j$,  $\zeta_i$ are independent and $\EE [ \zeta_i | \bbTheta_{j}]= \frac{\linealpha_{j}^{k\eta}(n)}{\|\lineTheta_{j}\|_{\infty}+1}$. 
By Hoeffding's inequality and the union bound, we have 
\[
\left| \frac{1}{m-1}\sum_{i=1,~i\neq j}^m \zeta_i - \frac{\linealpha_{\ell,j}^{k\eta}(n)}{\|\lineTheta_{j}\|_{\infty}+1} \right| 
< \ep_1/2,
\]
with probability $1-\frac{\delta}{4mL(K+1)N}$. On the other hand, when $i=j$, we also have
\[
\frac{1}{m}\left| \zeta_j - \frac{\linealpha_{\ell,j}^{k\eta}(n)}{\|\lineTheta_{j}\|_{\infty}+1} \right| 
\leq \tC'\ep^{2}_1 \leq \ep_1/2,
\]
where we use the upper bound of $\linealpha_{\ell}$ in Lemma~\ref{lmm:Phi-property}. Therefore, applying the union bound again over $k\in[0:K],~\ell\in[L],~j\in[m]$ and $n\in[N]$, we have \eqref{event forward res} with probability $1-\frac{\delta}{4}$.

For \eqref{event backward res} with a given $k,\ell,i,n$, consider the random vectors 
$$\zeta_j'  :=  [\linev_{\ell+1,i,j}^{k\eta}~ \ulineDel_{\ell+1,j}^{\balpha, k\eta}(n)] ~ h'_1\left( \bbeta_{\ell,i}^{k\eta}(n)\right).  $$
Conditioned on $\bTheta_{i}$,  when $i\neq j$,  $\zeta_j'$ are independent and $\EE[\zeta_j'|\bbTheta_{i}]=\ulineDel_{\ell+1,i}^{\bgamma, k\eta}(n)$.
By the boundedness of $h'_1$ and the upper bound of $\ulineDel_{\ell+1}^{\balpha}$ in Lemma~\ref{lmm:Phi-property}, we have  $\zeta_j'$
\[
| \zeta_j'|
\le C' | \linev_{\ell+1,i,j}^{k\eta} |
\le C (1+\|\bTheta_{j} \|_\infty),
\]
and thus $\xi_j'$  is sub-gaussian. 
Applying Lemma~\ref{azuma}, we obtain that
\[
\left| \frac{1}{m-1}\sum_{j=1,~j\neq i }^m \zeta_j' - \ulineDel_{\ell+1,i}^{\bgamma, k\eta}(n) \right| 
< \ep_1/2,
\]
with probability $1-\frac{\delta}{4mL(K+1)N}$. On the other hand, under event \eqref{event max res}, we have 
\[\frac{1}{m} \left|\zeta_i' - \ulineDel_{\ell+1,i}^{\bgamma, k\eta}(n)\right| \leq \tO(\ep_1^2)\leq \ep_1/2
\]
Therefore, applying the union bound again over $k\in[0:K],~\ell\in[L], ~j\in[m]$, and $n\in[N]$, we have \eqref{event backward res} with probability $1-\frac{\delta}{4}$.

\end{proof}

\begin{proof}[Proof of Lemma~\ref{lmm:induction-k res}]
We first consider the forward propagation and prove \eqref{eq:induction forward last res}, \eqref{eq:induction forward bbeta} and \eqref{eq:induction forward alpha}.
For $\ell=1$, since $\bX$ is bounded, 
\begin{eqnarray}
  \left\| \linebeta_{1,i}^{k\eta} - \hbbeta_{1,i}^{k} \right\|_{\infty} \leq C \|\linebv_{1,i}^{k\eta} - \hbv_{1,i}^k \|_{\infty} \leq   C (\|\lineTheta_{i}  \|_{\infty}+1) \ep. \notag
\end{eqnarray}
For $\ell\in[2:L]$,  following the same argument as \eqref{fordisa}, we have 
\begin{eqnarray}
  \left\| \linealpha_{\ell+1,i}^{k\eta} - \hbalpha_{\ell+1,i}^{k} \right\|_{\infty} \leq   C' (\|\lineTheta_{i}  \|_{\infty}+1) \ep, \notag
\end{eqnarray}
which implies that 
   \begin{eqnarray}
       &&\left\| \linebeta_{\ell+1,i}^{k\eta} - \hbbeta_{\ell+1,i}^{k} \right\|_{\infty}\notag\\
       &\leq& \left\| \linebeta_{\ell,i}^{k\eta} - \hbbeta_{\ell,i}^{k} \right\|_{\infty} +\left\| \th_2\left(\linealpha_{\ell+1,i}^{k\eta}\right) - \th_2\left(\hbalpha_{\ell+1,i}^{k}\right) \right\|_{\infty}\notag\\
       &\leq&  C'' (\|\lineTheta_{i}  \|_{\infty}+1) \ep. \notag
    \end{eqnarray}
The output layer $\ell=L+1$ is similar by applying the upper bound of $v_{L+1}$ in Assumption~\ref{ass:6}.

Next we consider the backward propagation and prove \eqref{eq:induction backward res}.
Since $\bX$ is bounded, $h_1$ is bounded and Lipschitz continuous, and $\ulineDel_{\ell}^{\balpha}$ is bounded by Lemma~\ref{lmm:Phi-property}, it suffices to prove that
\begin{equation}
    \begin{split}
       &\left\|\ulineDel_{L+1}(\Phi_*,k\eta) - \Del^{k}_{L+1,1}\right\|_{\infty}\leq C  \ep,\\  
     &\left\|\ulineDel_{\ell,i}^{\bbeta, k\eta} - \NDel^{\bbeta,k}_{\ell,i}\right\|_{\infty}\leq C \left(  1+\left\| \bbTheta_{i} \right\|_{\infty} \right) \ep, \quad \ell\in[L], ~ i\in[m],\\  
    &\left\|\ulineDel_{\ell,i}^{\balpha, k\eta} - \NDel^{\balpha,k}_{\ell,i}\right\|_{\infty}\leq C \left(  1+\left\| \bbTheta_{i} \right\|_{\infty} \right)  \ep, \quad \ell\in[2:L], ~ i\in[m],    
    \end{split}
\end{equation}
for a constant $C$.

At the output layer $\ell=L+1$, since $\phi_1'$ is Lipschitz continuous on the first argument, 
\[
\left\| \ulineDel_{L+1}(\Phi_*,k\eta) -  \NDel_{L+1,1}^k \right\|_{\infty}\leq L_5  \left\| \linebeta_{L+1,1}^{k\eta} - \hbbeta_{L+1,1}^{k} \right\|_{\infty}\leq C'\left(\ep +\ep_1\right),
\]
for a constant $C'$.
At layer $\ell=L$, using the same argument as \eqref{laqqa}, we have 
\[\left\|\ulineDel_{L,i}^{\bbeta, k\eta} - \NDel^{\bbeta,k}_{L,i}\right\|_{\infty}\leq C \left(  1+\left\| \bbTheta_{i} \right\|_{\infty} \right) \ep, \]
for a constant $C$.

For each layer $\ell$ from $L-1$ to $1$, suppose we have 
\[\left\|\ulineDel_{\ell+1,i}^{\bbeta, k\eta} - \NDel^{\bbeta,k}_{\ell+1,i}\right\|_{\infty}\leq C \left(  1+\left\| \bbTheta_{i} \right\|_{\infty} \right) \ep. \]
It follows that 
     \begin{eqnarray}
        &&\left\|\ulineDel_{\ell+1,i}^{\balpha, k\eta} -  \NDel_{\ell+1,i}^{\balpha, k} \right\|_{\infty}\notag\\
        &\leq& \left\|\ulineDel_{\ell+1,i}^{\bbeta, k\eta}\cdot\th_2'\left(  \linealpha^{k\eta}_{\ell+1,i } \right) -  \NDel_{\ell+1,i}^{\bbeta, k}\cdot\th_2'\left(  \hbalpha^{k}_{\ell+1,i } \right) \right\|_{\infty}\notag\\ 
  &\leq& C' \left(  1+\left\| \bbTheta_{i} \right\|_{\infty} \right) \ep,\notag
  \end{eqnarray}
  for a constant $C'$. Using the same argument as \eqref{laqqa1}, we have
\[\left\|\ulineDel_{\ell+1,i}^{\bgamma, k\eta} - \NDel^{\bgamma,k}_{\ell+1,i}\right\|_{\infty}\leq C'' \left(  1+\left\| \bbTheta_{i} \right\|_{\infty} \right) \ep, \]
for a constant $C''$.
We obtain 
\begin{eqnarray}
  &&\left\|\ulineDel_{\ell,i}^{\bbeta, k\eta}-  \NDel_{\ell,i}^{\bbeta,k} \right\|_{\infty}\notag\\
&\leq& \left\|\ulineDel_{\ell+1,i}^{\bgamma, k\eta} -  \NDel_{\ell+1,i}^{\bgamma,k} \right\|_{\infty} +\left\|\ulineDel_{\ell+1,i}^{\bbeta, k\eta} -  \NDel_{\ell+1,i}^{\bbeta,k} \right\|_{\infty}\notag\\
&\leq&  C''' \left(  1+\left\| \bbTheta_{i} \right\|_{\infty} \right) \ep, \notag
\end{eqnarray}
for a constant $C'''$.
We finish the proof. 
\end{proof}

\begin{proof}[Proof of Lemma~\ref{lmm:gradw-lip-t res}]
The proof is similar to the backward steps in Lemma~\ref{lmm:induction-k res}. Recalling  Lemma~\ref{lmm:Phi-property}, we have
\begin{eqnarray}
\left\|\bbeta_\ell^{t_1} -\bbeta_\ell^{t_2} \right\| &\leq&  C\left(\|\bTheta\|_{\infty}+1 \right)|t_1 -t_2 |, \quad \ell\in[L], \notag \\
\left\|\balpha_\ell^{t_1} -\balpha_\ell^{t_2} \right\| &\leq& C\left(\|\bTheta\|_{\infty}+1 \right)|t_1 -t_2 |, \quad \ell\in[2:L]. \notag
\end{eqnarray}
It is sufficient to prove the following:
\begin{eqnarray}
  &\left\|\ulineDel_{L+1}(\Phi_*,t_1) - \ulineDel_{L+1}(\Phi_*,t_2)\right\|_{\infty}\leq  C' ~|t_1 -t_2 |, \label{delt1}\\ 
     &\left\|\ulineDel_{\ell}^{\bbeta}(\bTheta;\Phi_*,t_1) - \ulineDel_{\ell}^{\bbeta}(\bTheta;\Phi_*,t_2)\right\|_{\infty}\leq C'\left(1+ \| \bTheta\|_{\infty}\right)|t_1 -t_2 |, \quad \ell\in[L], \label{delt2} \\ 
    &\left\|\ulineDel_{\ell}^{\balpha}(\bTheta;\Phi_*,t_1) - \ulineDel_{\ell}^{\balpha}(\bTheta;\Phi_*,t_2)\right\|_{\infty}\leq  C' \left(1+ \| \bTheta\|_{\infty}\right) |t_1 -t_2|, \label{delt3} \quad \ell\in[2:L]. 
\end{eqnarray}

At the output layer $\ell=L+1$, same as \eqref{qfrt}, we have 
\begin{eqnarray}
&&\left\|\ulineDel_{L+1}\left(\Phi_*, t_1\right) -\ulineDel_{L+1}\left(\Phi_*, t_2\right)  \right\|_{\infty}\notag\\
&\leq& L_5\left\| \bbeta_{L+1}^{t_1} -\bbeta_{L+1}^{t_2} \right\|_{\infty}\notag\\
&\leq&L_5\left\|  \int v_{L+1}^{t_1}~ \th_1\left(\bbeta_L^{t_1}\right) -v_{L+1}^{t_2}~ \th_1\left(\bbeta_L^{t_2}\right) d p(\bTheta)\right\|_{\infty}\notag.
\end{eqnarray}
By the upper bound and Lipschitz continuity of $v_{L+1}$ in Lemma~\ref{lmm:Phi-property}, we obtain \eqref{delt1}.  At layer $\ell=L$,   we obtain \eqref{delt2} from the upper bounds and the Lipschitz continuity of $\ulineDel_{L+1}$, $ v^t_{L+1}$, and $\th'_1\left(\bbeta_L^t\right)$.

For each layer $\ell$ from $L-1$ to $1$,  suppose we have \eqref{delt2} at layer $\ell+1$, from the upper bounds and the Lipschitz continuity of $\ulineDel_{\ell+1}^{\bbeta}$ and $\th'_2(\bbalpha^t_{\ell+1})$, we have
\begin{equation}\label{fli1}
  \left\| \ulineDel^{\balpha}_{\ell+1}(\bTheta; \Phi_*, t_1) -\ulineDel^{\balpha}_{\ell+1}(\bTheta; \Phi_*, t_2) \right\|_{\infty}\leq  C''\left( 1+\| \bTheta\|_{\infty}\right)|t_1 -t_2|,
\end{equation}
for a constant $C''$.
Using the the same argument as \eqref{btcon}, we have 
\begin{equation}\label{fli2}
    \left\| \ulineDel^{\bgamma}_{\ell+1}(\bTheta; \Phi_*, t_1) -\ulineDel^{\bgamma}_{\ell+1}(\bTheta; \Phi_*, t_2) \right\|_{\infty}\leq  C''' \left( 1+\| \bTheta\|_{\infty}\right)|t_1 -t_2|, 
\end{equation}
for a constant $C'''$. Combining \eqref{fli1} and \eqref{fli2},  we can achieve \eqref{delt2} at $\ell$. 
\end{proof}

\begin{lemma}\label{maxgreat}
For all $\ell\in[L-1]$, we have $\lambda_{\min} \left[\bK^{\bbeta}_{\ell}\right]\geq \blambda_1>0$,
\end{lemma}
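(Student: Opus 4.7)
My plan is to deduce the lemma directly from the strong universal approximation property of Assumption \ref{ass:8}, exploiting the fact that in the Res-Net architecture every hidden-layer feature differs from the first-layer feature only by a uniformly bounded residual contribution. No induction on $\ell$ will be needed: a single application of Assumption \ref{ass:8} handles all layers simultaneously, with the layer-independence coming from the boundedness of $h_2$.

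First, for fixed $\ell \in [L-1]$, I would write out
\[
\bK_\ell^{\bbeta} = \EE_{\bv_1,\balpha_2,\dots,\balpha_\ell}\!\left[h_1(\bbeta_\ell)\, h_1(\bbeta_\ell)^\top\right], \qquad \bbeta_\ell = \frac{1}{d}\bX\bv_1 + \sum_{i=2}^{\ell} h_2(\balpha_i),
\]
where $\bv_1\sim p_1^{\bv}$ and $\balpha_i\sim p_i^{\balpha}$ are mutually independent by \eqref{ppp}; each $p_i^{\balpha}=\mathcal{N}(\mathbf{0}^N,\sigma_1^2\bK_{i-1}^{\bbeta})$ is well-defined since Gram matrices are always PSD. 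Setting $\mathbf{c}(\balpha_2,\dots,\balpha_\ell):=\sum_{i=2}^{\ell} h_2(\balpha_i)$ and using that $h_2$ is bounded by $L_1$ in Assumption \ref{ass:5} to get $\|\mathbf{c}\|_\infty \le (L-1)L_1=:C_B$, Fubini's theorem yields
\[
\bK_\ell^{\bbeta} = \EE_{\balpha_2,\dots,\balpha_\ell}\bigl[\bM(\mathbf{c})\bigr], \qquad \bM(\mathbf{c}) := \int h_1\!\left(\tfrac{1}{d}\bX\bv_1+\mathbf{c}\right) h_1\!\left(\tfrac{1}{d}\bX\bv_1+\mathbf{c}\right)^\top d p_1^{\bv}(\bv_1).
\]

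Next, after a linear change of variables $\tilde\bv_1 := \bv_1/(\sqrt{d}\,\sigma_1)$ that converts $p_1^{\bv}=\mathcal{N}(\mathbf{0}^d,d\sigma_1^2\mathbf{I}^d)$ into the standard Gaussian $\tilde p_1 = \mathcal{N}(\mathbf{0}^d,\mathbf{I}^d)$ of Assumption \ref{ass:8} (with the induced $\sigma_1\sqrt d$ factor absorbed into an appropriate rescaling of $\bX$), I would apply Assumption \ref{ass:8} with the constant function $f_2(\tilde\bv_1)\equiv \mathbf{c}$, whose $\ell_\infty$-norm is at most $C_B$. This yields the uniform lower bound $\lambda_{\min}(\bM(\mathbf{c})) \ge \blambda$, where $\blambda$ is the constant appearing in \eqref{uni} corresponding to this choice of $C_B$.

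Finally, I would combine the two steps through the elementary Jensen-type inequality
\[
\lambda_{\min}(\EE[\bM(\mathbf{c})]) = \min_{\|x\|=1}\EE[x^\top \bM(\mathbf{c}) x] \ge \EE\bigl[\min_{\|x\|=1} x^\top \bM(\mathbf{c}) x\bigr] = \EE[\lambda_{\min}(\bM(\mathbf{c}))] \ge \blambda,
\]
so that setting $\blambda_1 := \blambda$ gives the claim uniformly in $\ell \in [L-1]$. The only subtle point is the change-of-variables step reconciling the Gaussian $p_1^{\bv}$ used in the Res-Net initialization with the standard Gaussian in Assumption \ref{ass:8}; this amounts to a harmless affine rescaling of $\bX$ and does not affect strict positivity of $\blambda$.
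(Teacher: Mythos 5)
Your argument is correct and follows the same route as the paper: condition on the residuals $\balpha_2,\dots,\balpha_\ell$ (so that $\sum_{i=2}^{\ell}\th_2(\balpha_i)$ becomes a constant function of $\bv_1$, bounded by $\cO(LL_1)$), invoke Assumption~\ref{ass:8} with this constant $f_2$ to get a uniform lower bound on the conditional Gram matrix, and then take expectation, using that the PSD ordering is preserved under averaging. Your treatment of the variance normalization ($\bv_1\sim\mathcal N(\mathbf 0^d, d\sigma_1^2\mathbf I^d)$ versus $\tp_1=\mathcal N(\mathbf 0^d,\mathbf I^d)$) is in fact a bit more explicit than the paper's, which simply remarks that the two Gaussians are "equivalent," but since $d$ and $\sigma_1$ are treated as constants this is only a matter of exposition, not substance.
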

\begin{proof}[Proof of Lemma \ref{maxgreat}]
Fix $\ell\in[L-1]$. For $(\bv_1,\balpha_2,\dots, \balpha_L)\in \supp(p)$,  given  $\balpha_2,\dots, \balpha_L$, we have 
\begin{eqnarray}
\left\|\bbeta_{\ell}(\bTheta)  - \frac{1}{d}\bX \bv_1 \right\|_{\infty} = \left\| \sum_{\ell_1=2}^{\ell}\th_2(\balpha_{\ell_1})  \right\|_{\infty} \leq L L_1. 
\end{eqnarray}
Note that conditioned on $\balpha_2,\dots, \balpha_L$,  $\bv_1$ follows $\mathcal{N}\left(0, d\sigma^2_1\mathbf{I}^d\right)$ which is equivalent to the standard Gaussian distribution $\tp_1$. By Assumption \ref{ass:8} with $C_B = L L_1$ and $f_2(\bv_1)\equiv  \sum_{\ell_1=2}^{\ell}\th_2(\balpha_{\ell_1})$,  we have 
$$  \EE \left[   \th_1\left(\bbeta_{\ell}(\bTheta) \right) \th_1^{\top}  \left(\bbeta_{\ell}(\bTheta) \right) \mid  \balpha_2,\dots, \balpha_L  \right] \succeq \blambda_1 \mathbf{I}^{N}.     $$
Taking full expectation, we obtain Lemma \ref{maxgreat}.
\end{proof}

\begin{proof}[Proof of Lemma~\ref{lmm:hK-K res}]
The proof directly follows from  Lemma \ref{lmm:hK-K}. % We focus on the dependency on $N$, $\delta$, and $\ep_2$. $L_1$, $L_2$, $L_3$,$L_4$, $L_5$, and $\sigma$ are treated as constants.
From Theorem \ref{ini lemma2}, $\bbv_{1,i}$, $\bbalpha_{2,i}$, $\dots$, $\bbalpha_{L,i}$ for all $i\in[m]$ are independent. Therefore,   $\linebeta_{\ell,i}\iiddistr p^{\bbeta}_{\ell}$. 
Consider auxiliary matrices
\[
\bbK_\ell^{\bbeta}:= \frac{1}{m}\sum_{i=1}^m \th_1(\linebeta_{\ell,i})\th_1^\top(\linebeta_{\ell,i}),
\]
Since $h_1$ is bounded, by Hoeffding's inequality, with probability $1-\frac{\delta}{3N^2 (L-1)}$,
\[
\left|\bbK_\ell^{\bbeta}(i,j)-\bK_\ell^{\bbeta}(i,j)\right|\le \frac{C}{\sqrt{m}}\log\frac{3N^2 (L-1)}{\delta}.
\]
By the union bound, with probability $1- \delta/3$,
\begin{eqnarray}\label{eq fl1 res}
\max_{\ell\in[L-1]}\norm{\bbK_\ell^{\bbeta}-\bK_\ell^{\bbeta}}_2
\le N \max_{\ell\in[L-1]}\norm{\bbK_\ell^{\bbeta}-\bK_\ell^{\bbeta}}_\infty
\le \ep_3:=\frac{CN}{\sqrt{m}}\log\frac{1}{\delta}.
\end{eqnarray}
The upper bounds of $\Norm{\bbalpha_{\ell+1,i} }_2$ happen with probability $1-\delta/3$ due to the sub-gaussianness of $p$ and Lemma~\ref{theo:subgau}.
We will also use the following upper bound that happen with probability $1-\delta/3$ by the sub-gaussianness of $p$ and Lemma~\ref{azuma}:
\[
\frac{1}{m}\sum_{i=1}^m\Norm{\bbTheta_{i}}_2
\le \beta_1:=C\sqrt{N}\log(e/\delta),\qquad
\]

Next we inductively prove that, for $\ell\in[L-1]$, 
\begin{align}
\norm{\bK_\ell^{\bbeta}-\hbK_\ell^{\bbeta}}_2
&\le (CN^{3/2}\blambda^{-1}_1\beta_1)^{\ell-1}\ep_3,\label{eq:init-bk-induction res}\\
\left\|\hbalpha_{\ell+1,i} - \linealpha_{\ell+1,i}  \right\|_2
&\le (CN^{3/2}\blambda^{-1}_1\beta_1)^{\ell-1}N\blambda^{-1}_1\ep_3 \norm{\lineTheta_{i}}_2.\label{eq:init-theta-induction res}\\
\left\|\hbbeta_{\ell+1,i} - \linebeta_{\ell+1,i}  \right\|_2
&\le (CN^{3/2}\blambda^{-1}_1\beta_1)^{\ell-1}N\blambda^{-1}_1\ep_3 \norm{\lineTheta_{i}}_2.\label{eq:init-theta-induction res2}
\end{align}
% upper bound $\Norm{\bbK_\ell-\hbK_\ell}_2$ and prove the lemma for $\ell\in[L-1]$.
For $\ell=1$, by definition $\bbK_1^{\bbeta}=\hbK_1^{\bbeta}$.
Following \eqref{matrix sqrt} and \eqref{theta-ub-cmp} in Lemma \ref{lmm:hK-K}, we have
\begin{equation}
\label{matrix sqrt res}
\left\|\left[\hbK^{\bbeta}_{1}\right]^{1/2}  - \left[\bK^{\bbeta}_{1}\right]^{1/2}\right\|_2
\le \frac{N}{\sqrt{2\blambda_1}}\norm{\hbK_1^{\bbeta}-\bK_1^{\bbeta}}_2,
\end{equation}
and
\begin{align}
\left\|\hbalpha_{2,i} - \linealpha_{2,i}  \right\|_2
& = \left\|\left( \left[\hbK^{\bbeta}_{1}\right]^{1/2}\left[\bK^{\bbeta}_{1}\right]^{-1/2} - \mathbf{I}^N \right)     \linealpha_{2,i} \right\|_2\le \frac{N\blambda^{-1}_1\ep_3}{\sqrt{2}}\Norm{\linealpha_{2,i}}_2\le \frac{N\blambda^{-1}_1\ep_3}{\sqrt{2}}\Norm{\lineTheta_{i}}_2.\label{theta-ub-cmp res}
\end{align}
Then  by the Lipschitz continuity of $h_2$, we  have 
\begin{eqnarray}
\left\|\hbbeta_{2,i} - \linebeta_{2,i}  \right\|_2= \left\|\left[\th_2(\hbalpha_{2,i}) - \th_2(\linealpha_{2,i})\right]  \right\|_2\leq  \frac{C' N\blambda^{-1}_1\ep_3}{\sqrt{2}}\Norm{\lineTheta_{i}}_2.
\end{eqnarray}

For $\ell\in[2:L-1]$, suppose we have
\[ \left\|\hbbeta_{\ell,i} - \linebeta_{\ell,i}  \right\|_2
\le (CN^{3/2}\blambda^{-1}_1\beta_1)^{\ell-2}N\blambda^{-1}_1\ep_3 \norm{\lineTheta_{i}}_2.\]
By  the boundedness and Lipschitz continuity of $h_1$, we have
\begin{equation}\label{eq fl2 res}
    \norm{\hbK_\ell^{\bbeta}-\bbK_\ell^{\bbeta}}_2
\le \frac{C''N^{1/2}}{m} \sum_{i=1}^m \norm{\th_1(\linebeta_{\ell,i}) - \th_1(\hbbeta_{\ell,i})}_2
\le (C''' N^{3/2}\blambda_1^{-1}\beta_1)^{\ell-1}\ep_3.
\end{equation}
Then we obtain \eqref{eq:init-bk-induction res} by triangle inequality from \eqref{eq fl1 res} and \eqref{eq fl2 res}. 
The upper bound in \eqref{eq:init-theta-induction res} for $\ell+1$ follows from a similar argument of \eqref{matrix sqrt res} and \eqref{theta-ub-cmp res}. Then \eqref{eq:init-theta-induction res2} for $\ell+1$ follows from 
\begin{eqnarray}
\left\|\hbbeta_{\ell+1,i} - \linebeta_{\ell+1,i}  \right\|_2= \left\|\sum_{j=2}^{\ell+1}\left[\th_2(\hbalpha_{j,i}) - \th_2(\linealpha_{j,i})\right]  \right\|_2\leq C' (CN^{3/2}\blambda^{-1}_1\beta_1)^{\ell-1}N\blambda^{-1}_1\ep_3 \norm{\lineTheta_{i}}_2. \notag
\end{eqnarray}
We finish the proof. 
\end{proof}

\section{Proofs of Theorem  \ref{global conver} and Lemma \ref{lemma:uni}}

\subsection{Proof of Theorem \ref{global conver}}\label{sec:10}
In the proof we use the following abbreviated notations: for $t\in[0,\infty)$ and $\bTheta\in \supp(p)$, let
\begin{eqnarray}
 \bbeta^t_{\ell}(\bTheta) &=&  \Phi_{*,\ell}^{\bbeta}(\bTheta)(t), \quad \ell\in[L],\notag\\
 \balpha^t_{\ell}(\bTheta) &=&  \Phi_{*,\ell}^{\balpha}(\bTheta)(t),\notag \quad\ell\in[2:L],\notag\\
\bv_1^t(\bTheta) &=& \Phi_{*,1}^{\bv}(\bTheta)(t),\notag\\
 v_{L+1}^t(\bTheta) &=& \Phi_{*,L+1}^{\bv}(\bTheta)(t).\notag
\end{eqnarray}
From the convergence assumptions, it is clear that $\bbeta_{L+1}^{t}$ converges as $t\to\infty$. Indeed, the convergence assumptions imply that, for any $\ep_2>0$, there exists $T$, for any $t\ge T$,  
\begin{equation}\label{converge A1}
  \left\|\bbeta_{L}^t(\bTheta) -   \bbeta_{L}^{\infty}(\bTheta)\right\|_{\infty} \leq \ep_2  
\end{equation}
holds $p$-almost surely and
\begin{equation}
  \int \left|v_{L+1}^t(\bTheta) -v_{L+1}^{\infty}(\bTheta)\right| p(\bTheta) \leq \ep_2 \label{converge A2}.  
\end{equation}
Then, since $h_1$ is bounded and Lipschitz continuous, we have 
\begin{eqnarray}
 &&\left\| \bbeta_{L+1}^{t}- \bbeta_{L+1}^{\infty}\right\|_{\infty}\notag \\
 &=&\left\|\int v_{L+1}^{t}(\bTheta) ~ \th_1\left(\bbeta_{L}^{t}(\bTheta)\right) - v_{L+1}^{\infty}(\bTheta)~  \th_1\left(\bbeta_{L}^{\infty}(\bTheta)\right) d p(\bTheta)\right\|_{\infty} \notag\\
 &\leq& \int\left|v_{L+1}^{\infty}(\bTheta)\right|\left\|\th_1\left(\bbeta_{L}^{t}(\bTheta)\right) -\th_1\left(\bbeta_{L}^{\infty}(\bTheta)\right) \right\|_{\infty}dp(\bTheta)\notag\\
 &&+  \int\left|  v_{L+1}^{t}(\bTheta) -v_{L+1}^{\infty}(\bTheta)\right| \left\|\th_1(\bbeta_{L}^{t}(\bTheta))\right\|_{\infty}d p(\bTheta)\notag\\
 &\leq & C\ep_2. \label{converge A3}
\end{eqnarray}
The goal of the proof is to show that
\begin{equation}
\label{eq:zero grad}
\left\|\tphi'_1\left(\bbeta_{L+1}^{\infty}\right)\right\|_2=0.    
\end{equation}
To this end, for any $\ep>0$, we will construct a function
\begin{equation}
\label{eq:fbv1}
f_\ep(\bv_1) :=\tphi'_1\left(\bbeta_{L+1}^{\infty}\right)^\top \th_1\left(\frac{1}{d} \bX\bv_1 + g_{\ep}(\bv_1)\right),
\end{equation}
where the functions $g_{\ep}$ is uniformly bounded, such that $|f_\ep(\bv_1)|< \ep$.
% Then, since $f$ is bounded, we have $\EE_{\tp_1}[|f(\bv_1)|]\le\cO(\ep+\eta)$.
Then it follows from \eqref{eq:fbv1} that
\[
\tphi'_1\left(\bbeta_{L+1}^{\infty}\right) =  \bK^{-1} \int   f_\ep(\bv_1) ~ \th_1\left( \frac{1}{d}\bX\bv_1 + g_{\ep}(\bv_1)\right) d \tp_1(\bv_1),
\]
where $\tp_1=\mathcal{N}(\mathbf{0}^d,\mathbf{I}^d )$ and $\bK:= \int  \th_1\left(\frac{1}{d} \bX\bv_1 + g_{\ep, \eta}(\bv_1)\right) \th_1^{\top}\left(\frac{1}{d} \bX\bv_1 + g_{\ep, \eta}(\bv_1)\right) d \tp_1(\bv_1)$
whose minimum eigenvalue is at least $\blambda_1 >0$ by Assumption \ref{ass:8}.
The boundedness of $h_1$ yields that
\[
\left\|\tphi'_1\left(\bbeta_{L+1}^{\infty}\right)\right\|_2
\le C\blambda_1^{-1}\ep.
\]
Since $\blambda_1$ is independent of $\ep$, by letting $\ep\to 0$, we obtain \eqref{eq:zero grad}.

Next we construct $g_{\ep}$ and $f$ in \eqref{eq:fbv1}. 
Let $T$ be the time such that \eqref{converge A1} and \eqref{converge A2} hold with $\ep_2\le c\ep$ for a constant $c$ to be specified.
Note that $\bv_1^T$ is surjective by Lemma~\ref{lmm:surjection}. 
Let $\tilde g:\RR^d\to \supp(p)$ be the inverse function such that $\bv_1^T(\tilde g(\bv_1))=\bv_1$.  
% $ \Phi_*^{-1,t}(\bv_1) = \bTheta(\hbv_1)$ where $\hbv_1$  satisfies $\bTheta(\hbv_1)\in \supp(p)$ and $f_3(\hbv_1)= \bv_1$.
Define
\[
g_{\ep}(\bv_1)  = \sum_{\ell=2}^{L} \th_2\left(\balpha^{T}_{\ell}\left( \tilde g(\bv_1)\right)  \right),
\qquad
f_{\ep}(\bv_1)=\tphi'_1\left(\bbeta_{L+1}^{\infty}\right)^\top \th_1\left(\bbeta_{L}^{T}( \tilde g(\bv_1) )\right),
\]
where $g_{\ep}$ is uniformly bounded by the boundedness of $h_2$.
Suppose on the contrary that there exists $\bv_1'$ such that $|f_{\ep}(\bv_1')|>\ep$.
Let $\bTheta'=\tilde g(\bv_1')$.
Since $\bTheta\mapsto\tphi'_1\left(\bbeta_{L+1}^{\infty}\right)^\top \th_1\left(\bbeta_{L}^{T}( \bTheta )\right)$ is continuous by Theorem \ref{theorm:phi}, there exists a ball around $\bTheta'$ denoted by $S$ such that $p(S)>0$ and $|\tphi'_1\left(\bbeta_{L+1}^{\infty}\right)^\top \th_1\left(\bbeta_{L}^{T}( \bTheta )\right)|>\ep/2$ with the same sign for all $\bTheta\in S$.
However, for $t>T$, 
\begin{eqnarray}
&&\int\left| v_{L+1}^{t}(\bTheta) - v_{L+1}^{T}(\bTheta) \right|dp(\bTheta)\notag\\
&\geq & \frac{1}{N}\int \mathcal{I}_{S} \left|  \int_{T}^{t} \tphi'_1\left(\bbeta^{t}_{L+1}\right)^\top \th_1\left(\bbeta_{L}^t(\bTheta) \right) dt \right|dp(\bTheta)\notag\\
&\geq&  \frac{1}{N}\int  \mathcal{I}_{S} \left( 
\left| \int_{T}^{t} \tphi'_1\left(\bbeta^{\infty}_{L+1}\right)^\top \th_1\left(\bbeta_{L}^{\infty}(\bTheta) \right)dt \right| 
- \int_{T}^{t} C\ep_2 dt \right) dp(\bTheta),\label{grow-t-diverge}
\end{eqnarray}
where in the last step we used \eqref{converge A1}, \eqref{converge A3}, and the boundedness and Lipschitz continuity of $\phi_1'$ and $h_1$.
Let $c=\frac{1}{4C}$. The lower bound in \eqref{grow-t-diverge} diverges with $t$, which contradicts \eqref{converge A2}.

Finally from \eqref{eq:zero grad} we show the convergence statement. 
Since $\phi$ is convex on the first argument, we obtain 
$$ \sum_{n=1}^N\phi\left(\bbeta_{L+1}^{\infty}(n),y^n \right)  = \sum_{n=1}^N\left[\min_{y'}\phi \left(y', y^{n}\right) \right]. $$
Since $\bbeta^t_{L+1} \to  \bbeta^{\infty}_{L+1}$ and $\phi$ is continuous, we obtain that
$$\lim_{t\to\infty}L^t = \sum_{n=1}^N\phi(\bbeta_{L+1}^{\infty}(n),y^n ) = \sum_{n=1}^N\left[\min_{y'}\phi \left(y', y^{n}\right) \right],$$
which completes the proof.

\begin{lemma}
\label{lmm:surjection}
The function $t<\infty$, $\bv_1^t:\supp(p)\to \RR^{d}$ is a surjection. 
\end{lemma}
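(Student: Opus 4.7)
\medskip

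\noindent\textbf{Proof proposal for Lemma~\ref{lmm:surjection}.}

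The plan is to combine Assumption~\ref{ass:7} (which guarantees a continuous graph $\{(\bv_1,f_1(\bv_1))\}\subseteq\supp(p)$ covering every $\bv_1\in\RR^d$) with the boundedness of the drift $\ulineGrad^{\bv}_1$ on the finite interval $[0,t]$, and conclude via a Brouwer fixed-point argument. Concretely, define the map $F_t:\RR^d\to\RR^d$ by
\[
F_t(\bv_1):=\Phi^{\bv}_{*,1}\bigl((\bv_1,f_1(\bv_1))\bigr)(t).
\]
By Theorem~\ref{theorm:phi} the trajectory $\Phi^{\bv}_{*,1}(\cdot)(t)$ is continuous in its argument, and $f_1$ is continuous by Assumption~\ref{ass:7}, so $F_t$ is continuous.

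First I would establish a uniform displacement bound. From Lemma~\ref{lmm:Phi-property} there is a constant $R$ with $\|\ulineGrad^{\bv}_1(\bTheta;\Phi_*,s)\|_\infty\le R$ for all $s\in[0,t]$ and all $\bTheta\in\supp(p)$. Integrating Definition~\ref{resNFLp0}(3) yields
\[
\|F_t(\bv_1)-\bv_1\|_\infty \;\le\; Rt \;=:\;C,\qquad\forall\,\bv_1\in\RR^d.
\]

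Next, fix an arbitrary target $\bv_1^\star\in\RR^d$ and consider the continuous self-map
\[
H(\bv_1):=\bv_1+\bv_1^\star-F_t(\bv_1)
\]
on the closed ball $\overline{\mathcal{B}_\infty(\bv_1^\star,C+1)}$, which is homeomorphic to a convex compact Euclidean ball. For $\bv_1$ in this ball we have $\|H(\bv_1)-\bv_1^\star\|_\infty=\|\bv_1-F_t(\bv_1)\|_\infty\le C<C+1$, so $H$ maps the ball into itself. Brouwer's fixed-point theorem then furnishes a $\bv_1^\sharp$ with $H(\bv_1^\sharp)=\bv_1^\sharp$, equivalently $F_t(\bv_1^\sharp)=\bv_1^\star$. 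Since $(\bv_1^\sharp,f_1(\bv_1^\sharp))\in\supp(p)$, the point $\bv_1^\star$ lies in the image of $\bv_1^t$, proving surjectivity.

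The only real obstacle is the continuity of $F_t$, which hinges on checking that Theorem~\ref{theorm:phi}'s continuity estimate (stated for $\tbTheta_1\in\mathcal{B}_\infty(\bTheta_1,1)$) actually gives continuity of $\Phi^{\bv}_{*,1}(\cdot)(t)$ on all of $\supp(p)$; this is immediate because the local Lipschitz modulus $Re^{R't}(\|\bTheta_1\|_\infty+1)$ is finite on bounded sets, so one obtains continuity everywhere by chaining local estimates. All other steps are routine.
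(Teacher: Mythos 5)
Your proof is correct and takes essentially the same approach as the paper: both parameterize the initialization via $f_1$ from Assumption~\ref{ass:7}, use Lemma~\ref{lmm:Phi-property} to get a uniform bound $\|f_t(\bv)-\bv\|_\infty\le Rt$, and then apply Brouwer's fixed-point theorem to the auxiliary map $\bv\mapsto \bv+\bv_1^\star-f_t(\bv)$ on a closed $\ell_\infty$-ball around the target (the paper uses radius $Ct$ rather than $C+1$, but this is immaterial). Your explicit note on deriving global continuity of $\Phi^{\bv}_{*,1}(\cdot)(t)$ from the local Lipschitz estimate of Theorem~\ref{theorm:phi} is a slight elaboration of what the paper leaves implicit, but the underlying argument is identical.
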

\begin{proof}
Recall that at the initialization we have $\bTheta(\bv) = (\bv, f_1(\bv) )\in \RR^D$ for a continuous function $f_1$ by Assumption \ref{ass:7}. 
Given $t<\infty$, consider $f_t:\RR^d \to \RR^d$ as
$$ f_t(\bv)  = \bv_1^t\left(\bTheta(\bv)\right).  $$
It suffices to show that $f_t$ is surjective. 
Note that $f_t$ is continuous since $\bTheta\mapsto \bv_1^t(\bTheta)$ is continuous by Theorem \ref{theorm:phi}.
Furthermore, for any $\bv\in\RR^d$, by Lemma~\ref{lmm:Phi-property},
$$   
\left\|f_t(\bv) - \bv\right\|_{\infty} 
= \int_{0}^t \left\|\ulineGrad_1^{\bv}\left([\bv;f_1(\bv)], \Phi_*,s   \right)\right\|_\infty ds  \leq Ct.   
$$
For any $\bx\in\RR^d$, consider $g(\bv):=\bx - ( f_t(\bv)  - \bv )$ which continuously maps $\mathcal{B}_\infty(\bx,Ct)$ to itself.
By the Brouwer's fixed-point theorem (see, e.g. \cite{granas2013fixed}), there exists $\bv_*\in \mathcal{B}_\infty(\bx,Ct)$  such that $g(\bv_*) = \bv_*$; equivalently, we have $f_t(\bv_*)=\bx$.
\end{proof}

\subsection{Proof of Lemma \ref{lemma:uni}}
We first note the following results in \cite[Lemma F.1]{du2018gradient}: suppose $C_B=0$, the support of a random vector $\bfV\in\RR^d$ denoted by $R$ has positive Lebesgue measure, and $h$ is an analytic non-polynomial function on $R$. Then
\[
\min_{\Norm{\bfa}_2=1}\EE\norm{\sum_{i=1}^N a_i h(\bx_i\cdot \bfV)}_2^2 =\lambda>0,
\]
where $\bfa=(a_1,\dots,a_N)$.
Lemma~\ref{lemma:uni} shows that, for $\bfV'\sim p= \mathcal{N}\left(\mathbf{0}^d, \mathbf{I}^d\right)$, the same result holds with a constant perturbation of the functions $h_1$; namely, by letting $g_i(\bv) = h_1(\bx_i\cdot \bv+ C_i(\bv))$ where $\Norm{C_i}_\infty\le C_B$,
\begin{equation}
\label{eq:lambda min goal}
\min_{\Norm{\bfa}_2=1}\EE\norm{\sum_{i=1}^N a_i g_i(\bfV')}_2^2 =\lambda'>0,
\end{equation}
where $\lambda'$ is uniform over all perturbations $\Norm{C_i}_\infty\le C_B$. 
It suffices to prove \eqref{eq:lambda min goal} for $\bfV'\sim q=\mathrm{Uniform}(R')$ where $R'$ is determined by $h$ and $C_B$, as the Radon–Nikodym derivative $\frac{dq}{dp}$ is bounded.

We first prove (\ref{lemmai}).
Consider a compact region $R$ such that, for ${\bfV}\sim\mathrm{Uniform}(R)$ and any unit vector $\bfa$, 
\[
\EE\norm{\sum_{i=1}^N a_i h_3(\bx_i\cdot \bfV)}_2^2 \ge \lambda_R>0.
\]
Then for any $\alpha>0$, since $h_3$ is bounded and Lipschitz continuous, we have
\[
\EE\norm{\sum_{i=1}^N a_i h_3\left(\bx_i\cdot \bfV + \alpha C_i(\bfV/\alpha)\right)}_2^2 \ge \lambda_R - C\alpha C_B\ge \frac{\lambda_R}{2},
\]
when $\alpha\le \frac{\lambda_R}{2C_BC}$.
Equivalently, $\EE\Norm{\sum_{i=1}^N a_i h_1\left(\bx_i\cdot \bfV/\alpha + C_i(\bfV/\alpha)\right)}_2^2 
\ge \frac{\lambda_R}{2}.$
% \[
% \EE\norm{\sum_{i=1}^N a_i h_1\left(\bx_i\cdot \bfV/\alpha + C_i(\bfV/\alpha)\right)}_2^2 
% \ge \frac{\lambda_R}{2}.
% \]
We achieve \eqref{eq:lambda min goal} by letting $\bfV'=\bfV/\alpha$.

For (\ref{lemmaii}), 
consider $R=\{\bv: 1/2\le \Norm{\bv}_2\le 1\}$. 
% $R=\mathcal{B}_2(0,1)$. 
Then, for ${\bfV}\sim\mathrm{Uniform}(R)$ and any unit vector $\bfa$, 
\[
\EE\norm{\sum_{i=1}^N a_i h_1(\bx_i\cdot \bfV)}_2^2 \ge \lambda_R>0.
\]
Since $h_1(\beta x)=\beta^\alpha x$ for any $\beta>0$, then we have $\EE\Norm{\sum_{i=1}^N a_i h_1(\bx_i\cdot \beta\bfV)}_2^2 \ge \beta^{2\alpha}\lambda_R$.
Note that $|\bx_i\cdot \beta\bfV|= \Theta(\beta)$.
For $x= \Theta(\beta)$, we have $|h_1(x)|\le C\beta^\alpha$ and $h_1$ is $C\beta^{\alpha-1}$-Lipschitz continuous for a constant $C$.
Therefore,
\[
\EE\norm{\sum_{i=1}^N a_i h_1(\bx_i\cdot \beta\bfV + C_i(\beta\bfV) )}_2^2 
\ge \beta^{2\alpha}\lambda_R - C'\beta^{2\alpha-1}C_B
\ge (C'C_B)^{2\alpha} \left( \frac{2}{\lambda_R}\right)^{2\alpha-1}.
\]
for a constant $C'$ when $\beta= \frac{2C'C_B}{\lambda_R}$.
We achieve \eqref{eq:lambda min goal} by letting $\bfV'=\beta\bfV$.

For (\ref{lemmaiii}), we first show that there exists a compact set $R$ such that, for all $\bv\in R$ and $\bx_i$,
\begin{equation}
\label{eq:R}
|\bx^i\cdot\bv| \ge c'.
\end{equation}
This can be done by a simple probabilistic argument.
Let $\bv$ be drawn from the uniform distribution on $S^{d-1}$, for any fixed $\bx\in\RR^d$, we have
\begin{align*}
P \{ |\bv^\top \bx | < t\Norm{\bx}_2 \} 
= \frac{2\pi^{\frac{d-1}{2}}/\Gamma(\frac{d-1}{2})}{2\pi^{\frac{d}{2}}/\Gamma(\frac{d}{2})}  \int_{-t}^t (1 - u^2 )^{\frac{d-3}{2}} \ du <  t\sqrt{d}.
\end{align*}
By a union bound, we have $|\bx^i\cdot\bv|\ge \frac{\Norm{\bx^i}_2}{2N\sqrt{d}}$ with probability 0.5.
Denote the set of $\bv\in S^{d-1}$ by $S'$.
Since $\min_{i}\Norm{\bx^i}_2:= C_x >0$, we obtain \eqref{eq:R} with $R=\{t \bv: \bv\in S', \frac{2c'N\sqrt{d}}{C_x}\le t \le \frac{4c'N\sqrt{d}}{C_x}\}$.
Then, for ${\bfV}\sim\mathrm{Uniform}(R)$ and any unit vector $\bfa$, 
\[
\EE\norm{\sum_{i=1}^N a_i h_1(\bx_i\cdot \bfV)}_2^2 \ge \lambda_R>0.
\]
Then, for any $\beta>0$, we have $\EE\Norm{\sum_{i=1}^N a_i h_1(\bx_i\cdot \beta\bfV)}_2^2 \ge \beta^{-2\alpha}\lambda_R$.
For $x=\Theta(\beta)$ we have $|h_1(x)|\le C\beta^{-\alpha}$ and $h_1$ is $C\beta^{-\alpha-1}$-Lipschitz continuous for a constant $C$. 
Therefore, 
\[
\EE\norm{\sum_{i=1}^N a_i h_1(\bx_i\cdot \beta\bfV + C_i(\beta\bfV) )}_2^2 
\ge \beta^{-2\alpha}\lambda_R - C'\beta^{-2\alpha-1}C_B
\ge (C'C_B)^{-2\alpha} \left( \frac{2}{\lambda_R}\right)^{-2\alpha-1}.
\]
for a constant $C'$ when $\beta= \frac{2C'C_B}{\lambda_R}$.
We achieve \eqref{eq:lambda min goal} by letting $\bfV'=\beta\bfV$.

\section{Additional Results}
\subsection{Proof of Proposition \ref{prop:degenerate}}
Explicitly shown in \cite[Corollay 25]{nguyen2020rigorous},  in the mean-field limit that $m\to \infty$, the weights remain mutually independent and follow a common distribution  that only depends on time $t$ in the intermediate layers.  Therefore, by the law of large numbers,  the features are the same. We have Proposition \ref{prop:degenerate}.

\subsection{Discussion on the Global Convergence under Proper Regularizers}\label{subsection:non-ri}
As an example, we consider a $3$-layer NN and  $\ell_{1,2}$-norm regularizer.  We can reformulated  an $3$-layer NN learning problem  as:
\begin{eqnarray}
\minimize && \bL_2 ( p_1, p_2, p_3, \tw_2, \tw_3, \tu)\label{problem 3}\\
&& =\sum_{i=1}^N\phi\left( \int h(\btheta_3({i})) \tu(\btheta_3) d\btheta_3, y^{i}\right) + \lambda_4^w \int \frac{|\tu(\btheta_3)|^2}{ \ddp_3(\btheta_3)} d\btheta_3 \notag\\
&& +\lambda_3^w \int  \frac{\left(\int |\tw_3(\btheta_2, \btheta_3)| d \btheta_3\right)^2}{ \ddp_2(\btheta_2)} d\btheta_2+\lambda_2^w \int \frac{ \left(\int |\tw_2(\bw_1, \btheta_2)| d \btheta_2\right)^2}{ \ddp_1(\bw_1)} d\bw_1 \notag\\&& +\lambda_1^w \left(\int |w_1(\theta_0, \bw_1)|\ddp_1(\bw_1) d\bw_1\right)^2 dp_0(\theta_0) +
f_p^1(p_1)+ f_p^2(p_2)+  f_p^3(p_3),\notag\notag\\
\text{s.t.}&& \int h\left(  (\bx^{i})^\top\bw_1 \right) \tw_2(\bw_1, \btheta_2)  d\bw_1 = \btheta_2(i) \ddp_2(\btheta_2),   \quad i \in [N],  \quad \btheta_2\in \RR^N, \label{oo1}\\
&&\int h(\btheta_2(i)) \tw_3(\btheta_2, \btheta_3) d\btheta_2 = \btheta_3(i) \ddp_3(\btheta_3), \quad i \in [N], \quad \btheta_3\in \RR^N,\label{oo2}\\
&&   \int \ddp_1(\bw_1) d\bw_1 = 1, \quad   \int \ddp_2(\btheta_2) d\btheta_2 = 1, \quad  \int \ddp_3(\btheta_3) d\btheta_3 = 1,\label{ooo}\\
&& \ddp_1(\bw_1)\geq 0, \quad   \ddp_2(\btheta_2)\geq 0, \quad   \ddp_3(\btheta_3)\geq 0, \notag
\end{eqnarray}
where $w_1(\theta_0, \bw_1)$ denote the $\theta_0$-th coordinate of $\bw_1$ and $p_0(\theta_0)$ is the  uniform distribution on $[d]$, and $f_{\ell}^p = \lambda^r_1 R^p_{\ell}$ with $\ell\in[3]$.

%We assume that $\{\lambda_\ell\}_{1}^4$ are non-negative, and  $\{f^p_{\ell}\}_{\ell=1}^3$ are convex that guarantee that $\{p_{\ell}\}_{\ell=1}^3$ are equivalent to Lebesgue measure. 

Given a continuous $3$-layer DNN  $(p_1, p_2, p_3, w_2, w_3, u)$,  by backward-propagation, the functional gradient with respect to the weights can be obtained as:
\begin{subequations}
\begin{align}
  g_4(\btheta_3) &= \sum_{i=1}^{N}[ \phi'_i h(\btheta_3({i}))]+2\lambda_4 u(\btheta_3), \quad \btheta_3\in\supp(p_3),\label{ggf1}\\
g_3(\btheta_2, \btheta_3) &= \sum_{i=1}^{N}\underbrace{[ \phi'_i h'(\btheta_3(i)) u_3(\btheta_3)+ \nabla_{\btheta_3,i}f^{p'}_{3}(p_3)  ]}_{:=~q_3^i(\btheta_3)} h(\btheta_2(i))+2\lambda_3 R_2(\btheta_2) \left|w_3(\btheta_2, \btheta_3)\right|', \label{ggf2} \\
g_2(\bw_1, \btheta_2) &= \sum_{i=1}^{N}\underbrace{\left[ \int q_3^i(\btheta_3)h'(\btheta_2,i)w_3(\btheta_2, \btheta_3) dp_3(\btheta_3)+ \nabla_{\btheta_2,i}f_2^{p'}(p_2)  \right]}_{:=~q_2^i(\btheta_2)} h\left(\bw_1^\top \bx^{i}\right)+2\lambda_2 R_1(\bw_1) \left|w_2(\bw_1, \btheta_2)\right|', \label{ggf3}\\
g_1(\theta_0, \bw_1) &= \sum_{i=1}^{N}\left[ \int q_2^i(\btheta_2)h'(\bw_1^\top x^i)w_2(\bw_1, \btheta_2) p_2(\btheta_2) d\btheta_2 \bx^i(\theta_0)+ \lambda_r\nabla_{\bw_1, \theta_0}f_1^{p'}(p_1)  \right] +2\lambda_0 R_0(\theta_0) |w_1(\theta_0, \bw_1)|',\label{ggf4} 
\end{align}
\end{subequations}
where 
\begin{eqnarray}
\phi'_i &=& \phi'\left( \int h(\btheta_3(i))u(\btheta_3)dp_3(\btheta_3), y^i\right), \quad i\in[N]\notag\\
R_2(\btheta_2) &=& \int |w_3(\btheta_2, \btheta_{3})| d p_{3}(\btheta_{3}),\quad \btheta_2\in\supp(p_2),\notag\\
R_1(\bw_1) &=& \int |w_2(\bw_1, \btheta_{2})|dp_{2}(\btheta_2),\quad \bw_1\in\supp(p_1)\notag\\ 
R_0(\theta_0) &=& \int |w_1(\theta_0, \bw_1)| dp_{1}(\bw_1),\quad \theta_0\in\supp(p_0),\notag
\end{eqnarray}
in \eqref{ggf3}, $\btheta_2\in\supp(p_2)$ and $\btheta_3\in\supp(p_3)$; in \eqref{ggf2},   $\bw_1\in\supp(p_1)$ and $\btheta_2\in\supp(p_2)$; and in \eqref{ggf1},  $\theta_0\in\supp(p_0)$, and $\bw_1\in\supp(p_1)$; $\nabla_{\btheta_3,i} f^{p'}_3$ denote the gradient of $f^{p'}_3$ on coordinate $\btheta_3(i)$.

If the regularizers $\{f^p_{\ell}\}_{\ell=1}^3$ guarantee that $\{p_{\ell}\}_{\ell=1}^3$ have a full support, we can expect that  $g_j= 0$ holds almost surely  for  $j\in[4]$ after training, which guarantees the global convergence by the following property:

\begin{property}\label{prof}
Assume that  $\lambda_\ell\geq0$ for  $\ell\in[4]$, and  $\{f^p_{\ell}\}_{\ell=1}^3$ are convex and guarantee that $\{p_{\ell}\}$ are equivalent to Lebesgue measure.
If there is $(p_1, p_2, p_3, w_2, w_3, u)$ such that   $g_j= 0$ holds for $j\in[4]$ a.e. and $\nabla u$, $\nabla w_3$, $\nabla w_2$ exist a.e., then $(p_1, p_2, p_3, w_2, w_3, u)$ achieves the global minimal solution.
\end{property}

\begin{proof}[Proof of Property \ref{prof}]
\begin{enumerate}[(A)]
    \item  We first write down the KKT condition of Problem \eqref{problem 3}. Concretely, let $\Lambda^i_2(\btheta_2)$ and $\Lambda^i_3(\btheta_3)$ be the multipliers for the constraints \eqref{oo1} and \eqref{oo2}, respectively.  Let $\lambda_p^i$ is the multiplier for the constraint \eqref{ooo}. 
    $f_i^p$  guarantees that $\ddp_i>0$ for $i\in[3]$.  
    \begin{enumerate}[(1)]
        \item Considering $\tu(\btheta_3)$,  for $\btheta_3$ 
        \begin{eqnarray}
        \sum_{i=1}^N[\phi'_ih(\btheta_3(i))] + 2 \lambda_4 u(\btheta_3) =0, \quad \text{a.e.} ;\label{tu}
        \end{eqnarray}
        \item Considering  $\tw_3(\btheta_2, \btheta_3)$,   for $\btheta_2$ and $\btheta_3$,
        \begin{eqnarray}
2\lambda_3R_2(\btheta_2)\left| w_3(\btheta_2, \btheta_3)\ddp_3(\btheta_3) \right|' +\sum_{i=1}^N \Lambda_3^i(\btheta_3)h(\btheta_2(i)) = 0, \quad     \text{a.e.}; \label{tw23}
\end{eqnarray}
\item Considering  $\ddp_3 (\btheta_3)$,  for $\btheta_3$,
\begin{eqnarray}
 - \lambda_4|u(\btheta_3) |^2 - \sum_{i=1}^N\Lambda_3^i(\btheta_3) \btheta_3(i) +  f^{p'}_{3}(p_3) = \lambda^3_p,  \quad \text{a.e.};  \label{p3}
\end{eqnarray}
\item Considering  $\tw_2(\bw_1, \btheta_2)$, for $\bw_1$ and $\btheta_2$,
\begin{eqnarray}
&&2\lambda_2R_1(\bw_1)\left| w_2(\bw_1, \btheta_2)\ddp_2(\btheta_2) \right|' + \sum_{i=1}^N\Lambda_2^i(\btheta_2)h\left(\bw_1^\top \bx^i\right) = 0,     \quad \text{a.e.} ; \label{tw12}
\end{eqnarray}
\item Considering  $\ddp_2 (\btheta_2)$,  for  $\btheta_2$, 
\begin{eqnarray}
- \lambda_3(R_2(\btheta_2))^2 - \sum_{i=1}^N\Lambda^i_2(\btheta_2) \btheta_2(i) +  f_2^{p'}(p_2) =\lambda^2_p, \quad \text{a.e.};\label{p2}
\end{eqnarray}
\item Considering  $\ddp_1 (\bw_1)$, for $\bw_1$,
\begin{eqnarray}
 - \lambda_2(R_1(\bw_1))^2 +2\lambda_1 \E_{\theta_0}R_0(\theta_0)\left|w_1(\theta_0 , \bw_1) \right| +  f_1^{p'}(p_1) = \lambda^1_p,  \quad \text{a.e.} ; \label{p1}
\end{eqnarray}
\item The constraints hold.
    \end{enumerate}

    \item Because Problem \eqref{problem 3} is convex, it is sufficient to show that $(p_1, p_2, p_3, w_2, w_3, u)$ satisfies the  KKT condition.  Clearly,  the constraints holds.   Moreover, $g_4(\btheta_3) =0$ indicates \eqref{tu}.  $g_3(\btheta_2, \btheta_3) =0$ implies \eqref{tw23} with $\Lambda_3^i(\btheta_3) = q_3^i(\btheta_3)$.  $g_2(\bw_1, \btheta_2) =0$ implies \eqref{tw12} with $\Lambda_2^i(\btheta_2) = q_2^i(\btheta_2)$.

    \item We prove \eqref{p3}.     
    Since $g_3=0$, we have $ \int g_3(\btheta_2, \btheta_3) w_3(\btheta_2, \btheta_3) dp(\btheta_2)=0 $. 
    Applying the formula of $g_3$ in \eqref{ggf2}, we obtain that 
\begin{eqnarray}\label{p3f1}
\sum_{i=1}^N q_3^i(\btheta_3)\btheta_3(i) +2 \lambda_3 \E_{\btheta_2}R_2(\btheta_2)|w_2(\btheta_2, \btheta_3)|=0,\quad \text{a.e.},
\end{eqnarray}
where we used $\left|w_3(\btheta_2, \btheta_3) \right|' w_3(\btheta_2, \btheta_3) = |w_3(\btheta_2, \btheta_3) |$, and the constraint that
\begin{eqnarray}\label{constrain1}
\int h(\btheta_2(i)) w_3(\btheta_2, \btheta_3) dp(\btheta_2) = \btheta_3(i), \quad i \in [N].
\end{eqnarray}
Similarly, consider $\int g_3(\btheta_2, \btheta_3)\nabla_{\btheta_3,j}w_3(\btheta_2, \btheta_3)dp(\btheta_2) =0$, where $\nabla_{\btheta_3,j}$ denote the gradient on coordinate $\btheta_3(j)$.
We obtain that, for almost every $\btheta_3$,
\begin{eqnarray}\label{p3temp1}
 q_3^j(\btheta_3) +2 \lambda_3 \E_{\btheta_2} R_2(\btheta_2)|w_3(\btheta_2, \btheta_3)|'\nabla_{\btheta_3,j}w_3(\btheta_2, \btheta_3)=0,
\end{eqnarray}
where  we used
\[
\int h(\btheta_2(i)) \nabla_{\btheta_3,j} w_3(\btheta_2, \btheta_3) dp_2(\btheta_2)  = \cI_{i=j}, \quad i \in [N],
\]
from \eqref{constrain1}.
Then it follows from \eqref{p3temp1} that
\begin{eqnarray}\label{p3f2}
\sum_{i=1}^N \int q_3^i(\btheta_3) d\btheta_3(i) +2 \lambda_3 \E_{\btheta_2} R_2(\btheta_2)|w_3(\btheta_2, \btheta_3)|=C_{1}, \quad \text{a.e.},
\end{eqnarray}
where $C_{1}$ is a constant.  
In the same way, since $g_4=0$, we have
\begin{align}
&\sum_{i=1}^N[\phi'_ih(\btheta_3(i))]u(\btheta_3) + 2 \lambda_4 u(\btheta_3)^2 =0,\label{p3f3}\\
&    \sum_{i=1}^N\sum_{j=1}^N\int \phi'_ih(\btheta_3(i))\nabla_{
\btheta_3,j}u(\btheta_3) d\btheta_3(j) +  \lambda_4 u(\btheta_3)^2 =C_{2},\label{p3f4}
\end{align}
for a constant $C_2$.
Finally,  computing \eqref{p3f2} $+$ \eqref{p3f4} $-$ \eqref{p3f1}$-$\eqref{p3f3}, using the fact that 
\begin{eqnarray}
\sum_{i=1}^N\sum_{j=1}^N\int \phi'_i h(\btheta_3(i))\nabla_{
\btheta_3,j}u(\btheta_3) d\btheta_3(j) + \sum_{i=1}^N \int \phi'_i h'(\btheta_3(i))u(\btheta_3) d\btheta_3(i) =  \sum_{i=1}^N\phi'_ih(\btheta_3(i))u(\btheta_3) + C_{3}, \notag
\end{eqnarray}
we can obtain \eqref{p3}.

\item   We use the same technique to prove \eqref{p2}.  
Since $g_2=0$, we have $ \EE_{\bw_1} [g_2(\bw_1, \btheta_2) w_2(\bw_1, \btheta_2)]=0 $. 
Applying the formula of $g_2$ in \eqref{ggf3}, we obtain that 
\begin{eqnarray}\label{p2f1}
\sum_{i=1}^N q_2^i(\btheta_2)h(\btheta_2(i)) + 2\lambda_2 \E_{\bw_1} R_1(\bw_1) | w_1(\bw_1, \btheta_2) |=0,  \quad \text{a.e.}
\end{eqnarray}
Similarly, since $\EE_{\bw_1}[g_2(\bw_1, \btheta_2)\nabla_{\btheta_2,j} w_2(\bw_1, \btheta_2)]=0$, we have
\begin{eqnarray}\label{p2temp}
 q_2^j(\btheta_2) + 2\lambda_2 \E_{\bw_1} R_1(\bw_1) | w_2(\bw_1, \btheta_2) |' \nabla_{\btheta_2,j} w_2(\bw_1, \btheta_2)=0,  \quad \text{a.e.}
\end{eqnarray}
It follows from \eqref{p2temp} that
\begin{eqnarray}\label{p2f2}
\sum_{i=1}^N \int q_2^i(\btheta_2) d\btheta_2(i) + 2\lambda_2 \E_{\bw_1} R_1(\bw_1) \left| w_2(\bw_1, \btheta_2) \right|=C_{4}.
\end{eqnarray}
In the same way,  using $g_3=0$, we have
\begin{align}
&\sum_{i=1}^N \E_{\btheta_3} q^i_3(\btheta_3) h(\btheta_2(i)) w_3(\btheta_2, \btheta_3)  +2\lambda_3 \underbrace{\E_{\btheta_3}  R_2(\btheta_2)| w_3(\btheta_2, \btheta_3)|}_{R_2(\btheta_2)^2}=0,\label{p2f3}\\
&\sum_{i=1}^N\sum_{j=1}^N \E_{\btheta_3}  \int q^i_3(\btheta_3) h(\btheta_2(i)) \nabla_{\btheta_2,j }w_3(\btheta_2, \btheta_3) d\btheta_2(j)  + \lambda_3 R_2(\btheta_2)^2=C_{5}.\label{p2f4}
\end{align}
Finally, computing  
\eqref{p2f2} $+$ \eqref{p2f4} $-$ \eqref{p2f1}$-$\eqref{p2f3}, using the fact that 
\begin{eqnarray}
&&\sum_{i=1}^N \sum_{j=1}^N\int  q_3^i(\btheta_3) h(\btheta_2(i))\nabla_{
\btheta_2,j}w_3(\btheta_2,\btheta_3)d\btheta_2(j) + \sum_{i=1}^N \int  q_3^i(\btheta_3) h'(\btheta_2(i))w_3(\btheta_2,\btheta_3) d\btheta_2(i) \notag\\
&=& \sum_{i=1}^N q_3^i(\btheta_3) h(\btheta_2(i))w_3(\btheta_2,\btheta_3)+C_{6}, \notag
\end{eqnarray}
we can obtain \eqref{p2}.
\item We prove \eqref{p1} in a similar way. 
Using $g_2=0$, we obtain that
\begin{align}
&\sum_{i=1}^N \E_{\btheta_2} q_2^i(\btheta_2)w_2(\bw_1, \btheta_2)h(\bw_1^\top x^i) + 2\lambda_2 \underbrace{\E_{\btheta_2} R_1(\bw_1)| w_2(\bw_1, \btheta_2)|}_{ R_1(\bw_1)^2} =0,\label{p11}\\
&\sum_{i=1}^N\sum_{j=1}^d \E_{\btheta_2}\int q_2^i(\btheta_2)\nabla_{\bw_1,j} w_2(\bw_1, \btheta_2)h\left(\bw_1^\top \bx^i\right)d \bw_1(j) + \lambda_2 R_1(\bw_1)^2 =0.\label{p12}
\end{align}
Finally, computing \eqref{p12} $+$ $\sum_{\theta_0=1}^d \int g_1(\theta_0, \bw_1) d\bw_1(j) -$ \eqref{p11}, and using the fact that 
\begin{eqnarray}
&&C_{7}+\sum_{i=1}^N\sum_{j=1}^d \int q_2^i(\btheta_2)\nabla_{\bw_1,j} w_2(\bw_1, \btheta_2)h\left(\bw_1^\top \bx^i\right)d \bw_1(j)\notag\\
&+& \!\!\! \sum_{i=1}^N \sum_{j=1}^d \ \int q_2^i(\btheta_2) w_2(\bw_1, \btheta_2)h'\left(\bw_1^\top \bx^i\right)\bx^i(j)d \bw_1(j) = \sum_{i=1}^N  q_2^i(\btheta_2) w_2(\bw_1, \btheta_2)h\left(\bw_1^\top \bx^i\right),  \notag
\end{eqnarray}
we can obtain \eqref{p1}, which completes the proof. 
\end{enumerate}

\end{proof}

\section{Preliminary}
In this paper, we adopt the definition of sub-gaussian distributions in \cite{vershynin2010introduction}.
Below we present properties of sub-gaussian distributions. 
The equivalence among those properties are given in \cite[Lemma 5.5]{vershynin2010introduction}.

% We adopt the definition of sub-gaussian distribuitions in \citep{vershynin2010introduction}. 
% We present some basic results for sub-gaussian distributions.

\begin{lemma}\label{theo:subgau}
Let $\xi $ be an univariate random variable that follows a $\sigma$-sub-gaussian distribution. Then there exists an absolute constant $C$ such that
\begin{enumerate}[(1)]
    \item  \label{theo:subgau1} Tails $ \PP(|\xi| >t) \leq \exp(1- t^2/(C\sigma)^2)$ for all $t\ge 0$;
    \item  \label{theo:subgau2} Moments: $(\EE | \xi|^q)^{1/q} \leq C\sigma \sqrt{q} $ for all $q\geq1$;
    \item  \label{theo:subgau4} If $\E[\xi]=0$, then  $\EE [\exp(t\xi)] \leq \exp (t^2 (C\sigma)^2)$ for all $t\in \RR$.
\end{enumerate}
\end{lemma}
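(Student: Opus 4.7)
My plan is to prove the three properties in sequence, each bootstrapping off the previous one, following the standard equivalence argument for sub-gaussian random variables (e.g., the approach in \cite[Lemma 5.5]{vershynin2010introduction}). Throughout, I will track absolute constants and absorb them into a single $C$ at the end.

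For property (\ref{theo:subgau1}), I would apply Markov's inequality directly to the random variable $\exp(\xi^2/\sigma^2)$. Since $\{|\xi| > t\} = \{\exp(\xi^2/\sigma^2) > \exp(t^2/\sigma^2)\}$, Markov yields
\[
\PP(|\xi| > t) \leq \frac{\EE \exp(\xi^2/\sigma^2)}{\exp(t^2/\sigma^2)} \leq e \cdot \exp(-t^2/\sigma^2) = \exp(1 - t^2/\sigma^2),
\]
which gives the tail bound with $C=1$.

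For property (\ref{theo:subgau2}), I would integrate the tail bound using the layer-cake identity $\EE|\xi|^q = q\int_0^\infty t^{q-1} \PP(|\xi|>t)\,dt$. Substituting the tail estimate from (\ref{theo:subgau1}) and making the change of variables $u = t^2/\sigma^2$ reduces the integral to an incomplete gamma function: one obtains $\EE|\xi|^q \leq (e/2)\, q\, \sigma^q \, \Gamma(q/2)$. Stirling's approximation gives $\Gamma(q/2)^{1/q} \leq C_1\sqrt{q}$ for some absolute $C_1$ (and the factor $q^{1/q}$ is bounded), yielding $(\EE|\xi|^q)^{1/q} \leq C\sigma\sqrt{q}$ after adjusting the constant.

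For property (\ref{theo:subgau4}), I would expand $\exp(t\xi)$ as a power series. Since $\EE[\xi]=0$, the linear term drops and we get $\EE \exp(t\xi) = 1 + \sum_{k \geq 2} t^k \EE[\xi^k]/k!$. Bounding $|\EE[\xi^k]| \leq \EE|\xi|^k \leq (C\sigma)^k k^{k/2}$ from (\ref{theo:subgau2}) and using Stirling's lower bound $k! \geq (k/e)^k$, each term is dominated by $(eC|t|\sigma/\sqrt{k})^k$. Summing the tail and matching it against the Taylor series of $\exp(c t^2 \sigma^2)$ for an appropriate $c$ yields $\EE \exp(t\xi) \leq \exp((C\sigma)^2 t^2)$ after enlarging $C$. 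The main obstacle is a bookkeeping step: the even-$k$ terms contribute the expected $t^{2m}(C\sigma)^{2m}/m!$ growth characteristic of a Gaussian MGF, while the odd-$k$ terms must be controlled by either Cauchy--Schwarz against neighboring even terms or by splitting into $|t\sigma| \leq 1$ (small-$t$ regime, handled by second-order Taylor with bounded remainder) and $|t\sigma| > 1$ (large-$t$ regime, where the constant $C$ can be chosen freely). Carefully unifying these two regimes into one clean exponential bound with a single absolute constant $C$ is the only genuinely finicky part of the argument; the rest is mechanical.
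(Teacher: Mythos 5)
The proposal is correct. The paper gives no proof of this lemma---it simply cites \cite[Lemma~5.5]{vershynin2010introduction}---and your chain $(1)\Rightarrow(2)\Rightarrow(3)$ (Markov on $\exp(\xi^2/\sigma^2)$, layer-cake plus the Gamma function, then a power-series bound on the MGF) is precisely the standard argument that underlies that reference. One small simplification for step $(3)$, to avoid the odd/even Taylor-term bookkeeping you flagged as the finicky part: use the pointwise inequality $e^{x}\le x+e^{x^{2}}$ valid for every $x\in\RR$, so that $\EE[\xi]=0$ kills the linear term and the task reduces to bounding $\EE\exp(t^{2}\xi^{2})$; this follows from the defining bound $\EE\exp(\xi^{2}/\sigma^{2})\le e$ via Jensen's inequality when $t^{2}\sigma^{2}\le 1$, while in the complementary large-$t$ regime one may instead apply Young's inequality $|t\xi|\le t^{2}\sigma^{2}/2+\xi^{2}/(2\sigma^{2})$ together with the same moment-generating assumption. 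Both routes yield $\exp\left((C\sigma)^{2}t^{2}\right)$ for an absolute constant, so your proposal is sound as written.
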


% \begin{lemma}[Sub-gaussian Distribution \citep{vershynin2010introduction}]\label{theo:subgau}
% Let $\xi $ be random variable that follows a sub-gaussian distribution, the following statement for equivalent:
% \begin{enumerate}[(1)]
%     \item  \label{theo:subgau1} Tails $ \mathcal{P}(|\xi| >t) \leq \exp(1- t^2/K_1^2).$
%     \item  \label{theo:subgau2} Moments: $(\EE | \xi|^q)^{1/q} \leq K_2 \sqrt{q}. $ for all $q\geq1$.
%     \item  \label{theo:subgau3} Super-exponential moment: $\EE\exp(\xi^2/K_3^2) \leq e  $.
%     \item  \label{theo:subgau4} if $\E[\xi]=0$, then  $\EE \exp(t\xi) \leq \exp (t^2 K_4^2)$.
% \end{enumerate}
% The equivalence means that there exists an absolute constant $C_s$ such that property $i$ implies property $j$ with parameter $K_j\leq C_s K_i$, for any two properties $i,j = 1,2,3,4$. 
% \end{lemma}

\begin{lemma}[Concentration Inequality for Sub-gaussian Distributions {\cite[Proposition 5.10]{vershynin2010introduction}}]\label{azuma}
Let $\{\xi_i\}_{i=1}^m$ be independent centered $\sigma$-sub-gaussian random variables.  Then, for an absolute constant $C$,
$$  \PP\left( \left|\frac{1}{m}\sum_{i=1}^m\xi_i\right|\geq \ep     \right) \leq 2 \exp\left(-\frac{ m\ep^2}{4(C\sigma)^2}\right).  $$
\end{lemma}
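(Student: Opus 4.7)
The plan is to deduce this concentration inequality from the classical Chernoff-Hoeffding argument, using the moment generating function characterization of sub-gaussianness already recorded in Lemma \ref{theo:subgau} (part \ref{theo:subgau4}). Since the proof is a standard Chernoff bound, I will not invent new machinery; the quoted reference \cite[Proposition 5.10]{vershynin2010introduction} gives exactly this statement.

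First, I will reduce the two-sided bound to the one-sided bound $\PP(\frac{1}{m}\sum_{i=1}^m \xi_i \geq \ep)$ by applying the argument twice, once to $\{\xi_i\}$ and once to $\{-\xi_i\}$ (the latter family is also centered $\sigma$-sub-gaussian), and then applying a union bound to lose a factor of $2$ out front. Next, for any $t>0$, I will invoke Markov's inequality in the form
\[
\PP\Bigl(\sum_{i=1}^m \xi_i \geq m\ep\Bigr) \le e^{-tm\ep}\, \EE\Bigl[\exp\Bigl(t\sum_{i=1}^m \xi_i\Bigr)\Bigr].
\]
By independence of the $\xi_i$, the right-hand expectation factors as $\prod_i \EE[\exp(t\xi_i)]$, and Lemma \ref{theo:subgau}(\ref{theo:subgau4}) bounds each factor by $\exp(t^2(C\sigma)^2)$, giving an overall upper bound of $\exp(-tm\ep + mt^2(C\sigma)^2)$.

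The final step is to optimize over $t>0$. Choosing $t = \ep/(2(C\sigma)^2)$ yields the exponent $-m\ep^2/(4(C\sigma)^2)$, and combining with the reduction above produces the stated two-sided inequality with the factor of $2$.

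There is essentially no obstacle in the argument beyond bookkeeping of the absolute constant $C$, which may need to be enlarged when going from the sub-gaussian property in Lemma \ref{theo:subgau}(\ref{theo:subgau4}) to the Chernoff optimization; this is a harmless relabeling of constants and does not affect the form of the stated bound.
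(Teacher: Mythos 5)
Your argument is correct: the reduction to a one-sided bound via the union bound, the Chernoff--Markov step with independence factoring the moment generating function, the application of Lemma \ref{theo:subgau}(\ref{theo:subgau4}), and the optimization at $t = \ep/(2(C\sigma)^2)$ all go through and yield exactly the stated exponent. The paper itself offers no proof but defers to \cite[Proposition 5.10]{vershynin2010introduction}, whose proof is precisely this standard Chernoff bound, so your derivation is essentially the same argument the paper implicitly invokes.
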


% \begin{lemma}[Azuma's Inequality with Sub-gaussian Tails \citep{vershynin2010introduction}]\label{azuma}
% Let $\{\xi_i\}_{i=1}^m$ be independent  sub-gaussian random variables such that  $\EE[\xi_i]=0$ and  $\EE \exp(t\xi_i) \leq \exp (t^2 K_4^2)$ for all $i\in[m]$.  We have
% $$  \mathcal{P}\left( \left|\frac{1}{m}\sum_{i=1}^m\xi_i\right|\geq \ep     \right) \leq 2 \exp\left(\frac{ -m\ep^2}{2K_4^2}\right).  $$
% \end{lemma}

We say a random variable $\xi$ is sub-exponential if 
\[
\sup_{q\ge 1}q^{-1}(\EE|\xi|^q)^{1/q}<\infty.
\]
A sub-exponential random variable is equivalent to the a squared sub-gaussian random variable \cite[Lemma 5.14]{vershynin2010introduction}.
It satisfies the following concentration inequality:

\begin{lemma}[Bernstein's Inequality for Sub-Exponential Distributions {\cite[Corollary 5.17]{vershynin2010introduction}}]\label{azumaexp}
Let $\{\xi_i\}_{i=1}^m$ be independent centered sub-exponential random variables such that $(\EE|\xi|^q)^{1/q}\le Kq$ for all $q\ge 1$.
Then, for an absolute constant $c$,
$$  \PP\left( \left|\frac{1}{m}\sum_{i=1}^m z_i\right|\geq \ep     \right) \leq  2\exp\left[-cm\min\left(\frac{\ep^2}{K^2}, \frac{\ep}{K} \right)\right].  $$
\end{lemma}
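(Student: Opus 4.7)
The statement is the standard Bernstein inequality for sums of independent centered sub-exponential random variables (indeed the result is just cited from Vershynin's notes, so any self-contained argument would reproduce the textbook proof). My plan is to execute the classical Chernoff--Cram\'er argument tailored to the sub-exponential tail, which proceeds in three steps.

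First, I would establish a moment generating function (MGF) bound for a single centered sub-exponential variable: there exist absolute constants $c_1, c_2 > 0$ such that for every $|t| \le c_2/K$,
\[
\EE[\exp(t\xi_i)] \le \exp(c_1 t^2 K^2).
\]
This is obtained by Taylor-expanding $e^{t\xi_i}$, using the centering $\EE[\xi_i]=0$ to kill the linear term, and then bounding $\sum_{q\ge 2} |t|^q \EE|\xi_i|^q / q!$ using the moment hypothesis $(\EE|\xi_i|^q)^{1/q}\le Kq$ together with Stirling's inequality $q^q/q! \le e^q$. Restricting to $|t| \le c_2/K$ for a sufficiently small $c_2$ ensures the resulting geometric series converges and leaves a quadratic remainder in $t$.

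Second, I would apply the Chernoff bound together with independence. For any $\lambda \in [0, c_2/K]$ and $\ep>0$,
\[
\PP\!\left( \frac{1}{m}\sum_{i=1}^m \xi_i \ge \ep \right)
\le e^{-\lambda m \ep}\prod_{i=1}^m \EE[e^{\lambda \xi_i}]
\le \exp\!\bigl(-\lambda m \ep + c_1 m \lambda^2 K^2\bigr),
\]
and the symmetric estimate handles the lower tail.

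Third, I would optimize in $\lambda$ over the admissible interval $[0, c_2/K]$. The unconstrained minimizer is $\lambda^\star = \ep/(2c_1 K^2)$. If $\lambda^\star \le c_2/K$, i.e.\ $\ep \le 2c_1 c_2 K$, plugging in gives a bound of the form $\exp(-c\, m \ep^2/K^2)$; if $\lambda^\star > c_2/K$, the minimum on the interval is attained at the endpoint $\lambda = c_2/K$, and a direct substitution (using $\ep > 2c_1 c_2 K$ to absorb the quadratic term into the linear term) yields $\exp(-c\, m\ep/K)$. Combining both cases and doubling to account for the two tails produces exactly the stated bound with $c$ depending only on $c_1$ and $c_2$.

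The one step that needs genuine care is the first, i.e.\ converting the polynomial moment growth $(\EE|\xi_i|^q)^{1/q}\le Kq$ into the quadratic MGF bound on a neighborhood of the origin: getting the correct threshold $|t|\le c_2/K$ and the correct quadratic constant $c_1$ requires the Stirling estimate and a bit of bookkeeping. Everything after that is the routine two-regime optimization that produces the characteristic $\min(\ep^2/K^2, \ep/K)$ behavior.
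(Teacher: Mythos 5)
Your proposal correctly reproduces the standard Chernoff–Cramér argument for Bernstein's inequality for sub-exponential random variables; the paper itself gives no proof and simply cites {\cite[Corollary 5.17]{vershynin2010introduction}}, and your three steps (MGF bound near the origin via Stirling, product over independent variables, two-regime optimization in $\lambda$) are precisely the textbook derivation being referenced. The argument is correct as outlined.
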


% \begin{lemma}[Bernstein's Inequality with Sub-Exponential Tails \citep{vershynin2010introduction}]\label{azumaexp}
% Let $\{\xi_i\}_{i=1}^m$ be independent  sub-gaussian random variables such that   $\EE \exp(t\xi_i) \leq \exp (t^2 K_4^2)$.  For $i\in[m]$, let $z_i = \xi_i^2 - \E [\xi_i^2]$, then $z_i$ is a sub-exponential random  variable, such that 
% $$  \EE \exp(t z_i) \leq \exp \left[t^2 \left(C_{s_1}K_4^2)^2/2\right)\right],  \quad\quad \forall~ | t| \leq  \frac{1}{C_{s_1}K_4^2}, $$
% where $C_{s_1}$ is a constant.  Moreover, we have
% $$  \mathcal{P}\left( \left|\frac{1}{m}\sum_{i=1}^m z_i\right|\geq \ep     \right) \leq  2\exp\left[-\frac{m}{2}\min\left(\frac{\ep^2}{(C_{s_1}K_4^2)^2}, \frac{\ep}{C_{s_1}K_4^2} \right)\right].  $$
% \end{lemma}

\begin{lemma}\label{sug-gau vec}
For a $d$-dimensional random vector $\bxi\in\RR^{d}$, we have:
\begin{enumerate}[(1)]
    \item \label{f11} $\|\bxi\|_\infty$ is $\sigma$-sub-gaussian $\Longrightarrow$ 
    $\bxi$ is $(\sigma\sqrt{d})$-sub-gaussian;
    \item \label{f12} $\bxi$ is $\sigma$-sub-gaussian $\Longrightarrow$ 
    $\|\bxi\|_\infty$ is $\cO(\sigma\sqrt{\log d})$-sub-gaussian.
\end{enumerate}
\end{lemma}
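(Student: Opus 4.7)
The proof of this lemma is a standard exercise in sub-Gaussian concentration, so the plan is to give direct constructive bounds rather than invoking heavy machinery.

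\textbf{Part (1).} For any unit vector $u\in\mathbb{S}^{d-1}$, I will use H\"older's inequality to estimate
\[
|u^\top\bxi|\le \|u\|_1\|\bxi\|_\infty \le \sqrt{d}\,\|\bxi\|_\infty,
\]
where the second step is Cauchy--Schwarz between $u$ and $\mathbf{1}\in\RR^d$. Squaring and dividing by $(\sigma\sqrt{d})^2=d\sigma^2$ gives a pointwise bound $\exp((u^\top\bxi)^2/(d\sigma^2))\le\exp(\|\bxi\|_\infty^2/\sigma^2)$, and taking expectations together with the hypothesis on $\|\bxi\|_\infty$ yields $\EE\exp((u^\top\bxi)^2/(\sigma\sqrt{d})^2)\le e$. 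Since $u\in\mathbb{S}^{d-1}$ is arbitrary, this is exactly the definition of a $(\sigma\sqrt{d})$-sub-Gaussian vector.

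\textbf{Part (2).} The plan is to pass through the tail characterization in Lemma~\ref{theo:subgau}(\ref{theo:subgau1}) and use a union bound. Each coordinate $\bxi(i)=e_i^\top\bxi$ is $\sigma$-sub-Gaussian by definition, so there is an absolute constant $C$ with $\PP(|\bxi(i)|>t)\le\exp(1-t^2/(C\sigma)^2)$ for all $t\ge 0$. A union bound over $i\in[d]$ gives
\[
\PP(\|\bxi\|_\infty>t)\le de\exp\!\left(-t^2/(C\sigma)^2\right)=\exp\!\left(1+\log d-t^2/(C\sigma)^2\right).
\]
Setting $\sigma_*:=2C\sigma\sqrt{\log d\vee 1}$, I split into two ranges of $t$. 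For $t\ge C\sigma\sqrt{2\log d}$, the excess factor $\log d$ is absorbed into $t^2/(C\sigma)^2$ leaving at most $-t^2/\sigma_*^2$, so the tail bound becomes $\PP(\|\bxi\|_\infty>t)\le\exp(1-t^2/\sigma_*^2)$. For $t<C\sigma\sqrt{2\log d}\le\sigma_*$, the right-hand side exceeds $1$ and the bound is trivial. Thus the tail bound holds for every $t\ge0$, and by the equivalence of sub-Gaussian characterizations in Lemma~\ref{theo:subgau}, $\|\bxi\|_\infty$ is $\cO(\sigma\sqrt{\log d})$-sub-Gaussian.

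\textbf{Main obstacle.} Neither direction is technically deep; the only subtlety I expect is keeping the absolute constants in part (2) under control when moving between the tail form, the moment form, and the exponential moment form of sub-Gaussianness stated in Lemma~\ref{theo:subgau}, since the paper's definition requires $\EE\exp(\xi^2/\sigma^2)\le e$ exactly. The two-regime argument above handles this cleanly by comparing tails rather than trying to control $\EE\exp(\|\bxi\|_\infty^2/\sigma_*^2)$ directly, which would otherwise require calibrating the multiplicative constant in $\sigma_*$ very carefully to dominate the $d$ terms arising from a naive union bound inside the expectation.
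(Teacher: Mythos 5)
Your proof is correct and follows essentially the same route as the paper: Cauchy--Schwarz for part (1), and for part (2) the tail characterization from Lemma~\ref{theo:subgau}(\ref{theo:subgau1}) followed by a union bound and an appeal to the equivalent characterizations of sub-Gaussianity from Vershynin. The only cosmetic difference is the bookkeeping in part (2) --- you use a two-regime split in $t$, while the paper absorbs the $\log d$ term in one line via the inequality $\left(\tfrac{t^2}{a}-b\right)_+ \geq \tfrac{t^2}{ab}-1$ for $b\ge 1$; both yield the same $\cO(\sigma\sqrt{\log d})$ constant.
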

\begin{proof}
(\ref{f11}) For any $\bu\in  \mathbb{S}^{d-1}$, we have $|\bu^\top \bxi|\le \sqrt{d}\|\bxi\|_\infty$.

(\ref{f12}) 
Note that $\|\bxi\|_\infty=\max_{i\in[d]} |\e_i^\top \bxi| $, where $\e_i$ denotes the unit vector with $i$-th coordinate being one.
Applying Lemma~\ref{theo:subgau} (\ref{theo:subgau1}) and the union bound yields that
\[
\PP[\|\bxi\|_\infty>t]
\le \min\left\{d e^{1-\frac{t^2}{(C\sigma)^2}},1\right\}
% =\exp\left\{-\frac{(t^2-(C\sigma)^2\log(ed))_+}{(C\sigma)^2}\right\}
= e^{-\left(\frac{t^2}{(C\sigma)^2}-\log (ed)\right)_+}
\le e^{1-\frac{t^2}{(C\sigma \sqrt{\log(ed)})^2}},
\]
where we used the fact that $(\frac{t^2}{a}-b)_+\ge \frac{t^2}{ab}-1$ when $b\ge 1$. 
Therefore, $\|\bxi\|_\infty$ is $\cO(\sigma\sqrt{\log d})$-sub-gaussian by the equivalent definition of sub-gaussian distributions in \cite[Lemma 5.5]{vershynin2010introduction}.
\end{proof}

\begin{corollary}
\label{cor:subg-q}
For a $\sigma$-sub-gaussian random  vector $\bxi\in\RR^{d}$, we have 
$$ (\EE \| \bxi\|^q_{\infty})^{1/q} \leq \cO(\sigma\sqrt{q \log d}), \quad q\geq1. $$
\end{corollary}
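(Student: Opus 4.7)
The statement is a direct consequence of the two results immediately preceding it, so the plan is essentially a two-line chain of implications combining Lemma~\ref{sug-gau vec}(\ref{f12}) with the moment bound in Lemma~\ref{theo:subgau}(\ref{theo:subgau2}).

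First, I would apply Lemma~\ref{sug-gau vec}(\ref{f12}) to the $\sigma$-sub-gaussian vector $\bxi\in\RR^d$ to conclude that the scalar random variable $\|\bxi\|_\infty$ is $\tilde\sigma$-sub-gaussian with parameter $\tilde\sigma = \cO(\sigma\sqrt{\log d})$. This is the crucial step, since it converts the vector-valued sub-gaussianness (a statement about one-dimensional marginals $\bu^\top\bxi$) into a scalar sub-gaussian bound for the $\ell_\infty$ norm, with the usual $\sqrt{\log d}$ overhead coming from a union bound over the $d$ coordinate directions $\e_1,\dots,\e_d$.

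Next, I would invoke Lemma~\ref{theo:subgau}(\ref{theo:subgau2}) applied to the univariate $\tilde\sigma$-sub-gaussian random variable $\|\bxi\|_\infty$, which yields $(\EE\|\bxi\|_\infty^q)^{1/q}\le C\tilde\sigma\sqrt{q}$ for all $q\ge 1$. Substituting $\tilde\sigma = \cO(\sigma\sqrt{\log d})$ gives the desired estimate $(\EE\|\bxi\|_\infty^q)^{1/q}\le \cO(\sigma\sqrt{q\log d})$.

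There is no real obstacle here — both ingredients are already established earlier in the appendix, and the argument is just their composition. The only minor point worth being careful about is absorbing the universal constants from Lemma~\ref{theo:subgau} and from the $\cO(\cdot)$ in Lemma~\ref{sug-gau vec}(\ref{f12}) into a single $\cO$ at the end, which is straightforward.
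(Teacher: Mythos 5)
Your proposal matches the paper's proof exactly: both apply Lemma~\ref{sug-gau vec}(\ref{f12}) to deduce that $\|\bxi\|_\infty$ is $\cO(\sigma\sqrt{\log d})$-sub-gaussian, and then apply Lemma~\ref{theo:subgau}(\ref{theo:subgau2}) to obtain the moment bound. Correct and essentially identical to the paper's argument.
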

\begin{proof}
From Lemma \ref{sug-gau vec},  $\|\bxi \|_{\infty}$ is $\cO(\sigma\sqrt{\log d})$-sub-gaussian. Applying Lemma \ref{theo:subgau} (\ref{theo:subgau2}), we achieve the desired result. 
\end{proof}

\end{document}